\newtheorem{proposition}{Proposition}[section]
\newtheorem{definition}[proposition]{Definition}
\newtheorem{theorem}[proposition]{Theorem}
\newtheorem{corollary}[proposition]{Corollary}
\newtheorem{lemma}[proposition]{Lemma}
\newtheorem*{lemma*}{Lemma}
\newtheorem*{proposition*}{Proposition}
\newtheorem*{theorem*}{Theorem}
\newcommand{\nleft}{}
\newcommand{\nright}{}
\DeclareMathOperator{\coef}{coef}
\newcommand{\Othres}{\mathcal{T}}
\newcommand{\lip}{H^1}
\newcommand{\Qlinfty}{R}
\newcommand{\Qapproximation}{A}
\newcommand{\Qradius}{\delta}
\newcommand{\Qnumeric}{\kappa}
\newcommand{\Qnc}{N}
\newcommand{\affine}{S}
\DeclareMathOperator*{\argmin}{argmin}
\newcommand{\nmathbf}[1]{#1}
\newcommand{\TODO}[1][0]{%
  \ifx#10
    $\square$
  \else
    $\boxtimes$
  \fi
}
\tikzstyle{inarrow} = [<-,very thick]
\tikzstyle{bcircle} = [circle,draw = blue]
\renewcommand*\env@matrix[1][\arraystretch]{%
  \edef\arraystretch{#1}%
  \hskip -\arraycolsep
  \let\@ifnextchar\new@ifnextchar
  \array{*\c@MaxMatrixCols c}}
\newenvironment{customabstract}{
    \begin{center}
    \begin{minipage}{0.9\textwidth}
}{%
    \end{minipage}
    \end{center}
    \vspace{1cm}
}
\newcommand\blindfootnote[1]{%
  \begingroup
  \renewcommand\thefootnote{}\footnote{#1}%
  \addtocounter{footnote}{-1}%
  \endgroup
}
\title{Covering Numbers for Deep ReLU Networks with Applications to Function Approximation and Nonparametric Regression}
\author{Weigutian Ou \\ETH Z\"urich\\ wou@mins.ee.ethz.ch  \and Helmut Bölcskei \\ETH Z\"urich \\ hboelcskei@ethz.ch}
\date{}
\begin{document}

	\maketitle

\begin{customabstract}
\noindent \textbf{Abstract.}
Covering numbers of (deep) ReLU networks have been used to characterize
approximation-theoretic performance, to upper-bound prediction error in
nonparametric regression, and to quantify classification capacity. These
results rely on covering number upper bounds obtained via explicit
constructions of coverings. Lower bounds on covering numbers do not appear  
to be available in the literature. The present paper fills
this gap by deriving tight (up to multiplicative constants) lower and upper
bounds on the metric entropy (i.e., the logarithm of the covering numbers)
of fully connected networks with bounded weights, sparse networks with
bounded weights, and fully connected networks with quantized weights.
The tightness of these bounds yields a fundamental understanding of the
impact of sparsity, quantization, bounded versus unbounded weights, and
network output truncation. Moreover, the bounds allow one to characterize
fundamental limits of neural network transformation, including network
compression, and lead to sharp upper bounds on the prediction error in
nonparametric regression through deep networks. In particular, we remove
a $\log^6(n)$-factor from the best known sample complexity rate for
estimating Lipschitz functions via deep networks, thereby establishing
optimality. Finally, we identify a systematic relation between optimal
nonparametric regression and optimal approximation through deep networks,
unifying numerous results in the literature and revealing underlying
general principles.
\end{customabstract}

\noindent\small\textit{Communicated by Felix Voigtlaender}

\vspace{1em}

\noindent \textbf{Keywords:}  Deep neural networks, quantized networks, metric entropy, approximation theory, nonparametric regression

\vspace{1em}

\noindent \textbf{Mathematics Subject Classification:} Primary 41A46; Secondary 41A25, 68T07, 62G08

\section{Introduction}
\label{sec:introduction}

    It\blindfootnote{\noindent H. B\"olcskei gratefully acknowledges support by the Lagrange Mathematics and
Computing Research Center, Paris, France.}
is well known that neural networks exhibit universal approximation properties \cite{Hornik1989, Cybenko1989, Funahashi1989, Barron1993}, but these results typically require infinitely large, specifically infinitely wide, networks. Neural networks employed in practice are, however, subject to constraints on width, depth, weight magnitude and precision, and connectivity (i.e., the number of nonzero weights). To characterize the performance limits
    of neural networks under such constraints, it is necessary to quantify the complexity of the function classes they realize. This is typically done through two widely used complexity notions, namely Vapnik-Chervonenkis (VC) dimension \cite{VAPNIK1971} and covering numbers \cite{wainwright2019high}.

    The VC dimension finds application in the characterization of (i) the approximation-theoretic limits of neural networks with the ReLU activation function, see e.g. \cite{shen2019deep}, hereafter referred to as ReLU networks, and (ii) the prediction error incurred in nonparametric regression through ReLU networks, see e.g. \cite{Kohler2019}.
    Nearly-tight bounds on the VC dimension of ReLU networks were reported in \cite{bartlett2019nearly}, specifically upper and lower bounds differing only by
    a multiplicative factor of order lower than that of the upper and the lower bound\footnote{For networks with piecewise polynomial activation functions of degree at least $2$, there is still a large gap between the VC dimension upper and lower bounds available in the literature.}.
 
Covering numbers
have  been used to characterize the approximation-theoretic limits of ReLU networks \cite{deep-it-2019, yarotsky2019phase, Caragea2022}, upper-bound the prediction error they incur in nonparametric regression \cite{Schmidt-Hieber2017, Chen2019}, and quantify their classification capacity \cite{Caragea2023, Anthony1999, Bartlett2017}.
These analyses typically construct coverings by quantizing the network weights to a precision commensurate with the desired covering ball radius. The cardinality of the resulting coverings then provides upper bounds on the covering number.
Corresponding explicit lower bounds are, to the best of our knowledge, not available in the literature.

	The contributions of the present paper can be organized along three main threads. The first one revolves around explicit lower bounds on the covering number of fully-connected ReLU networks with uniformly bounded weights. 
 In particular, these bounds are shown, by way of establishing matching upper bounds, to be tight in terms of scaling behavior. 
 The techniques we devise to derive the bounds are novel and partly rely on results recently reported by the authors of the present paper in \cite{FirstDraft2022}. 
	
    The second thread of contributions illustrates, by way of application scenarios, what is made possible by the tightness of the covering number bounds identified before.
    The first scenario is concerned with the fundamental limits of neural network transformation, concretely, the approximation of a given class of networks by another class that is subject to different constraints.  This includes applications such as network quantization \cite{elbrachter2018dnn} and network compression \cite{Huyen2022}. In the second scenario, we consider the fundamental limits of function approximation through ReLU networks.
    A novel minimax error upper bound, interesting in its own right, is shown to lead to sharp upper bounds on the prediction error in nonparametric regression through ReLU networks. This result also allows us to uncover a systematic relation between optimal nonparametric regression and optimal approximation through (deep) ReLU networks thereby unifying numerous corresponding results in the literature \cite{Schmidt-Hieber2017, Chen2019, nakada2020adaptive} and identifying general underlying principles. 
    In all cases considered, we either improve upon best known results in the literature or fill gaps in available theories.

    Our third objective is to establish tight covering number bounds for sparse (in terms of connectivity) networks with bounded weights and for fully-connected networks with quantized weights.
    We also provide an upper bound on the covering number of fully-connected networks with unbounded weights and truncated outputs.
    These three choices are motivated by their prevalence in theoretical analyses and practical applications, see e.g. \cite{YAROTSKY2017103, deep-approx-18, deep-it-2019, Schmidt-Hieber2017, Chen2019, nakada2020adaptive, GUHRING2021107, PETERSEN2018296, Huyen2022}. 

	The remainder of the paper is organized as follows. Frequently used definitions are provided at the end of this section, while basic notation and further definitions are listed in Appendix~\ref{sec:notation}. In Section~\ref{sec:fully_connected_relu_networks_with_bounded_weights}, we present our results on the covering number of fully-connected ReLU networks with uniformly bounded weights. Sections~\ref{sub:fundamental_limit_covering_number} and \ref{sub:empirical_risk_minimizaiton} discuss the application of our covering number bounds to neural network transformation, function approximation, and nonparametric regression.
	Sections~\ref{sec:sparsely_connected_relu_networks}-\ref{sec:covering_number_for_relu_networks_with_bounded_output} report the covering number bounds for sparse networks with uniformly bounded weights, fully-connected networks with quantized weights, and fully-connected networks with truncated outputs, respectively.

\subsection{Important Definitions}

    We start with the definition of ReLU networks.

	\begin{definition}
	\label{def:ReLU_networks}
		Let $L,N_0,N_1,\dots, N_L \in \mathbb{N}$. A network configuration $\Phi$ is a sequence of matrix-vector tuples 
				\begin{equation*}
					\Phi = (( A_i,\nmathbf{b}_i )  )_{i = 1}^{L},
				\end{equation*}
				with $A_i \in \mathbb{R}^{N_i \times N_{i - 1}}$, $b_i \in \mathbb{R}^{N_i}$, $i = 1,\dots, L$. We refer to $N_i$ as the width of the $i$-th layer, $i = 0,\dots, L$, and call the tuple $( N_0,\dots,N_L )$ the architecture of the network configuration. $\mathcal{N} ( d )$ denotes the set of all network configurations with input dimension $N_0 = d$ and output dimension $N_L = 1$. The depth of the configuration $\Phi$ is $\mathcal{L} ( \Phi )  := L$, its width $\mathcal{W}  ( \Phi ) := \max_{i = 0,\dots,L} N_i$, its  weight set $\coef ( \Phi ) := \bigcup_{i = 1,\dots,L} ( \coef ( A_i ) \bigcup \coef ( \nmathbf{b}_i )  )  $, with $\coef ( A )$ and $\coef ( \nmathbf{b} )$ denoting the value set of the entries of $A$ and $\nmathbf{b}$, respectively, its weight magnitude $\mathcal{B} ( \Phi )  := \max_{i = 1,\dots, L} \max \{ \| A_i \|_\infty, \| \nmathbf{b}_i \|_\infty \}$, and its connectivity $\mathcal{M} ( \Phi ) := \sum_{\ell = 1}^L  ( \nleft\| A_\ell \nright\|_0 + \nleft\| b_\ell \nright\|_0 )$.

				We define, recursively, the network realization $R (\Phi ): \mathbb{R}^{N_0} \rightarrow \mathbb{R}^{N_L}$,  associated with the network configuration $\Phi$, according to

				\begin{equation}
				\label{eq:realization_computation}
				R ( \Phi ) = \left\{
				\begin{aligned}
					& \affine ( A_L, b_L ), &&\text{if } L = 1,\\
					& \affine ( A_L, b_L ) \circ \rho \circ R ( (( A_i,\nmathbf{b}_i )  )_{i = 1}^{L - 1} ), &&\text{if } L \geq 2,
				\end{aligned}
				\right.
				\end{equation}
				where $S(A,b)$ is the affine mapping $S(A,b) (x) = Ax + b, x \in \mathbb{R}^{n_2}$, with $A \in \mathbb{R}^{n_1 \times n_2}$, $b \in \mathbb{R}^{n_1}$, and $\rho(x):= \max \{ x,0 \}$, for $x \in \mathbb{R}$, is the ReLU activation function, which, 
    when applied to vectors, acts elementwise.

				The family of network configurations with depth at most  $L$, width at most $W$, weight magnitude at most $B$, where $B \in \mathbb{R}_+ \cup \{ \infty \}$,  connectivity at most $s$, weights taking values in $\mathbb{A} \subseteq \mathbb{R}$, $d$-dimensional input, and $1$-dimensional output, for $d \in \mathbb{N}$, $W, L, s \in \mathbb{N} \cup \{ \infty \}$, with\footnote{The condition $W \geq d$ is formally stated here so as to prevent the trivial case of $\mathcal{N}_\mathbb{A} ( d, W,L,B,s )$ being an empty set. It will be a standing assumption throughout the paper.}
		        $W \geq d$, is denoted as
				\begin{equation*}
					\mathcal{N}_\mathbb{A} ( d, W,L,B, s )  =  \{  \Phi \in \mathcal{N} ( d ) :\mathcal{W} (  \Phi ) \leq W, \ \mathcal{L} ( \Phi ) \leq L,\ \mathcal{B} ( \Phi ) \leq B, \mathcal{M} ( \Phi ) \leq s, \coef ( \Phi ) \subseteq \mathbb{A} \},
				\end{equation*}
				with the family of associated network realizations
				\begin{equation}
					\mathcal{R}_\mathbb{A} ( d, W,L,B, s ) := \{ R ( \Phi ) \, : \Phi \in \, \mathcal{N}_\mathbb{A} ( d, W,L,B, s ) \}. \label{eqline:continuous_nn_realization}
				\end{equation}

					To simplify notation, for $\mathbb{A} = \mathbb{R}$, we allow omission of the argument $\mathbb{A}$ in $\mathcal{N}_\mathbb{A} ( d, W,L,B, s )$ and $\mathcal{R}_\mathbb{A} ( d, W,L,B, s )$. When $s = \infty$, we allow omission of the argument $s$ in $\mathcal{N}_\mathbb{A} ( d, W,L,B, s )$ and $\mathcal{R}_\mathbb{A} ( d, W,\allowbreak L,B, s )$. 
     Furthermore, we allow omission of both arguments $B,s$ in $\mathcal{N}_\mathbb{A} ( d, W,L,B, s )$ and $\mathcal{R}_\mathbb{A} ( d, W,L,B, s )$ when $B = s = \infty$. One specific, frequently used, incarnation of these policies is $\mathcal{N} ( d, W,L ) = \mathcal{N}_\mathbb{R} ( d, W,L, \infty , \infty)$ and $\mathcal{R} ( d, W,L ) = \mathcal{R}_\mathbb{R} ( d, W,L, \infty, \infty )$. 
            \end{definition}	
            
            We emphasize the importance of differentiating between network configurations and network realizations. Different network configurations may result in the same realization. Nevertheless, whenever there is no potential for confusion, we shall use the term network to collectively refer to both configurations and realizations.
		
Throughout the paper, we shall frequently use the covering number and the packing number, defined as follows.

    \begin{definition}
		[Covering number and packing number] \cite[Definitions 5.1 and 5.4]{wainwright2019high}
		Let $(\mathcal{Y}, \delta)$ be a metric space. An $\varepsilon$-covering of $\mathcal{X} \subseteq \mathcal{Y}$ is a subset $\{ x_1, \dots,x_n \}$ of $\mathcal{X}$ such that for all $x \in \mathcal{X}$, there exists an $i \in \{ 1,\dots,n \}$ so that $\delta ( x,x_i )\leq \varepsilon$.  The $\varepsilon$-covering number $N (\varepsilon,\mathcal{X}, \delta )$ is the cardinality of a smallest $\varepsilon$-covering of $\mathcal{X}$. An $\varepsilon$-packing of $\mathcal{X}$ is a subset $\{ x_1, \dots,x_n \}$ of $\mathcal{X}$ such that $\delta ( x_i,x_j ) > \varepsilon$, for all $i,j \in \{ 1,\dots, n \}$ with $i\neq j$. The $\varepsilon$-packing number $M ( \varepsilon, \mathcal{X}, \delta ) $ is the cardinality of a largest $\varepsilon$-packing of $\mathcal{X}$.
    \label{def:covering_packing}
    \end{definition}
    To simplify notation, when $\delta$ is the $L^p(\mathbb{X} )$-norm, with $\mathbb{X} \subseteq \mathbb{R}^d$ and $p \in [1,\infty]$, we may write  $N ( \varepsilon, \mathcal{F}, L^p ( \mathbb{X} ) ) :=  N ( \varepsilon, \mathcal{F}, \nleft\| \cdot \nright\|_{L^p ( \mathbb{X} ) }) $. Moreover, we shall use $N ( \varepsilon, \mathcal{F}, L^2 ( P ) ) :=  N ( \varepsilon, \mathcal{F}, \nleft\| \cdot \nright\|_{L^2 ( P )}) $, for $P$ a distribution on $\mathbb{X}$. The same conventions apply to the packing number.

\section{\raggedright Fully-connected ReLU Networks with Uniformly  Bounded Weights}
\label{sec:fully_connected_relu_networks_with_bounded_weights}

Our covering number bounds for fully-connected ReLU networks with uniformly bounded weights are as follows.

	\begin{theorem}
			\label{thm:covering_number_upper_bound_fully_connected_bounded_weight}
			Let $p \in [1,\infty], d,W,L \in \mathbb{N}$, $B,\varepsilon \in \mathbb{R}_+$ with $B \geq 1$ and $\varepsilon \in (0,1\slash 2)$. We have
			\begin{equation}
				\label{eq:upper_bound_fully_connected_bounded_output}
				\log ( N(\varepsilon,\mathcal{R}(d,W,L,B),{L^p ( [0,1]^d )} ) ) \leq C W^2 L \log \biggl( \frac{(W+1)^L B^{L}}{\varepsilon} \biggr),
			\end{equation}
			where $C \in \mathbb{R}_+$ is an absolute constant. Further, if, in addition, $W, L \geq 60$, then 
			\begin{equation}
                \label{eq:lower_bound_fully_connected_bounded_output_main}
				\log ( N(\varepsilon,\mathcal{R}(d,W,L,B),{L^p ( [0,1]^d )} ) ) \geq c \, W^2 L \log \biggl( \frac{(W+1)^L B^{L}}{\varepsilon} \biggr),
			\end{equation}
			where $c\in \mathbb{R}_+$ is an absolute constant.
			\begin{proof}
				The proofs of the upper bound and the lower bound are provided in Sections~\ref{sub:covering_number_upper_bound} and \ref{sub:covering_number_lower_bound}, respectively.
			\end{proof}
		\end{theorem}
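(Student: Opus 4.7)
The plan is to prove the upper and lower bounds by fundamentally different arguments: a weight-quantization construction for the upper bound, and a packing construction for the lower bound.

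For (\ref{eq:upper_bound_fully_connected_bounded_output}), I would quantize each of the at most $W^2 L + WL = O(W^2 L)$ real-valued weights of a network $\Phi \in \mathcal{N}(d,W,L,B)$ to a uniform $\delta$-grid on $[-B,B]$ of cardinality $\lceil 2B/\delta \rceil$. The central estimate is a Lipschitz-type bound on the realization map $\Phi \mapsto R(\Phi)$: if two configurations $\Phi, \Phi'$ with weights in $[-B,B]$ differ entrywise by at most $\delta$, then
\begin{equation*}
\| R(\Phi) - R(\Phi') \|_{L^\infty([0,1]^d)} \;\lesssim\; \delta \, L \, (W+1)^{L} B^{L-1}.
\end{equation*}
This is established by induction on the layer index, using that $\rho$ is $1$-Lipschitz, that the $i$-th post-activation on $[0,1]^d$ is bounded by $O((WB)^i)$, and that each layer contributes a multiplicative factor of at most $WB$ when propagating perturbations. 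Choosing $\delta \asymp \varepsilon/(L(W+1)^L B^{L-1})$ produces an $\varepsilon$-covering of size $(2B/\delta)^{O(W^2 L)}$; taking logarithms yields (\ref{eq:upper_bound_fully_connected_bounded_output}), using that the $L^\infty$-covering number upper-bounds the $L^p$-covering number on $[0,1]^d$.

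For (\ref{eq:lower_bound_fully_connected_bounded_output_main}), the right-hand side decomposes as $cW^2L^2 \log((W+1)B) + cW^2 L \log(1/\varepsilon)$, suggesting a VC-style contribution independent of $\varepsilon$ plus a precision contribution growing as $\log(1/\varepsilon)$. My plan is to construct a family $\{R(\Phi_\theta)\}_{\theta \in \Theta} \subseteq \mathcal{R}(d,W,L,B)$ indexed by a finite $\Theta$ with $\log|\Theta| \asymp W^2 L \log((W+1)^L B^L/\varepsilon)$ whose elements are pairwise at least $2\varepsilon$-separated in $L^p([0,1]^d)$; the lower bound then follows from the standard relation $N(\varepsilon, \cdot, L^p) \geq M(2\varepsilon, \cdot, L^p)$. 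The building block is a depth-$L$, width-$W$ sub-network that combines the depth-induced amplification effect with the parametric expressivity established by the authors in \cite{FirstDraft2022}: it encodes $\Theta(W^2 L)$ real parameters, each over a dynamic range proportional to $(W+1)^L B^L$, so that each parameter effectively controls a distinct piecewise-linear feature (for instance, the value of the realization on a disjoint sub-interval). A volume/box-counting argument in $\Theta$ then delivers the claimed packing cardinality.

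The main obstacle is the lower bound, specifically ensuring that the $\Theta(W^2 L)$ parameters of the packing construction act essentially independently on the realization despite the nonlinear coupling imposed by the ReLU activations. I plan to overcome this by arranging the construction so that each parameter influences a distinct \emph{channel} of the network -- either a disjoint sub-interval of $[0,1]^d$ via a partition-of-unity-type piecewise-linear encoding, or a disjoint coordinate of an internal layer that is combined into the scalar output in a way that preserves separation. Verifying that such a construction fits simultaneously into width $W$, depth $L$, and weight-magnitude $B$ while still amplifying parameter perturbations by the full factor $(W+1)^L B^L$ is the core technical challenge; the sawtooth/bit-extraction primitives from \cite{FirstDraft2022} are the natural tool to achieve this.
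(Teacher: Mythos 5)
Your proposal follows essentially the same route as the paper for both bounds: weight quantization on a uniform grid together with the Lipschitz estimate $\|R(\Phi^1)-R(\Phi^2)\|_{L^\infty([0,1]^d)} \leq L(W+1)^L B^{L-1}\|\Phi^1-\Phi^2\|$ for the upper bound (Lemmas~\ref{lem:quantization}--\ref{lem:counting_cardinality}), and for the lower bound a packing of piecewise-linear functions with $\Theta(W^2L)$ breakpoints on disjoint sub-intervals, amplified through the depth--weight-magnitude tradeoff to a dynamic range $\sim(WB)^{\Theta(L)}$. The paper executes precisely your ``disjoint-channel'' idea after first reducing to one-dimensional $L^1$-packing (Lemma~\ref{lem:1dpinfinity}): the piecewise-linear class $\Sigma(X_{u^2v-1},E)$ is embedded into $\mathcal{R}(1,W,L,B)$ via the bit-extraction and depth-amplification primitives of \cite{FirstDraft2022} (Lemmas~\ref{prop:piecewise_representation_constructive} and~\ref{prop:depth_weight_magnitude_tradeoff}), and the box-counting bound on the breakpoint values is Lemma~\ref{lem:packing_number_piecewise_linear_function}.
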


			We remark that, for $W,L \geq 60$ and $\varepsilon \in (0, 1 \slash 2)$, the upper bound \eqref{eq:upper_bound_fully_connected_bounded_output} and the lower bound \eqref{eq:lower_bound_fully_connected_bounded_output_main} differ only by the multiplicative absolute constants $C,c$ to be specified in the proof.  
			These constants as well as the condition $W,L \geq 60$ are chosen for expositional convenience of the proof; improvements are possible, but will not be pursued here.

            	The covering number upper bound in \eqref{eq:upper_bound_fully_connected_bounded_output} is not conceptually new. Results of this type are  available in the literature for specific parameter choices, notably for $B=1$ in \cite[Lemma 5]{Schmidt-Hieber2017} and for $p = \infty$ in 
                \cite[Lemma 5.3]{Chen2019}. The extension to general $B \in [1,\infty)$ and $p \in [1,\infty]$ follows by standard scaling arguments and norm domination. We nevertheless include a full proof in Section~\ref{sub:covering_number_upper_bound} for self-containment and to keep all constants, normalizations, and notation explicit and consistent with the remainder of the paper. Retaining the full argument also allows the constants to be tracked explicitly; in particular, one may take $C=30$ in \eqref{eq:upper_bound_fully_connected_bounded_output}. No ideas beyond those contained in the cited references are introduced in this proof.

	\subsection{Proof of the Upper Bound  in Theorem~\ref{thm:covering_number_upper_bound_fully_connected_bounded_weight}}
	\label{sub:covering_number_upper_bound}

		The proof is effected by constructing an explicit $\varepsilon$-covering of $\mathcal{R}(d,W,L,B)$ with elements in $\mathcal{R}_{[-B,B] \cap 2^{-b} \mathbb{Z}}(d,W,L)$, where $b \in \mathbb{N}$ is a parameter suitably depending on $\varepsilon$. We start with three technical  lemmata, and then provide the proof of the upper bound at the end of the section. The first lemma quantifies the distance between the realizations of two networks sharing the same architecture.

		\begin{lemma}
			\label{lem:quantization}
                \cite[Lemma E.1]{FirstDraft2022}
			Let $d,W,L, \ell \in \mathbb{N}$ with $\ell \leq L$, $B \in \mathbb{R}_+$ with $B \geq 1$, and let 
			\begin{equation*}
				\Phi^i = ( ( A_j^i,\nmathbf{b}_j^i ) )_{j =1 }^{\ell} \in \mathcal{N} ( d, W, L, B ), \quad i = 1,2,
			\end{equation*}
			have the same architecture.
			Then,
			\begin{equation}
			\label{eq:quantization_error_general_dimension_000}
				\|  R ( \Phi^1 )  -  R ( \Phi^2 )  \|_{L^\infty ( [0,1]^d )} \leq L (W+1)^L B^{L-1} \| \Phi^1 - \Phi^2 \|,
			\end{equation}
			where 
			\begin{equation}
			\label{eq:weightwise_difference}
				\| \Phi^1 - \Phi^2 \| := \max_{j = 1,\dots, \ell} \max \bigl\{ \| A_j^1 - A_j^2\|_\infty,  \| \nmathbf{b}_j^1 - \nmathbf{b}_j^2\|_\infty  \bigr\} .
			\end{equation}
		\end{lemma}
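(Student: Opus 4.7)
The plan is to prove the bound by induction on the depth $\ell$ of the two configurations, using the recursive structure of $R(\Phi)$ from \eqref{eq:realization_computation}. Throughout I interpret $\|A\|_\infty$ as the entry-wise maximum absolute value, so that $\|Av\|_\infty \leq m\|A\|_\infty \|v\|_\infty$ whenever $A$ has $m$ columns; this inequality together with the $1$-Lipschitz property of $\rho$ will be the only ingredients needed beyond the definitions.

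For the base case $\ell = 1$, the realizations are affine, and a direct computation gives $\|R(\Phi^1)(x) - R(\Phi^2)(x)\|_\infty \leq d\|A_1^1-A_1^2\|_\infty\|x\|_\infty + \|b_1^1-b_1^2\|_\infty \leq (W+1)\|\Phi^1-\Phi^2\|$ for $x \in [0,1]^d$, using $d \leq W$ and $\|x\|_\infty \leq 1$. Since $B \geq 1$ and $L \geq 1$, this is dominated by $L(W+1)^L B^{L-1}\|\Phi^1-\Phi^2\|$.

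For the inductive step I split $\Phi^i$ at its last layer, write $\tilde\Phi^i = ((A_j^i,b_j^i))_{j=1}^{\ell-1}$, and, by adding and subtracting $A_\ell^1 \rho(R(\tilde\Phi^2))$, decompose
\[ R(\Phi^1) - R(\Phi^2) = A_\ell^1\bigl[\rho(R(\tilde\Phi^1))-\rho(R(\tilde\Phi^2))\bigr] + (A_\ell^1-A_\ell^2)\rho(R(\tilde\Phi^2)) + (b_\ell^1-b_\ell^2). \]
The first summand is controlled in $L^\infty([0,1]^d)$ by $WB$ times the inductive bound on $\|R(\tilde\Phi^1)-R(\tilde\Phi^2)\|_{L^\infty([0,1]^d)}$. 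To handle the second summand I need an a-priori estimate on $\|\rho(R(\tilde\Phi^2)(x))\|_\infty$; an auxiliary induction through the layers, starting from $\|x\|_\infty \leq 1$ and propagating the crude one-step bound $M_j \leq B(WM_{j-1}+1)$, yields $\|R(\tilde\Phi^2)(x)\|_\infty \leq B^{\ell-1}(W+1)^{\ell-1}$. Denoting the target $L^\infty$ bound at depth $\ell$ by $f(\ell)$, I then obtain the recurrence $f(\ell) \leq WB\,f(\ell-1) + (W+1)^\ell B^{\ell-1}\|\Phi^1-\Phi^2\|$ with $f(1) \leq (W+1)\|\Phi^1-\Phi^2\|$. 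Unrolling using $WB \leq (W+1)B$ produces $f(\ell) \leq \ell(W+1)^\ell B^{\ell-1}\|\Phi^1-\Phi^2\|$, and the assumptions $\ell \leq L$, $W+1 \geq 1$, $B \geq 1$ upgrade this to the stated bound.

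The main obstacle I anticipate is keeping the constants sharp enough to end up with a linear-in-$\ell$ prefactor rather than an exponential one. The cross-term $(A_\ell^1-A_\ell^2)\rho(R(\tilde\Phi^2))$ already contributes at order $WB^{\ell-1}(W+1)^{\ell-1}\|\Phi^1-\Phi^2\|$, which is exactly of the same order as the inductive estimate coming from the first summand; the additive structure of the recurrence, and hence the appearance of $\ell$ only as a multiplicative factor, relies on absorbing the $W$ from the matrix-vector bound into $(W+1)$ so as to leave room for the ``$+1$'' produced by the bias term. Getting this bookkeeping right is the delicate part; the rest is essentially mechanical.
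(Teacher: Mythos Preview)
Your proposal is correct. The paper does not include its own proof of this lemma; it simply cites \cite[Lemma~E.1]{FirstDraft2022}, and your argument---induction on depth, the add-and-subtract decomposition at the last layer, an auxiliary layer-wise growth bound for $\|R(\tilde\Phi^2)(x)\|_\infty$, and the resulting recurrence $f(\ell)\leq (W+1)B\,f(\ell-1)+(W+1)^\ell B^{\ell-1}\|\Phi^1-\Phi^2\|$ unrolled to $\ell(W+1)^\ell B^{\ell-1}\|\Phi^1-\Phi^2\|$---is the standard proof and almost certainly coincides with the one in the cited reference.
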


		Based on Lemma~\ref{lem:quantization}, we now construct the announced $\varepsilon$-covering of
  $\mathcal{R}(d,W,L,B)$.

		\begin{lemma}
			\label{lem:define_covering_fully_connected}
			Let $p \in [1,\infty]$, $d,W,L,b \in \mathbb{N}$, and $B \in \mathbb{R}_+$ with $B \geq 1$. Then, the set $\mathcal{R}_{[-B,B] \cap 2^{-b} \mathbb{Z}}(d,W,L)$ is an $(L (W+1)^L B^{L-1} 2^{-b})$-covering of $\mathcal{R}(d,W,L,B)$ with respect to the $L^p ( [0,1]^d )$-norm.

			\begin{proof}
				Define the quantization mapping  $q_b: [-B,B] \rightarrow [-B,B] \cap 2^{-b} \mathbb{Z} $ as
                    {
                    \begin{equation*}
					q_{b} ( x ) = \left\{
					\begin{aligned}
						2^{-b} \lfloor 2^b x \rfloor, &\quad  \text{ for } x \in [0,B],\\
						2^{-b} \lceil 2^b x \rceil, & \quad  \text{ for } x \in [-B,0),
					\end{aligned}
                    \right.   
				\end{equation*}
                    }
				and note that $| x - q_b ( x ) | \leq 2^{-b} $, for all $x \in [-B , B]$. When applied to matrices or vectors, $q_b ( \cdot )$ acts elementwise.  Now, arbitrarily fix $R ( \Phi ) \in \mathcal{R}(d,W,L,B)$ with $ \Phi  =  (( A_\ell,\nmathbf{b}_\ell )  )_{\ell = 1}^{\tilde{L}} \in \mathcal{N}(d,W,L,B)$ and $\tilde{L} \leq L$, and quantize the weights of $\Phi$ according to
				\begin{equation*}
					Q_b ( \Phi ) = (( q_b ( A_\ell ),q_b ( b_\ell ) )  )_{\ell = 1}^{\tilde{L}} \in \mathcal{N}_{[-B,B] \cap 2^{-b} \mathbb{Z}}(d,W,L,B).
				\end{equation*}
				We then have 
				\begin{equation*}
					\| \Phi - Q_b ( \Phi ) \| = \max_{\ell = 1, \dots,\tilde{L} } \max \bigl\{ \| A_\ell - q_b ( A_\ell )\|_\infty,  \| b_\ell - q_b ( b_\ell )\|_\infty  \bigr\}  \leq 2^{-b},
				\end{equation*}
				which, together with Lemma~\ref{lem:quantization}, yields 
				\begin{equation}
				\label{eq:proof_bounded_fully_1}
					\|  R ( \Phi )  -  R ( Q_b ( \Phi ) )  \|_{L^\infty ( [0,1]^d )}  \leq L (W+1)^L B^{L-1} 2^{-b}.
    \end{equation}
				As 
    $$
    \|  R ( \Phi )  -  R ( Q_b ( \Phi ) )  \|_{L^p ( [0,1]^d )} \leq \sup_{x \in [0,1]^d} |  R ( \Phi ) (x) -  R ( Q_b ( \Phi ) ) (x)  | = \|  R ( \Phi )  -  R ( Q_b ( \Phi ) )  \|_{L^\infty ( [0,1]^d )},
    $$ 
    \eqref{eq:proof_bounded_fully_1} implies that
				\begin{equation}
				\label{eq:proof_bounded_fully_20}
					\|  R ( \Phi )  -  R ( Q_b ( \Phi ) )  \|_{L^p ( [0,1]^d )} \leq L (W+1)^L B^{L-1} 2^{-b}.
				\end{equation}
				We can therefore conclude that $\mathcal{R}_{[-B,B] \cap 2^{-b} \mathbb{Z}}(d,W,L)$ is an $(L (W+1)^L B^{L-1} 2^{-b})$-covering of $\mathcal{R}(d,W,L,B)$ in the $L^p([0,1]^d )$-norm. 
			\end{proof}
		\end{lemma}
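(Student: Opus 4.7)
The plan is to directly construct the covering via weight quantization, leveraging the perturbation bound in Lemma~\ref{lem:quantization}. The key idea is that every network in $\mathcal{R}(d,W,L,B)$ can be approximated by a nearby network whose weights lie on the lattice $[-B,B] \cap 2^{-b} \mathbb{Z}$, obtained simply by rounding each weight and bias to the nearest lattice point of resolution $2^{-b}$.

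First I would define a scalar quantization map $q_b \colon [-B,B] \to [-B,B] \cap 2^{-b} \mathbb{Z}$ and, to ensure that the image stays in $[-B,B]$ rather than overshooting at the endpoints, round towards zero (e.g., floor for positive inputs and ceiling for negative inputs). This guarantees $|x - q_b(x)| \leq 2^{-b}$ uniformly on $[-B,B]$. Extending $q_b$ entrywise to matrices and vectors yields a configuration-level map $Q_b$ that preserves the architecture and keeps every weight inside $[-B,B]$, so $Q_b(\Phi) \in \mathcal{N}_{[-B,B] \cap 2^{-b} \mathbb{Z}}(d,W,L)$ whenever $\Phi \in \mathcal{N}(d,W,L,B)$.

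Next, since $\Phi$ and $Q_b(\Phi)$ share the same architecture and satisfy $\|\Phi - Q_b(\Phi)\| \leq 2^{-b}$ in the sense of \eqref{eq:weightwise_difference}, Lemma~\ref{lem:quantization} immediately yields
\begin{equation*}
\| R(\Phi) - R(Q_b(\Phi)) \|_{L^\infty([0,1]^d)} \leq L(W+1)^L B^{L-1} \, 2^{-b}.
\end{equation*}
Because $[0,1]^d$ has unit Lebesgue measure, for every $p \in [1,\infty]$ we have $\|f\|_{L^p([0,1]^d)} \leq \|f\|_{L^\infty([0,1]^d)}$, so the same bound transfers to the $L^p$-norm. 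Since the argument applies to an arbitrary $R(\Phi) \in \mathcal{R}(d,W,L,B)$, each such realization has an element of $\mathcal{R}_{[-B,B] \cap 2^{-b} \mathbb{Z}}(d,W,L)$ within $L^p$-distance $L(W+1)^L B^{L-1} 2^{-b}$, proving the covering claim.

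Given that Lemma~\ref{lem:quantization} does all the analytic heavy lifting, I do not anticipate a substantial obstacle. The only subtle bookkeeping step is ensuring the quantizer maps $[-B,B]$ into itself so that $Q_b(\Phi)$ still has weight magnitude bounded by $B$; this is exactly why the piecewise definition of $q_b$ (rounding towards zero on each side) is preferable to a naive nearest-lattice-point rule. Everything else is a direct application of previously established results.
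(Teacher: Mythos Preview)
Your proposal is correct and follows essentially the same approach as the paper: define the round-towards-zero quantizer $q_b$, apply it entrywise to obtain $Q_b(\Phi)$, invoke Lemma~\ref{lem:quantization} for the $L^\infty$ perturbation bound, and pass to $L^p$ via $\|f\|_{L^p([0,1]^d)} \leq \|f\|_{L^\infty([0,1]^d)}$. Your explicit remark about why rounding towards zero (rather than nearest-point) is needed to keep the image in $[-B,B]$ is exactly the subtlety the paper's definition of $q_b$ handles.
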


		It remains to upper-bound the cardinality of the covering $\mathcal{R}_{[-B,B] \cap 2^{-b} \mathbb{Z}}(d,W,L)$ identified in Lemma~\ref{lem:define_covering_fully_connected}. To this end, we first state an auxiliary result from \cite{FirstDraft2022}.
		\begin{lemma}
			\cite[Proposition 2.4]{FirstDraft2022}
			\label{lem:counting_cardinality}
			For  $d ,W,L \in \mathbb{N}$ and a finite set $\mathbb{A} \subseteq \mathbb{R}$ with $| \mathbb{A} | \geq 2$, it holds that
		 	\begin{equation}
				\label{eq:cardinality_realization}
				\log  (| \mathcal{R}_\mathbb{A} ( d,W,L ) |) \leq \log  (| \mathcal{N}_\mathbb{A} ( d,W,L ) |)   \leq 5 W^2 L \log  (| \mathbb{A} |).
			\end{equation}
		\end{lemma}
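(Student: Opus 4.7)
The plan is to establish the two inequalities separately. The first, $|\mathcal{R}_\mathbb{A}(d,W,L)| \leq |\mathcal{N}_\mathbb{A}(d,W,L)|$, is immediate: by Definition~\ref{def:ReLU_networks}, $\mathcal{R}_\mathbb{A}(d,W,L)$ is the image of $\mathcal{N}_\mathbb{A}(d,W,L)$ under the realization map $\Phi \mapsto R(\Phi)$, which can only decrease cardinality since distinct configurations may realize the same function.

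For the second inequality, my strategy is to enumerate network configurations by first fixing the architecture and then counting admissible weight assignments. Concretely, I would proceed in three steps. First, I would count architectures: a configuration in $\mathcal{N}_\mathbb{A}(d,W,L)$ has depth $\ell \in \{1, \ldots, L\}$ and is determined, architecture-wise, by the intermediate widths $(N_1, \ldots, N_{\ell-1}) \in \{1, \ldots, W\}^{\ell-1}$, since $N_0 = d$ and $N_\ell = 1$ are fixed by the definition of $\mathcal{N}(d)$; summing over $\ell$ yields at most $\sum_{\ell=1}^L W^{\ell-1} \leq L W^{L-1}$ architectures. Second, for each fixed architecture, I would bound the total number of weight entries by $\sum_{i=1}^\ell N_i(N_{i-1} + 1) \leq L(W^2 + W)$, using $N_i \leq W$ for intermediate layers and $d \leq W$ (standing assumption); since each entry takes at most $|\mathbb{A}|$ values, each architecture contributes at most $|\mathbb{A}|^{L(W^2+W)}$ configurations. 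Third, I would combine these bounds and take logarithms to obtain $\log|\mathcal{N}_\mathbb{A}(d,W,L)| \leq \log L + (L-1)\log W + L(W^2+W)\log|\mathbb{A}|$.

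To finish, I would absorb the lower-order terms into the leading one by using $\log|\mathbb{A}| \geq \log 2$. For $W \geq 2$, $(L-1)\log W \leq LW \leq LW^2$ and $\log L \leq L \leq LW^2$, both controlled by $LW^2\log|\mathbb{A}|/\log 2$; the dominant term $L(W^2+W)\log|\mathbb{A}|$ is at most $2LW^2\log|\mathbb{A}|$. For $W = 1$, the middle term vanishes and $L(W^2+W) = 2L$. In either case, totaling the contributions gives a constant no larger than $5$.

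There is no substantive mathematical obstacle; the argument is elementary enumeration. The only real care required lies in constant-tracking and in handling the boundary cases $W = 1$ or $L = 1$, where some logarithmic terms vanish and must not be divided by. Both are purely bookkeeping matters.
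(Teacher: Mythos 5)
Your proof is correct and uses the same architecture-then-weights enumeration that the paper itself employs in the analogous Lemma~\ref{lem:counting_cardinality_sparse} (Appendix~\ref{sub:proof_of_proposition_proposition:covering_number_upper_bound_sparse}); the lemma in question is cited from [FirstDraft2022], so the paper provides no in-text proof, but your decomposition (count architectures $\leq L W^{L-1}$ with $N_0=d$, $N_\ell=1$ fixed, then count at most $|\mathbb{A}|^{L(W^2+W)}$ weight assignments per architecture, then absorb the polynomial prefactors using $\log|\mathbb{A}|\geq 1$ and $W\geq d\geq 1$) is exactly the paper's methodology, and your constant accounting yields $4 \leq 5$ as required.
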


		We next make the choice of $b$ explicit. Specifically, we set
			\begin{equation}
				 b := \biggl \lceil \log \biggl( \frac{L (W+1)^L B^{L-1}}{\varepsilon} \biggr) \biggr\rceil.
			\end{equation}
			Noting that $L (W+1)^L B^{L-1} 2^{-b} \leq \varepsilon$, it follows from Lemma~\ref{lem:define_covering_fully_connected} that $\mathcal{R}_{[-B,B] \cap 2^{-b} \mathbb{Z}}(d,W,L)$ is an $\varepsilon$-covering of $\mathcal{R}(d,W,L,B)$ with respect to the $L^p ( [0,1]^d )$-norm. By minimality of the covering number, we have
			\begin{equation}
			\label{eq:using_the_covering}
				 N ( \varepsilon, \mathcal{R}(d,W,L,B), L^p ( [0,1]^d ) )  \leq | \mathcal{R}_{[-B,B] \cap 2^{-b} \mathbb{Z}}(d,W,L) |.
			\end{equation}
			Application of Lemma~\ref{lem:counting_cardinality} yields an upper bound on the cardinality of the covering according to
			\begin{align}
				\log  (\nleft|  \mathcal{R}_{[-B,B] \cap 2^{-b} \mathbb{Z}}(d,W,L) |) \leq&\,   5 W^2 L \log (\nleft| [-B,B] \cap 2^{-b} \mathbb{Z} \nright| ).
			\end{align}
			The term $\log ( \nleft| [-B,B] \cap 2^{-b} \mathbb{Z} \nright|  )$ can now be bounded as follows
			\begin{align}
				\log ( \nleft| [-B,B] \cap 2^{-b} \mathbb{Z} \nright|  ) =  &\,\log ( \nleft| [-2^{b}B,2^{b}B] \cap  \mathbb{Z} \nright|  ) \label{eq:ccw_1}\\
				\leq &\,\log ( \lfloor 2 \cdot 2^{b}B + 1 \rfloor  ) \label{eq:ccw_2}\\
				\leq &\,\log ( 4 \cdot 2^{b}B   ) \label{eq:ccw_3}\\
				= &\, 2 + \biggl \lceil \log \biggl( \frac{L (W+1)^L B^{L-1}}{\varepsilon} \biggr) \biggr\rceil + \log (B) \label{eq:ccw_4}\\
				\leq &\, 3 + \log \biggl( \frac{L (W+1)^L B^{L-1}}{\varepsilon} \biggr)  + \log (B) \label{eq:ccw_5}\\
				\leq &\, 3 \log \biggl( \frac{L (W+1)^L B^{L}}{\varepsilon} \biggr), \label{eq:ccw_6}
			\end{align}
   where \eqref{eq:ccw_3} is by $1 \leq 2\cdot 2^b B$ and in \eqref{eq:ccw_6} we used $3 \leq 2 \log \bigl( \frac{L (W+1)^L B^{L}}{\varepsilon} \bigr)$ owing to $\varepsilon \in (0,1/2)$. 
   Putting \eqref{eq:using_the_covering}-\eqref{eq:ccw_6} together, yields
			\begin{align}
				\log ( N ( \varepsilon, \mathcal{R}(d,W,L,B), L^p ( [0,1]^d ) ) )
				\leq&\, 15 W^2 L \log \biggl( \frac{L (W+1)^L B^{L}}{\varepsilon} \biggr) \\
				\leq&\, 30 W^2 L \log \biggl( \frac{(W+1)^L B^{L}}{\varepsilon} \biggr) \label{eq:proof31_last_3}
			\end{align}
			where \eqref{eq:proof31_last_3} follows from $ \frac{L (W+1)^L B^{L}}{\varepsilon} \leq \frac{(W+1)^L \cdot (W+1)^L B^{L}}{\varepsilon} \leq ( \frac{(W+1)^L B^{L}}{\varepsilon} )^2$. The proof is concluded by taking $C := 30$.

	\subsection{Proof of the Lower Bound in Theorem~\ref{thm:covering_number_upper_bound_fully_connected_bounded_weight}}
	\label{sub:covering_number_lower_bound}
           We again start with a series of technical results, which will then be synthesized to the proof of the lower bound. The first of these results reduces the problem of lower-bounding the covering number of $\mathcal{R}(d,W,L,B)$ with respect to the $L^p ( [0,1]^d )$-norm to that of lower-bounding the packing number of $\mathcal{R}(1, W,L,B)$ with respect to the $L^1 ( [0,1] )$-norm.

		\begin{lemma}
		\label{lem:1dpinfinity}
			Let $p \in [1,\infty]$, $d,W,L \in \mathbb{N}$, $B, \varepsilon \in \mathbb{R}_+$ with $B \geq 1$ and $\varepsilon \in (0,1/2)$. 
   We have
			\begin{alignat}{2}
				N(\varepsilon,\mathcal{R}(d,W,L,B),{L^p ( [0,1]^d )} )  \geq &\, M(2\varepsilon,\mathcal{R}(d,W,L,B), {L^p ( [0,1]^d )} ) ) \label{eq:1dpinfinity_0} \\
				\geq &\, M (2\varepsilon,\mathcal{R}(1, W,L,B), {L^1 ( [0,1] )} ) ).\label{eq:1dpinfinity_1}
			\end{alignat}

			\begin{proof}
				The inequality \eqref{eq:1dpinfinity_0} follows from
    Lemma~\ref{lem:equivalence_covering_packing}. To establish \eqref{eq:1dpinfinity_1}, we show that a maximal $(2\varepsilon)$-packing of  $\mathcal{R}(1, W,L,B)$ with respect to the $L^1 ( [0,1] )$-norm induces a $(2\varepsilon)$-packing of $\mathcal{R}(d,W,L,B)$ with respect to the $L^p ( [0,1]^d )$-norm and of the same cardinality.  The proof of this statement is provided in  Appendix~\ref{sub:proof_of_lemma_lem:1dpinfinity}.	
\end{proof}
		\end{lemma}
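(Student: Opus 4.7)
The plan is to handle the two inequalities separately. Inequality \eqref{eq:1dpinfinity_0} is the standard packing--covering duality $N(\varepsilon,\mathcal{X},\delta) \geq M(2\varepsilon,\mathcal{X},\delta)$, which the authors reference as Lemma~\ref{lem:equivalence_covering_packing}. The real content lies in \eqref{eq:1dpinfinity_1}, for which I would construct a cardinality-preserving embedding that sends a maximal $(2\varepsilon)$-packing of $\mathcal{R}(1,W,L,B)$ in the $L^1([0,1])$-norm to a $(2\varepsilon)$-packing of $\mathcal{R}(d,W,L,B)$ in the $L^p([0,1]^d)$-norm.

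The natural embedding lifts a one-dimensional network to a $d$-dimensional one that depends only on its first input coordinate. Given $\Phi = ((A_i,b_i))_{i=1}^{\tilde L} \in \mathcal{N}(1,W,L,B)$ with $A_1 \in \mathbb{R}^{N_1 \times 1}$, define $\tilde\Phi$ by replacing $A_1$ with the matrix $\tilde A_1 \in \mathbb{R}^{N_1 \times d}$ whose first column equals $A_1$ and whose remaining columns are zero, leaving all other weight matrices and biases unchanged. Then $R(\tilde\Phi)(x_1,\ldots,x_d) = R(\Phi)(x_1)$ for all $x \in [0,1]^d$. The standing assumption $W \geq d$ guarantees that the enlarged input-layer width $d$ still respects the width bound; depth and weight magnitudes are preserved by construction (the new entries are zero), so $\tilde\Phi \in \mathcal{N}(d,W,L,B)$ and $R(\tilde\Phi) \in \mathcal{R}(d,W,L,B)$.

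To verify that the embedding preserves $(2\varepsilon)$-separation, for any $f_1,f_2 \in \mathcal{R}(1,W,L,B)$ with lifts $\tilde f_1,\tilde f_2$, Fubini's theorem (or the definition of the essential supremum when $p = \infty$) gives $\|\tilde f_1-\tilde f_2\|_{L^p([0,1]^d)} = \|f_1-f_2\|_{L^p([0,1])}$, and Jensen's inequality applied to the uniform probability measure on $[0,1]$ yields $\|f_1-f_2\|_{L^p([0,1])} \geq \|f_1-f_2\|_{L^1([0,1])}$ for $p \in [1,\infty]$. Hence $\|f_1-f_2\|_{L^1([0,1])} > 2\varepsilon$ forces $\|\tilde f_1-\tilde f_2\|_{L^p([0,1]^d)} > 2\varepsilon$, so the lifts of a maximal $L^1([0,1])$-packing of $\mathcal{R}(1,W,L,B)$ form an equally large $L^p([0,1]^d)$-packing of $\mathcal{R}(d,W,L,B)$, establishing \eqref{eq:1dpinfinity_1}. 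The only mild obstacle is confirming that the zero-padded lift lies in $\mathcal{R}(d,W,L,B)$, which is precisely where the hypothesis $W \geq d$ is used; everything else reduces to a one-line norm comparison on a probability space.
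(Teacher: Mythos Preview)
Your proof is correct and follows essentially the same approach as the paper: both invoke the standard packing--covering duality for \eqref{eq:1dpinfinity_0}, and for \eqref{eq:1dpinfinity_1} both lift a one-dimensional network to a $d$-dimensional one by zero-padding the first-layer weight matrix, then compare $L^p$ and $L^1$ norms on the unit cube. The only cosmetic difference is the order of the two norm-comparison steps---the paper applies H\"older on $[0,1]^d$ first and then Fubini, whereas you apply Fubini first and then Jensen on $[0,1]$---but these are equivalent.
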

We shall next make use of the fact that ReLU networks can efficiently realize one-dimensional bounded continuous piecewise linear functions, defined as follows. 
		\begin{definition}
		[One-dimensional bounded continuous piecewise linear functions]
		\cite[Definition B.2]{FirstDraft2022}
			Let $M \in \mathbb{N}$, with $M \geq 3$, $E \in \mathbb{R}_+ \cup \{ \infty \}$, and let $X = (x_i)_{i= 0}^{M-1}$ be a strictly increasing sequence taking values in $\mathbb{R}$. Define the set of functions
			\begin{alignat*}{2}
				\Sigma ( X, E) = \bigl\{f \in&&\, C ( \mathbb{R} ): \| f \|_{L^\infty ( \mathbb{R} )}  \leq E,  f \text{ is constant on } (-\infty, x_0] \text{ and } [x_{M-1}, \infty),\, \\
				&&\,f \text{ is affine on } [x_i, x_{i +1}], \, i = 0,\dots, M -2 \bigr\}.
			\end{alignat*}
			For a function $f \in \Sigma ( X, E)$, we call $X$ the set of its breakpoints, as the slope of $f$ can change only at these points. We  refer to the intervals $(-\infty, x_0], [x_i, x_{i +1}], \, i = 0,\dots, M -2, [x_{M-1}, \infty)$ as the piecewise linear regions of $f$.
		\end{definition}

            We only need to consider breakpoint sets of the form 
			\begin{equation*}
				X_{N} := ( i\slash N )_{i=0}^N, \quad N \in \mathbb{N},
			\end{equation*} 
            along with the associated function families $\Sigma (X_N, E)$, $ E \in \mathbb{R}_+$,  whose $L^1 ([0,1])$-covering number can be lower-bounded as follows.   

		\begin{lemma}
			\label{lem:packing_number_piecewise_linear_function}
			For $\Qnc \in \mathbb{N}$, $\varepsilon, E \in \mathbb{R}_+$, we have 
			\begin{equation}
				\log ( M(\varepsilon,\Sigma ( X_\Qnc, E ), {L^1 ( [0,1] )} ) ) \geq \Qnc \log    \biggl( \biggl \lceil \frac{E}{4 	\varepsilon \Qnc}  \biggr \rceil \biggr), \label{eq:lemma27}
			\end{equation}
			with $X_{N} = ( i\slash N )_{i=0}^N$. 
			\begin{proof}
				See Appendix~\ref{sub:proof_of_lemma_lem:packing_number_piecewise_linear_function}.
			\end{proof}
		\end{lemma}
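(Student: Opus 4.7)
The plan is to construct, for $K := \lceil E/(4\varepsilon N) \rceil$, an explicit $\varepsilon$-packing of $\Sigma(X_N, E)$ of cardinality $K^N$ in the $L^1([0,1])$-norm, which immediately yields \eqref{eq:lemma27}. If $K = 1$, the right-hand side of \eqref{eq:lemma27} vanishes, so I may assume $K \geq 2$. Since $K - 1 < E/(4\varepsilon N)$ by the ceiling definition, the interval $(4\varepsilon N,\, E/(K-1)]$ is nonempty, and I pick any step size $\delta$ in it. For each $\vec{a} = (a_1, \ldots, a_N) \in \{0, 1, \ldots, K-1\}^N$, let $f_{\vec{a}}$ denote the unique element of $\Sigma(X_N, E)$ with $f_{\vec{a}}(0) = 0$ and $f_{\vec{a}}(i/N) = a_i \delta$ for $i = 1, \ldots, N$ (extended by constants outside $[0,1]$); the bound $(K-1)\delta \leq E$ ensures $\|f_{\vec{a}}\|_{L^\infty(\mathbb{R})} \leq E$. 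This produces $K^N$ candidate functions in $\Sigma(X_N,E)$.

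The main task is to show these functions are pairwise $\varepsilon$-separated. I will first establish the elementary inequality
\[
\int_0^1 |(1-s)u + sv|\, ds \;\geq\; \tfrac{1}{4}(|u| + |v|), \qquad u, v \in \mathbb{R},
\]
by splitting into the cases $uv \geq 0$ (where the integral equals $(|u|+|v|)/2$) and $uv < 0$ (where it evaluates to $(u^2 + v^2)/(2(|u|+|v|))$, which is at least $(|u|+|v|)/4$ by $(|u|-|v|)^2 \geq 0$). Applied with $u = \delta(a_i - b_i)$ and $v = \delta(a_{i+1} - b_{i+1})$, after the change of variables $s = N(x - i/N)$, this yields, for each $i = 0, \ldots, N-1$ (setting $a_0 = b_0 := 0$),
\[
\int_{i/N}^{(i+1)/N} |f_{\vec{a}} - f_{\vec{b}}|\, dx \;\geq\; \tfrac{\delta}{4N}\bigl(|a_i - b_i| + |a_{i+1} - b_{i+1}|\bigr).
\]

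Summing over $i$ and telescoping, I obtain
\[
\|f_{\vec{a}} - f_{\vec{b}}\|_{L^1([0,1])} \;\geq\; \tfrac{\delta}{4N}\sum_{i=0}^N |a_i - b_i| \;\geq\; \tfrac{\delta}{4N} \;>\; \varepsilon
\]
whenever $\vec{a} \neq \vec{b}$, since integrality forces $\sum_{i=0}^N |a_i - b_i| \geq 1$ and my choice of $\delta$ satisfies $\delta > 4\varepsilon N$. Hence the family is an $\varepsilon$-packing of cardinality $K^N$, giving \eqref{eq:lemma27}. The only non-routine step is the elementary integral bound with the sharp constant $1/4$; balancing the two competing constraints on $\delta$ (upper bound on $\|f_{\vec{a}}\|_{L^\infty}$ versus lower bound on pairwise $L^1$-separation) is the other point that requires some care.
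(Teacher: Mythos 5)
Your proof is correct and takes a genuinely different route from the paper's. The paper also builds the packing from piecewise linear interpolants of grid-valued breakpoint data, but it bounds the $L^1$-distance between two distinct functions $f_y, f_{\tilde y}$ by isolating only the \emph{first} subinterval $[(j-1)/N, j/N]$ on which they disagree; there the difference vanishes at the left endpoint, so the integral over that single interval equals $|y_j - \tilde y_j|/(2N)$ exactly (a triangle-area computation), and this alone already exceeds $\varepsilon$ after the step size is chosen. You instead sum over \emph{all} subintervals, using the elementary averaged bound $\int_0^1 |(1-s)u + sv|\,ds \geq \tfrac14(|u|+|v|)$ and a telescoping estimate, and then invoke integrality of the $a_i$'s to conclude $\sum_i |a_i - b_i| \geq 1$. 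Your route is more symmetric and does not privilege any particular subinterval, but requires proving the $1/4$ inequality with its case split; the paper's route is slightly more parsimonious since it exploits the anchoring $f_y((j-1)/N) = f_{\tilde y}((j-1)/N)$ at the first point of disagreement, giving the constant $1/2$ in place of $1/4$ for that interval. Both constructions yield a packing of size $\lceil E/(4\varepsilon N)\rceil^N$, and your choice of a flexible step size $\delta \in (4\varepsilon N, E/(K-1)]$ is a clean way to decouple the $L^\infty$-budget constraint from the separation requirement, handled in the paper by fixing the step size $E/M$ with $M = \lceil E/(4\varepsilon N)\rceil$ and using $\lceil x \rceil < 2x$ for $x > 1$.
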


		To realize functions in $\Sigma (X_N, E)$ efficiently by ReLU networks, we need two technical results from
  \cite{FirstDraft2022}, which we restate for convenience.

		\begin{lemma}
			\label{prop:piecewise_representation_constructive}
			\cite[Proposition C.1]{FirstDraft2022}
			Let $M \in \mathbb{N}$ with $M \geq 3$, $E \in \mathbb{R}_+$, and let $X = (x_i)_{i = 0}^{M-1}$ be a strictly increasing sequence taking values in $[0,1]$. Then, for all $u,v \in \mathbb{N}$ such that $u^2 v \geq M$, we have
			\begin{align*}
				\Sigma ( X, E) \subseteq \mathcal{R} ( 1, 20u, 30v, \max \{ 1, C M^6 (R_m ( X ))^4 E \} ),
			\end{align*}
			for an absolute constant $C \in \mathbb{R}$ satisfying $2\leq C \leq 10^5$, and where $R_m(X) := \max_{i =1,\dots, M} ( x_{i} - x_{i -1} )^{-1}$.
		\end{lemma}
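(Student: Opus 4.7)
My strategy is to construct, given any $f \in \Sigma(X,E)$, a width-$20u$ depth-$30v$ ReLU network that realizes $f$ exactly, by concatenating $v$ ``blocks'' in series, each of width $\approx 20u$ and depth $\approx 30$, with each block responsible for at most $u^2$ of the $M$ breakpoints of $f$. Since $u^2 v \geq M$, the $M$ breakpoints can indeed be partitioned into $v$ such groups. The output of the $v$-block cascade, plus a simple affine correction, will equal $f$.

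\textbf{Key steps.} First, I write
\begin{equation*}
f(x) = \alpha + \beta x + \sum_{i=0}^{M-1} c_i\,\rho(x - x_i),
\end{equation*}
where $c_i$ is the slope jump of $f$ at $x_i$. Because $\|f\|_{L^\infty(\R)} \leq E$ and consecutive breakpoints are at least $1/R_m(X)$ apart, standard slope estimates for continuous piecewise linear functions bound $|\alpha|, |\beta|, |c_i|$ by an explicit constant times $E\,R_m(X)$. Second, I partition $\{0,\dots,M-1\}$ into $v$ chunks $S_1,\dots,S_v$ of size at most $u^2$ and build, for each $S_j$, a subnetwork $g_j$ realizing $\sum_{i\in S_j} c_i\,\rho(x-x_i)$. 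The key combinatorial point is that a two-hidden-layer width-$u$ ReLU subnetwork can produce $\Theta(u^2)$ breakpoints in its output: $u$ first-layer ReLUs generate ``anchor'' kinks, and each of the $u$ second-layer ReLUs, applied to an affine combination of these anchors, can contribute up to $\Theta(u)$ further kinks at zero-crossings of its affine input. Organising the $u^2$ target breakpoints of $S_j$ into $u$ contiguous buckets of $u$ each, and choosing the first-layer weights so that its neurons isolate one bucket each, lets me realize any prescribed coefficients $\{c_i\}_{i\in S_j}$ exactly. The constants $20$ and $30$ absorb auxiliary channels that carry $x$ and a running partial sum through each block, together with a few glue layers at block interfaces.

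\textbf{Weight accounting and main obstacle.} The weights inside a block scale polynomially in the Step~1 coefficient bound $O(E\,R_m(X))$ and in the resolution of the breakpoint buckets, which is at worst $u^2 R_m(X)$. Composition across $v$ blocks is additive (each block's output is combined linearly with the running sum), so weight magnitudes grow polynomially rather than geometrically with $v$. Using $u,v \leq M$, one lands at a worst-case weight bound of the form $C_k\,M^{6}(R_m(X))^{4} E$ for an absolute constant $C_k$ in the claimed range, and the $\max\{1,\cdot\}$ in the statement simply swallows the degenerate case where $f$ is nearly zero. The main obstacle is Step~2: realising $u^2$ \emph{arbitrary}, non-uniformly spaced breakpoints exactly by a constant-depth width-$O(u)$ ReLU subnetwork, rather than just approximately and only for uniform breakpoint grids as in Telgarsky/Yarotsky-style sawtooth constructions. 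The adaptation to arbitrary $X$ and the weight-magnitude bookkeeping needed to get $C_k \leq 10^5$ are the technically delicate parts; everything else (the hat decomposition, the partition into chunks, the final concatenation and affine correction) is routine.
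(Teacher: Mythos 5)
This lemma is not proved in the paper: it is imported verbatim as \cite[Proposition C.1]{FirstDraft2022}, so there is no in-paper proof to compare against. That said, your proposal is a sketch rather than a proof, and the place where it falls short is exactly the one you flag yourself. The outer scaffold — write $f$ as an affine part plus $\sum_i c_i\,\rho(x-x_i)$, bound the slope jumps by $O(E\,R_m(X))$ using $\|f\|_{L^\infty}\leq E$ and the $1/R_m(X)$ gap between breakpoints, partition the $M$ breakpoints into $v$ groups of size $\leq u^2$, and concatenate $v$ blocks with a running-sum channel — is routine and matches what one would expect.

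The core of the lemma is the claim that a width-$O(u)$, depth-$O(1)$ ReLU block can \emph{exactly} realize a CPWL function with $u^2$ breakpoints at \emph{arbitrary} prescribed positions with \emph{arbitrary} prescribed slope jumps. Your description (``$u$ first-layer anchors, each of the $u$ second-layer ReLUs contributes up to $u$ zero-crossings'') establishes, at best, that such a block can have $\Theta(u^2)$ \emph{pieces}; it does not establish surjectivity onto the $\Theta(u^2)$-dimensional family $\Sigma(X,E)$. A second-layer neuron $\rho(\langle w,h\rangle+b)$ acquires kinks not only at its intended zero-crossings but also at every anchor where $h$ is kinked and $\langle w,\cdot\rangle$ does not cancel the jump; arranging for these parasitic kinks to cancel across the $u$ second-layer neurons while simultaneously hitting prescribed jump heights at the $u^2$ target breakpoints is precisely the nontrivial linear-algebra/combinatorics that the proof must carry out, and your proposal leaves it entirely open. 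Likewise, the very specific weight-magnitude form $C_k M^6 (R_m(X))^4 E$ with $C_k\leq 10^5$ is asserted by a vague ``polynomial bookkeeping'' argument; nothing in the proposal produces the exponents $6$ and $4$, and those exponents are exactly what a correct construction would have to pin down (they also matter downstream, in the choice $E=\tfrac{1}{C_k(u^2 v)^{10}}$ made when the lemma is applied in Section~\ref{sub:covering_number_lower_bound}). As written, the proposal is a plausible outline that identifies the right decomposition but defers the single step on which the lemma actually hinges.
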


    The second result is as follows.
    
		\begin{lemma}
			\label{prop:depth_weight_magnitude_tradeoff}
			\cite[Proposition H.4]{FirstDraft2022} 
			Let $W_1,L_1 \in \mathbb{N}$, with $W_1 \geq 2$, $L_2 \in \mathbb{N} \, \cup \, \{ 0 \}$, and $B_1,B_2 \in \mathbb{R}_+$, with $B_1,B_2\geq 1$. It holds that
			\begin{equation}
				\label{eq:depth_weight_magnitude_tradeoff}
				\frac{(B_2)^{L_1 + L_2}   \lfloor W_1\slash 2 \rfloor^{L_2}}{{B_1}^{L_1}}  \cdot \mathcal{R} (1, W_1, L_1, B_1 ) \subseteq \mathcal{R} ( 1, W_1, L_1 + L_2, B_2 ).
			\end{equation}
		\end{lemma}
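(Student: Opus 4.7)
My plan is to exhibit, for every $f = R(\Phi) \in \mathcal{R}(1, W_1, L_1, B_1)$, a network $\Phi^{\ast} \in \mathcal{N}(1, W_1, L_1 + L_2, B_2)$ whose realization equals $\alpha f$, where $\alpha = B_2^{L_1+L_2}\lfloor W_1/2\rfloor^{L_2} B_1^{-L_1}$. The construction will consist of two stages: a reweighting of the first $L_1$ layers that trades weight magnitude for output amplitude using positive homogeneity of the ReLU, followed by $L_2$ appended ``amplifier'' layers, each contributing a multiplicative factor of $B_2 \lfloor W_1/2\rfloor$ to the output.

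For the reweighting stage, I write $\Phi = ((A_i, b_i))_{i=1}^{L_1}$ and set $\widetilde{A}_i := (B_2/B_1) A_i$ and $\widetilde{b}_i := (B_2/B_1)^i b_i$. Exploiting $\rho(cy) = c\rho(y)$ for $c \ge 0$, a layer-by-layer induction then shows that the $i$-th post-activation vector of the rescaled network equals $(B_2/B_1)^i$ times the corresponding vector of $\Phi$, so that $R(\widetilde{\Phi}) = (B_2/B_1)^{L_1} R(\Phi)$. The weight bound $\|\widetilde{A}_i\|_\infty \le B_2$ is immediate; the delicate point is verifying that the $\widetilde{b}_i$ remain within magnitude $B_2$, a step where the interplay between $B_1$ and $B_2$, together with the possibility of redistributing bias contributions across layers, has to be handled with care.

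For the amplification stage, let $m := \lfloor W_1/2\rfloor$. Given a scalar $y$ produced by the previous layer, an amplifier layer first generates $2m$ hidden units by an affine map with input weights $+1$ on the first $m$ coordinates and $-1$ on the remaining $m$ coordinates (both with zero bias); after the ReLU this yields the values $\rho(y)$ and $\rho(-y)$, each repeated $m$ times. The following affine map then combines these as $B_2\sum_{j=1}^m \rho(y) - B_2\sum_{j=1}^m \rho(-y) = B_2 m\,(\rho(y) - \rho(-y)) = B_2 m\cdot y$, using the ReLU identity $y = \rho(y) - \rho(-y)$. Iterating $L_2$ such layers produces an overall amplification factor of $(B_2 m)^{L_2}$ while keeping all weight magnitudes at most $B_2$ and the layer widths at $2m \le W_1$. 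Composing the amplifier stack on top of the rescaled $\widetilde{\Phi}$ then yields a network of depth $L_1 + L_2$, width $W_1$, weight magnitude $B_2$, realizing $(B_2 m)^{L_2} \cdot (B_2/B_1)^{L_1} f = \alpha f$, as desired. I anticipate that the main technical obstacle will be the bookkeeping in the reweighting stage, specifically keeping the biases $\widetilde{b}_i$ within $B_2$ through $L_1$ successive rescalings; once this is handled, the amplifier construction and final composition are largely routine.
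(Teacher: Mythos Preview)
The paper does not supply its own proof of this lemma---it is quoted from \cite{FirstDraft2022}---so there is no in-paper argument to compare against. Your amplifier stage is sound: each block multiplies a scalar by $B_2\lfloor W_1/2\rfloor$ using width $2\lfloor W_1/2\rfloor\le W_1$ and weights in $[-B_2,B_2]$, and after merging adjacent affine maps the composite with $\widetilde\Phi$ has depth exactly $L_1+L_2$. The reweighting stage also goes through when $B_2\le B_1$, since then $\lvert\widetilde b_i\rvert=(B_2/B_1)^i\lvert b_i\rvert\le(B_2/B_1)B_1=B_2$.

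The obstacle you flag for the biases is, however, a genuine gap and not just bookkeeping. When $B_2>B_1$ one obtains only $\lvert\widetilde b_i\rvert\le B_2(B_2/B_1)^{i-1}$, which already exceeds $B_2$ at $i=2$, and your proposed remedy of ``redistributing bias contributions across layers'' cannot work in the form suggested: once a ReLU intervenes, a bias cannot be postponed to a later layer without changing the realization unless one routes a constant through a spare node, and $\Phi$ may saturate the width $W_1$ in every hidden layer. In fact, for $L_2=0$ the \emph{statement itself} appears to fail in this regime: with $W_1=L_1=2$, $B_1=1$, $B_2=2$, the function $f(x)=\lvert x-1\rvert+1$ lies in $\mathcal R(1,2,2,1)$, yet $4f\notin\mathcal R(1,2,2,2)$, because any such realization has the form $e_1\rho(a_1x+c_1)+e_2\rho(a_2x+c_2)+g$ with all coefficients in $[-2,2]$, matching the asymptotic slopes $\pm4$ forces $\lvert e_i\rvert=\lvert a_i\rvert=2$, and then matching the asymptotic intercepts forces $g=4$. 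The paper invokes the lemma only with $B_1=1$, $B_2=B\ge1$, $L_2=L_1>0$, so extra layers are always available; a viable repair in that setting is to use the $L_2$ layers not just for amplification but also to carry a growing constant that can be injected to synthesize the oversized biases, rather than relying on a pure per-layer rescaling of the first $L_1$ layers.
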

		We are now ready to prove the lower bound in Theorem~\ref{thm:covering_number_upper_bound_fully_connected_bounded_weight} and start
  by noting that thanks to Lemma~\ref{lem:1dpinfinity}, 
 it suffices to lower-bound $M(2 \varepsilon,\mathcal{R}(1, W,L,B),{L^1 ( [0,1] )} )$. 
		We proceed to identify the family of bounded continuous piecewise linear functions corresponding to the set $\mathcal{R} (1,W,L,B)$. To this end, we first introduce notation, namely we set
		\begin{equation*}
			u := \biggl\lfloor \frac{W}{20} \biggr\rfloor, \quad v := \biggl\lfloor \frac{L}{60} \biggr\rfloor.
		\end{equation*}
		As $W,L \geq 60$, we have $u \geq 3$ and $v \geq 1$. Application of Lemma~\ref{prop:piecewise_representation_constructive} with $M = u^2 v $, $X = X_{u^2v - 1} = ( \frac{i}{u^2 v - 1} )_{i=0}^{u^2 v - 1}$, $R_m (X_{u^2v - 1}  ) =  u^2v - 1$, and $E = \frac{1}{C ( u^2 v )^{10}}$, with the absolute constant $C$ per Lemma~\ref{prop:piecewise_representation_constructive}, yields 
		\begin{equation}
			\Sigma \biggl( X_{u^2v - 1}, \frac{1}{C ( u^2 v )^{10}}\biggr) \subseteq \mathcal{R} ( 1, 20u, 30v, 1). \label{eq:rand_2}
   \end{equation}
		Next, application of Lemma~\ref{prop:depth_weight_magnitude_tradeoff} with $W_1 = 20u \geq 2$, $L_1 = 30v$, $B_1 = 1$, $L_2 =30v $, $B_2 = B \geq 1$, yields $( B^{60v}    (10u)^{30v} )  \cdot \mathcal{R} (1,  20u, 30v, 1 ) \subseteq \mathcal{R} ( 1, 20u, 60v, B )$, which together with $\mathcal{R} ( 1, 20u, 60v, B ) = \mathcal{R} ( 1, 20 \lfloor \frac{W}{20} \rfloor, 60 \lfloor \frac{L}{60} \rfloor, B ) \subseteq \mathcal{R} (1, W, L, B )$ establishes that
  		\begin{equation}
		\label{eq:2801}
			( B^{60v}    (10u)^{30v} )  \cdot \mathcal{R} (1,  20u, 30v, 1 ) \subseteq \mathcal{R} (1, W, L, B ).
		\end{equation}
		Moreover, as $a \cdot \Sigma (X_{u^2v - 1} , b) = \Sigma (X_{u^2v - 1} , ab)$, for all $a,b \in \mathbb{R}_+$, we have
		\begin{equation}
            \label{eq:2802}
			( B^{60v}    (10u)^{30v} )  \cdot  \Sigma \biggl( X_{u^2v - 1}, \frac{1}{C ( u^2 v )^{10}}\biggr)  = \Sigma \biggl( X_{u^2v - 1}, \frac{( B^{60v}    (10u)^{30v} )}{C ( u^2 v )^{10}}\biggr).
		\end{equation} 
		Combining \eqref{eq:2801}, \eqref{eq:2802}, and \eqref{eq:rand_2}, then yields
		\begin{equation}
		\label{eq:inclusion_complicated_cpwl}
			\Sigma \biggl( X_{u^2v - 1}, \frac{( B^{60v}    (10u)^{30v} )}{C ( u^2 v )^{10}}\biggr) \subseteq \mathcal{R} (1, W, L, B ).
		\end{equation}
		We have 
			\begin{align}
				&\log (M(2 \varepsilon,\mathcal{R}(1, W,L,B),{L^1 ( [0,1] )} ) ) \label{eq:MuvLower_000}\\
				&\geq \, \log \biggl(M\biggl(2 \varepsilon,\Sigma \biggl( X_{u^2v - 1}, \frac{ B^{60v}    (10u)^{30v} }{C ( u^2 v )^{10}}\biggr),{L^1} ( [0,1] ) \biggr) \label{eq:MuvLower_1}\biggr)  \\
				&\geq \, ( u^2 v - 1 ) \log \biggl(  \biggl \lceil \frac{ B^{60v}    (10u)^{30v} }{C ( u^2 v )^{10}} \cdot \frac{1}{4 \cdot 2\varepsilon ( u^2v - 1 )}  \biggr \rceil \biggr ) \label{eq:MuvLower_2} \\
				&\geq \, ( u^2 v - 1 ) \log \biggl(  \frac{B^{60v} ( 100 u^2  )^v }{\varepsilon }  \cdot \frac{10^6}{8C} \biggr) \label{eq:MuvLower_3}\\
				&\geq \, ( u^2 v - 1 ) \log \biggl(  \frac{ ( 100 B u  )^v }{\varepsilon }   \biggr)\label{eq:MuvLower_31}
			\end{align}
			where  in \eqref{eq:MuvLower_1} we used the inclusion relation \eqref{eq:inclusion_complicated_cpwl} together with Lemma~\ref{lem:equivalence_covering_packing}, \eqref{eq:MuvLower_2} is by Lemma~\ref{lem:packing_number_piecewise_linear_function}, \eqref{eq:MuvLower_3} follows from $ (10u)^{30v} \geq 10^{6} \cdot u^{22} \cdot ( 2^{v} )^{11} \cdot  ( 10u )^{2v}  \geq 10^6\cdot u^{22} \cdot v^{11} \cdot  ( 10u )^{2v}  \geq  10^6 ( u^2 v)^{10} ( u^2 v - 1 )  ( 100u^2 )^v $, and in \eqref{eq:MuvLower_31} we employed $u \geq 3$, $v \geq 1$, $B \geq 1$, and $8 C \leq 8 \cdot 10^5 < 10^6$. We next lower-bound $( u^2 v - 1 ) \log (  \frac{ ( 100 B u  )^v }{\varepsilon }   )$ in terms of $W,L,B$ according to 
			\begin{align}
				( u^2 v - 1 ) \log \biggl(  \frac{( 100Bu )^{v} }{\varepsilon }   \biggr ) \geq&\, \frac{1}{2} u^2 v \log \biggl(  \frac{( 100 B u )^{v} }{\varepsilon }   \biggr )  \label{eq:rd_20} \\
				\geq&\,  \frac{1}{2}  \biggl(\frac{W}{40}\biggr)^2 \frac{L}{120} \log \biggl(  \frac{( 100 B \frac{W}{40} )^{\frac{L}{120}} }{\varepsilon }   \biggr ) \label{eq:rd_201} \\
				\geq&\, \frac{1}{2 \cdot 40^2 \cdot 120} W^2 L  \log \biggl(  \frac{(  B W )^{\frac{L}{120}} }{\varepsilon^{\frac{1}{120}} }   \biggr ) \\
				= &\, \frac{1}{2 \cdot  40^2 \cdot 120^2} W^2 L  \log \biggl(  \frac{W^L B^L}{\varepsilon }   \biggr ). \label{eq:rd_2}\\
				\geq&\,  \frac{1}{4\cdot 40^2 \cdot 120^2} W^2 L  \log \biggl(  \frac{(W+1)^L B^L}{\varepsilon }   \biggr ), \label{eq:rd_3}
			\end{align}
			where \eqref{eq:rd_20} follows from $u^2v - 1 \geq u^2v - \frac{1}{9} u^2v \geq \frac{1}{2} u^2 v $ as $u \geq 3$ and $v \geq 1$, in \eqref{eq:rd_201} we used $u  = \lfloor \frac{W}{20} \rfloor \geq  \frac{W}{40}$ and $v  = \lfloor \frac{L}{60} \rfloor \geq  \frac{L}{120}$, and \eqref{eq:rd_3} is by $2 \log (  \frac{W^L B^L}{\varepsilon } )  = \log (  \frac{W^{2L} B^{2L}}{\varepsilon^2 } ) \geq \log (  \frac{(W+1)^L B^L}{\varepsilon }  )$  as $W \geq 60$ by assumption. The proof is concluded by 
   setting $c = \frac{1}{4 \cdot 40^2 \cdot 120^2}$.

\section{Neural Network Transformation and Function Approximation}
	\label{sub:fundamental_limit_covering_number}
		We now show how the precise characterization of ReLU network covering numbers obtained in the previous section can be put to work to characterize the fundamental limits of neural network transformation and function approximation. Before describing the specifics of these two problems, we need a general result which relates the covering numbers of sets $\mathcal{G}$ and $\mathcal{F}$ that are close in terms of minimax distance 
		\begin{equation*}
			\mathcal{A} ( \mathcal{G}, \mathcal{F}, \delta )  = \sup_{g \in \mathcal{G}} \inf_{f \in \mathcal{F}} \delta (f,g),
		\end{equation*}
		with respect to some metric $\delta$. 

 \begin{proposition}			\label{lem:cardinality_approximation_class}
			Let $( \mathcal{X}, \delta )$ be a metric space, $\mathcal{F},\mathcal{G} \subseteq \mathcal{X}$, and $\varepsilon \in \mathbb{R}_+$. Suppose that
			\begin{equation}
				\label{eq:improper_cover}
				\mathcal{A} ( \mathcal{G}, \mathcal{F}, \delta ) \leq \varepsilon.
			\end{equation}
			Then,
			\begin{equation}
			\label{eq:minimal_cardinality_G}
				N ( \varepsilon, \mathcal{F}, \delta ) \geq N ( 4\varepsilon, \mathcal{G}, \delta ).
			\end{equation}
			\begin{proof}
				See Appendix~\ref{sub:approximation_error_lower_bound_by_covering_number}.
			\end{proof}
		\end{proposition}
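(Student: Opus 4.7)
The plan is to transfer an $\varepsilon$-covering of $\mathcal{F}$ into a $(4\varepsilon)$-covering of $\mathcal{G}$ by a two-step triangle-inequality argument. Letting $\{f_1,\dots,f_n\}$ be a minimal $\varepsilon$-covering of $\mathcal{F}$ (so $n = N(\varepsilon,\mathcal{F},\delta)$), the main move is: for any $g\in\mathcal{G}$, apply the hypothesis $\mathcal{A}(\mathcal{G},\mathcal{F},\delta)\leq\varepsilon$ to pick $f\in\mathcal{F}$ with $\delta(g,f)\leq\varepsilon$, then use the covering to pick $f_i$ with $\delta(f,f_i)\leq\varepsilon$, giving $\delta(g,f_i)\leq 2\varepsilon$. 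Thus $\{f_1,\dots,f_n\}$ is already a $(2\varepsilon)$-covering of $\mathcal{G}$ in the ambient space $\mathcal{X}$.

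The only subtlety, and in my view the main technical point that needs to be handled carefully, is that Definition~\ref{def:covering_packing} requires the elements of an $\varepsilon$-covering of $\mathcal{G}$ to lie in $\mathcal{G}$ itself, whereas the $f_i$ generally do not. To fix this, for each index $i$ for which the ball $B(f_i,2\varepsilon)\cap\mathcal{G}$ is nonempty, I would select an arbitrary representative $g_i\in B(f_i,2\varepsilon)\cap\mathcal{G}$; indices with empty intersection are simply discarded.

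It remains to check that $\{g_i\}$ (over the non-discarded indices) is a $(4\varepsilon)$-covering of $\mathcal{G}$. For any $g\in\mathcal{G}$, the step above gives an $i$ with $\delta(g,f_i)\leq 2\varepsilon$. In particular $g\in B(f_i,2\varepsilon)\cap\mathcal{G}$, so this index was not discarded and $g_i$ is defined. The triangle inequality then yields
\begin{equation*}
\delta(g,g_i)\;\leq\;\delta(g,f_i)+\delta(f_i,g_i)\;\leq\;2\varepsilon+2\varepsilon\;=\;4\varepsilon,
\end{equation*}
so $\{g_i\}$ is a valid $(4\varepsilon)$-covering of $\mathcal{G}$ of cardinality at most $n$. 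By minimality of the covering number, $N(4\varepsilon,\mathcal{G},\delta)\leq n = N(\varepsilon,\mathcal{F},\delta)$, which is \eqref{eq:minimal_cardinality_G}. The factor $4$ (rather than $2$) is precisely the cost of the ``properness'' requirement that coverings must be internal to the set; absent that requirement, $2\varepsilon$ would already suffice.
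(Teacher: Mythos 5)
Your argument reaches the conclusion by a genuinely different route from the paper's. The paper first shows that a minimal $\varepsilon$-covering $\mathcal{C}$ of $\mathcal{F}$ satisfies $\mathcal{A}(\mathcal{G},\mathcal{C},\delta)\leq 2\varepsilon$ (via a sup--inf triangle-inequality chain), then invokes a pigeonhole argument on \emph{packings} (Lemma~\ref{thm:memory_requirement}) to get $|\mathcal{C}|\geq M(4\varepsilon,\mathcal{G},\delta)$, and finally uses $M\geq N$ from Lemma~\ref{lem:equivalence_covering_packing}. You instead transport the covering directly, handling the properness requirement by reseating each useful center inside $\mathcal{G}$, and the factor $4$ arises from exactly the same two doublings in both proofs. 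Your version is more elementary (no detour through packing numbers) and makes the source of the constant transparent; the paper's version reuses an already-established lemma and avoids the explicit ``discard empty balls'' bookkeeping.

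One step needs tightening. From $\mathcal{A}(\mathcal{G},\mathcal{F},\delta)=\sup_{g}\inf_{f}\delta(g,f)\leq\varepsilon$ you cannot in general ``pick $f\in\mathcal{F}$ with $\delta(g,f)\leq\varepsilon$'': the infimum need not be attained (e.g.\ $\mathcal{G}=\{0\}$, $\mathcal{F}=(\varepsilon,2\varepsilon)\subseteq\mathbb{R}$, $\delta(x,y)=|x-y|$). The conclusion you actually need, namely $\min_{i}\delta(g,f_i)\leq 2\varepsilon$, nevertheless holds; the correct derivation is to observe that for every $f\in\mathcal{F}$,
\begin{equation*}
\min_{i}\delta(g,f_i)\;\leq\;\delta(g,f)+\min_{i}\delta(f,f_i)\;\leq\;\delta(g,f)+\varepsilon,
\end{equation*}
and then take the infimum over $f$ on the right, yielding $\min_{i}\delta(g,f_i)\leq\inf_{f}\delta(g,f)+\varepsilon\leq 2\varepsilon$. (The minimum over the finitely many $f_i$ \emph{is} attained, which is what saves the argument.) This is precisely the calculation the paper performs to bound $\mathcal{A}(\mathcal{G},\mathcal{C},\delta)$; the sup--inf formulation is what allows one to sidestep ever choosing a nearest $f$. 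You should also record the trivial observation that \eqref{eq:minimal_cardinality_G} holds vacuously when $N(\varepsilon,\mathcal{F},\delta)=\infty$, as the paper does. With those two repairs your proof is complete.
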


		\subsection{Neural Network Transformation}
		\label{sub:network_transformation}

			Generally speaking, neural network transformation is the practice of approximating or exactly realizing a given neural network with certain structural properties by another neural network satisfying different prescribed structural properties.
    This problem has a number of concrete incarnations. For example, in network compression the objective is to reduce the size of networks. In practice, this is often effected through techniques such as pruning \cite{Janowsky1989, Blalock2020} or knowledge distillation \cite{Gou2021}. 
    Another example is network quantization, where real-valued network weights are replaced by weights that are quantized to a predetermined level of precision, or  high-precision weights are substituted by lower-precision weights. This can be done either by rounding each individual weight to the nearest quantization point or by searching for the best set of quantized weights jointly through specific algorithms \cite{Maly2022}.
    The primary motivation for network compression and quantization stems from the necessity to store neural networks on microchips under prescribed memory constraints.
    Further examples of neural network transformation appear in \cite[Theorem 3.1]{Vardi2022} where a given network is transformed into one that is narrower and deeper, and in \cite[Lemma A.1]{deep-it-2019}, \cite[Theorem 5]{Schmidt-Hieber2017}, \cite[Corollary 3.2]{Vardi2022} which all 
    employ transformations into networks of smaller weight magnitude.

More formally, the problem of neural network transformation can be cast as follows. Considering the classes of networks $\mathcal{R}_1$ and $\mathcal{R}_2$,
one wants to approximate a given network $r_1 \in \mathcal{R}_1$ by a network $r_2 \in \mathcal{R}_2$ such that the distance $\delta ( r_1, r_2 )$, for some metric $\delta$, is minimized. The fundamental limit on the worst-case error incurred by
the transformation mapping $\mathfrak{C}:\mathcal{R}_1 \rightarrow \mathcal{R}_2$, under the metric $\delta$, 
is characterized by the minimax approximation error $\mathcal{A} ( \mathcal{R}_1, \mathcal{R}_2, \delta )$ according to
\begin{equation*}
				\sup_{r_1 \in \mathcal{R}_1} \delta ( r_1, \mathfrak{C}(r_1) ) \geq 
    \sup_{r_1 \in \mathcal{R}_1} \inf_{r_2 \in \mathcal{R}_2} \delta ( r_1, r_2 ) = 
    \mathcal{A} ( \mathcal{R}_1, \mathcal{R}_2, \delta ).
			\end{equation*}

In \cite[Theorem 1.1]{telgarsky2016benefits}, for example, a lower bound on $\mathcal{A} ( \mathcal{R}_1, \mathcal{R}_2, \delta )$ was provided in terms of the oscillation count of ReLU networks (as defined in \cite[Sec.~3]{telgarsky2016benefits}), in the case
where deep networks are replaced by shallow ones.
We next show how  $\mathcal{A} ( \mathcal{R}_1, \mathcal{R}_2, \delta )$ 
can be characterized for general $\mathcal{R}_{1}$ and $\mathcal{R}_{2}$ through covering numbers.
For concreteness, we consider $\mathcal{R}_{1}=\mathcal{R} ( d, W,L,B), \mathcal{R}_{2}=\mathcal{R} ( d, \widetilde{W},\widetilde{L},\widetilde{B})$ with $\delta=L^p([0,1]^d)$ for general $p \in [1,\infty]$.
 				\begin{corollary}
				\label{cor:fundamental_limit_representation}	
				Let $p \in [1,\infty]$,  $d  \in \mathbb{N}$,  $ L,\widetilde{L}, W, \widetilde{W}\in \mathbb{N}$, with $W, L \geq 60$, $B, \widetilde{B} \in \mathbb{R}_+$, with  $B, \widetilde{B}  \geq 1$. Assume that there exists an $\varepsilon \in (0, 1\slash 8)$ such that 
				\begin{equation}
				\label{eq:definition_compression}
					\mathcal{A} (\mathcal{R} ( d, W,L,B),\mathcal{R} ( d, \widetilde{W},\widetilde{L},\widetilde{B}), \nleft\| \cdot \nright\|_{L^p ( [0,1]^d )}   ) \leq \varepsilon.
				\end{equation}
				Then,
				\begin{equation}
				\label{eq:necessary_condition_compression}
					c\, W^2 L \log \biggl( \frac{(W+1)^L B^{L}}{4\varepsilon} \biggr) \leq C\, \widetilde{W}^2 \widetilde{L} \log \biggl( \frac{(\widetilde{W}+1)^{\widetilde{L}} \widetilde{B}^{\widetilde{L}}}{\varepsilon} \biggr), 	
				\end{equation} 
				where $C$ and $c$ are the absolute constants in Theorem~\ref{thm:covering_number_upper_bound_fully_connected_bounded_weight} corresponding to the parameters $\widetilde{W},\widetilde{L},\widetilde{B}$ and $W,L,B$, respectively. 
        In particular, if  $\mathcal{R} ( d, W,L,B ) \subseteq \mathcal{R} ( d, \widetilde{W},\widetilde{L},\widetilde{B} )$, we have
				\begin{equation}
				\label{eq:lossy_compression}
					C\, \widetilde{W}^2 \widetilde{L} \geq c\, W^2 L.
				\end{equation}
				\begin{proof}

				Application of Proposition~\ref{lem:cardinality_approximation_class} with $\delta = \nleft\| \cdot \nright\|_{L^p ( [0,1]^d )} $, $\mathcal{G} = \mathcal{R} ( d, W,L,B)$, and $\mathcal{F} = \mathcal{R} ( d, \widetilde{W},\widetilde{L},\widetilde{B})$, and the prerequisite \eqref{eq:improper_cover} satisfied thanks to  \eqref{eq:definition_compression}, yields 
				\begin{equation*}
				 	N ( \varepsilon, \mathcal{R} ( d, \widetilde{W},\widetilde{L},\widetilde{B}), L^p ( [0,1]^d )  ) \geq N ( 4\varepsilon, \mathcal{R} ( d, W,L,B), L^p ( [0,1]^d ) ),
				 \end{equation*}
				 which together with Theorem~\ref{thm:covering_number_upper_bound_fully_connected_bounded_weight} establishes \eqref{eq:necessary_condition_compression}. 
				If $\mathcal{R} ( d, W,L,B ) \subseteq \mathcal{R} ( d, \widetilde{W},\widetilde{L},\widetilde{B} )$, then \eqref{eq:definition_compression} holds for all $\varepsilon \in \bigl(0,\frac{1}{8}\bigr)$ and consequently so does \eqref{eq:necessary_condition_compression}. Dividing \eqref{eq:necessary_condition_compression} by $\log (\frac{1}{\varepsilon})$ and letting $\varepsilon \to 0 $ results in \eqref{eq:lossy_compression}.
				\end{proof}
			\end{corollary}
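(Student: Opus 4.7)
The plan is to combine the covering-number comparison of Proposition~\ref{lem:cardinality_approximation_class} with the two-sided covering-number bounds of Theorem~\ref{thm:covering_number_upper_bound_fully_connected_bounded_weight}. First I would instantiate Proposition~\ref{lem:cardinality_approximation_class} with $\delta = \|\cdot\|_{L^p([0,1]^d)}$, $\mathcal{G} = \mathcal{R}(d,W,L,B)$, and $\mathcal{F} = \mathcal{R}(d,\widetilde{W},\widetilde{L},\widetilde{B})$. The hypothesis \eqref{eq:definition_compression} is exactly the minimax-distance condition \eqref{eq:improper_cover} required by the proposition, so \eqref{eq:minimal_cardinality_G} supplies
\begin{equation*}
N\bigl(\varepsilon, \mathcal{R}(d,\widetilde{W},\widetilde{L},\widetilde{B}), L^p([0,1]^d)\bigr) \;\geq\; N\bigl(4\varepsilon, \mathcal{R}(d,W,L,B), L^p([0,1]^d)\bigr).
\end{equation*}

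Next I would apply Theorem~\ref{thm:covering_number_upper_bound_fully_connected_bounded_weight} to both sides of this inequality: the upper bound \eqref{eq:upper_bound_fully_connected_bounded_output} at radius $\varepsilon$ on the left (contributing the constant $C$ for the parameters $\widetilde{W},\widetilde{L},\widetilde{B}$), and the lower bound \eqref{eq:lower_bound_fully_connected_bounded_output_main} at radius $4\varepsilon$ on the right (contributing the constant $c$ for the parameters $W,L,B$). The hypothesis $\varepsilon \in (0,1/8)$ is in place precisely so that $4\varepsilon \in (0,1/2)$ lies in the admissible range of the lower bound, while $W,L\geq 60$ is the other prerequisite for that bound. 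Chaining the three resulting inequalities then yields \eqref{eq:necessary_condition_compression}.

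For the second assertion, I would note that the inclusion $\mathcal{R}(d,W,L,B) \subseteq \mathcal{R}(d,\widetilde{W},\widetilde{L},\widetilde{B})$ makes the minimax distance vanish identically, so \eqref{eq:definition_compression}, and hence \eqref{eq:necessary_condition_compression}, is valid for \emph{every} $\varepsilon \in (0,1/8)$. Dividing \eqref{eq:necessary_condition_compression} through by $\log(1/\varepsilon)$ and letting $\varepsilon \to 0^+$, the $\varepsilon$-independent pieces (those proportional to $L\log((W+1)B)$ and $\widetilde{L}\log((\widetilde{W}+1)\widetilde{B})$) contribute vanishingly, leaving only $cW^2 L \leq C\widetilde{W}^2\widetilde{L}$, which is \eqref{eq:lossy_compression}.

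I do not foresee any genuine obstacle: the whole argument is essentially a composition of the abstract covering-number comparison in Proposition~\ref{lem:cardinality_approximation_class} with the tight bounds of Theorem~\ref{thm:covering_number_upper_bound_fully_connected_bounded_weight}. The only points meriting mild care are the radius-rescaling check $4\varepsilon < 1/2 \Leftrightarrow \varepsilon < 1/8$ (which pins down the upper limit on $\varepsilon$ in the hypothesis), and, in the inclusion case, verifying that the $\log(1/\varepsilon)$ contributions dominate asymptotically so that the $\varepsilon$-independent intercepts drop out after dividing by $\log(1/\varepsilon)$.
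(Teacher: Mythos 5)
Your proposal follows the paper's proof verbatim: the same application of Proposition~\ref{lem:cardinality_approximation_class} to transfer the covering number from $\mathcal{R}(d,\widetilde{W},\widetilde{L},\widetilde{B})$ to $\mathcal{R}(d,W,L,B)$, the same invocation of the two-sided bounds in Theorem~\ref{thm:covering_number_upper_bound_fully_connected_bounded_weight}, and the same $\varepsilon\to 0$ limit after normalizing by $\log(1/\varepsilon)$ for the inclusion case. Your additional remarks on the radius check $4\varepsilon<1/2$ and on the $\varepsilon$-independent intercepts dropping out in the limit are correct and merely make explicit what the paper leaves implicit.
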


 Corollary~\ref{cor:fundamental_limit_representation} allows us to answer the following question on network size reduction: Is it possible to approximate a network in $\mathcal{N}(d,W,L,B)$ by one in $\mathcal{N}(d,\widetilde{W},\widetilde{L},\widetilde{B})$, with prescribed error $\varepsilon$, while
 having the maximum number of nonzero weights of the approximating network, $\widetilde{W}^2 \widetilde{L}$, be of order smaller than
 that of the original network? When $\varepsilon = 0$, i.e., $\mathcal{R}(d,W,L,B) \subseteq \mathcal{R}(d,\widetilde{W},\widetilde{L},\widetilde{B})$,    \eqref{eq:lossy_compression} shows that the answer is negative. For $\varepsilon \in (0,1/8)$, we can conclude from (\ref{eq:necessary_condition_compression}) that this would require that the weight magnitude $\widetilde{B}$ compensate for the reduction in an exponential manner. Note that Corollary~\ref{cor:fundamental_limit_representation} is a conditional statement, in that it quantifies
 constraints on the network parameters that must hold whenever a uniform approximation guarantee of the form \eqref{eq:definition_compression} is available.

			We next consider network transformation through weight quantization. 

			\begin{corollary}
				\label{eq:fundamental_limit_quantization}
				Let $p \in [1,\infty]$, $d,W,L  \in \mathbb{N}$, $B \in \mathbb{R}_+$, with $B  \geq 1$ and $W, L \geq 60$. Let $\mathbb{A} \subseteq \mathbb{R}$ be a finite set such that $\nleft| \mathbb{A} \nright| \geq 2 $. Then, we have 
				\begin{equation}
				\label{eq:fundamental_limit_quantization_1}
					\mathcal{A} ( \mathcal{R} ( d, W,L,B), \mathcal{R}_{\mathbb{A}} ( d, W,L) , \nleft\| \cdot \nright\|_{L^p ( [0,1]^d )}) \geq \min \{ 1\slash 8, (W+1)^L B^{L} 2^{-c \log ( \nleft| \mathbb{A} \nright| )}  \},
				\end{equation}
				for some absolute constant $c \in \mathbb{R}_+$.

			\end{corollary}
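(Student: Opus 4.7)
The plan is to apply Proposition~\ref{lem:cardinality_approximation_class} with $\mathcal{G} = \mathcal{R}(d,W,L,B)$ and $\mathcal{F} = \mathcal{R}_{\mathbb{A}}(d,W,L)$, and then sandwich the resulting covering-number inequality between the already established lower bound of Theorem~\ref{thm:covering_number_upper_bound_fully_connected_bounded_weight} and the finite-set cardinality estimate of Lemma~\ref{lem:counting_cardinality}. The quantized networks in $\mathcal{R}_{\mathbb{A}}(d,W,L)$ form a finite set, so their $\varepsilon$-covering number is trivially upper-bounded by $|\mathcal{R}_{\mathbb{A}}(d,W,L)|$; on the other hand, if $\mathcal{R}_{\mathbb{A}}(d,W,L)$ approximates $\mathcal{R}(d,W,L,B)$ too well, it would induce too small a covering of the latter, contradicting Theorem~\ref{thm:covering_number_upper_bound_fully_connected_bounded_weight}.

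Concretely, I would proceed as follows. First, dispose of the easy case: if $\mathcal{A}(\mathcal{R}(d,W,L,B), \mathcal{R}_{\mathbb{A}}(d,W,L), \|\cdot\|_{L^p([0,1]^d)}) \geq 1/8$, the claim is immediate since the right-hand side of \eqref{eq:fundamental_limit_quantization_1} is at most $1/8$. Otherwise, for every $\varepsilon \in (\mathcal{A}, 1/8)$ the hypothesis \eqref{eq:improper_cover} of Proposition~\ref{lem:cardinality_approximation_class} holds, yielding
\begin{equation*}
N(\varepsilon, \mathcal{R}_{\mathbb{A}}(d,W,L), L^p([0,1]^d)) \;\geq\; N(4\varepsilon, \mathcal{R}(d,W,L,B), L^p([0,1]^d)).
\end{equation*}
Since $4\varepsilon \in (0, 1/2)$ and $W,L \geq 60$ by assumption, the lower bound \eqref{eq:lower_bound_fully_connected_bounded_output_main} of Theorem~\ref{thm:covering_number_upper_bound_fully_connected_bounded_weight} applies to the right-hand side, while the left-hand side is bounded, via Lemma~\ref{lem:counting_cardinality}, by $|\mathcal{R}_{\mathbb{A}}(d,W,L)| \leq 2^{5W^2 L \log|\mathbb{A}|}$.

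Taking logarithms and dividing through by $W^2 L$ then leaves the inequality
\begin{equation*}
5 \log|\mathbb{A}| \;\geq\; c_0 \log\!\left(\frac{(W+1)^L B^L}{4\varepsilon}\right),
\end{equation*}
with $c_0$ the absolute constant from \eqref{eq:lower_bound_fully_connected_bounded_output_main}. Solving for $\varepsilon$ and letting $\varepsilon \downarrow \mathcal{A}$ gives
\begin{equation*}
\mathcal{A} \;\geq\; \tfrac{1}{4}(W+1)^L B^L \cdot 2^{-(5/c_0)\log|\mathbb{A}|} \;\geq\; (W+1)^L B^L \cdot 2^{-c\log|\mathbb{A}|}
\end{equation*}
for $c := 5/c_0 + \log_2 4 / \log|\mathbb{A}|$ absorbed into a slightly larger absolute constant (using $|\mathbb{A}|\geq 2$). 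Combining the two cases produces the announced minimum.

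There is no real obstacle: the argument is a clean dualization of the packing construction already in hand, and the only bookkeeping detail is making sure that the auxiliary $\varepsilon$ is chosen in the interval $(0,1/8)$ so that $4\varepsilon$ lies in the admissible range $(0,1/2)$ of Theorem~\ref{thm:covering_number_upper_bound_fully_connected_bounded_weight}; this is precisely why the truncation at $1/8$ appears in \eqref{eq:fundamental_limit_quantization_1}.
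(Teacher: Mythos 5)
Your proposal is correct and takes essentially the same route as the paper: define $\kappa$ to be the minimax error, dispose of the trivial case $\kappa \geq 1/8$, then apply Proposition~\ref{lem:cardinality_approximation_class}, upper-bound the covering number of the quantized class by its cardinality via Lemma~\ref{lem:counting_cardinality}, lower-bound the other side by \eqref{eq:lower_bound_fully_connected_bounded_output_main}, and solve for $\kappa$, absorbing the factor $1/4$ into the constant using $\log|\mathbb{A}| \geq 1$. The only cosmetic difference is that you approach $\kappa$ from above with an auxiliary $\varepsilon \downarrow \kappa$, whereas the paper substitutes $\varepsilon = \kappa$ directly; both work.
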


			\begin{proof}
				Let 
				\begin{equation}
				\label{eq:fundamental_limit_quantization_2}
				\kappa:= \mathcal{A} ( \mathcal{R} ( d, W,L,B ), \mathcal{R}_{\mathbb{A}} ( d, W,L) , \nleft\| \cdot \nright\|_{L^p ( [0,1]^d )}).
				\end{equation}
				When $\kappa \geq 1\slash 8$, the desired inequality \eqref{eq:fundamental_limit_quantization_1} holds trivially.  
				For $\kappa < 1\slash 8$, it follows from Proposition~\ref{lem:cardinality_approximation_class} with $\varepsilon = \kappa$, $\delta = \nleft\| \cdot \nright\|_{L^p ( [0,1]^d )}$, $\mathcal{G} = \mathcal{R} ( d, W,L,B )$, and $\mathcal{F} = \mathcal{R}_{\mathbb{A}} ( d, W,L) $
    that 
				\begin{equation}
				\label{eq:quantizatio_0} 
					N ( \kappa, \mathcal{R}_{\mathbb{A}} ( d, W,L ), L^p ( [0,1]^d ) ) \geq N ( 4\kappa, \mathcal{R} ( d, W,L,B ), L^p ( [0,1]^d ) ).
				\end{equation}
				We next note that
				\begin{align}
					\log ( N ( \kappa, \mathcal{R}_{\mathbb{A}} ( d, W,L), L^p ( [0,1]^d ) ))  \leq&\,\log ( \nleft| \mathcal{R}_{\mathbb{A}} ( d, W,L) \nright|) \label{eq:quantizatio_1} \\
					\leq &\, 5 W^2 L \log ( \nleft| \mathbb{A} \nright|  ),\label{eq:quantizatio_2}
				\end{align}
				where \eqref{eq:quantizatio_1} follows from the fact that every set is a covering of itself and \eqref{eq:quantizatio_2} is by Lemma~\ref{lem:counting_cardinality}. Further, it follows from \eqref{eq:lower_bound_fully_connected_bounded_output_main} in  Theorem~\ref{thm:covering_number_upper_bound_fully_connected_bounded_weight} with $\varepsilon = 4 \kappa$, that 
				\begin{align}
					\log ( N ( 4\kappa, \mathcal{R} ( d, W,L,B), L^p ( [0,1]^d ) ))  \geq c_1 W^2 L \log \biggl( \frac{(W+1)^L B^{L}}{4 \kappa} \biggr), \label{eq:quantizatio_3}
				\end{align}
				with $c_{1} \in \mathbb{R}_+$ an absolute constant.
    Using \eqref{eq:quantizatio_1}-\eqref{eq:quantizatio_2} and \eqref{eq:quantizatio_3} in \eqref{eq:quantizatio_0}, yields
				\begin{equation*}
					5 W^2 L \log ( \nleft| \mathbb{A} \nright|  ) \geq c_1 W^2 L \log \biggl( \frac{(W+1)^L B^{L}}{4 \kappa} \biggr), 
				\end{equation*}
				which implies  $\kappa \geq \frac{1}{4} (W+1)^L B^{L} 2^{-\frac{5}{c_1} \log ( \nleft| \mathbb{A} \nright|  )} \geq (W+1)^L B^{L} 2^{- \bigl( \frac{5}{c_1} + 2 \bigr) \log ( \nleft| \mathbb{A} \nright|  )}$, thanks to $\log ( \nleft| \mathbb{A} \nright|  ) \geq 1$. The proof is concluded by setting $c:= \frac{5}{c_1} + 2$.
			\end{proof}

			Corollary~\ref{eq:fundamental_limit_quantization} allows us to conclude that the worst-case quantization error $\mathcal{A} ( \mathcal{R} ( d, W,L,B), \\ \mathcal{R}_{\mathbb{A}} ( d, W,L) , \nleft\| \cdot \nright\|_{L^p ( [0,1]^d )})$ decreases no faster than exponential in the number of bits $\log(|\mathbb{A}|)$ required to store the elements of $\mathbb{A}$. Moreover, as $W,L$, and $B$ grow, the network weight resolution has to increase in order to compensate for the growth in the factor $( W+1 )^L B^L$. Specifically, if we require that 
   $ \mathcal{A} ( \mathcal{R} ( d, W,L,B), \mathcal{R}_{\mathbb{A}} ( d, W,L) , \nleft\| \cdot \nright\|_{L^p ( [0,1]^d )}) \leq \kappa$, we must have $\log ( \nleft| \mathbb{A} \nright| ) \geq \frac{1}{c}\log \bigl( \frac{( W+1 )^L B^L}{\kappa} \bigr)$.
This lower bound
   can be achieved, within a multiplicative constant, by taking $\mathbb{A}$ to be an equidistant set contained in the interval $[-B,B]$. To see this, we set
      $\mathbb{A} = [-B,B] \cap 2^{-b} \mathbb{Z}$ with $b = \lceil \log ( \frac{L ( W+1 )^L B^{L-1}}{\kappa} )\rceil$ and note that
			\begin{align}
				&\,\mathcal{A} ( \mathcal{R}(d,W,L,B), \mathcal{R}_{[-B,B] \cap 2^{-b} \mathbb{Z}}(d,W,L), \nleft\| \cdot \nright\|_{L^p ( [0,1]^d )}  )\\
				&\leq\, L (W+1)^L B^{L-1} 2^{-b} \label{eq:achievability_quantize_1} \\
				&\leq\, \kappa,
			\end{align}
			where in \eqref{eq:achievability_quantize_1} we applied Lemma~\ref{lem:define_covering_fully_connected}. The argument is concluded upon realizing that $ \log ( \nleft| \mathbb{A} \nright|  ) = \log ( \nleft| [-B,B] \cap 2^{-b} \mathbb{Z} \nright|  ) \leq 3 \log ( \frac{L ( W+1 )^L B^L}{\kappa}) \leq 6\log ( \frac{( W+1 )^L B^L}{\kappa})$, where the first inequality follows from \eqref{eq:ccw_1}-\eqref{eq:ccw_6} and the second is by
      $\frac{L (W+1)^L B^{L}}{\kappa} \leq \frac{(W+1)^L \cdot (W+1)^L B^{L}}{\kappa} \leq ( \frac{(W+1)^L B^{L}}{\kappa} )^2$.

			We finally emphasize that the results on the fundamental limits on neural network transformation presented in this section are made possible by the tight covering number lower bound \eqref{eq:lower_bound_fully_connected_bounded_output_main} in Theorem~\ref{thm:covering_number_upper_bound_fully_connected_bounded_weight}.

		\subsection{Deep Neural Network Function Approximation}
		\label{sub:fundamental_limit_for_approximaiton}

We next show how the covering number bounds in Theorem~\ref{thm:covering_number_upper_bound_fully_connected_bounded_weight}, in combination with 
Proposition~\ref{lem:cardinality_approximation_class}, can be used to establish a tight characterization of the minimax error in the ReLU network approximation of the class of $1$-Lipschitz functions
			\begin{equation}
			\label{eq:def_1lipschitz}
				\lip  ( [0,1] ):= 	\{ f \in C ( [0,1] ): | f(x) | \leq 1, | f(x) - f(y) | \leq | x - y |, \,\forall x,y \in [0,1]    \}.
			\end{equation}

To this end, we start with the following upper bound on the minimax error.

			\begin{lemma}
				\label{lemma:approximation_lip}
				There exist absolute constants $C,D \in \mathbb{R}_+$ such that, for all $W,L \in \mathbb{N}$, with $W,L \geq D$, and $p \in [1,\infty]$,
				\begin{equation}
				\label{eq:approximation_lip_001}
					\begin{aligned}
						 \mathcal{A}	( \lip  ( [0,1] ), \mathcal{R} ( 1, W, L, 1 ), \nleft\| \cdot \nright\|_{L^p ( [0,1] )}  )\leq   C  ( W^2 L^2 \log (W)  )^{-1}.
					\end{aligned}
				\end{equation}
				\begin{proof}
					By \cite[Theorem 3.1]{FirstDraft2022}, there exist absolute constants $C,D \in \mathbb{R}_+$ such that, for all $W,L \in \mathbb{N}$, with $W,L \geq D$, $\mathcal{A}	( \lip  ( [0,1] ), \mathcal{R} ( 1,W, L, 1 ), \nleft\| \cdot \nright\|_{L^\infty ( [0,1] ) } )\leq   C  ( W^2 L^2 \log (W)  )^{-1}$.  Noting that, for all $p \in [1,\infty]$, the $L^p([0,1])$-norm is dominated by the $L^\infty ([0,1])$-norm, i.e.,  $\nleft\| f \nright\|_{L^p ( [0,1] )} \leq \nleft\| f \nright\|_{L^\infty ( [0,1] )}, \forall f \in L^\infty ( [0,1] )$, we have, for all $W,L \in \mathbb{N}$, with $W,L \geq D$, and $p \in [1,\infty]$, $\mathcal{A}	( \lip  ( [0,1] ), \mathcal{R} ( 1, W, L, 1 ), \nleft\| \cdot \nright\|_{L^p ( [0,1] )}  )  \leq \mathcal{A}	( \lip  ( [0,1] ), \mathcal{R} ( 1, W, L, 1 ), \nleft\| \cdot \nright\|_{L^\infty ( [0,1] )}  ) \leq   C  ( W^2 L^2 \log (W)  )^{-1}.$
				\end{proof}
			\end{lemma}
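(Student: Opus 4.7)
The plan is to reduce the statement to a known $L^\infty$-approximation result for ReLU networks and then use a trivial norm-comparison step to handle general $p \in [1,\infty]$. The minimax approximation rate $W^{-2}L^{-2}\log^{-1}(W)$ for $1$-Lipschitz functions by ReLU networks of bounded weights is precisely the sharp rate that has been established in earlier work (in the authors' own \cite{FirstDraft2022}, Theorem~3.1, under the $L^\infty$ metric). So rather than reconstructing that approximation scheme, I would simply invoke it.

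More concretely, I would first state (or cite) the $L^\infty$ bound: there exist absolute constants $C, D \in \mathbb{R}_+$ such that for all $W, L \geq D$,
\[
\mathcal{A}\bigl(H^1([0,1]),\, \mathcal{R}(1, W, L, 1),\, \|\cdot\|_{L^\infty([0,1])}\bigr) \leq C\,(W^2 L^2 \log W)^{-1}.
\]
Note that the weight magnitude bound $B = 1$ and input/output dimension $1$ match the parameters demanded in the target statement, so no reparameterization is needed. I would be sure to mention that constants $C$ and $D$ in the conclusion are simply inherited from this cited result.

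Next, I would observe the elementary norm domination on the unit interval: since $[0,1]$ has Lebesgue measure $1$, for every $f \in L^\infty([0,1])$ and every $p \in [1,\infty]$ we have $\|f\|_{L^p([0,1])} \leq \|f\|_{L^\infty([0,1])}$. Applying this pointwise in the infimum/supremum that defines $\mathcal{A}$, we obtain, for every $g \in H^1([0,1])$ and every $\phi \in \mathcal{R}(1,W,L,1)$, the inequality $\|g - \phi\|_{L^p([0,1])} \leq \|g - \phi\|_{L^\infty([0,1])}$. Taking the infimum over $\phi$ and then the supremum over $g$ yields
\[
\mathcal{A}\bigl(H^1([0,1]),\, \mathcal{R}(1, W, L, 1),\, \|\cdot\|_{L^p([0,1])}\bigr) \leq \mathcal{A}\bigl(H^1([0,1]),\, \mathcal{R}(1, W, L, 1),\, \|\cdot\|_{L^\infty([0,1])}\bigr),
\]
which, combined with the cited $L^\infty$ bound, delivers the claim.

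There is no genuine obstacle here; the entire argument is a one-line citation followed by a one-line norm comparison. The only subtlety worth flagging is making sure the parameter regime $W, L \geq D$ and the unit weight magnitude constraint in the target statement match what the cited theorem provides, which they do. If one wanted to dispense with the citation, the real work would be in reconstructing the approximation scheme from \cite{FirstDraft2022}, but that is outside the scope of this lemma.
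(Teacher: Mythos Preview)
Your proposal is correct and is essentially identical to the paper's own proof: the paper likewise cites \cite[Theorem~3.1]{FirstDraft2022} for the $L^\infty$ bound and then uses the norm domination $\|f\|_{L^p([0,1])} \leq \|f\|_{L^\infty([0,1])}$ to pass to general $p$.
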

		A corresponding lower bound, for $p = \infty$ and $L \geq 2$, obtained through arguments involving  VC dimension is given by \cite[Proposition 2.11]{FirstDraft2022}\footnote{The lower bound \cite[Proposition 2.11]{FirstDraft2022} is actually stated for 
        neural networks with unbounded weights, i.e., with $\mathcal{R} ( 1, W, L,1 ) $ replaced by $\mathcal{R} ( 1, W, L,\infty )$.}
		\begin{equation}
		\label{eq:eq:approximation_error_from_VC_2}
		 	\mathcal{A}	( H^1 ( [0,1] ), \mathcal{R} ( 1, W, L,1 )  , \nleft\| \cdot \nright\|_{L^\infty ( [0,1] )} )\geq   c_v  ( W^2 L^2 ( \log (W) + \log(L) ) )^{-1},
		\end{equation} 
where $c_v$ is an absolute constant.
  A comparison of the upper bound \eqref{eq:approximation_lip_001} and the lower bound \eqref{eq:eq:approximation_error_from_VC_2} reveals a gap owing to the additive term $\log(L)$ in the lower bound. The question is now whether the lower or the upper bound would need to be refined to close this gap. The tight
  covering number bounds in Theorem~\ref{thm:covering_number_upper_bound_fully_connected_bounded_weight} allow to answer this question. Concretely, it turns out that it is the lower bound that can be strengthened. The resulting improvement pertains to all $p \in [1, \infty]$.
  To see all this, we start with a lower bound on the covering number of $\lip([0,1])$.

		\begin{lemma}
		\label{lem:packing_lower_bound_lipschitz}
			There exists an absolute constant $C \in \mathbb{R}_+ $ such that, for all $p \in [1,\infty]$ and $\varepsilon \in (0,1\slash 2)$,
			\begin{equation}
			\label{eq:packing_lower_bound_lipschitz}
				\log ( N ( \varepsilon, \lip  ( [0,1] ), L^p ( [0,1])  ) ) \geq  C \varepsilon^{-1}.
			\end{equation}

		\end{lemma}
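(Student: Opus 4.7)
The plan is to exhibit an explicit large packing of $\lip([0,1])$ (with pairwise $L^1$ separation on the order of $\varepsilon$), translate this to a lower bound on covering number via the standard relation $N(\varepsilon,\cdot,\delta)\geq M(2\varepsilon,\cdot,\delta)$ (from Lemma~\ref{lem:equivalence_covering_packing} invoked earlier in the paper), and then observe that separation in $L^1([0,1])$ automatically yields separation in every $L^p([0,1])$ with $p\in[1,\infty]$ since $[0,1]$ has unit measure.

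The construction I would use is the classical bump-sign family. Set $N:=\lceil 1/(c_0\varepsilon)\rceil$ for a small absolute constant $c_0$ to be fixed, and partition $[0,1]$ into the intervals $I_i=[i/N,(i+1)/N]$, $i=0,\dots,N-1$. On each $I_i$, let $\phi_i$ be the triangular bump supported on $I_i$ with peak $1/(2N)$ at the midpoint and slopes $\pm1$. Then $\phi_i$ is $1$-Lipschitz and satisfies $\|\phi_i\|_{L^\infty}=1/(2N)\leq 1$. For every $\sigma\in\{-1,+1\}^N$ I would define
\begin{equation*}
f_\sigma(x):=\sum_{i=0}^{N-1}\sigma_i\,\phi_i(x),\qquad x\in[0,1].
\end{equation*}
Since the supports of the $\phi_i$ are essentially disjoint and each $\phi_i$ has slope in $\{-1,0,+1\}$, one checks that $f_\sigma$ is $1$-Lipschitz and $|f_\sigma|\leq 1/(2N)\leq 1$, so $f_\sigma\in\lip([0,1])$. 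A direct area computation gives $\|\phi_i\|_{L^1([0,1])}=1/(4N^2)$, hence for $\sigma\neq\sigma'$,
\begin{equation*}
\|f_\sigma-f_{\sigma'}\|_{L^1([0,1])}=\sum_{i:\sigma_i\neq\sigma'_i}2\|\phi_i\|_{L^1([0,1])}=\frac{d_H(\sigma,\sigma')}{2N^2},
\end{equation*}
where $d_H$ denotes Hamming distance.

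Next, I would invoke the Gilbert--Varshamov bound to extract a subset $\mathcal{S}\subseteq\{-1,+1\}^N$ of cardinality at least $2^{\alpha N}$ (for some absolute $\alpha>0$) whose pairwise Hamming distance exceeds $N/4$. For any two distinct $\sigma,\sigma'\in\mathcal{S}$ we then get $\|f_\sigma-f_{\sigma'}\|_{L^1([0,1])}>1/(8N)$. Choosing $c_0$ small enough makes $1/(8N)>2\varepsilon$, so $\{f_\sigma:\sigma\in\mathcal{S}\}$ is a $(2\varepsilon)$-packing of $\lip([0,1])$ in the $L^1([0,1])$-norm of cardinality at least $2^{\alpha N}\geq 2^{\alpha'/\varepsilon}$. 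Because $[0,1]$ has unit Lebesgue measure, Jensen's inequality yields $\|f\|_{L^1([0,1])}\leq\|f\|_{L^p([0,1])}$ for every $p\in[1,\infty]$, so the same collection is a $(2\varepsilon)$-packing in every $L^p([0,1])$-norm. Finally, applying Lemma~\ref{lem:equivalence_covering_packing} in the form $N(\varepsilon,\mathcal{X},\delta)\geq M(2\varepsilon,\mathcal{X},\delta)$ gives
\begin{equation*}
\log N(\varepsilon,\lip([0,1]),L^p([0,1]))\geq\log M(2\varepsilon,\lip([0,1]),L^p([0,1]))\geq\alpha' /\varepsilon,
\end{equation*}
which is the claimed bound with $C:=\alpha'$.

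The only step requiring genuine care is the choice of bump amplitude: it must be large enough that the Hamming-separated sign patterns produce a $(2\varepsilon)$-separated family in $L^1$ (forcing amplitude of order $1/N\sim\varepsilon$), yet small enough to keep the functions both $1$-Lipschitz and bounded by $1$; the triangular bump of slope $\pm1$ and height $1/(2N)$ is the natural choice that satisfies all three constraints simultaneously. Everything else is routine: Gilbert--Varshamov supplies the exponential packing in sign space, Jensen's inequality upgrades the bound from $L^1$ to $L^p$ for free, and the packing-to-covering conversion is immediate.
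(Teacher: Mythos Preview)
Your approach is correct and is exactly the classical construction underlying the references the paper cites. Note, however, that the paper does not actually supply a proof: it points to \cite[Example~5.10]{wainwright2019high} for the case $p=\infty$ and to \cite[Theorem~1.7]{Sh1980} for general $p$, remarking that the nonasymptotic statement can be read off from the latter's argument. Your bump-sign packing combined with Gilbert--Varshamov is precisely the standard route those references take (the commented-out draft proof in the paper's appendix uses the same bumps but a volume argument in place of Gilbert--Varshamov; either works).

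Two small slips worth fixing. First, you write ``choosing $c_0$ small enough makes $1/(8N)>2\varepsilon$,'' but the inequality goes the other way: with $N=\lceil 1/(c_0\varepsilon)\rceil$ you need $N<1/(16\varepsilon)$, hence $c_0$ must be taken \emph{large} (say $c_0>16$). Second, once $c_0$ is large, for $\varepsilon$ near $1/2$ the ceiling forces $N=1$ and the Gilbert--Varshamov step degenerates; you should handle the range where, say, $\varepsilon\geq 1/(4c_0)$ by the trivial observation that the two constant functions $\pm 1$ already form a $(2\varepsilon)$-packing, giving $\log N(\varepsilon,\cdot,\cdot)\geq 1\geq (1/(2c_0))\cdot\varepsilon^{-1}$ on that range. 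Neither issue affects the substance of your argument.
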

For $p = \infty$, the statement of Lemma~\ref{lem:packing_lower_bound_lipschitz} is \cite[Example 5.10]{wainwright2019high}. An asymptotic version of Lemma~\ref{lem:packing_lower_bound_lipschitz} was provided in \cite[Theorem 1.7]{Sh1980}. Inspection of the proof of \cite[Theorem 1.7]{Sh1980} reveals quite directly that the result holds in nonasymptotic form as stated here and does so even for a more general class of functions. The proof of Lemma~\ref{lem:packing_lower_bound_lipschitz} is hence omitted.

We are now ready to present the strengthened lower bound.

		\begin{corollary}
			\label{cor:fundamental_limit_approximation}
			Let $p \in [1, \infty]$,  $W, L \in \mathbb{N}$. It holds that
			\begin{equation}
			\label{eq:cor_fundamental_limit_approximation}
				\mathcal{A}	( H^1 ( [0,1] ), \mathcal{R} ( 1, W, L, 1 ), \nleft\| \cdot \nright\|_{L^p ( [0,1] )}  )  \geq  \min \biggl\{ \frac{1}{8}, c \,  ( W^2 L^2 \log (W)  )^{-1}  \biggr\},
			\end{equation}
			where $c \in \mathbb{R}_+$ is an absolute constant.
			\begin{proof}	
				Let
				\begin{equation}
				\label{eq:proof_fundamental_limit_approximation_1}
					\kappa := \mathcal{A}	( H^1 ( [0,1] ), \mathcal{R} ( 1, W, L, 1 ), \nleft\| \cdot \nright\|_{L^p ( [0,1] )}  ) .
				\end{equation} 
    When $\kappa \geq 1\slash 8$, the desired inequality \eqref{eq:cor_fundamental_limit_approximation} holds trivially.  
        For $\kappa < 1 \slash 8$, it follows from Proposition~\ref{lem:cardinality_approximation_class} with $\varepsilon = \kappa$, $\delta = \nleft\| \cdot \nright\|_{L^p ( [0,1] )} $, $\mathcal{F} = \mathcal{R} ( 1, W, L, 1 )$, and $\mathcal{G} = H^1 ( [0,1] )$
        that 
				\begin{equation}
				\label{eq:proof_cor_25}
				 	N (\kappa,\mathcal{R} ( 1, W, L, 1 )  ,   L^p ( [0,1] )) \geq N ( 4 \kappa ,\lip ( [0,1] ), L^p ( [0,1] )).
				 \end{equation} 
				The left-hand-side of \eqref{eq:proof_cor_25} can now be upper-bounded by \eqref{eq:upper_bound_fully_connected_bounded_output} in Theorem~\ref{thm:covering_number_upper_bound_fully_connected_bounded_weight} according to 
				\begin{equation}
				\label{eq:proof_cor_25_2}
					\log ( N (\kappa,\mathcal{R} ( 1, W, L, 1 )  ,   L^p ( [0,1] )) ) \leq  C_1 W^2 L \log \biggl( \frac{(W+1)^L}{\kappa} \biggr),
				\end{equation}
				with $C_1$ an absolute constant. Application of Lemma~\ref{lem:packing_lower_bound_lipschitz} with $\varepsilon = 4\kappa$, yields 
				\begin{equation}
				\label{eq:proof_cor_25_20}
					\log ( N ( 4\kappa, \lip  ( [0,1] ), L^p ( [0,1])  ) ) \geq  C_{2} \kappa^{-1},
				\end{equation}
				with $C_2$ an absolute constant. Using \eqref{eq:proof_cor_25_2} and \eqref{eq:proof_cor_25_20} in \eqref{eq:proof_cor_25} leads to
				\begin{equation}
				\label{eq:eq:proof_cor_25_4}
					C_1 W^2 L^2 \log ( W+1 ) + C_1 W^2 L \log(\kappa^{-1}) - C_2 \kappa^{-1} \geq 0 .
				\end{equation}
    Next, we define $f : \mathbb{R} \rightarrow \mathbb{R}$ as
				\begin{equation}
					f(x) = C_1 W^2 L^2 \log ( W+1 ) + C_1 W^2 L \log(x) - C_2 x,
				\end{equation}
				which allows us to rewrite \eqref{eq:eq:proof_cor_25_4} as 
				\begin{equation}
				\label{eq:eq:proof_cor_25_5}
					f(\kappa^{-1}) \geq 0.
				\end{equation}
				We proceed to characterize the feasible set $\{ x: f(x) \geq 0\}$ for $\kappa^{-1}$. First, note that the set $\{ x \in [100, \infty): 2^x -  \frac{2 C_1}{C_2} \cdot ( x + 10 ) \geq 0 \}$ is nonempty, and let $\nu := \inf \{ x \in [100, \infty): 2^x -  \frac{2 C_1}{C_2} \cdot ( x + 10 ) \geq 0 \} \in [100, \infty)$. Thanks to the continuity of the mapping $x \in \mathbb{R} \mapsto 2^x -  \frac{2 C_1}{C_2} \cdot ( x + 10 )$, 
    we have $2^\nu -  \frac{2 C_1}{C_2} \cdot ( \nu + 10 ) \geq 0$. Moreover, $\nu$ depends on $\frac{C_1}{C_2}$ only.  Let 
				\begin{equation*}
					b := \frac{2 C_1}{C_2} \cdot ( \nu + 10 ).
				\end{equation*}
				Then, we have  $2^\nu \geq b$, and 
				\begin{align}
					&\,f\biggl(b W^2 L^2 \log(W+1)\biggr) \label{eq:proof_linfty_lower_0}\\
					&=\, C_1 W^2 L^2 \log ( W+1 ) + C_1 W^2 L \log (b W^2 L^2 \log(W+1)) - C_2 \, b \, W^2 L^2 \log(W+1) \\
					&=\, \biggl(C_1 - \frac{C_2 b}{2}\biggr) W^2 L^2 \log ( W+1 )\nonumber\\
					&\, + C_1  W^2 L \biggl(  \log (b W^2 L^2 \log(W+1)) - \frac{C_2 b}{2 C_1} L \log(W+1) \biggr) \\
					& < \,  C_1 W^2 L \biggl(\log \biggl(b W^2 L^2 \log(W+1)\biggr) -   \log\biggl(( W+1 )^{\frac{C_2 b}{2 C_1 } L }\biggr)  \biggr)\label{eq:proof_linfty_lower_1} \\
					&\leq \, 0, \label{eq:proof_linfty_lower_100} 
				\end{align}
				where \eqref{eq:proof_linfty_lower_1} follows from $C_1 - \frac{C_2 b}{2} = C_1 - \frac{C_2}{2} \cdot \frac{2 C_1}{C_2} \cdot ( \nu +10 ) < C_1 - \frac{C_2}{2} \cdot  \frac{2 C_1}{C_2}   = 0$, and  in  \eqref{eq:proof_linfty_lower_100} we used 
				\begin{align}
					( W+1 )^{\frac{C_2 b}{2 C_1 } L }  = &\,  ( W+1 )^{( \frac{C_2 b}{2 C_1 } - 10 ) L } ( W+1 )^{10L} \label{eq:w110l1}  \\
					\geq &\,  2^{ \frac{C_2 b}{2 C_1 } - 10  } ( W+1 )^2 \cdot (( W+1 )^L )^2 \cdot ( W+1 ) \\
					\geq &\,  2^{ \frac{C_2 b}{2 C_1 } - 10  } W^2 L^2 \log(W+1) \label{eq:w110l2} \\
					=&\, 2^{\nu} W^2 L^2 \log(W+1)\\
					\geq &\, b W^2 L^2 \log(W+1).
				\end{align} 
				We next note that $f$ is strictly decreasing on $[b W^2 L^2 \log(W+1) , \infty) $ as $f' (x) = \frac{C_1 W^2 L}{x \ln(2)} - C_2 \leq \frac{C_1 W^2 L}{b W^2 L^2 \log(W+1) \ln(2)} - C_2 \leq \frac{C_1}{b \ln(2)} - C_2 = \frac{C_1}{\frac{2 C_1}{C_2} \cdot ( \nu + 10 ) \ln(2)} - C_2 < 0$, for all $x \in [b W^2 L^2 \log(W+1) , \infty) $.  It hence follows from \eqref{eq:proof_linfty_lower_0}-\eqref{eq:proof_linfty_lower_100} that $f(x) < 0$, for all $ x > b W^2 L^2 \log(W+1)$, and therefore
				\begin{equation}
				\label{eq:eq:proof_cor_25_6}
					\{ x: f(x) \geq 0\} \subseteq (-\infty, b W^2 L^2 \log(W+1)].
				\end{equation}
				Putting \eqref{eq:eq:proof_cor_25_5} and \eqref{eq:eq:proof_cor_25_6} together, we obtain
				\begin{equation}
					\kappa^{-1} \leq b W^2 L^2 \log(W+1),
				\end{equation}
				which, in turn, implies 
				\begin{equation*}
					\kappa \geq b^{-1} ( W^2 L^2 \log(W+1) )^{-1} \geq\min \biggl\{ \frac{1}{8}, b^{-1} ( W^2 L^2 \log(W+1) )^{-1}  \biggr\}.
				\end{equation*}
				The proof is concluded by setting $c = b^{-1}$. 
			\end{proof}
		\end{corollary}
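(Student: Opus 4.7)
The plan is to combine Proposition~\ref{lem:cardinality_approximation_class} with the covering number upper bound for ReLU networks from Theorem~\ref{thm:covering_number_upper_bound_fully_connected_bounded_weight} and the covering number lower bound for $H^1([0,1])$ from Lemma~\ref{lem:packing_lower_bound_lipschitz}. The overall strategy is: a small minimax approximation error would force the family $\mathcal{R}(1,W,L,1)$ to admit a covering that is, via Proposition~\ref{lem:cardinality_approximation_class}, no coarser than a covering of $H^1([0,1])$; but the upper bound on ReLU covering numbers grows only polylogarithmically in $\kappa^{-1}$, whereas the $H^1$ covering number grows linearly in $\kappa^{-1}$, so $\kappa^{-1}$ cannot be too large.

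Concretely, denote the left-hand side of \eqref{eq:cor_fundamental_limit_approximation} by $\kappa$. The case $\kappa \geq 1/8$ is immediate, so I would assume $\kappa < 1/8$ and then apply Proposition~\ref{lem:cardinality_approximation_class} with $\mathcal{G} = H^1([0,1])$, $\mathcal{F} = \mathcal{R}(1,W,L,1)$, $\delta = \|\cdot\|_{L^p([0,1])}$, and $\varepsilon = \kappa$, yielding
$$
N(\kappa, \mathcal{R}(1,W,L,1), L^p([0,1])) \;\geq\; N(4\kappa, H^1([0,1]), L^p([0,1])).
$$
The upper bound \eqref{eq:upper_bound_fully_connected_bounded_output} (with $B=1$, $d=1$) gives the left-hand side as at most $C_1 W^2 L \log((W+1)^L/\kappa) = C_1 W^2 L^2 \log(W+1) + C_1 W^2 L \log(\kappa^{-1})$, while Lemma~\ref{lem:packing_lower_bound_lipschitz} (applied at radius $4\kappa$, absorbing the factor $4$ into an absolute constant) gives the right-hand side as at least $C_2 \kappa^{-1}$. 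Combining these produces the key transcendental inequality
\begin{equation}
\label{eq:plan_transcendental}
C_1 W^2 L^2 \log(W+1) + C_1 W^2 L \log(\kappa^{-1}) \;\geq\; C_2 \kappa^{-1}.
\end{equation}

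The main obstacle is extracting a clean polynomial bound on $\kappa^{-1}$ from \eqref{eq:plan_transcendental}. The point is that the left-hand side grows only logarithmically in $\kappa^{-1}$ while the right-hand side is linear, so there must be an upper bound on $\kappa^{-1}$ of the order $W^2 L^2 \log(W+1)$. To formalize this, I would set $x := \kappa^{-1}$ and define $f(x) := C_1 W^2 L^2 \log(W+1) + C_1 W^2 L \log(x) - C_2 x$; it then suffices to show that $f(x) < 0$ for all $x$ strictly larger than some absolute constant $b$ times $W^2 L^2 \log(W+1)$. I would pick $b$ large enough so that $(W+1)^{(C_2 b)/(2C_1) \cdot L}$ dominates $bW^2L^2\log(W+1)$ for all $W,L \in \mathbb{N}$; this yields $C_1 W^2 L \log(bW^2L^2\log(W+1)) \leq (C_2 b/2) W^2 L^2 \log(W+1)$, which, paired with $C_1 - C_2 b/2 < 0$, forces $f(bW^2L^2\log(W+1)) < 0$. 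A derivative computation then shows $f$ is strictly decreasing on $[bW^2L^2\log(W+1), \infty)$, so $\{x : f(x) \geq 0\} \subseteq (-\infty, bW^2L^2\log(W+1)]$, and hence $\kappa^{-1} \leq bW^2L^2\log(W+1)$, giving the claimed bound with $c = b^{-1}$.

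The only delicate point is the choice of $b$: it must simultaneously satisfy $b > 2C_1/C_2$ (for the sign of $C_1 - C_2 b/2$) and $2^{(C_2 b)/(2C_1) - 10} \geq b$ (or a similar inequality that absorbs the polynomial factor $W^2 L^2 \log(W+1)$ into the exponential $(W+1)^{(C_2 b)/(2C_1) L}$). Both inequalities are satisfied for all sufficiently large $b$ depending only on $C_1/C_2$, so such a $b$ exists and depends on absolute constants only, yielding the desired absolute constant $c$.
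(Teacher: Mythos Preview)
Your proposal is correct and follows essentially the same approach as the paper's proof: the same reduction via Proposition~\ref{lem:cardinality_approximation_class}, the same combination of the covering number upper bound from Theorem~\ref{thm:covering_number_upper_bound_fully_connected_bounded_weight} with the Lipschitz entropy lower bound from Lemma~\ref{lem:packing_lower_bound_lipschitz}, the same function $f$, and the same analysis showing $f(bW^2L^2\log(W+1))<0$ together with monotonicity of $f$ beyond that point. The two conditions you isolate on $b$ are exactly what the paper uses (it constructs its $b$ via an auxiliary parameter $\nu$ to meet both).
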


	\section{Optimal Rates in Nonparametric Regression}
	\label{sub:empirical_risk_minimizaiton}

		In this section, we show how the minimax error upper bound in Lemma~\ref{lemma:approximation_lip}
    leads to a sharp characterization of the prediction error in nonparametric regression through ReLU networks. The general results we obtain allow to infer, inter alia, 
    that nonparametric regression with very deep\footnote{Here, ``very deep'' refers to networks whose depth increases at least linearly 
    in network width, which is in contrast to networks commonly considered in the literature \cite{Schmidt-Hieber2017, Chen2019, nakada2020adaptive, deep-it-2019} that have depth increasing at most logarithmically in width.} fully-connected ReLU networks achieves optimal sample complexity rate in the estimation of $1$-Lipschitz functions; this improves significantly upon \cite[Theorem 1(b)]{Kohler2019} in the special case of $1$-Lipschitz functions by removing the $(\log(n))^6$-factor. 
    The section concludes with insights on a systematic relation between optimal nonparametric regression and optimal approximation through (deep) ReLU networks unifying numerous corresponding results in the literature \cite{Schmidt-Hieber2017, Chen2019, nakada2020adaptive} and identifying general underlying principles.

  The goal of nonparametric regression is to estimate the unknown function $g: \mathbb{X} \rightarrow \mathbb{R}$, with $\mathbb{X} \subseteq \mathbb{R}^d$, $d \in \mathbb{N}$, referred to as the regression function, from the $n \in \mathbb{N}$ (random) samples
		\begin{equation}
		\label{eq:setting_nonparametric}
			( x_i,y_i )_{i = 1}^n = ( x_i, g(x_i) + \sigma \xi_i )_{i =1}^n,
		\end{equation}
 		where $\sigma \in \mathbb{R}_+$, $( x_i )_{i = 1}^n$ are i.i.d. random variables of distribution $P$ supported on $\mathbb{X}$, 
   $(\xi_i)_{i=1}^n$ are i.i.d. standard (i.e., zero mean and unit variance) Gaussian random variables,
   and $( x_i )_{i = 1}^n$ and $(\xi_i)_{i=1}^n$ are statistically independent. 
\subsection{Nonparametric Regression through ReLU Networks}

		Nonparametric regression through ReLU networks  was considered in \cite{Schmidt-Hieber2017, Chen2019, nakada2020adaptive,  Kohler2019}, with
  $g$ estimated by fitting a network $\hat{f}_n$ from a given class $\mathcal{F}_n$ of networks through minimization of the empirical risk $\frac{1}{n}\sum_{i =1}^n (\hat{f}_n(x_i) - y_i )^2$. For example, \cite{Schmidt-Hieber2017} considers regression functions $g$ that can be written as the composition of bounded H\"older functions and $\mathcal{F}_n$ is a family of sparse ReLU networks with bounded output. The quality of the estimator is generally measured by the so-called prediction error
		\begin{equation*}
			\nleft\| \hat{f}_n - g \nright\|^2_{L^2 ( P )}   = \int \nleft| \hat{f}_n (x) - g(x) \nright|^2\, d\,P(x).
		\end{equation*}
		The references \cite{Schmidt-Hieber2017, Chen2019, nakada2020adaptive,  Kohler2019} report upper bounds on the prediction error. Notably, the bounds in \cite{Kohler2019} are derived employing arguments based on VC-dimension, while those in \cite{Schmidt-Hieber2017, Chen2019, nakada2020adaptive} are obtained from covering number upper bounds for ReLU networks. The following Theorem~\ref{thm:expected_L2_error} summarizes the results 
    \cite[Lemma~4]{Schmidt-Hieber2017}, \cite[Lemma~4 and Lemma~5]{Chen2019}
        and reformulates them so as to highlight the individual effects of the approximation error and the covering number of $\mathcal{F}_n$. 
        Application of the minimax error upper bound in Lemma~\ref{lemma:approximation_lip} then 
        results in the removal of the $(\log(n))^6$-factor in the special case of Lipschitz functions in \cite[Theorem 1(b)]{Kohler2019}.
            Moreover, our reformulation sets the stage for the development of a fundamental relation between optimal approximation and optimal regression through ReLU networks provided at the end of this section. We emphasize that most of the techniques and ideas used in the proof of Theorem~\ref{thm:expected_L2_error} follow \cite{Schmidt-Hieber2017}.

		\begin{theorem}
		\label{thm:expected_L2_error}
			Let $\mathbb{X} \subseteq \mathbb{R}^d$ and consider the regression function $g: \mathbb{X} \rightarrow \mathbb{R}$. Let $n \in \mathbb{N}$ and $\sigma \in \mathbb{R}_+$. Let $P$ be a distribution on $\mathbb{X}$,  with the associated samples  $( x_i,y_i )_{i = 1}^n = ( x_i, g(x_i) + \sigma \xi_i )_{i =1}^n$, 
			where $( x_i )_{i = 1}^n$ are i.i.d. random variables of distribution $P$, $(\xi_i)_{i=1}^n$ are i.i.d. standard Gaussian random variables, and $( x_i )_{i = 1}^n$ and $(\xi_i)_{i=1}^n$ are statistically independent.

			Let $\varepsilon \in ( 0, 1\slash 2 )$, and consider a class of functions $\mathcal{F}_n \subseteq L^\infty ( \mathbb{X} )$ such that   
			\begin{equation}
			\label{eq:lsr_approximation_simplified}
				\inf_{f \in \mathcal{F}_n} \nleft\| g - f \nright\|_{L^2 ( P )} \leq \varepsilon
			\end{equation}
			and an $\mathcal{F}_n$-valued random variable $\hat{f}_n$ satisfying 
			\begin{equation}
			\label{eq:numerical_assumption_simplified}
				\frac{1}{n} \sum_{i = 1}^n ( \hat{f}_n (x_i)  - y_i )^2 \leq \inf_{f \in \mathcal{F}_n} \biggl( \frac{1}{n} \sum_{i = 1}^n ( f (x_i) - y_i )^2 \biggr) +  \varepsilon^2,  \quad \text{a.s.}
			\end{equation}
			It holds that
				\begin{equation}
			\label{eq:generalization_error}	
				E ( \nleft\| \hat{f}_n - g  \nright\|^2_{L^2 ( P )}  ) \leq C ( 1 +  \sigma^2 + ( \Qlinfty ( g, \mathcal{F}_n ) )^2 ) \biggl( \varepsilon^2 +  \frac{\log  (N (\varepsilon^2  , \mathcal{F}_n, L^\infty ( \mathbb{X}  )  ))  + 1}{n} \biggr),
			\end{equation}
			where $C \in \mathbb{R}_+$ is an absolute constant and $\Qlinfty ( g, \mathcal{F}_n ):  =  \max \{ \nleft\| g \nright\|_{L^\infty ( \mathbb{X} )}, \sup_{f \in \mathcal{F}_n} \nleft\| f \nright\|_{L^\infty ( \mathbb{X}  )}   \}$.

			\begin{proof}
				See Appendix~\ref{sec:complexity_base_prediction_error_bound}.
			\end{proof}
		\end{theorem}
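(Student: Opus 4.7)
The plan is to follow the standard ERM-analysis template pioneered in \cite{Schmidt-Hieber2017}, decomposing the prediction error into an approximation piece bounded by $\varepsilon$ and a stochastic piece controlled via the $L^\infty$-covering of $\mathcal{F}_n$. Fix $\tilde f \in \mathcal{F}_n$ with $\|g - \tilde f\|_{L^2(P)} \le \varepsilon$, as guaranteed by \eqref{eq:lsr_approximation_simplified}. Substituting $y_i = g(x_i) + \sigma \xi_i$ into \eqref{eq:numerical_assumption_simplified} applied to the test function $\tilde f$ and expanding the squared losses, the $\sigma^2\xi_i^2$ terms cancel and one obtains the basic inequality
\[
\|\hat f_n - g\|_{L^2(P_n)}^2 \le \|\tilde f - g\|_{L^2(P_n)}^2 + \tfrac{2\sigma}{n}\smallsum_{i=1}^n \xi_i(\hat f_n(x_i) - \tilde f(x_i)) + \varepsilon^2,
\]
where $P_n$ denotes the empirical measure of $(x_i)_{i=1}^n$. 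The right-hand side now splits into an approximation term, which in expectation is $\|\tilde f - g\|_{L^2(P)}^2 \le \varepsilon^2$, and a stochastic cross-term that must be bounded uniformly over $\mathcal{F}_n$.

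Next I would control two uniform deviations by discretization. Let $\mathcal{G}_n$ be an $\varepsilon^2$-cover of $\mathcal{F}_n$ in $L^\infty(\mathbb{X})$, of cardinality $N := N(\varepsilon^2, \mathcal{F}_n, L^\infty(\mathbb{X}))$, and write $R := \Qlinfty(g, \mathcal{F}_n)$. For each $h \in \mathcal{G}_n$, Bernstein's inequality applied to the bounded variables $(h - g)^2(x_i) - \|h - g\|_{L^2(P)}^2$ (range and variance controlled by $R$), combined with a union bound over $\mathcal{G}_n$ and lifted to $\mathcal{F}_n$ via the $\varepsilon^2$ net resolution, yields a two-sided uniform comparison of $\|f - g\|_{L^2(P_n)}^2$ with $\|f - g\|_{L^2(P)}^2$ up to an additive error of order $(R^2+1)\,(\log N + 1)/n + \varepsilon^2$. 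Separately, conditioning on $(x_i)_{i=1}^n$, the process $f \mapsto \tfrac{1}{n}\smallsum_{i=1}^n \xi_i(f(x_i) - \tilde f(x_i))$ is sub-Gaussian in $f$ with variance proxy proportional to $\|f - \tilde f\|_{L^2(P_n)}^2/n$. Discretizing via $\mathcal{G}_n$, applying a Gaussian maximal inequality, and using Young's inequality $2ab \le \tfrac14 a^2 + 4b^2$, produces the uniform bound
\[
\tfrac{2\sigma}{n}\smallsum_{i=1}^n \xi_i(\hat f_n(x_i) - \tilde f(x_i)) \le \tfrac{1}{4}\|\hat f_n - \tilde f\|_{L^2(P_n)}^2 + c\sigma^2\,\tfrac{\log N + 1}{n} + c\sigma\varepsilon
\]
on a high-probability event, with the remainder absorbed into the left-hand side of the basic inequality via the triangle inequality $\|\hat f_n - \tilde f\|_{L^2(P_n)}^2 \le 2\|\hat f_n - g\|_{L^2(P_n)}^2 + 2\|\tilde f - g\|_{L^2(P_n)}^2$.

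Combining the three bounds and rearranging gives $\|\hat f_n - g\|_{L^2(P)}^2 \le c'(1 + \sigma + \sigma^2 + R^2)\bigl(\varepsilon^2 + (\log N + 1)/n\bigr) + (\text{terms in }t)$ on an event of probability at least $1 - e^{-t}$. Taking expectations and integrating the tail in $t$, together with the deterministic worst-case estimate $\|\hat f_n - g\|_{L^2(P)} \le 2R$ on the exceptional set and the Gaussian moments of $\xi_i$ accounting for the additive $\sigma$ term, yields \eqref{eq:generalization_error}. The main technical obstacle I expect is the cross-term step: one must balance the $L^\infty$-net resolution (naturally chosen as $\varepsilon^2$ to match the ERM tolerance in \eqref{eq:numerical_assumption_simplified}) against the sub-Gaussian oscillation of the discretized process so that the discretization error is absorbed without inflating the rate, while honestly tracking the $R^2$ dependence through all three steps so that the final prefactor collapses to $1 + \sigma + \sigma^2 + R^2$ as claimed, rather than a larger polynomial in $R$ or an extra logarithmic factor in $n$.
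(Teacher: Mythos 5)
Your decomposition parallels the paper's proof quite closely. Starting from the ERM inequality with $y_i = g(x_i) + \sigma\xi_i$, you arrive at exactly the basic inequality \eqref{eq:basic_transformed_pointwise}, and then control (i) the Gaussian cross-term and (ii) the empirical-to-population transfer via an $L^\infty(\mathbb{X})$-cover at resolution $\varepsilon^2$. The paper performs the same two steps but packages them as Lemma~\ref{lem:convergence_empirical_error} (expected empirical risk) and Lemma~\ref{lem:population_and_empirical} (empirical-to-population) and argues \emph{directly in expectation} via moment generating functions: a $\chi^2$-MGF bound replaces your Gaussian maximal inequality for the normalized cross-term, and the custom Bernstein-type expected-maximum bound of Lemma~\ref{lem:expectation_bound_Z} replaces your Bernstein-plus-union-bound for the $L^2(P_n)$ versus $L^2(P)$ comparison. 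Working in expectation buys a more streamlined derivation --- no deviation level $t$ to choose, no tail integration, no exceptional event on which the deterministic worst-case bound $(\Qlinfty(g,\mathcal{F}_n))^2$ is invoked --- while your high-probability route relies only on cited black-box inequalities but must track the tail with some care. You correctly identified that a Bernstein-type (rather than plain sub-Gaussian) maximum bound is essential for the empirical-to-population step; the paper flags this very point at the end of the proof of Lemma~\ref{lem:population_and_empirical}, noting that a sub-Gaussian maximal inequality would yield $\sqrt{\log N/n}$ instead of $\log N/n$ and hence not the claimed rate. One bookkeeping slip: with the cover at $L^\infty$-resolution $\varepsilon^2$, the discretization residual in the cross-term is of order $\sigma\varepsilon^2$, not $\sigma\varepsilon$ as written; the latter would not be absorbable into the $C\,(1+\sigma+\sigma^2+(\Qlinfty(g,\mathcal{F}_n))^2)\,\varepsilon^2$ term of \eqref{eq:generalization_error}.
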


			The prediction error upper bound in Theorem~\ref{thm:expected_L2_error} relies on two assumptions.
   The first one is the approximation assumption \eqref{eq:lsr_approximation_simplified}, which states that the regression function $g$ can be approximated well by functions in $\mathcal{F}_n$. The second is the empirical risk minimization assumption \eqref{eq:numerical_assumption_simplified}, which requires $
			\hat{f}_n$, almost surely, to nearly achieve the minimal empirical risk among $\mathcal{F}_n$. 
		
		We proceed to apply Theorem~\ref{thm:expected_L2_error} to the estimation of $1$-Lipschitz functions $g \in \lip ( [0,1] )$ 
  from the associated (random) samples $( x_i, g(x_i) + \sigma \xi_i)_{i = 1}^n$ using nonparametric least squares with very deep fully-connected ReLU networks of fixed width and output truncated to the interval $[-1,1]$. Formally, the truncation is effected by applying
  the operator $\Othres_E: \mathbb{R} \rightarrow [-E, E]$, $E \in \mathbb{R}_+$,
		\begin{equation}
		\label{eq:def_thresholded}
			\Othres_E ( x ) = \max \{ -E, \min \{ E, x \} \},
		\end{equation}
        with $E = 1$ to the neural network output.
	This truncation is commonly adopted in the literature
  \cite{Schmidt-Hieber2017, Chen2019, nakada2020adaptive, Kohler2019} and it is
 quite natural given that the regression function $g$ to be estimated satisfies $\nleft\| g \nright\|_{L^\infty ( [0,1] )} \leq 1$.	
The formal result can now be stated as follows.
		\begin{corollary}
		\label{cor:estimation_example} 
			Consider the regression function $g \in \lip ( [0,1] )$. Let $n \in \mathbb{N}$ and $\sigma \in \mathbb{R}_+$. Let $P$ be a distribution on $[0,1]$,  with the associated samples  $( x_i,y_i )_{i = 1}^n = ( x_i, g(x_i) + \sigma \xi_i )_{i =1}^n$, where $( x_i )_{i = 1}^n$ are i.i.d. random variables of distribution $P$, $(\xi_i)_{i=1}^n$ are i.i.d. standard Gaussian random variables, and $( x_i )_{i = 1}^n$ and $(\xi_i)_{i=1}^n$ are statistically independent.

			Let $C$ and $D$ be the constants specified in Lemma~\ref{lemma:approximation_lip}, and set  
			\begin{equation}
			\label{eq:def_Ln}
			 	L(n) :=  \lceil 2 (D+1) (C+1)^{1\slash 2} n^{1\slash 6} \rceil, \quad  \mathcal{F}_n := \Othres_1 \circ \mathcal{R} ( 1, \lceil D + 1 \rceil, L(n), 1).
			\end{equation}
			Let $\hat{f}_n$  be the empirical risk minimizer\footnote{
   The existence of the minimizer is argued in Section~\ref{subs:existence_empirical_risk}. For simplicity of exposition, we assume that the minimizer can be identified exactly, thereby ignoring the impact of suboptimality of the optimization algorithm employed. This simplification is common in the literature, see e.g. \cite{Kohler2019, nakada2020adaptive}. We note, however, that Theorem~\ref{thm:expected_L2_error} can accommodate cases where minimization is accomplished only approximately.}  in $\mathcal{F}_n$, i.e., 
			\begin{equation}
			\label{eq:assumption_H1_minimum}
				\frac{1}{n} \sum_{i = 1}^n ( \hat{f}_n (x_i)  - y_i )^2 = \inf_{f \in \mathcal{F}_n} \biggl( \frac{1}{n} \sum_{i = 1}^n ( f (x_i) - y_i )^2 \biggr), \quad \text{a.s.}
			\end{equation}
			Then,  
			\begin{equation}
			\label{eq:estimation_upper_bound_neural_network}
				E ( \nleft\| \hat{f}_n - g \nright\|^2_{L^2 ( P ) }  ) \leq K(\sigma) n^{ - 2/3},
			\end{equation}
			where $K(\sigma)$ is a constant depending on $\sigma$ only.
			\begin{proof}
				We apply Theorem~\ref{thm:expected_L2_error} with $\mathbb{X} = [0,1]$ and choose $\varepsilon \in (0,1/2)$ such that the prerequisites \eqref{eq:lsr_approximation_simplified} and \eqref{eq:numerical_assumption_simplified} are satisfied. To this end, we first apply Lemma~\ref{lemma:approximation_lip} with $W = \lceil D + 1 \rceil \geq D$, $L = L(n) = \lceil 2 (D+1) (C+1)^{1\slash 2} n^{1\slash 6} \rceil \geq D$, and $p=\infty$ to obtain
				\begin{equation}
				\label{eq:32_1}
				\begin{aligned}
					\mathcal{A}	( H^1 ( [0,1] ), \mathcal{R} ( 1, \lceil D + 1 \rceil, L(n), 1), \nleft\| \cdot \nright\|_{L^\infty ( [0,1] )}  ) \leq&\,   C  ( \lceil D + 1 \rceil^2 (L(n))^2 \log (\lceil D+1 \rceil)  )^{-1}\\
					\leq &\,  C (L(n))^{-2}  \\
					= &\,  C ( \lceil 2 (D+1) (C+1)^{1\slash 2} n^{1\slash 6} \rceil  )^{-2} \\
					\leq & \, \frac{1}{4} n^{- 1 \slash 3}.
				\end{aligned}
				\end{equation}
				It then follows that
				\begin{align}
					\inf_{f \in \mathcal{F}_n} \nleft\| f - g \nright\|_{L^2 ( P )} = &\, \inf_{f \in \mathcal{R} ( 1, \lceil D + 1 \rceil, L(n), 1)} \nleft\| \Othres_1 \circ f - g \nright\|_{L^2 ( P )} \label{eq:32_2_1}\\
					\leq &\, \inf_{f \in \mathcal{R} ( 1, \lceil D + 1 \rceil, L(n), 1)} \nleft\| \Othres_1 \circ f - g \nright\|_{L^\infty ([0,1])}\label{eq:32_2_2} \\
					=& \, \inf_{f \in \mathcal{R} ( 1, \lceil D + 1 \rceil, L(n), 1)} \nleft\| \Othres_1 \circ f - \Othres_1 \circ g \nright\|_{L^\infty ( [0,1] )} \label{eq:32_2_3}\\
					\leq&\, \inf_{f \in \mathcal{R} ( 1, \lceil D + 1 \rceil, L(n), 1)} \nleft\|  f -  g \nright\|_{L^\infty ( [0,1] )}\label{eq:32_2_4}\\
					\leq &\, \mathcal{A}	( \lip ( [0,1] ), \mathcal{R} ( 1, \lceil D + 1 \rceil, L(n), 1), \nleft\| \cdot \nright\|_{L^\infty ( [0,1] )}  ) \label{eq:32_2_5} \\
					\leq&\,  \frac{1}{4} n^{-1 \slash 3}, \label{eq:32_2_6} 
				\end{align}
				where in \eqref{eq:32_2_2} we used that $P$ is a distribution on $[0,1]$, \eqref{eq:32_2_3} follows from the fact that $g \in \lip ( [0,1] )$ takes values in $[-1,1]$, \eqref{eq:32_2_4} is a consequence of $\Othres_1$ being $1$-Lipschitz, and in \eqref{eq:32_2_6} we employed \eqref{eq:32_1}. We have therefore verified \eqref{eq:lsr_approximation_simplified} with 
				\begin{equation*}
					\varepsilon := \frac{1}{4} n^{-1 \slash 3}.
				\end{equation*}
				Prerequisite \eqref{eq:numerical_assumption_simplified} holds with the same $\varepsilon = \frac{1}{4} n^{-1 \slash 3}$ owing to assumption \eqref{eq:assumption_H1_minimum}.  We are now in a position to apply Theorem~\ref{thm:expected_L2_error} resulting in 
					\begin{align}
					E ( \nleft\| \hat{f}_n - g  \nright\|^2_{L^2 ( P )}  ) \leq &\,  C_1 ( 1+  \sigma^2 + ( R ( g, \mathcal{F}_n ) )^2 ) \biggl( \varepsilon^2 +  \frac{\log  (N (\varepsilon^2  , \mathcal{F}_n, {L^\infty ( [0,1]  ) }  ))  + 1}{n} \biggr) \label{eq:applying_general_theorem_1} \\
					\leq &\, C_1 ( 2+ \sigma^2) \biggl(\frac{1}{16} n^{-2\slash 3} +  \frac{\log  (N ( \frac{1}{16} n^{-2\slash 3}  , \mathcal{F}_n, {L^\infty ( [0,1]  ) }  ))  + 1}{n} \biggr), \label{eq:applying_general_theorem_2} 
				\end{align}
				where $C_1$ is the absolute constant $C$ from Theorem~\ref{thm:expected_L2_error}, and in \eqref{eq:applying_general_theorem_2} we used
    $\Qlinfty ( g, \mathcal{F}_n )  \leq 1$ which follows from $g \in \lip ( [0,1] )$ and the fact that $\mathcal{F}_n$ consists of functions that take values in $[-1,1]$.

				We next upper-bound the term  $\log  (N ( \frac{1}{16} n^{-2\slash 3}  , \mathcal{F}_n, {L^\infty ( [0,1]  ) }  ))$.  As  $\mathcal{F}_n = \Othres_1 \circ \mathcal{R} ( 1, \lceil D + 1 \rceil, L(n), 1) $ and $\Othres_1$ is $1$-Lipschitz, every $\varepsilon$-covering $\{ x_i \}_{i = 1}^N$ of $\mathcal{R} ( 1, \lceil D + 1 \rceil, L(n), 1) $ with respect to the $L^\infty ( [0,1] )$-norm induces an $\varepsilon$-covering $\{ \Othres_1 \circ x_i \}_{i = 1}^N$ of $\mathcal{F}_n$ with respect to the $L^\infty ( [0,1] )$-norm. It therefore holds that
				\begin{equation}
				\label{eq:32_3_1}
					N \biggl( \frac{1}{16} n^{-2\slash 3}  , \mathcal{F}_n, {L^\infty ( [0,1]  ) }  \biggr) \leq N \biggl( \frac{1}{16} n^{-2\slash 3}  , \mathcal{R} ( 1, \lceil D + 1 \rceil, L(n), 1), {L^\infty ( [0,1]  ) }  \biggr).
				\end{equation}
				The right-hand-side of \eqref{eq:32_3_1}  can now be upper-bounded according to 
				\begin{align}
					&\, \log \biggl(N \biggl( \frac{1}{16}\, n^{-2\slash 3}  , \mathcal{R} ( 1, \lceil D + 1 \rceil, L(n), 1), {L^\infty ( [0,1]  ) }  \biggr)\biggr) \label{eq:32_4_1}\\
					&\leq  \, C_2 \lceil D + 1 \rceil^2 L(n) \log \biggl( \frac{(\lceil D + 1 \rceil +1)^{L(n)}}{\frac{1}{16} n^{-2\slash 3}} \biggr) \label{eq:32_4_10}\\
					&=\, C_2 \lceil D + 1 \rceil^2 L(n)\biggl(L(n) \log (\lceil D + 1 \rceil +1) +\log(16\, n^{2\slash 3} )\biggr) \label{eq:32_4_2} \\
					&\leq \,  C_2 \lceil D + 1 \rceil^2 L(n)\biggl(L(n) \log (\lceil D + 1 \rceil +1) + C_3 n^{1\slash 6} \biggr) \label{eq:32_4_3} \\
					&\leq \, C_2 \lceil D + 1 \rceil^2 C_4 n^{1\slash 6} (C_4  n^{1\slash 6}  \log (\lceil D + 1 \rceil +1) + C_3 n^{1\slash 6} )   \label{eq:32_4_4}  \\
					&= \, C_5 n^{1/3}, \label{eq:32_4_5} 
				\end{align}
				where \eqref{eq:32_4_10} follows by application of \eqref{eq:upper_bound_fully_connected_bounded_output} in Theorem~\ref{thm:covering_number_upper_bound_fully_connected_bounded_weight} with $ p = \infty$, $\varepsilon = \frac{1}{16} n^{-2 \slash 3} $, $d = 1$, $W = \lceil D+1 \rceil$, $L = L ( n )$, and $B = 1$, and $C_2$ is the absolute constant $C$ from Theorem~\ref{thm:covering_number_upper_bound_fully_connected_bounded_weight}, in \eqref{eq:32_4_3} we set $C_3 := \sup_{x \in [1,\infty)} \frac{\log (16 x^{2\slash 3})}{x^{1\slash 6}} < \infty$, which is an absolute constant, in \eqref{eq:32_4_4} we used $L(n) = \lceil 2 (D+1) (C+1)^{1\slash 2} n^{1\slash 6} \rceil \leq  2 (D+1) (C+1)^{1\slash 2} n^{1\slash 6} + 1 \leq  4 (D+1) (C+1)^{1\slash 2} n^{1\slash 6}$ and let $C_4: = 4 (D+1) (C+1)^{1\slash 2}$, and in \eqref{eq:32_4_5} we set $C_5 = C_2 \lceil D + 1 \rceil^2 C_4 ( C_4  \log (\lceil D + 1 \rceil +1) + C_3 )$. Using \eqref{eq:32_3_1} and \eqref{eq:32_4_1}-\eqref{eq:32_4_5} in \eqref{eq:applying_general_theorem_1}-\eqref{eq:applying_general_theorem_2}, finally yields 
					\begin{align*}
					E ( \nleft\| \hat{f}_n - g  \nright\|^2_{L^2 ( P )}  ) \leq&\,  C_1 ( 2 +  \sigma^2) \biggl(\frac{1}{16} n^{-2\slash 3} +  \frac{C_5 n^{1\slash 3}  + 1}{n} \biggr)\\
					\leq &\, C_1 ( 2 +  \sigma^2) \biggl( \frac{1}{16} n^{-2\slash 3}  +  \frac{( C_5 + 1 ) n^{1\slash 3} }{n}\biggr) \\
					=&\, C_1 ( 2 +  \sigma^2) \biggl( \frac{1}{16} + C_5 + 1 \biggr) n^{-2 \slash 3}.
				\end{align*}
				The proof is finalized by taking $K(\sigma) = C_1 ( 2 + \sigma^2) ( \frac{1}{16} + C_5 + 1 )$.
			\end{proof}
		\end{corollary}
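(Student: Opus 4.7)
The plan is to apply Theorem~\ref{thm:expected_L2_error} with $\mathbb{X}=[0,1]$ and a carefully chosen $\varepsilon$ that balances the approximation error against the covering-number-driven statistical error. The pivotal observation is that the depth $L(n) \sim n^{1/6}$ is prescribed precisely so that both contributions scale as $n^{-2/3}$. I would therefore set $\varepsilon = \tfrac{1}{4}\,n^{-1/3}$ and verify the two prerequisites of Theorem~\ref{thm:expected_L2_error} in turn.

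First I would discharge the approximation hypothesis \eqref{eq:lsr_approximation_simplified}. By Lemma~\ref{lemma:approximation_lip} applied with $W=\lceil D+1\rceil \ge D$, $L=L(n)\ge D$, and $p=\infty$, one obtains an approximation error of order $(W^2 L(n)^2 \log W)^{-1}$, which by the definition of $L(n)$ is at most $\tfrac{1}{4} n^{-1/3}$. Since $g\in \lip([0,1])$ takes values in $[-1,1]$, $\Othres_1 \circ g = g$, and because $\Othres_1$ is $1$-Lipschitz, truncation only decreases the $L^\infty$-approximation error; combined with the standard fact that $\|\cdot\|_{L^2(P)} \leq \|\cdot\|_{L^\infty([0,1])}$ for $P$ supported on $[0,1]$, this transfers the bound to $\mathcal{F}_n = \Othres_1 \circ \mathcal{R}(1,\lceil D+1\rceil,L(n),1)$. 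The empirical risk minimization prerequisite \eqref{eq:numerical_assumption_simplified} is satisfied trivially with this $\varepsilon$, since $\hat{f}_n$ is assumed to be the exact minimizer in \eqref{eq:assumption_H1_minimum}.

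Next I would handle the covering number term $\log N(\varepsilon^2, \mathcal{F}_n, L^\infty([0,1]))$ at scale $\varepsilon^2 = \tfrac{1}{16}\,n^{-2/3}$. Because $\Othres_1$ is $1$-Lipschitz, any $\eta$-covering of $\mathcal{R}(1,\lceil D+1\rceil,L(n),1)$ in $L^\infty$ induces an $\eta$-covering of $\mathcal{F}_n$ of equal cardinality, so the upper bound \eqref{eq:upper_bound_fully_connected_bounded_output} of Theorem~\ref{thm:covering_number_upper_bound_fully_connected_bounded_weight} with $W=\lceil D+1\rceil$, $L=L(n)$, $B=1$ applies. The resulting bound is of order $W^2 L(n)\bigl(L(n)\log(W+1) + \log(n^{2/3})\bigr)$; since $W$ is an absolute constant and $L(n)=\Theta(n^{1/6})$, both summands inside the parenthesis are $O(n^{1/6})$, and the overall bound is $O(n^{1/3})$. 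Finally, $\Qlinfty(g,\mathcal{F}_n) \leq 1$ because both $g$ and every element of $\mathcal{F}_n$ takes values in $[-1,1]$.

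Plugging these three ingredients into Theorem~\ref{thm:expected_L2_error} yields
\[
  E\bigl(\|\hat f_n - g\|_{L^2(P)}^2\bigr)
  \;\leq\; C_1(2+\sigma+\sigma^2)\Bigl(\tfrac{1}{16}n^{-2/3} + \tfrac{O(n^{1/3})+1}{n}\Bigr)
  \;=\; K(\sigma)\,n^{-2/3},
\]
as claimed. The main obstacle in executing this plan is purely bookkeeping: one must carry the absolute constants $C$, $D$ from Lemma~\ref{lemma:approximation_lip} through both the choice of $L(n)$ and the covering number estimate so that the final constant $K(\sigma)$ is seen to depend only on $\sigma$. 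No conceptual difficulty arises beyond verifying that the prescribed $L(n)$ simultaneously makes the squared approximation error and the normalized log-covering number both of order $n^{-2/3}$; everything else follows from the two previously established theorems.
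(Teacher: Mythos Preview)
Your proposal is correct and follows essentially the same approach as the paper's proof: the same choice $\varepsilon=\tfrac14 n^{-1/3}$, the same verification of the approximation hypothesis via Lemma~\ref{lemma:approximation_lip} combined with the $1$-Lipschitz property of $\Othres_1$ and $\|\cdot\|_{L^2(P)}\le\|\cdot\|_{L^\infty([0,1])}$, the same reduction of the covering number of $\mathcal{F}_n$ to that of the untruncated class followed by Theorem~\ref{thm:covering_number_upper_bound_fully_connected_bounded_weight}, and the same observation $\Qlinfty(g,\mathcal{F}_n)\le 1$. Your assessment that the remaining work is bookkeeping of absolute constants is accurate.
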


		The rate $n^{-2/3}$ in Corollary \ref{cor:estimation_example} is optimal \cite[Theorem 1]{Stone1982}, see also \cite[Theorem 3.2]{Gyoerfi2002} and \cite[Theorem 3]{Schmidt-Hieber2017}. In particular, compared to the corresponding best known result in the literature given by an
  upper bound of rate $(\log(n))^6 n^{- 2 \slash 3}$ \cite[Theorem 1(b)]{Kohler2019}, Corollary \ref{cor:estimation_example} disposes of the 
  $(\log(n))^6$-factor. 
    We note that \cite[Theorem 1(b)]{Kohler2019} applies to a more general class of smooth functions mapping $\mathbb{R}^d$ to $\mathbb{R}$. 
    The removal of the $(\log(n))^6$-factor carries through to Lipschitz functions on $\mathbb{R}^d$ with general $d \in \mathbb{N}$, but we do not present the details here.
  The improvement we obtain stems from the approximation result Lemma~\ref{lemma:approximation_lip} and the use of the covering number instead of VC-dimension as in \cite{Kohler2019}. 
More specifically, in the approximation of functions in $\lip ( [0,1] )$ through very deep fully-connected ReLU networks of fixed width and depth $L$, both our Lemma~\ref{lemma:approximation_lip} and \cite[Theorem 2(b)]{Kohler2019} achieve guaranteed error decay of $L^{-2}$. However, 
  \cite[Theorem 2(b)]{Kohler2019} requires networks with arbitrarily large weight-magnitude, corresponding to unbounded sets, whereas Lemma~\ref{lemma:approximation_lip} needs networks
  of weight magnitude bounded by $1$ only. This significant reduction in the size of the model class, to compact sets, makes it possible to upper-bound the prediction error through covering numbers.

			\subsection{Optimal Regression and Optimal Approximation}
			\label{ssub:optimal_estimation_and_kolmogorov_optimal_approximation}

			The optimality established in Corollary \ref{cor:estimation_example} in the previous section is a consequence of a deeper set of ideas, which we now bring to the fore and develop in a more general context. 
   This discussion will demonstrate how optimality in function approximation through ReLU networks along with their covering number behavior plays a fundamental role in attaining optimal regression. As a byproduct, we will be able to
      shed light on the specific choices for $L ( n)$ and $\mathcal{F}_n$ in Corollary~\ref{cor:estimation_example}.

			We build on the information-theoretic characterization of optimal sample complexity rates developed by Yang and Barron \cite{Yang1999}. Concretely, it is shown in \cite[Section 3.2]{Yang1999} that, for a uniformly bounded function class $\mathcal{G}$, the optimal sample complexity rate in the estimation of regression functions $g \in \mathcal{G}$ is  determined by the covering number of $\mathcal{G}$. To make these results more concrete, consider the general nonparametric regression setup introduced at the beginning of Section~\ref{sub:empirical_risk_minimizaiton}.
   The estimation of $g \in \mathcal{G}$ from random samples can now be described as the application of a mapping $
			\mathfrak{F}_n: ( \mathbb{X} \times \mathbb{R} )^n \rightarrow L^\infty ( \mathbb{X})$ that takes the samples $( x_i, y_i )_{i = 1}^n$ to the estimate $\hat{f}_n \in \mathcal{G}$. For example, in Corollary~\ref{cor:estimation_example}, $\mathfrak{F}_n ( ( x_i, y_i )_{i=1}^n )$ would be the mapping induced by the empirical risk minimizer defined according to \eqref{eq:assumption_H1_minimum}. By  \cite[Theorem 6]{Yang1999}, under a weak technical condition\footnote{Concretely, \cite[Condition 2]{Yang1999} requires that there exist an $\alpha \in (0,1)$ such that $\liminf_{\varepsilon \to 0} \log(M(\alpha \varepsilon, \mathcal{G}, L^2 ( P )) ) \slash \log(M(\varepsilon,\mathcal{G}, L^2 ( P ))) > 1$.} on the packing number of $\mathcal{G}$, it holds that, for all $
			\mathfrak{F}_n: ( \mathbb{X} \times \mathbb{R} )^n \rightarrow L^\infty ( \mathbb{X})$,
			\begin{equation}
			\label{eq:lower_bound_yang}
				\sup_{ g \in \mathcal{G} } E ( \nleft\| \mathfrak{F}_n ( ( x_i, g(x_i) + \sigma \xi_i )_{i=1}^n ) - g \nright\|^2_{L^2 ( P )}   ) \geq c(\mathcal{G}, \sigma, P) \kappa_n^2,
			\end{equation}
			where  $\kappa_n$ is  the solution to the equation 
			\begin{equation}
			\label{eq:definition_varepsilon_n}
				\kappa_n^2 =  \frac{\log ( M ( \kappa_n,\mathcal{G}, L^2 ( P )  ) )}{n},
			\end{equation} 
			and $c(\mathcal{G}, \sigma, P) \in \mathbb{R}_+$ is a constant depending on $\mathcal{G}, \sigma$, and $P$ only.
The lower bound \eqref{eq:lower_bound_yang} is achievable in the sense of the existence of an
			$\mathfrak{F}_n: ( \mathbb{X} \times \mathbb{R} )^n \rightarrow L^\infty ( \mathbb{X})$ such that
			\begin{equation}
				\sup_{ g \in \mathcal{G} } E ( \nleft\| \mathfrak{F}_n ( ( x_i, g(x_i) + \sigma \xi_i )_{i=1}^n ) - g \nright\|^2_{L^2 ( P )}   ) \leq C(\mathcal{G}, \sigma, P) \kappa_n^2,
			\end{equation}
            where $C(\mathcal{G}, \sigma, P) \in \mathbb{R}_+$ is a constant depending on $\mathcal{G}, \sigma$, and $P$ only. In summary, the optimal sample complexity rate can be characterized by the sequence $( \kappa^2_n )_{n = 1}^\infty$. We now particularize the Yang-Barron framework to
            $\mathcal{G} = \lip ( [0,1] )$ with $P$ the uniform distribution on $[0,1]$. First, note that
			\begin{equation}
			\label{eq:l2covering}
				c_1 \varepsilon^{-1}  \leq \log ( N ( \varepsilon, \lip ( [0,1] ), L^2 ( [0,1] )  ) )  \leq  \log ( M ( \varepsilon, \lip ( [0,1] ), L^2 ( [0,1] )  ) ) \leq  C_1 \varepsilon^{-1}, \, \varepsilon \in (0, \varepsilon_0),
			\end{equation}
            where $ \varepsilon_0, c_1, C_1 \in \mathbb{R}_+$ are  absolute constants.
			Here, the first inequality follows from Lemma~\ref{lem:packing_lower_bound_lipschitz} and in the second inequality we used 
   Lemma~\ref{lem:equivalence_covering_packing}.  The last inequality in \eqref{eq:l2covering} is thanks to 
   \begin{eqnarray*}
   \log ( M ( \varepsilon, \lip ( [0,1] ), L^2 ( [0,1] )  ) ) & \leq & \log ( N ( \varepsilon/2 , \lip ( [0,1] ), L^2 ( [0,1] )  ) ) \\
   & \leq & \log ( N ( \varepsilon/2, \lip ( [0,1] ),L^\infty ( [0,1] ) )) \\
   & \leq & C_1 \varepsilon^{-1},
   \end{eqnarray*} 
   where in the first inequality we again used Lemma~\ref{lem:equivalence_covering_packing}, the second inequality follows from the fact that coverings with respect to the $L^\infty ( [0,1] )$-norm are also coverings with respect to the $L^2 ( [0,1] )$-norm, and in the last inequality we used \cite[Eq. 5.12]{wainwright2019high}. 
   Next, take $n \in \mathbb{N}$ large enough\footnote{It suffices to take $n \geq \lceil \frac{8 C_1}{\varepsilon_0^3}\rceil + 1$, which we show leads to $\kappa_n < \frac{\varepsilon_0}{2}$. Suppose, for the sake of contradiction, that for such $n$, it holds that $\kappa_n \geq \frac{\varepsilon_0}{2}$. It would then follow from the monotonicity of the packing number
   that $\log ( M ( \kappa_n, \lip ( [0,1] ), L^2 ( [0,1] )  ) ) \leq \log ( M ( \frac{\varepsilon_0}{2}, \lip ( [0,1] ), L^2 ( [0,1] )  ) ) \leq 2 C_1 \varepsilon_0^{-1}$, which together with $\kappa_n^2 =  \frac{\log ( M ( \kappa_n,\lip ( [0,1] ), L^2 ( [0,1] )  ) )}{n}$ implies $\kappa_n^2 \leq \frac{2 C_1 \varepsilon_0^{-1}}{n} \leq \frac{2 C_1 \varepsilon_0^{-1}}{ \lceil 8 C_1/\varepsilon_0^3\rceil + 1} < \frac{\varepsilon_0^2}{4}  $. Hence, $\kappa_n < \frac{\varepsilon_0}{2}$, which establishes the contradiction.}  for the solution $\kappa_n$ of the equation $\kappa_n^2 =  \frac{\log ( M ( \kappa_n,\lip ( [0,1] ), L^2 ( [0,1] )  ) )}{n}$ to satisfy $\kappa_n < \varepsilon_0$. We then have $c_1 \kappa_n^{-1} \leq \log ( M ( \kappa_n,\lip ( [0,1] ), L^2 ( [0,1] )  ) ) \leq C_1 \kappa_n^{-1}$, which implies   $ \frac{c_1 \kappa_n^{-1}}{n} \leq \kappa_n^2 \leq \frac{C_1 \kappa_n^{-1}}{n}$, and hence 
			\begin{equation}
			\label{eq:kn_scale}
				c_1^{2\slash 3}  n^{-2 \slash 3} \leq \kappa_n^2 \leq C_1^{2\slash 3}  n^{-2 \slash 3},
			\end{equation}
			thereby recovering the optimal rate $n^{-2\slash 3}$ mentioned in Corollary \ref{cor:estimation_example}. 

			We proceed to derive sufficient conditions for a sequence of estimators to achieve optimal sample complexity rate. These conditions are general in the sense of the estimators not having to be neural networks and include, e.g., sparse dictionary approximation \cite{Donoho1993,Donoho1996,Grohs2015}. 

			\begin{corollary}
			\label{coro:optimal}
   Let $\mathbb{X} \subseteq \mathbb{R}^d$ and 
				consider the class $\mathcal{G}$ of regression functions mapping $\mathbb{X}$ to $\mathbb{R}$. Let $g \in \mathcal{G}$, $n \in \mathbb{N}$, and $\sigma \in \mathbb{R}_+$. Let $P$ be a distribution on $\mathbb{X}$,  with the associated samples  $( x_i,y_i )_{i = 1}^n = ( x_i, g(x_i) + \sigma \xi_i )_{i =1}^n$, where $( x_i )_{i = 1}^n$ are i.i.d. random variables of distribution $P$, $(\xi_i)_{i=1}^n$ are i.i.d. standard Gaussian random variables, and $( x_i )_{i = 1}^n$ and $(\xi_i)_{i=1}^n$ are statistically independent. 

				Let $\varepsilon_n \in ( 0, 1\slash 2 )$ and consider a class of functions $\mathcal{F}_n \subseteq L^\infty ( \mathbb{X} )$ such that  
				\begin{equation}
					\mathcal{A} ( \mathcal{G}, \mathcal{F}_n, \nleft\| \cdot \nright\|_{L^2 ( P )} ) \leq\, \varepsilon_n. \label{eq:sufficient_condition_2} 
				\end{equation}
				Let $\hat{f}_n$ be the empirical risk minimizer in $\mathcal{F}_n$, i.e.,
				\begin{equation}
				\label{eq:sufficient_condition_4}
					\frac{1}{n} \sum_{i = 1}^n ( \hat{f}_n (x_i)  - y_i )^2 = \inf_{f \in \mathcal{F}_n} \biggl( \frac{1}{n} \sum_{i = 1}^n ( f (x_i) - y_i )^2 \biggr), \quad \text{a.s.} 
				\end{equation}
				Then, 
					\begin{equation}
					\label{eq:resulting_optimal_upper_bounds}
						E ( \nleft\| \hat{f}_n - g  \nright\|^2_{L^2 ( P )}  ) \leq C ( 1 + \sigma^2 + ( \Qlinfty ( \mathcal{G}, \mathcal{F}_n ) )^2 ) \biggl( \frac{\varepsilon_n^2}{\kappa_n^2} + K ( \mathcal{G}, \mathcal{F}_n,  \varepsilon_n, \kappa_n, P ) \biggr) \kappa_n^2,
					\end{equation}
				where $C \in \mathbb{R}_+$ is an absolute constant, $\Qlinfty ( \mathcal{G}, \mathcal{F}_n):  =  \max \{ \sup_{h \in \mathcal{G}} \nleft\| h \nright\|_{L^\infty ( \mathbb{X} )}, \sup_{f \in \mathcal{F}_n} \nleft\| f \nright\|_{L^\infty ( \mathbb{X}  )}   \}$, $\kappa_n$ is the solution to \eqref{eq:definition_varepsilon_n}, and 
				\begin{equation}
				\label{eq:resulting_upper_bound}
					K ( \mathcal{G}, \mathcal{F}_n,  \varepsilon_n, \kappa_n, P ) = \frac{\log  (N (\varepsilon_n^2  , \mathcal{F}_n, L^\infty ( \mathbb{X}  )  ))  + 1}{\log ( M ( \kappa_n,\mathcal{G}, L^2 ( P )  ) )}.
				\end{equation}

				 \begin{proof}
				 	We have 
						\begin{align}	
						&\,E ( \nleft\| \hat{f}_n - g  \nright\|^2_{L^2 ( P )}  ) \label{eq:minimax_sample_1}\\
						&\leq\, C ( 1+ \sigma^2 + ( \Qlinfty ( g, \mathcal{F}_n ) )^2 ) \biggl( \varepsilon_n^2 +  \frac{\log  (N (\varepsilon_n^2  , \mathcal{F}_n, L^\infty ( \mathbb{X}  )  ))  + 1}{n} \biggr) \label{eq:minimax_sample_2}\\
						&\leq\,  C ( 1 + \sigma^2 + ( \Qlinfty ( \mathcal{G}, \mathcal{F}_n ) )^2 ) \biggl( \varepsilon_n^2 +  \frac{\log  (N (\varepsilon_n^2  , \mathcal{F}_n, L^\infty ( \mathbb{X}  )  ))  + 1}{n} \biggr)\label{eq:minimax_sample_3}\\
						&=\,  C ( 1 + \sigma^2 + ( \Qlinfty ( \mathcal{G}, \mathcal{F}_n ) )^2 ) \biggl( \varepsilon_n^2 +  K ( \mathcal{G}, \mathcal{F}_n,  \varepsilon_n, \kappa_n, P ) \cdot \frac{\log ( M ( \kappa_n,\mathcal{G}, L^2 ( P )  ) )}{n} \biggr)\label{eq:minimax_sample_4}\\
						&=\, C ( 1 + \sigma^2 + ( \Qlinfty ( \mathcal{G}, \mathcal{F}_n ) )^2 ) \biggl( \frac{\varepsilon_n^2}{\kappa_n^2} + K ( \mathcal{G}, \mathcal{F}_n,  \varepsilon_n, \kappa_n, P ) \biggr) \kappa_n^2, \label{eq:minimax_sample_5}
					\end{align}
					where \eqref{eq:minimax_sample_2} follows from  Theorem~\ref{thm:expected_L2_error} with $\varepsilon = \varepsilon_n$ and the prerequisites \eqref{eq:lsr_approximation_simplified} and \eqref{eq:numerical_assumption_simplified} satisfied thanks to \eqref{eq:sufficient_condition_2} and \eqref{eq:sufficient_condition_4}, respectively,
     in \eqref{eq:minimax_sample_3} we used $\Qlinfty ( g, \mathcal{F}_n ) \leq  \Qlinfty ( \mathcal{G}, \mathcal{F}_n )$,
     and \eqref{eq:minimax_sample_5} is by \eqref{eq:definition_varepsilon_n}.
				 \end{proof}
			\end{corollary}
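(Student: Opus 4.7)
The plan is to apply Theorem~\ref{thm:expected_L2_error} with the choice $\varepsilon = \varepsilon_n$ and then rewrite the resulting bound in terms of $\kappa_n$ using the defining equation \eqref{eq:definition_varepsilon_n}. The two prerequisites of Theorem~\ref{thm:expected_L2_error} translate directly into the hypotheses of the corollary: the approximation assumption \eqref{eq:lsr_approximation_simplified} is implied by \eqref{eq:sufficient_condition_2} because $g \in \mathcal{G}$, so $\inf_{f \in \mathcal{F}_n}\|g - f\|_{L^2(P)} \leq \mathcal{A}(\mathcal{G},\mathcal{F}_n,\|\cdot\|_{L^2(P)}) \leq \varepsilon_n$; and the near-optimality assumption \eqref{eq:numerical_assumption_simplified} is satisfied (with slack $\varepsilon_n^2$) because \eqref{eq:sufficient_condition_4} stipulates exact empirical risk minimization.

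After invoking Theorem~\ref{thm:expected_L2_error}, the bound reads
\[
E(\|\hat{f}_n - g\|^2_{L^2(P)}) \leq C(1+\sigma+\sigma^2 + R(g,\mathcal{F}_n)^2)\Bigl(\varepsilon_n^2 + \frac{\log N(\varepsilon_n^2, \mathcal{F}_n, L^\infty(\mathbb{X})) + 1}{n}\Bigr).
\]
Next I would replace $R(g,\mathcal{F}_n)$ by $R(\mathcal{G},\mathcal{F}_n)$ using $g \in \mathcal{G}$ and the definition $R(\mathcal{G},\mathcal{F}_n) = \max\{\sup_{h \in \mathcal{G}}\|h\|_{L^\infty}, \sup_{f \in \mathcal{F}_n}\|f\|_{L^\infty}\}$, which only makes the prefactor larger.

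The final step is purely algebraic. Using \eqref{eq:definition_varepsilon_n} I write $n = \log M(\kappa_n,\mathcal{G},L^2(P))/\kappa_n^2$, so that
\[
\frac{\log N(\varepsilon_n^2, \mathcal{F}_n, L^\infty(\mathbb{X})) + 1}{n} = K(\mathcal{G},\mathcal{F}_n,\varepsilon_n,\kappa_n,P)\,\kappa_n^2
\]
by the very definition \eqref{eq:resulting_upper_bound} of $K$. Pulling a common factor of $\kappa_n^2$ out of both summands, with the $\varepsilon_n^2$ term contributing $\varepsilon_n^2/\kappa_n^2$, yields \eqref{eq:resulting_optimal_upper_bounds}.

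I do not anticipate a serious obstacle: the corollary is essentially bookkeeping that repackages Theorem~\ref{thm:expected_L2_error} in the Yang--Barron language of $\kappa_n$. The only mild subtlety is making sure the hypotheses of Theorem~\ref{thm:expected_L2_error} are correctly matched, in particular that exact minimization \eqref{eq:sufficient_condition_4} implies the approximate version \eqref{eq:numerical_assumption_simplified} with the chosen $\varepsilon_n$, and that one replaces $R(g,\mathcal{F}_n)$ by the uniform quantity $R(\mathcal{G},\mathcal{F}_n)$ so that the bound holds without reference to the particular $g$ being estimated.
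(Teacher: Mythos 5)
Your proposal is correct and follows essentially the same route as the paper: invoke Theorem~\ref{thm:expected_L2_error} with $\varepsilon=\varepsilon_n$, pass from $\Qlinfty(g,\mathcal{F}_n)$ to $\Qlinfty(\mathcal{G},\mathcal{F}_n)$, substitute $n = \log M(\kappa_n,\mathcal{G},L^2(P))/\kappa_n^2$ from \eqref{eq:definition_varepsilon_n}, and factor out $\kappa_n^2$. Your verification of the prerequisites, including that exact minimization \eqref{eq:sufficient_condition_4} implies the approximate version \eqref{eq:numerical_assumption_simplified}, matches the paper's reasoning.
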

			Assumption \eqref{eq:sufficient_condition_2} in Corollary~\ref{coro:optimal} is similar to the approximation assumption \eqref{eq:lsr_approximation_simplified} in Theorem~\ref{thm:expected_L2_error}, only here we need to control the worst-case error 
   over the entire class $\mathcal{G}$ of regression functions while Theorem~\ref{thm:expected_L2_error} pertains to a fixed $g \in \mathcal{G}$.
   Assumption \eqref{eq:sufficient_condition_4} corresponds to the empirical risk minimization condition
   \eqref{eq:numerical_assumption_simplified} in Theorem~\ref{thm:expected_L2_error}, with the qualification that here we require exact empirical risk minimization while Theorem~\ref{thm:expected_L2_error} allows for an additive slack term, given by $\varepsilon^2$.

Thanks to the Yang-Barron lower bound \eqref{eq:lower_bound_yang}, we can now conclude from Corollary~\ref{coro:optimal}, specifically from \eqref{eq:resulting_optimal_upper_bounds}, that for a sequence $\hat{f}_n$, $n \in \mathbb{N}$, of empirical risk minimizers to be optimal 
(up to constant factors), it suffices to meet the conditions \eqref{eq:sufficient_condition_2} and \eqref{eq:sufficient_condition_4} and have the quantity
			\begin{equation}
			\label{eq:ratio_optimal}
				C ( 1+  \sigma^2 + ( \Qlinfty ( \mathcal{G}, \mathcal{F}_n ) )^2 ) \biggl( \frac{\varepsilon_n^2}{\kappa_n^2} + K ( \mathcal{G}, \mathcal{F}_n,  \varepsilon_n, \kappa_n, P ) \biggr)
			\end{equation}
be upper-bounded by a constant not depending on $n$. 

We next illustrate how the choices made in Corollary~\ref{cor:estimation_example}, specifically for $L(n)$ and $\mathcal{F}_{n}$ in \eqref{eq:def_Ln}, 
meet all these conditions
thereby proving that the estimation of $1$-Lipschitz functions can be accomplished through very deep ReLU networks in
an information-theoretically optimal manner.
Accordingly, we take $\mathcal{G}=\lip ( [0,1] )$, let $L(n)$ and $\mathcal{F}_{n}$ be as in \eqref{eq:def_Ln}, and, for concreteness,
take $P$ to be the uniform distribution on $[0,1]$.
First, we verify that the assumptions
in Corollary~\ref{cor:estimation_example} imply \eqref{eq:sufficient_condition_2} and \eqref{eq:sufficient_condition_4}. 
    Condition \eqref{eq:sufficient_condition_4} is identical to \eqref{eq:assumption_H1_minimum} in Corollary~\ref{cor:estimation_example}. The approximation assumption \eqref{eq:sufficient_condition_2} is satisfied with 
			\begin{equation}
			\label{eq:varepsilon_scale}
				\varepsilon_n =  \frac{1}{4} n^{- 1 \slash 3}
			\end{equation}
			as \eqref{eq:32_2_1}-\eqref{eq:32_2_6} holds for all $g \in \lip ( [0,1] )$. 

			We proceed to upper-bound the individual terms in \eqref{eq:ratio_optimal} and start by noting that
			\begin{equation}
				\Qlinfty ( \mathcal{G},\allowbreak \mathcal{F}_n )  = \Qlinfty ( \lip ( [0,1] ), \mathcal{F}_n )  = \max \biggl\{ \sup_{h \in \lip ( [0,1] )} \nleft\| h \nright\|_{L^\infty ( \mathbb{X} )}, \sup_{f \in \mathcal{F}_n} \nleft\| f \nright\|_{L^\infty ( \mathbb{X}  )}   \biggr\} \leq 1,
			\end{equation}
			thanks to the truncation operation $\mathcal{T}_1$ in the definition of $\mathcal{F}_n$. Second, we need to verify that $\mathcal{F}_{n}$ is such that $\varepsilon_n^2$ is balanced with $\kappa_n^2$ in the sense of $\frac{\varepsilon_n^2}{\kappa_n^2}$ being upper-bounded by a constant independent of $n$.
            This follows from
			\begin{equation}
				\frac{\varepsilon_n^2}{\kappa_n^2} \leq \frac{\frac{1}{16}n^{-2\slash 3}}{c_1^{2\slash 3} n^{-2\slash 3}} = \frac{1}{16 c_1^{2\slash 3}},
			\end{equation}
			where we used
   \eqref{eq:kn_scale} and \eqref{eq:varepsilon_scale}.

			To upper-bound $K ( \lip ( [0,1] ), \mathcal{F}_n,  \varepsilon_n, \kappa_n, P )$, we first factorize according to 
			\begin{align}
				K ( \lip ( [0,1] ), \mathcal{F}_n,  \varepsilon_n, \kappa_n, P ) &=\frac{\log  (N (\varepsilon_n^2  , \mathcal{F}_n, L^\infty ( [0,1]  )  ))  + 1}{\log ( M ( \kappa_n,\lip ( [0,1] ), L^2 ( [0,1] )  ) )} \label{eq:factorization_1}\\
				&= \frac{\log  (N (\varepsilon_n^2  , \mathcal{F}_n, L^\infty ( [0,1]  )  ))  + 1}{\log  (N (\varepsilon_n  , \mathcal{R} ( 1, \lceil D + 1 \rceil, L(n), 1), L^2 ( [0,1]  )  ))} \nonumber\\
				&\quad \cdot  \frac{\log  (N (\varepsilon_n  , \mathcal{R} ( 1, \lceil D + 1 \rceil, L(n), 1), L^2 ( [0,1]  )  ))}{\log ( N ( 4 \varepsilon_n,\lip ( [0,1] ), L^2 ( [0,1] )  ) )} \nonumber\\
				&\quad  \cdot  \frac{\log ( N ( 4 \varepsilon_n,\lip ( [0,1] ), L^2 ( [0,1] )  ) )}{\log ( M ( \kappa_n,\lip ( [0,1] ), L^2 ( [0,1] )  ) )}\label{eq:factorization_2}
			\end{align}
			and then treat the three factors in \eqref{eq:factorization_2} individually.
      
			\begin{enumerate}[label=(\roman*)]
				\item For the numerator of the first factor in \eqref{eq:factorization_2}, we have 
				\begin{align}
					&\,  \log  (N (\varepsilon_n^2  , \mathcal{F}_n, L^\infty ( [0,1]  )  )) + 1 \label{eq:first_factor_0}\\
					&=\,  \log  (N (\varepsilon_n^2  , \Othres_1 \circ \mathcal{R} ( 1, \lceil D + 1 \rceil, L(n), 1), L^\infty ( [0,1]  )  )) + 1\\
					&\, \leq  \log  (N (\varepsilon_n^2  , \mathcal{R} ( 1, \lceil D + 1 \rceil, L(n), 1), L^\infty ( [0,1]  )  )) + 1 \label{eq:first_factor_1}\\ 
					&\, \leq   C_2 \lceil D + 1 \rceil^2 L(n) \log \biggl( \frac{(\lceil D + 1 \rceil+1)^{L(n)}}{\varepsilon_n^2} \biggr) + 1 \label{eq:first_factor_2} \\
					&\, \leq   ( C_2+1 ) \lceil D + 1 \rceil^2 L(n) \log \biggl( \frac{(\lceil D + 1 \rceil+1)^{L(n)}}{\varepsilon_n^2} \biggr), \label{eq:first_factor_22}
				\end{align}
				where \eqref{eq:first_factor_1} follows from the same argument as used to arrive at \eqref{eq:32_3_1}, and in \eqref{eq:first_factor_2} we employed the covering number upper bound in Theorem~\ref{thm:covering_number_upper_bound_fully_connected_bounded_weight}. For the denominator $\log  (N (\varepsilon_n  , \mathcal{R} ( 1, \lceil D + 1 \rceil, L(n), 1), L^2 ( [0,1]  )  ))$, we apply the covering number lower bound\footnote{The application of the lower bound requires $W,L \geq 60$, which holds thanks to $L(n) = \lceil 2 (D+1) (C+1)^{1\slash 2} n^{1\slash 6} \rceil$, $W = \lceil D + 1 \rceil$, with $D$ the constant specified in Lemma~\ref{lemma:approximation_lip} satisfying $D \geq 60$ owing to the specifics in the proof of \cite[Theorem 3.1]{FirstDraft2022}.
    } in Theorem~\ref{thm:covering_number_upper_bound_fully_connected_bounded_weight}, resulting in 
				\begin{align}
					&\log  (N (\varepsilon_n  , \mathcal{R} ( 1, \lceil D + 1 \rceil, L(n), 1), L^2 ( [0,1]  )  )) \label{eq:first_factor_3}\\
					&\geq c_2 \lceil D + 1 \rceil^2 L(n) \log \biggl( \frac{(\lceil D + 1 \rceil+1)^{L(n)}}{\varepsilon_n} \biggr). \label{eq:first_factor_4}
				\end{align}
Dividing \eqref{eq:first_factor_22} by \eqref{eq:first_factor_4}, it follows that
				\begin{equation}
				\label{eq:first_factor_5}
					\frac{C_2+1}{c_2} \log \biggl( \frac{(\lceil D + 1 \rceil+1)^{L(n)}}{\varepsilon_n^2} \biggr) \Bigg\slash \log \biggl( \frac{(\lceil D + 1 \rceil+1)^{L(n)}}{\varepsilon_n} \biggr) \leq 2 \frac{C_2 + 1}{c_2}, 
				\end{equation}
				where we used $(\lceil D + 1 \rceil +1)^{L(n)} < (\lceil D + 1 \rceil +1)^{2 L(n)}$.

				\item For the second factor in \eqref{eq:factorization_2}, the numerator
    can be upper-bounded according to 
				\begin{align}
					&\log  (N (\varepsilon_n  , \mathcal{R} ( 1, \lceil D + 1 \rceil, L(n), 1), L^2 ( [0,1]  )  )) \label{eq:dom_1}\\
					&\leq \log  (N ( \varepsilon_n , \mathcal{R} ( 1, \lceil D + 1 \rceil, L(n), 1), L^\infty ( [0,1]  )  )) \label{eq:dom_2} \\
					& \leq \log  \biggl(N \biggl(\frac{1}{16}\, n^{-2\slash 3}  , \mathcal{R} ( 1, \lceil D + 1 \rceil, L(n), 1), L^\infty ( [0,1]  )  \biggr)\biggr) \label{eq:dom_3}\\
					& \leq C_5 n^{1/3},  \label{eq:dom_4}
				\end{align}
				where \eqref{eq:dom_2} follows from the fact that coverings with respect to the $L^\infty ( [0,1] )$-norm are also coverings 
        with respect to the $L^2 ( [0,1] )$-norm, \eqref{eq:dom_3} is by $\frac{1}{16}\, n^{-2\slash 3} < \varepsilon_n $, and
        \eqref{eq:dom_4} follows from \eqref{eq:32_4_1}-\eqref{eq:32_4_5}.
        The denominator in the second factor in \eqref{eq:factorization_2}
can be lower-bounded by \eqref{eq:l2covering} according to
				\begin{equation}
				\label{eq:asdfqwaeoi}	
					\log ( N ( 4 \varepsilon_n,\lip ( [0,1] ), L^2 ( [0,1] )  ) ) \geq c_1 ( 4 \varepsilon_n )^{-1} = c_1 n^{1\slash 3}.
				\end{equation}

				Using \eqref{eq:dom_4} and \eqref{eq:asdfqwaeoi}, we finally obtain 
				\begin{equation}
				\label{eq:optimal_reverse}
					\frac{\log  (N (\varepsilon_n  , \mathcal{R} ( 1, \lceil D + 1 \rceil, L(n), 1), L^2 ( [0,1]  )  ))}{\log ( N ( 4 \varepsilon_n,\lip ( [0,1] ), L^2 ( [0,1] )  ) )} \leq \frac{C_5 n^{1/3}}{ c_1 n^{1\slash 3}} \leq \frac{C_5}{c_1}.
				\end{equation}

				\item For the third factor in \eqref{eq:factorization_2}, we have
				\begin{equation}
				\label{eq:scaling_target_family}	
				 	\frac{\log ( N ( 4 \varepsilon_n,\lip ( [0,1] ), L^2 ( [0,1] )  ) )}{\log ( M ( \kappa_n,\lip ( [0,1] ), L^2 ( [0,1] )  ) )} \leq \frac{C_1 ( 4\varepsilon_n  )^{-1}}{c_1 \kappa_n^{-1}} \leq
      \frac{C_1^{4 \slash 3}}{c_1},
				 \end{equation} 
				where the first inequality follows from \eqref{eq:l2covering}
    and in the second inequality we used
    \eqref{eq:kn_scale}. Note that here we exploited the fact that $\varepsilon_{n}$ is of the same order as $\kappa_n$, namely $n^{-1/3}$.
			\end{enumerate}
            Putting (i)-(iii) together, we have shown that \eqref{eq:ratio_optimal} can, indeed, be upper-bounded by a constant not depending on $n$, thereby establishing the information-theoretic optimality of the sequence of neural network estimators $\hat{f}_n$ in Corollary~\ref{cor:estimation_example}.
This was accomplished by exploiting                        
three key properties. The first one is the $\log(1/\varepsilon_{n})$-scaling behavior of the metric entropy of the set of approximants $\mathcal{R} ( 1, \lceil D + 1 \rceil, L(n), 1)$ used to establish \eqref{eq:first_factor_0}-\eqref{eq:first_factor_5}. Specifically, this scaling behavior is crucial in the last step \eqref{eq:first_factor_5}.
The second property is the $\varepsilon^{-1}$-scaling behavior of the metric entropy of the set of regression functions $\lip ( [0,1] )$ used to arrive at  \eqref{eq:scaling_target_family}. Such a scaling behavior is common for unit balls in function spaces, see, e.g., \cite[Table 1]{deep-it-2019}. The third property, leading to \eqref{eq:optimal_reverse}, states that the metric entropy of the set of regression functions $\lip ( [0,1] )$ has to be balanced with that of the set of approximants $\mathcal{R} ( 1, \lceil D + 1 \rceil, L(n), 1)$. 
We note that this balancing property can be relaxed to
\begin{equation}
\label{eq:weak_sufficiently}
                \frac{\log( N(\varepsilon, \mathcal{F}_\varepsilon, L^2 (\mathbb{X})))}{\log( N(4 \varepsilon, \mathcal{G}, L^2 (\mathbb{X})))} \leq  r_\mathcal{G}(\log( N(4 \varepsilon, \mathcal{G}, L^2 (\mathbb{X})))),
            \end{equation} 
where $\mathcal{G}$ is the class of regression functions under consideration, $\mathcal{F}_\varepsilon$ denotes a set of approximants satisfying $\mathcal{A} (\mathcal{G}, \mathcal{F}_\varepsilon, \|\cdot\|) \leq \varepsilon$, and
$r_\mathcal{G}: \mathbb{R}_+ \rightarrow \mathbb{R}_+$ is such that $\limsup_{x \to + \infty} \frac{\log(r_\mathcal{G} (x))}{\log(x)} = 0$. 
In this case the upper bound \eqref{eq:resulting_optimal_upper_bounds} in Corollary~\ref{coro:optimal} would still guarantee optimal sample
complexity rate, but would exhibit an additional logarithmic factor.
An example of such a behavior, albeit in the context of optimal approximation (through neural networks) rather than regression, can be found in  
   \cite{deep-it-2019} where it is referred to as 
      Kolmogorov-Donoho optimal approximation, defined through $\mathcal{A} (\mathcal{G}, \mathcal{F}_\varepsilon, \|\cdot\|_{L^2 (\mathbb{X})}) \leq \varepsilon$ and \eqref{eq:weak_sufficiently} holding concurrently. Note that $\mathcal{A} (\mathcal{G}, \mathcal{F}_\varepsilon, \|\cdot\|_{L^2 (\mathbb{X})}) \leq \varepsilon$ implies $\frac{\log( N(\varepsilon, \mathcal{F}_\varepsilon, L^2 (\mathbb{X})))}{\log( N(4 \varepsilon, \mathcal{G}, L^2 (\mathbb{X})))} \geq 1$.
               Although not explicitly mentioned and formally established, Kolmogorov-Donoho optimal neural network approximation is what leads to optimal sample complexity rates up to a logarithmic factor in \cite{Schmidt-Hieber2017, Chen2019, nakada2020adaptive}.
We remark that Kolmogorov-Donoho optimality can also be achieved through sparse dictionary approximation \cite{Donoho1993,Donoho1996,Grohs2015}. Thanks to its generality, Corollary~\ref{coro:optimal} allows to conclude that taking the set of approximants $\mathcal{F}_{n}$ to be obtained by sparse dictionary approximation, sample complexity rate-optimal regression of $\mathcal{G}$ up to a logarithmic factor is guaranteed whenever $\mathcal{F}_{n}$ achieves Kolmogorov-Donoho optimal approximation of $\mathcal{G}$.

\section{Sparse Networks with Uniformly Bounded Weights}
\label{sec:sparsely_connected_relu_networks} 

Sparse neural networks exhibit a connectivity $s$ that is (typically much) smaller than the total number of weights $L(W^2+W)$ in the network. 
In practical applications sparsity is often enforced with the goal of minimizing the amount of memory needed to store the network.
The approximation-theoretic limits of sparse neural networks have been studied widely in the literature, starting with 
\cite{YAROTSKY2017103, deep-approx-18,PETERSEN2018296} and further considered, both in the context of function approximation and regression, in \cite{deep-it-2019, Schmidt-Hieber2017, Chen2019, nakada2020adaptive, GUHRING2021107}. 
In the spirit of Theorem~\ref{thm:covering_number_upper_bound_fully_connected_bounded_weight}, we next characterize the covering numbers of sparse ReLU networks with uniformly bounded weights.

	\begin{theorem}
		\label{thm:covering_number_sparse}
		Let $p \in [1,\infty]$, $d,W,L,s \in \mathbb{N}$, $B, \varepsilon \in \mathbb{R}_+$, with $s \geq \max \{ W,L \}$ and $B \geq 1$. Then, for
 all $\varepsilon \in (0,1\slash 2)$, we have
		\begin{equation}
		\label{eq:upper_bound_sparse}
			\log ( N(\varepsilon,\mathcal{R}(d,W,L,B,s),L^p ( [0,1]^d ) ) ) \leq C \min \{ s, W^2 L \}\log \biggl( \frac{(W+1)^L B^{L}}{\varepsilon} \biggr),
		\end{equation}
		where $C \in \mathbb{R}_+$ is an absolute constant. Moreover, there exist absolute constants $c,D \in \mathbb{R}_+$ such that, if, in addition, $W,L \geq 60$ and $s \geq D d^2 L$, then, for all $\varepsilon \in (0,1\slash 4)$, it holds that
		\begin{equation}
		\label{eq:lower_bound_sparse}
			\log ( N(\varepsilon,\mathcal{R}(d,W,L,B,s),L^p ( [0,1]^d ) ) ) \geq c \min \{ s, W^2 L \} \log \biggl( \frac{( \widetilde{W} + 1 )^{L} B^L}{\varepsilon} \biggr), 
		\end{equation}
		where $\widetilde{W} := \min \{ \lceil \sqrt{\frac{s}{L} }\,\rceil, W \}$.

		\begin{proof}
			The proof is provided in Appendix~\ref{sec:proof_of_results_in_section_sec:sparsely_connected_relu_networks}. 
		\end{proof}
	\end{theorem}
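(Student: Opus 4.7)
The strategy is to split each direction into two regimes depending on whether the sparsity constraint binds ($s < W^2 L$) or is inactive ($s \geq W^2 L$).

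\textbf{Upper bound.} In the inactive regime, the inclusion $\mathcal{R}(d,W,L,B,s) \subseteq \mathcal{R}(d,W,L,B)$ together with \eqref{eq:upper_bound_fully_connected_bounded_output} yields the bound with the $W^2 L$ factor. In the active regime I would refine the quantization argument of Lemma~\ref{lem:define_covering_fully_connected}: setting $b := \lceil \log(L(W+1)^L B^{L-1}/\varepsilon)\rceil$, quantization of every nonzero weight of $\Phi \in \mathcal{N}(d,W,L,B,s)$ to $[-B,B]\cap 2^{-b}\mathbb{Z}$ yields, via Lemma~\ref{lem:quantization}, an $\varepsilon$-covering in $L^p([0,1]^d)$. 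The cardinality is at most $\binom{T}{s}\,\lvert[-B,B]\cap 2^{-b}\mathbb{Z}\rvert^s$, where $T = O(LW^2)$ counts positions of weights and biases. Taking logarithms, using $\log\binom{T}{s}\leq s\log(eT/s)$, the estimate $\log\lvert[-B,B]\cap 2^{-b}\mathbb{Z}\rvert\leq 3\log((W+1)^L B^L/\varepsilon)$ from \eqref{eq:ccw_1}--\eqref{eq:ccw_6}, and absorbing $\log(eLW^2/s)\lesssim L\log(W+1)\lesssim \log((W+1)^L B^L/\varepsilon)$, produces the $s$-factor bound. The two regimes combine to give $\min\{s, W^2 L\}$.

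\textbf{Lower bound.} In the inactive regime $\widetilde{W}=W$, the inclusion $\mathcal{R}(d,W,L,B)\subseteq\mathcal{R}(d,W,L,B,s)$ is trivial, and \eqref{eq:lower_bound_fully_connected_bounded_output_main} immediately yields the bound. In the active regime I plan to embed a full class of fully-connected networks of reduced width into the sparse class. Specifically, pick an integer $\widetilde{W}'$ of the same order as $\widetilde{W}$ for which a fully-connected architecture $(d,\widetilde{W}',\ldots,\widetilde{W}',1)$ of depth $L$ satisfies the connectivity budget, i.e., $\widetilde{W}'^2(L-1)+\widetilde{W}'(d+L)+1\leq s$. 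The hypothesis $s\geq Dd^2 L$ with $D$ sufficiently large guarantees such a choice exists with $\widetilde{W}'\geq \widetilde{W}/C_0$ and $\widetilde{W}'\geq 60$ (so that Theorem~\ref{thm:covering_number_upper_bound_fully_connected_bounded_weight} applies). Then $\mathcal{R}(d,\widetilde{W}',L,B)\subseteq \mathcal{R}(d,W,L,B,s)$, monotonicity of the covering number gives
\[
 N(\varepsilon,\mathcal{R}(d,W,L,B,s),L^p([0,1]^d))\geq N(\varepsilon,\mathcal{R}(d,\widetilde{W}',L,B),L^p([0,1]^d)),
\]
and \eqref{eq:lower_bound_fully_connected_bounded_output_main} applied to the right-hand side furnishes a lower bound of order $\widetilde{W}'^2 L\log((\widetilde{W}'+1)^L B^L/\varepsilon)\gtrsim s\log((\widetilde{W}+1)^L B^L/\varepsilon)$, since $\widetilde{W}'^2 L\gtrsim s$ and $\widetilde{W}'+1\gtrsim \widetilde{W}+1$.

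The main obstacle is the calibration of $\widetilde{W}'$. A fully-connected network of width $\widetilde{W}'$ uses $d\widetilde{W}'$ first-layer weights, $\widetilde{W}'L$ biases, and $\widetilde{W}'^2(L-1)$ middle weights; fitting all three contributions into $s$ while retaining $\widetilde{W}'^2 L\gtrsim s$ is exactly what the dimension-dependent condition $s\geq D d^2 L$ enables, since it ensures $\sqrt{s/L}\gtrsim d$ so the first-layer weights and biases cannot absorb the sparsity budget before the middle weights do. The conditions $W,L\geq 60$ and the slight restriction $\varepsilon\in(0,1/4)$ (tighter than the $1/2$ in Theorem~\ref{thm:covering_number_upper_bound_fully_connected_bounded_weight}) propagate naturally from applying the fully-connected lower bound to the subclass and from absorbing a constant factor when replacing $\widetilde{W}+1$ by $\widetilde{W}'+1$.
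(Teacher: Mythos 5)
Your strategy is the same as the paper's for both directions: quantize-and-count for the upper bound, embed a fully-connected class of reduced width $\asymp\sqrt{s/L}$ and invoke Theorem~\ref{thm:covering_number_upper_bound_fully_connected_bounded_weight} for the lower bound. Two technical points, however, need repair.

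First, the cardinality bound $\binom{T}{s}\,\lvert[-B,B]\cap 2^{-b}\mathbb{Z}\rvert^s$ with $T=O(LW^2)$ counts only configurations living in the single maximal architecture $(d,W,\dots,W,1)$ of depth $L$, whereas $\mathcal{R}(d,W,L,B,s)$ also contains realizations of configurations with depth $\tilde L<L$ and layer widths below $W$. Zero-padding handles reduced width for free, but embedding depth $\tilde L$ into depth $L$ (as in Lemma~\ref{lem:extension}) spends $\Theta(L)$ additional \emph{nonzero} weights, so as written your covering set does not cover $\mathcal{R}(d,W,L,B,s)$. The paper instead sums over architectures explicitly, contributing the factor $LW^L$ in Lemma~\ref{lem:counting_cardinality_sparse}; alternatively, one can pay the $\Theta(L)\le s$ extra connectivity and cover with $\mathcal{R}_{\mathbb{A}}(d,W,L,\infty,Cs)$. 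Either fix is routine precisely because $s\ge L$, but the omission is a real hole in the step, not merely a bookkeeping slip. (Similarly $\binom{T}{s}$ should be $\sum_{i\le s}\binom{T}{i}$, a trivial adjustment.)

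Second, ``monotonicity of the covering number'' under $\mathcal{R}(d,\widetilde W',L,B)\subseteq\mathcal{R}(d,W,L,B,s)$ is not $N(\varepsilon,\cdot)$-to-$N(\varepsilon,\cdot)$: because coverings must consist of elements of the set being covered, the correct inequality is $N(\varepsilon,\mathcal{X},\delta)\ge N(2\varepsilon,\mathcal{Y},\delta)$ for $\mathcal{Y}\subseteq\mathcal{X}$, i.e.\ \eqref{eq:covering_packing_inclusion_2} in Lemma~\ref{lem:covering_packing_inclusion}. This doubling of $\varepsilon$ is in fact what forces $\varepsilon\in(0,1/4)$ rather than $(0,1/2)$ — you notice the shrinkage but attribute it to absorbing the $\widetilde W$ vs.\ $\widetilde W'$ constant, which is not its source.
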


 We first note that the condition $s \geq \max \{ W,L \}$ comes without loss of generality, for the following reasons. A network with $s < L$ necessarily has one or more layers with weights equal to $0$
 and hence realizes a constant function, which could equivalently be obtained by a single-layer network. Likewise, for $s < W$, there would be nodes that can
 be removed without affecting the network's input-output relation.

 We proceed to discuss the effect of the connectivity parameter $s$ on the covering number bounds \eqref{eq:upper_bound_sparse} and \eqref{eq:lower_bound_sparse}. First, recall that networks in $\mathcal{N} ( d,W,L, B)$ have no more than $L(W^2+W) \leq 2 W^2 L$ weights. Comparing the factors in front of the logarithms in \eqref{eq:upper_bound_sparse} and \eqref{eq:lower_bound_sparse} to those in 
  the corresponding bounds \eqref{eq:upper_bound_fully_connected_bounded_output} and \eqref{eq:lower_bound_fully_connected_bounded_output_main} for the fully-connected case, hence suggests an interpretation of $\min \{ s, W^2 L \}$ as the effective connectivity. 
 An important difference between the bounds for the fully-connected case in Theorem~\ref{thm:covering_number_upper_bound_fully_connected_bounded_weight} and those in Theorem~\ref{thm:covering_number_sparse} is the appearance of the quantity 
 $\widetilde{W} = \min \{ \lceil \sqrt{\frac{s}{L} }\, \rceil, W \}$ inside the logarithm in the lower bound \eqref{eq:lower_bound_sparse}. For $\lceil \sqrt{\frac{s}{L}}\rceil < W$, we will hence have a loss of tightness, albeit only of logarithmic order, between the bounds \eqref{eq:upper_bound_sparse} and \eqref{eq:lower_bound_sparse}. The term $\frac{s}{L}$ can be interpreted as the average connectivity per layer, a quantity also appearing in the VC-dimension lower bound \cite[Equation (2)]{bartlett2019nearly} for ReLU networks. 

 We finally note that the fundamental limits of sparse ReLU networks when used in neural network transformation, function approximation, and optimal regression, can be inferred by following the playbooks in Sections \ref{sub:fundamental_limit_covering_number} and \ref{sub:empirical_risk_minimizaiton}, but with the covering number behavior as quantified by Theorem~\ref{thm:covering_number_sparse}.

\section{\raggedright Fully-connected Networks with Base-$2$ Quantized Weights}
\label{sec:covering_number_and_degeneration_phenomenon_of_relu_networks_with_quantized_weights}

	In this section, we characterize the covering number of ReLU networks with base-$2$ quantized weights, i.e., we consider the set $\mathcal{R}_{\mathbb{Q}^a_b} ( d,W, L)$ with  $\mathbb{Q}^a_b := (-2^{a+1},2^{a+1}) \cap 2^{-b} \mathbb{Z}$, where $a,b \in \mathbb{N}$.  
 The motivation for analyzing this setting stems from the fact that neural networks stored on electronic devices necessarily have their weights encoded into
finite-length bitstrings. 
	For ease of presentation, we simplify notation according to
	\begin{align*}
		\mathcal{N}_b^a (d,W,L) :=&\, \mathcal{N}_{\mathbb{Q}^a_b} ( d,W, L) \\
		=&\, \{  \Phi \in \mathcal{N} ( d ) :\mathcal{W} (  \Phi ) \leq W, \ \mathcal{L} ( \Phi ) \leq L, \coef ( \Phi ) \subseteq \mathbb{Q}_b^a \},\\
		\mathcal{R}_b^a (d,W,L)  :=&\, \{ R(\Phi): \Phi \in \mathcal{N}_b^a (d,W,L) \}.
	\end{align*}
	To the best of our knowledge, there are no results in the literature on covering numbers of ReLU networks with base-2 quantized weights.
 Here, we report covering number lower and upper bounds that are tight.

		\begin{theorem}
			\label{thm:networks_quantized_weights_covering_number_bound}
			Let $p \in [1,\infty]$, $d,W,L,a,b \in \mathbb{N}$. For all $\varepsilon \in ( 0,1\slash 2 )$, it holds that
			\begin{equation}
			\label{eq:networks_quantized_weights_covering_number_upper_bound}
				\log ( N(\varepsilon,\mathcal{R}^a_b ( d,W,L ),{L^p ( [0,1]^d )} ) ) \leq C W^2 L \cdot  \min \biggl\{  (a+b), \log \biggl(\frac{(W+1)^L 2^{aL}}{\varepsilon}\biggr) \biggr\}, 
			\end{equation}
			with $C \in \mathbb{R}_+$ an absolute constant. Moreover, there exist absolute constants $c,D,E \in \mathbb{R}_+$ such that, for $W,L \geq D$ with $L(a+b) \geq E \,\log(W)$, and all $\varepsilon \in ( 0,\frac{1}{100})$,
			\begin{equation}
			\label{eq:44_results_quantized}
				\log (N(\varepsilon,\mathcal{R}^a_b ( d,W,L ),L^p ( [0,1]^d ) )) \geq  
					c   W^2 L\cdot \min \biggl\{(a+b),\log \biggl(\frac{( W+1 )^L 2^{aL}}{\varepsilon}\biggr) \biggr\}. 
			\end{equation}
		\end{theorem}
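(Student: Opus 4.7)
Each of the two bounds in the theorem is expressed as a minimum of two quantities, and I would establish each by combining two complementary arguments: an upper bound is the minimum of two upper bounds, while a lower bound is the maximum of two lower bounds (each individually at least as large as the target minimum).

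\textbf{Upper bound.} For the combinatorial term $CW^2L(a+b)$, the plan is to use the trivial observation $N(\varepsilon, \mathcal{R}^a_b(d,W,L), L^p) \leq |\mathcal{R}^a_b(d,W,L)|$, combined with $|\mathbb{Q}^a_b| = |(-2^{a+1}, 2^{a+1}) \cap 2^{-b}\mathbb{Z}| \leq 2^{a+b+2}$ and Lemma~\ref{lem:counting_cardinality}. For the continuous term, I would invoke the inclusion $\mathcal{R}^a_b(d,W,L) \subseteq \mathcal{R}(d, W, L, 2^{a+1})$ together with Theorem~\ref{thm:covering_number_upper_bound_fully_connected_bounded_weight} at $B = 2^{a+1}$, absorbing the excess $2^L$-factor in $(W+1)^L(2^{a+1})^L$ into the absolute constant via $L \leq L\log(W+1)$, valid for $W \geq 2$.

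\textbf{Lower bound.} By Lemma~\ref{lem:1dpinfinity} it suffices to lower-bound $\log M(2\varepsilon, \mathcal{R}^a_b(1, W, L), L^1)$. I would set $u = \lfloor W/20 \rfloor$ and $v = \lfloor L/60 \rfloor$ as in Section~\ref{sub:covering_number_lower_bound}, and apply the chain Lemma~\ref{prop:piecewise_representation_constructive}~$\to$~Lemma~\ref{prop:depth_weight_magnitude_tradeoff} with $B = 2^{a+1}$ to embed $\Sigma(X_{u^2 v - 1}, \mathcal{E}) \subseteq \mathcal{R}(1, W, L, 2^{a+1})$ with $\mathcal{E}$ of order $2^{aL}/(u^2v)^{10}$. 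The resulting networks have real-valued weights, which are then quantized to $\mathbb{Q}^a_b$ via the operator $Q_b$ from Lemma~\ref{lem:define_covering_fully_connected}; the induced $L^\infty$-error is controlled by Lemma~\ref{lem:quantization}, with the hypothesis $L(a+b) \geq E \log W$ providing enough slack to keep the argument quantitative. Two lower bounds are then extracted. The \emph{log-term branch} places the heights of $\Sigma$ on an arithmetic lattice of spacing $\Theta(\varepsilon \cdot u^2 v)$ and replays the algebra of~\eqref{eq:MuvLower_000}--\eqref{eq:rd_3}, yielding $c W^2 L \log((W+1)^L 2^{aL}/\varepsilon)$ after accounting for the quantization perturbation. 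The \emph{$(a+b)$-branch}, on the other hand, restricts the heights to the dyadic lattice $2^{-b}\mathbb{Z} \cap [0, 2^a]$, which has cardinality $\Omega(2^{a+b})$. Exploiting the observation that two $2^{-b}$-quantized piecewise linear templates whose height-vectors differ in $k$ positions sit at $L^1$-distance of order $k \cdot 2^{-b}/(u^2 v)$, a Gilbert--Varshamov-type code in the Hamming metric on the height lattice produces $2^{\Omega(u^2 v (a+b))}$ codewords with pairwise $L^1$-separation $\geq 2\varepsilon$. The maximum of these two lower bounds matches the required minimum in~\eqref{eq:44_results_quantized}.

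\textbf{Main obstacle.} The $(a+b)$-branch is the delicate piece. It requires the simultaneous calibration of (i) realizing the chosen heights inside $\mathcal{R}^a_b(1, W, L)$ through quantized template networks without collapse, (ii) choosing the dyadic step so that the required Hamming separation translates to $L^1$-separation $\geq 2\varepsilon$, and (iii) extracting $2^{\Omega(u^2 v (a+b))}$ codewords from the height lattice via a coding-theoretic lower bound. The hypothesis $L(a+b) \geq E \log W$ is precisely what reconciles these three requirements across the entire range $\varepsilon \in (0, 1/100)$.
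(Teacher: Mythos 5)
Your upper bound matches the paper's argument essentially verbatim: the $(a+b)$-term comes from the cardinality bound $|\mathbb{Q}^a_b|\leq 2^{2(a+b)}$ together with Lemma~\ref{lem:counting_cardinality}, and the log-term comes from the inclusion $\mathcal{R}^a_b(d,W,L)\subseteq\mathcal{R}(d,W,L,2^{a+1})$ combined with Theorem~\ref{thm:covering_number_upper_bound_fully_connected_bounded_weight}.

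Your lower bound, however, has a genuine gap, and it sits exactly where you flag the ``main obstacle.'' You propose to realize piecewise-linear templates inside $\mathcal{R}(1,W,L,2^{a+1})$ and then pass to $\mathcal{R}^a_b$ by applying the weight quantizer $Q_b$ and controlling the perturbation with Lemma~\ref{lem:quantization}. That perturbation is of size $L(W+1)^L(2^{a+1})^{L-1}2^{-b}$. The theorem's hypothesis $L(a+b)\geq E\log(W)$ permits $b$ to be as small as roughly $E\log(W)/L$; for such moderate $b$ the quantization error is of order $L(W+1)^L 2^{(a+1)(L-1)}W^{-E/L}$, which is exponentially larger than $\varepsilon<1/100$. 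Neither of your two branches can absorb a perturbation of that size: the arithmetic-lattice spacing $\Theta(\varepsilon u^2 v)$ and the dyadic-lattice spacing $2^{-b}$ are both dwarfed by it. The Gilbert--Varshamov step has a further, independent problem: the alphabet size is $\Theta(2^{a+b})$, so to get $2^{\Omega(W^2L(a+b))}$ codewords with Hamming distance large enough that the induced $L^1$-separation is $\gtrsim\varepsilon$ one is forced into the regime $\varepsilon\lesssim 2^{-b}$, which does not cover the full range on which the $(a+b)$-term is the active minimum.

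The paper avoids both difficulties with two tools absent from your sketch. First, it never constructs templates with quantized weights: it observes that $\mathcal{R}^1_b(d,W,L)$ is itself an $L(W+1)^L2^{-b}$-covering of $\mathcal{R}(d,W,L,1)$ (a consequence of Lemma~\ref{lem:define_covering_fully_connected}), so Proposition~\ref{lem:cardinality_approximation_class} transfers the already-established covering-number lower bound for $\mathcal{R}(d,W,L,1)$ to $\mathcal{R}^1_b$ at a slightly larger radius (this is Lemma~\ref{lem:quantized_case_1}, which also handles the log-term branch by shrinking $b$ to a tuned $\widetilde b\leq b$ and using monotonicity in $b$). Second, and crucially, for moderate $b$ this transfer is enabled by the depth--precision tradeoff $\mathcal{R}^{ka}_{kb}(d,W,L)\subseteq\mathcal{R}^a_b(d,16W,(k+2)L)$ (Lemma~\ref{prop:tradeoff}), which trades depth for precision so that the effective $b$ becomes large enough for Lemma~\ref{lem:quantized_case_1} to bite. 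That tradeoff, not the hypothesis $L(a+b)\geq E\log(W)$ by itself, is what closes the moderate-$b$ regime; the hypothesis merely guarantees an admissible integer $k$ exists. The paper also simplifies by first normalizing $a=1$ via the scaling identity $\mathcal{R}^a_b=2^{(a-1)L}\cdot\mathcal{R}^1_{a+b-1}$ of Lemma~\ref{lem:equivalence_covering_number}, though one could in principle carry a general $a$ through.
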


		\begin{proof}
			We start by proving the upper bound. Arbitrarily fix $\varepsilon \in ( 0, 1\slash 2 )$. As $\mathcal{R}^a_b ( d,W,L )$ is an $\varepsilon$-covering of itself, we have $N(\varepsilon,\mathcal{R}^a_b ( d,W,L ),L^p ( [0,1]^d ) ) \leq \nleft|  \mathcal{R}^a_b ( d,W,L )  \nright|  $ and hence
			\begin{align}
				\log ( N(\varepsilon,\mathcal{R}^a_b ( d,W,L ),L^p ( [0,1]^d ) )) \leq&\, \log ( \nleft|  \mathcal{R}^a_b ( d,W,L )  \nright| )  \label{eq:t42_u_10}\\ 
				 \leq&\, \log ( \nleft|  \mathcal{N}^a_b ( d,W,L )  \nright| ) \\
				\leq&\, 5 W^2 L \log (\nleft| \mathbb{Q}_b^a\nright| )  \label{eq:t42_u_11} \\
				<&\,  10 W^2 L ( a+ b ), \label{eq:t42_u_12} 
			\end{align}
			where \eqref{eq:t42_u_11} follows from Lemma~\ref{lem:counting_cardinality} with $\mathbb{A}  = \mathbb{Q}_b^a$, and \eqref{eq:t42_u_12} is by $\nleft| \mathbb{Q}_b^a \nright| = \nleft| (-2^{a+1},2^{a+1}) \cap 2^{-b} \mathbb{Z}  \nright| \leq 2^{a+2} \cdot 2^{b} \leq 2^{2 ( a+b )}$. Moreover, as $\mathcal{R}_b^a ( d,W,L ) \subseteq \mathcal{R} ( d,W,L, 2^{a+1} )$, it holds that 
			\begin{equation}
			\label{eq:61_prerequisite_0}
				\mathcal{A} ( \mathcal{R}_b^a ( d,W,L ),  \mathcal{R} ( d,W,L, 2^{a+1} ), \nleft\| \cdot \nright\|_{L^p ( [0,1]^d )}   ) =0.
			\end{equation}
			Application of Proposition~\ref{lem:cardinality_approximation_class} with $\mathcal{G} = \mathcal{R}^a_b ( d,W,L )$, $\mathcal{F} = \mathcal{R} ( d,W,L, 2^{a+1} )$, $\delta  = \nleft\|\cdot\nright\|_{L^p ( [0,1]^d )}$, $\varepsilon$ replaced by $\varepsilon \slash 4$, and the prerequisite \eqref{eq:improper_cover} satisfied thanks to \eqref{eq:61_prerequisite_0}, now yields 
			\begin{equation}
			\label{eq:quantized_to_bounded}
				N ( \varepsilon, \mathcal{R}^a_b (d ,W,L ), {L^p ( [0,1]^d )})  \leq   N ( \varepsilon \slash 4 , \mathcal{R} ( d,W,L, 2^{a+1} ),{L^p ( [0,1]^d )} ).
			\end{equation}
			The logarithm of the right-hand-side of \eqref{eq:quantized_to_bounded} can be upper-bounded according to 
			\begin{align}
				\log ( N ( \varepsilon \slash 4 , \mathcal{R} ( d,W,L, 2^{a+1} ),{L^p ( [0,1]^d )} ) ) \leq&\,  C_1 W^2 L \log \biggl( \frac{(W+1)^L 2^{(a+1)L}}{\varepsilon \slash 4} \biggr) \label{eq:t42up_1} \\
				< &\, 3 C_1 W^2 L \log \biggl( \frac{(W+1)^L 2^{aL}}{\varepsilon} \biggr), \label{eq:t42up_2}
			\end{align}
			where in \eqref{eq:t42up_1} we applied Theorem~\ref{thm:covering_number_upper_bound_fully_connected_bounded_weight}, \eqref{eq:t42up_2} follows from $\frac{(W+1)^L 2^{(a+1)L}}{\varepsilon \slash 4} < \frac{(W+1)^L 2^{3aL}}{\varepsilon \cdot \varepsilon^2} < ( \frac{(W+1)^{L} 2^{a L}}{\varepsilon} )^3$, and $C_1 \in \mathbb{R}_{+}$ is an absolute constant. Combining \eqref{eq:quantized_to_bounded} and \eqref{eq:t42up_1}-\eqref{eq:t42up_2}, establishes 
			\begin{equation}
			\label{eq:t42_u_2}
				\log ( N(\varepsilon,\mathcal{R}^a_b ( d,W,L ),L^p ( [0,1]^d ) ) ) <  3 C_1 W^2 L \log{\biggl(\frac{(W+1)^L 2^{aL}}{\varepsilon}\biggr)}.
			\end{equation}
			Putting \eqref{eq:t42_u_10}-\eqref{eq:t42_u_12} and \eqref{eq:t42_u_2} together finally yields 
			\begin{align*}
				&\log ( N(\varepsilon,\mathcal{R}^a_b ( d,W,L ),L^p ( [0,1]^d ) ) ) \\
				&< \,  \min \biggl\{10 W^2 L ( a+ b ),  3 C_1 W^2 L \log{\biggl(\frac{(W+1)^L 2^{aL}}{\varepsilon}\biggr)}  \biggr\}\\
				&< \, (10 + 3C_1  ) W^2 L \min \biggl\{ ( a+ b ), \log{\biggl(\frac{(W+1)^L 2^{aL}}{\varepsilon}\biggr)}  \biggr\}.
			\end{align*}
			Upon setting $C = 10 + 3C_1$, this concludes the proof of the upper bound. The proof of the lower bound is lengthy and hence relegated to  Appendix~\ref{sec:proof_of_proposition_prop:covering_number_lower_bound_quantized_weights}.
		\end{proof}	

	 First, we  note that the upper bound \eqref{eq:networks_quantized_weights_covering_number_upper_bound} and the lower bound \eqref{eq:44_results_quantized} are tight up to an absolute multiplicative constant. This allows us to conclude that
   the covering number of ReLU networks with base-2 quantized weights exhibits
   two regimes as a function of $\varepsilon$. Specifically,
 for $\varepsilon \geq \frac{( W+1 )^L 2^{aL}}{2^{a+b}}$, the log-term in the bounds \eqref{eq:networks_quantized_weights_covering_number_upper_bound} and \eqref{eq:44_results_quantized} is active, which renders them structurally identical to the bounds for networks with unquantized weights, as stated in Theorem~\ref{thm:covering_number_upper_bound_fully_connected_bounded_weight}.
 In this regime quantized neural networks, in terms of their covering numbers, hence behave like unquantized networks.
 On the other hand, for $\varepsilon < \frac{( W+1 )^L 2^{aL}}{2^{a+b}}$, the covering number can be sandwiched by quantities
 that are independent of $\varepsilon$ and solely determined by the parameters $W,L,a,b$ according to $ cW^{2}L(a+b) \leq  \log ( N(\varepsilon,\mathcal{R}^a_b ( d,W,L ),L^p ( [0,1]^d ) )) \leq CW^{2}L(a+b)$.
In this regime, the covering ball radius $\varepsilon$ is small enough to reveal the quantized nature of the network weights.
In summary, we have a phase-transition behavior, in terms of $\varepsilon$, between a regime
where $\mathcal{N}^a_b ( d,W,L )$ behaves like networks with weights in $\mathbb{R}$ and a regime where the quantized nature of the weights
limits the approximation capacity of $\mathcal{N}^a_b ( d,W,L )$.

  We finally note that the fundamental limits of ReLU networks with base-2 quantized weights when used in neural network transformation, function approximation, and optimal regression, can be inferred by following the playbooks in Sections \ref{sub:fundamental_limit_covering_number} and \ref{sub:empirical_risk_minimizaiton}, but with the covering number behavior as quantified by Theorem~\ref{thm:networks_quantized_weights_covering_number_bound}.

\section{Fully-connected Networks with Truncated Output}
\label{sec:covering_number_for_relu_networks_with_bounded_output}

	Fully-connected ReLU networks with unconstrained weight magnitude are prevalent in the literature \cite{Kohler2019, yarotsky2019phase, shen2019deep, shen2019nonlinear}. As the covering number of the function class $\mathcal{R}(d, W,L)$ realized by such networks is infinite, their 
  performance limits are typically characterized through the VC-dimension. It turns out, however, that when dealing with bounded functions such as the set $\lip ([0,1])$, it suffices to consider ReLU networks with truncated outputs. This allows to develop a more refined picture, namely 
    by arguing as follows.
First, note that $\mathcal{R} ( 1, W, L, 1) \subseteq \mathcal{R} ( 1, W, L)$ together with Lemma~\ref{lemma:approximation_lip}, yields
	\begin{align}
	 	\mathcal{A}	( \lip  ( [0,1] ), \mathcal{R} ( 1, W, L), \nleft\| \cdot \nright\|_{L^p ( [0,1] )}  ) \leq&\, \mathcal{A}	( \lip  ( [0,1] ), \mathcal{R} ( 1, W, L, 1 ), \nleft\| \cdot \nright\|_{L^p ( [0,1] )}  ) \label{eq:uhasdflk_1}\\
	 	\leq& \,  C  ( W^2 L^2 \log (W)  )^{-1},\label{eq:uhasdflk_2}
	\end{align} 
	for $p \in [1, \infty]$, $W,L \in \mathbb{N}$, with $W,L \geq D$, where
  $C,D \in \mathbb{R}_+$ are the absolute constants specified in Lemma~\ref{lemma:approximation_lip}.

	A corresponding lower bound for $p=\infty$, obtained through arguments based on VC-dimension, is
	\cite[Proposition 2.11]{FirstDraft2022}, \cite[Theorem 2.3]{shen2019deep}
	\begin{equation*}
		\mathcal{A}	( H^1 ( [0,1] ), \mathcal{R} ( 1, W, L )  ,\allowbreak \nleft\| \cdot \nright\|_{L^\infty ( [0,1]}  ) \geq   c \,  ( W^2 L^2 ( \log (W L) ) )^{-1},
	\end{equation*}
	with $c \in \mathbb{R}_+ $ an absolute constant. A lower bound for $p \in [1, \infty)$ 
    follows by application of \cite[Corollary 1]{achour2022general}, which implies that, for $L \geq 1$ and sufficiently large $W \in \mathbb{N}$, 
	\begin{equation*}
		\mathcal{A}	( H^1 ( [0,1] ), \mathcal{R} ( 1, W, L )  ,\allowbreak \nleft\| \cdot \nright\|_{L^{p} ( [0,1]}  ) \geq   c(p)  (W^2L^2 (\log ( W L ) )^3  )^{-1},
	\end{equation*}
	with the constant $c(p) \in \mathbb{R}_+$ depending on $p$. This lower bound is derived by upper-bounding the covering number of ReLU networks with truncated output and, as already noted in \cite{achour2022general}, can be improved in the case $p=2$. The goal of this section is to present this improvement, which will require, inter alia, an upper bound on the $L^2 (P)$-covering number, with $P$ a distribution on $\mathbb{R}^d$, of ReLU networks with unconstrained weight magnitude and truncated output. We hasten to add that the main ideas underlying this improvement have been laid out in \cite{achour2022general}.

We proceed as follows. First, note that 
	\begin{align}
		&\, \mathcal{A} ( \lip ( [0,1] ), \mathcal{R}(1,W,L), \nleft\| \cdot \nright\|_{L^2 ( [0,1] )})\label{eq:why_thresholding_class_0} \\
		&= \, \sup_{f \in \lip ( [0,1] ) } \inf_{g \in \mathcal{R}(1,W,L)} \nleft\| f - g \nright\|_{L^2 ( [0,1] )}  \\
		&\geq \, \sup_{f \in \lip ( [0,1] ) } \inf_{g \in \mathcal{R}(1,W,L)} \nleft\| \Othres_1 \circ f - \Othres_1 \circ g \nright\|_{L^2 ( [0,1] )} \label{eq:why_thresholding_class_01} \\
		&= \, \sup_{f \in \lip ( [0,1] ) } \inf_{g \in \mathcal{R}(1,W,L)} \nleft\| f - \Othres_1 \circ g \nright\|_{L^2 ( [0,1] )}\label{eq:why_thresholding_class} \\
		&= \, \mathcal{A} ( \lip ( [0,1] ), \Othres_1 \circ \mathcal{R}(1,W,L), \nleft\| \cdot \nright\|_{L^2 ( [0,1] )}), \label{eq:why_thresholding_class_1}
	\end{align}
	where \eqref{eq:why_thresholding_class_01} follows from the fact that $\Othres_1$ is $1$-Lipschitz, and in \eqref{eq:why_thresholding_class} we used that $f$ is uniformly bounded by $1$ on $[0,1]$.  Therefore, to lower-bound $\mathcal{A} ( \lip ( [0,1] ), \mathcal{R}(1,W,L), \nleft\| \cdot \nright\|_{L^2 ( [0,1] )})$, it suffices to lower-bound $ \mathcal{A} ( \lip ( [0,1] ), \Othres_1 \circ \mathcal{R}(1,W,L), \nleft\| \cdot \nright\|_{L^2 ( [0,1] )})$, which will be effected through the technique developed to prove the minimax approximation error lower bound \eqref{eq:fundamental_limit_quantization_1} in Corollary~\ref{eq:fundamental_limit_quantization} combined with a new covering number upper bound for $\Othres_1 \circ \mathcal{R}(1,W,L)$.
 We note that all our arguments can be extended, with minor effort, to the approximation of
 function classes that are uniformly bounded by arbitrary constants $E \in \mathbb{R}_{+}$, but we will stick to $\lip ([0,1])$ for brevity of exposition.

		We first present the upper bound on the covering number of $\Othres_1 \circ\mathcal{R}(d,W,L)$ and then show how it can be used to lower-bound $\mathcal{A} (\lip ( [0,1] ), \Othres_1 \circ \mathcal{R}(1,W,L), \nleft\| \cdot \nright\|_{L^2 ( [0,1] )} )$. 

		\begin{theorem}
			\label{thm:lower_bound_infinite_weight}
			Let  $d, W, L \in \mathbb{N}$, with $W, L \geq 2$, and let $P$ be a distribution on $\mathbb{R}^d$. For all $\varepsilon \in (0, 1\slash 2)$, it holds that
			\begin{equation*}
				\log ( N(\varepsilon,\Othres_1 \circ\mathcal{R}(d,W,L), L^2 ( P )) ) \leq C W^2L^2 \log ( W  L ) \log ( \varepsilon^{-1} ),
			\end{equation*}
			with $C \in \mathbb{R}_+$ an absolute constant.
		\end{theorem}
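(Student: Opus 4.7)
My strategy is to combine the pseudo-dimension bound for ReLU networks of \cite{bartlett2019nearly} with the classical Haussler--Pollard conversion from pseudo-dimension to $L^2(P)$-covering numbers. The reason one cannot simply quantize weights as in the proof of Theorem~\ref{thm:covering_number_upper_bound_fully_connected_bounded_weight} is that $\mathcal{R}(d,W,L)$ has unbounded weights and hence infinite $L^p$-covering numbers; however, after truncation by $\Othres_1$ the class becomes uniformly bounded in $[-1,1]$, and any uniformly bounded class with finite pseudo-dimension has finite $L^2(P)$-covering numbers for every distribution $P$.

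First, I would establish the comparison $\mathrm{Pdim}(\Othres_1 \circ \mathcal{R}(d,W,L)) \leq \mathrm{Pdim}(\mathcal{R}(d,W,L))$. Indeed, if a configuration $(x_i, r_i)_{i = 1}^n$ is pseudo-shattered by $\Othres_1 \circ \mathcal{R}(d,W,L)$, then necessarily $r_i \in (-1,1)$ for all $i$ (otherwise some sign is fixed across all $f$), and on this range one verifies directly that $\mathrm{sgn}(\Othres_1(f(x_i)) - r_i) = \mathrm{sgn}(f(x_i) - r_i)$ for every $f \in \mathcal{R}(d,W,L)$, so the same configuration is pseudo-shattered by $\mathcal{R}(d,W,L)$ itself. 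Next, I would invoke \cite[Theorem 6]{bartlett2019nearly}: a standard threshold augmentation (appending a single parameter $t$ and considering $f(x) - t$) translates the pseudo-dimension of $\mathcal{R}(d,W,L)$ into the VC-dimension of a ReLU classifier class, yielding
\begin{equation*}
\mathrm{Pdim}(\mathcal{R}(d,W,L)) \leq C_1\, W^2 L^2 \log(WL)
\end{equation*}
for an absolute constant $C_1 \in \mathbb{R}_+$, since configurations in $\mathcal{N}(d,W,L)$ have at most $2 W^2 L$ parameters.

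Finally, I would apply the standard pseudo-dimension to $L^2(P)$-covering number conversion: for any class $\mathcal{F}$ of $[-1,1]$-valued functions with pseudo-dimension $d_P$ and any distribution $P$, Haussler's theorem in the form of \cite[Theorem 12.2]{Anthony1999}, applied to the rescaled class $(\mathcal{F}+1)/2$, combined with the elementary bound $\|f-g\|_{L^2(P)}^2 \leq 4\,\|f-g\|_{L^1(P)}$ valid for $[-1,1]$-valued $f,g$, gives
\begin{equation*}
\log N(\varepsilon, \mathcal{F}, L^2(P)) \leq C_2 \, d_P \log(1/\varepsilon),
\end{equation*}
for an absolute constant $C_2 \in \mathbb{R}_+$ and $\varepsilon \in (0, 1/2)$. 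Substituting the pseudo-dimension bound yields the claim with $C = C_1 C_2$.

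\textbf{Main obstacle.} The only non-routine step is the pseudo-dimension bound: importing \cite{bartlett2019nearly}, which is stated as a VC-dimension bound for classifier networks with piecewise linear activations, requires the threshold augmentation together with careful bookkeeping of the number of effective parameters and layers to ensure that the resulting bound on $\mathrm{Pdim}(\mathcal{R}(d,W,L))$ retains the right scaling $W^2 L^2 \log(WL)$ and does not pick up stray factors in $d$. Once the pseudo-dimension bound is in hand, the Haussler--Pollard conversion is a textbook application.
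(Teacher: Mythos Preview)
Your proposal is correct and follows the same overall architecture as the paper: bound the pseudo-dimension of the truncated class via \cite{bartlett2019nearly}, then convert to an $L^2(P)$-covering number bound. The two routes differ in two places. First, to pass from $\mathcal{R}(d,W,L)$ to $\Othres_1\circ\mathcal{R}(d,W,L)$, you argue directly that truncation cannot increase pseudo-dimension (via the sign-preservation observation for thresholds in $(-1,1)$), whereas the paper realizes $\Othres_1$ as a two-layer ReLU network and uses the inclusion $\Othres_1\circ\mathcal{R}(d,W,L)\subseteq\mathcal{R}(d,W,L+2)$, bounding the pseudo-dimension of the enlarged class. Second, for the dimension-to-covering conversion, you invoke Haussler's $L^1(P)$ bound \cite[Theorem~12.2]{Anthony1999} and pass to $L^2(P)$ via $\|f-g\|_{L^2(P)}^2\le 2\|f-g\|_{L^1(P)}$ for $[-1,1]$-valued functions, while the paper goes through fat-shattering dimension (using $\mathrm{fat}\le\mathrm{Pdim}$) and applies the Mendelson--Vershynin theorem \cite{Mendelson2003}, which gives the $L^2(P)$ bound directly. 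Your route is slightly more elementary; the paper's avoids the $L^1$-to-$L^2$ detour. One bookkeeping point you flag but should make explicit: \cite{bartlett2019nearly} is stated for a fixed architecture, so to apply it to $\mathcal{R}(d,W,L)$ (which aggregates architectures of depth $\le L$ and width $\le W$) you need an embedding into a single fixed-architecture class, exactly as the paper does via $\mathcal{R}^*(d,2W,L)$.
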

        A similar upper bound for general $p \in [1,\infty)$ was established implicitly in the proof of \cite[Theorem 1]{achour2022general}, namely
		\begin{equation}
			\label{eq:upper_old_bound_literature}
			\log ( N(\varepsilon,\Othres_1 \circ\mathcal{R}(d,W,L), L^p ( P )) ) \leq C(p)\, W^2L^2 (\log ( W  L ))^2 \log ( \varepsilon^{-1} ),
		\end{equation}
		with the constant $C(p)$ depending on $p$. While the bound \eqref{eq:upper_old_bound_literature} applies to $p \in [1,\infty)$, when particularized to $p=2$, it is weaker than that in Theorem~\ref{thm:lower_bound_infinite_weight}.
		
		The proof of Theorem~\ref{thm:lower_bound_infinite_weight} is based on a relation in \cite{Mendelson2003} between the covering number with respect to the $L^2(P)$-norm, for arbitrary distributions $P$, and the fat-shattering dimension of uniformly-bounded function classes combined with bounds on the fat-shattering dimension of ReLU networks \cite{bartlett2019nearly}. We first prepare the technical ingredients of the proof and start by recalling the definition of fat-shattering dimension. 
		\begin{definition}
              \cite{Mendelson2003}
			Let $\mathcal{X}$ be a set, $\mathcal{F}$ a class of functions from $\mathcal{X}$ to $\mathbb{R}$, and $\gamma \in \mathbb{R}_+$. The fat-shattering dimension of $\mathcal{F}$, written as $\text{fat} \, ( \mathcal{F}, \gamma )$, is the largest  $m \in \mathbb{N}$ for which there exists $( x_1,\dots, x_m, y_1,\dots, y_m ) \in \mathcal{X}^m \times \mathbb{R}^m$ such that for every $( b_1,\dots, b_m ) \in \{ 0,1 \}^m$, there is an $f \in \mathcal{F}$ so  that,      for all $i \in \{ 1,\dots, m \}$,
                {
                \begin{equation}
			\label{eq:require_fat_shattering_dimension}
				f(x_i) \left\{ \begin{aligned}
					&\geq  y_i + \gamma ,  \quad &&\text{if } b_i = 1,\\
					&\leq y_i,  \quad &&\text{if } b_i = 0.  
				\end{aligned}
                    \right.
			\end{equation}
                }
		\end{definition}

		Next, we upper-bound the fat-shattering dimension of $\Othres_1 \circ\mathcal{R}(d,W,L)$.
		\begin{lemma}
		\label{lemma:vc_dimension_upper_bounds}
			For $d, W,L \in \mathbb{N}$, with $W,L \geq 2$, it holds that
			\begin{equation}
				\label{eq:fat_dimension_upper_bounds}
				\text{fat}\,(\Othres_1 \circ\mathcal{R}(d,W,L), \gamma  ) \leq C_h W^2L^2 (\log ( W L )), \quad \text{for all } \gamma \in \mathbb{R}_+,
			\end{equation}
			with $C_h \in \mathbb{R}_+$ an absolute constant.

			\begin{proof}
				The result is essentially an implication of \cite[Eq.~(2)]{bartlett2019nearly}, with minor additional observations. We provide the details in Appendix~\ref{sub:proof_of_lemma_lemma:vc_dimension_upper_bounds}.
			\end{proof}
		\end{lemma}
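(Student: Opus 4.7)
The plan is to reduce the fat-shattering dimension bound for $\Othres_1 \circ \mathcal{R}(d,W,L)$ to the standard pseudo-dimension bound for unconstrained ReLU networks from \cite{bartlett2019nearly}, via two short observations; this is precisely what the hint about a ``minor additional observation'' refers to.

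First, I would strip the outer truncation $\Othres_1$. Since every function in $\Othres_1 \circ \mathcal{R}(d,W,L)$ takes values in $[-1,1]$, any witnesses $y_i$ that are realizable in \eqref{eq:require_fat_shattering_dimension} for both choices of $b_i$ must lie in $[-1, 1-\gamma]$. For such $y_i$ and any $\gamma > 0$, a short case analysis on the value of $f(x_i) \in \mathbb{R}$ shows: if $\Othres_1(f(x_i)) \geq y_i + \gamma$ then necessarily $f(x_i) \geq y_i + \gamma$ (the case $f(x_i) < -1$ is excluded because $-1 < y_i + \gamma$), and symmetrically if $\Othres_1(f(x_i)) \leq y_i$ then $f(x_i) \leq y_i$ (the case $f(x_i) > 1$ is excluded because $y_i \leq 1-\gamma < 1$). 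Consequently, any $(x_i,y_i)_{i=1}^m$ that is $\gamma$-shattered by $\Othres_1 \circ \mathcal{R}(d,W,L)$ is already $\gamma$-shattered by $\mathcal{R}(d,W,L)$ using the same $f$'s, which yields
$$\text{fat}\,(\Othres_1 \circ \mathcal{R}(d,W,L),\gamma) \leq \text{fat}\,(\mathcal{R}(d,W,L),\gamma).$$

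Second, I would bound the right-hand side above by the pseudo-dimension of $\mathcal{R}(d,W,L)$, since the fat-shattering dimension is always dominated by the pseudo-dimension (equivalently, by the VC-dimension of the subgraph class), independently of $\gamma$. The main ingredient is then the Bartlett--Harvey--Liaw--Mehrabian bound \cite[Eq.~(2)]{bartlett2019nearly}, which states that ReLU networks with $U$ parameters and depth $L$ have pseudo-dimension $O(UL\log U)$. Since $\mathcal{R}(d,W,L)$ has $U \leq L(W^2+W) = O(W^2 L)$ weights and biases, this specializes to $O(W^2 L^2 \log(WL))$, matching \eqref{eq:fat_dimension_upper_bounds}; the absolute constant $C_h$ is inherited from that of \cite[Eq.~(2)]{bartlett2019nearly}.

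The only step that requires any care is the case analysis in the first observation, i.e.\ verifying that the outer truncation $\Othres_1$ can be stripped at no cost in fat-shattering dimension once the witnesses have been restricted to $[-1,1-\gamma]$. Everything else is a direct quotation of \cite{bartlett2019nearly} combined with the textbook inequality $\text{fat}\,(\,\cdot\,,\gamma) \leq \text{Pdim}\,(\,\cdot\,)$.
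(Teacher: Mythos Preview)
Your proposal is correct and the truncation-stripping argument is sound. The case analysis works with the paper's one-sided fat-shattering definition: once the witnesses are forced into $[-1,1-\gamma]$, neither of the two saturation regimes of $\Othres_1$ can interfere.

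The paper takes a different route for the truncation. Rather than arguing that $\Othres_1$ cannot increase the fat-shattering dimension, the paper realizes $\Othres_1$ as a two-layer ReLU network via $\Othres_1(x) = -1 + \rho(x+1) - \rho(x-1)$, obtains the inclusion $\Othres_1 \circ \mathcal{R}(d,W,L) \subseteq \mathcal{R}(d,W,L+2)$, and then bounds $\text{Pdim}(\mathcal{R}(d,W,L+2))$. Your direct argument is more elementary and avoids the detour through $L+2$ layers (hence a marginally better constant); the paper's approach has the virtue of being entirely architectural and not requiring any case analysis on the shattering witnesses.

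One technical point you gloss over that the paper makes explicit: \cite[Eq.~(2)]{bartlett2019nearly} is stated for networks of a \emph{fixed} architecture, whereas $\mathcal{R}(d,W,L)$ is a union over all architectures with width at most $W$ and depth at most $L$. The pseudo-dimension of a union is not in general bounded by the maximum over the pieces, so you cannot apply the bound architecture-by-architecture. The paper resolves this by embedding $\mathcal{R}(d,W,L) \subseteq \mathcal{R}^*(d,2W,L)$, where $\mathcal{R}^*$ is the class with the single fixed architecture $(d,2W,\dots,2W,1)$, and only then invoking \cite{bartlett2019nearly}. You should add this step; it costs only a constant factor.
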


		The  Mendelson-Vershynin upper bound \cite{Mendelson2003} on the covering number in terms of fat-shattering dimension is as follows.
		\begin{theorem}
		\cite[Theorem 1]{Mendelson2003}
		\label{thm:covering_number_}
			Let $\mathcal{F}$ be a class of functions defined on a set $\mathcal{X}$ with $\sup_{x \in \mathcal{X}, f \in \mathcal{F}} \nleft| f(x) \nright| \leq 1$. Then, for every distribution $P$ on $\mathcal{X}$, and all $\varepsilon \in (0,1)$,
			\begin{equation*}
				N ( \varepsilon, \mathcal{F}, L^2 ( P ) ) \leq \biggl(  \frac{2}{\varepsilon}\biggr)^{K\cdot \text{fat} ( \mathcal{F}, c\varepsilon )},
			\end{equation*}
			where $K, c \in \mathbb{R}_+$ are absolute constants.
		\end{theorem}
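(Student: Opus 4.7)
The plan is to follow the standard empirical-process route for real-valued Sauer--Shelah-type statements, combining a sample-based discretization argument with a combinatorial growth-function bound expressed in terms of the fat-shattering dimension.

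Since a maximal $\varepsilon$-packing of $\mathcal{F}$ is automatically an $\varepsilon$-covering, it suffices to upper-bound the cardinality $N$ of any $\varepsilon$-packing $\{f_1,\dots,f_N\}$ of $\mathcal{F}$ with respect to $\|\cdot\|_{L^2(P)}$. I would first draw $x_1,\dots,x_m$ i.i.d.\ from $P$, with $m$ to be chosen polynomially in $1/\varepsilon$ and $\operatorname{fat}(\mathcal{F},c\varepsilon)$. Because each $f_i$ is bounded by $1$, the variables $(f_i-f_j)^2(x_k)$ take values in $[0,4]$, so Bernstein's inequality combined with a union bound over the $\binom{N}{2}$ pairs yields that, with positive probability, the empirical $L^2$-distances $\|f_i-f_j\|_{L^2(P_m)}$ remain at least $\varepsilon/2$ for every pair. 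Fix such a realization of the sample.

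Next, I would replace each $f_i$ on $\{x_1,\dots,x_m\}$ by its pointwise rounding $v_i$ to a grid of spacing $c\varepsilon/4$ in $[-1,1]$, producing an integer vector with at most $O(1/\varepsilon)$ possible entries per coordinate. The rounding perturbs empirical $\ell^2$-distances by at most $c\varepsilon/4$, so the $v_i$ remain $(\varepsilon/4)$-separated in the empirical metric and in particular pairwise distinct. The problem thereby reduces to counting the distinct $\varepsilon$-separated rounded patterns that $\mathcal{F}$ can induce on $m$ points.

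The key ingredient is then a real-valued Sauer--Shelah lemma in the spirit of Alon--Ben-David--Cesa-Bianchi--Haussler: for a class $\mathcal{F}$ of $[-1,1]$-valued functions, the number of $\gamma$-separated rounded patterns realizable on $m$ points is at most $(2m/\gamma)^{K'\operatorname{fat}(\mathcal{F},c'\gamma)}$ for absolute constants $K',c'$. Applying this with $\gamma\asymp\varepsilon$ gives $N\le(2m/\varepsilon)^{K'\operatorname{fat}(\mathcal{F},c'\varepsilon)}$; choosing $m$ polynomial in $1/\varepsilon$ makes the base comparable to $(2/\varepsilon)^{O(1)}$, yielding the asserted bound $(2/\varepsilon)^{K\operatorname{fat}(\mathcal{F},c\varepsilon)}$.

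The main obstacle is the real-valued Sauer--Shelah lemma itself, where obtaining the exponent $\operatorname{fat}$ (rather than $\operatorname{fat}\cdot\log m$, which earlier results of Alon et al.\ achieve) is precisely what makes the final bound sharp. It is proved by an intricate induction on the pattern count: if the class realizes many $\gamma$-separated patterns on $m$ points, a probabilistic sign-selection argument extracts a subset of coordinates on which the class $(\gamma/2)$-shatters, and iterating forces $\operatorname{fat}(\mathcal{F},\gamma/2)$ to be large. Getting the polynomial base $m/\gamma$ right, and with an absolute constant in front of $\operatorname{fat}$, is the technically demanding ingredient of the Mendelson--Vershynin argument.
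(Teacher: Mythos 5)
Note first that the paper does not prove Theorem~\ref{thm:covering_number_}; it imports it verbatim as \cite[Theorem 1]{Mendelson2003} and uses it as a black box in the proof of Theorem~\ref{thm:lower_bound_infinite_weight}. So there is no in-paper proof for your sketch to match, and the only meaningful comparison is with Mendelson and Vershynin's own argument.

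Your outline reproduces the expected high-level structure (pass to a packing, transfer separation to an empirical metric on a random sample, discretize, invoke a fat-shattering Sauer--Shelah bound), and you correctly identify the Sauer--Shelah step as the crux. Two things, however, keep this from being a proof. First, the key lemma you posit --- at most $(2m/\gamma)^{K'\operatorname{fat}(\mathcal{F},c'\gamma)}$ $\gamma$-separated rounded patterns on $m$ points --- is not an available auxiliary result; eliminating the extra $\log m$ factor present in Alon--Ben-David--Cesa-Bianchi--Haussler is essentially the entire content of \cite{Mendelson2003}, and your one-sentence description of a sign-selection induction names the idea but supplies no argument. Second, the sampling step does not close even if that lemma is granted: Bernstein plus a union bound over $\binom{N}{2}$ pairs forces $m\gtrsim\varepsilon^{-2}\log N$, where $N$ is the quantity being bounded; feeding back the target $\log N\approx\operatorname{fat}\cdot\log(1/\varepsilon)$ makes $m$, and hence the base $m/\varepsilon$, depend on $\operatorname{fat}$, so after the self-improvement the exponent carries a spurious $\operatorname{fat}\cdot\log(\operatorname{fat})$ term that can strictly exceed $\operatorname{fat}\cdot\log(1/\varepsilon)$ when $\operatorname{fat}$ is large and $\varepsilon$ is moderate. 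Mendelson and Vershynin avoid this by bounding the empirical covering of the whole class directly rather than controlling pairwise distances of a pre-selected packing; as written, your assertion that ``$m$ polynomial in $1/\varepsilon$'' suffices is not justified.
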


			In \cite{achour2022general}, a result similar to Theorem~\ref{thm:covering_number_}, albeit slightly weaker, but applicable to general $p \in [1,\infty$], namely \cite[Corollary 3.12]{mendelson2002rademacher}, was applied instead.

		We are now ready to prove Theorem~\ref{thm:lower_bound_infinite_weight}.

		\begin{proof}
		[Proof of Theorem~\ref{thm:lower_bound_infinite_weight}]
			Application of Theorem~\ref{thm:covering_number_} with $\mathcal{F} = \Othres_1 \circ\mathcal{R}(d,W,L)$, yields 
			\begin{equation}
			\label{eq:75_1}
				N ( \varepsilon, \Othres_1 \circ\mathcal{R}(d,W,L), L^2 ( P ) ) \leq \biggl(  \frac{2}{\varepsilon}\biggr)^{K\cdot \text{fat} ( \Othres_1 \circ \mathcal{R}(d,W,L),  c\varepsilon )}. 
			\end{equation}
			Taking logarithms in \eqref{eq:75_1} and applying Lemma~\ref{lemma:vc_dimension_upper_bounds} with  $\gamma = c \varepsilon$, results in 
			\begin{align*}
				\log ( N ( \varepsilon, \Othres_1 \circ\mathcal{R}(d,W,L), L^2 ( P ) ) )  \leq &\, K \cdot C_h W^2L^2 (\log ( W  L )) \log ( 2 \varepsilon^{-1}).
			\end{align*}
			The proof is concluded upon noting that $\log ( 2 \varepsilon^{-1} ) \leq \log ( \varepsilon^{-2} ) = 2 \log (\varepsilon^{-1} )$, for $\varepsilon \in (0,1\slash 2)$, and letting $C := 2 K \cdot C_h$. 
		\end{proof}

		We are now ready to put Theorem~\ref{thm:lower_bound_infinite_weight} to work in 
  deriving the sought lower bound on $\mathcal{A} (\lip ( [0,1] ), \mathcal{R}(1,W,L), \nleft\| \cdot \nright\|_{L^2 ( [0,1] )} )$.

		\begin{corollary}
            \label{eq:limit_unconstraint_wights_H1}
			For $W, L \in \mathbb{N}$, with $W, L \geq 2$, it holds that
			\begin{equation}
			\label{eq:error_lower_bound_infinity_weights}
				\mathcal{A} (\lip ( [0,1] ), \mathcal{R}(1,W,L), \nleft\| \cdot \nright\|_{L^2 ( [0,1] )} ) \geq  \min \biggl\{ \frac{1}{8}, C (W^2L^2 (\log ( W L ) )^2  )^{-1} \biggr\},
			\end{equation}
			with $C \in \mathbb{R}_+$  an absolute constant.
			\begin{proof}
				Set 
				\begin{equation}
				\label{eq:define_epsilon_coro76}
					\kappa := \mathcal{A} (\lip ( [0,1] ), \mathcal{R}(1,W,L), \nleft\| \cdot \nright\|_{L^2 ( [0,1] )} ).
				\end{equation}
				For $\kappa \geq \frac{1}{8}$, \eqref{eq:error_lower_bound_infinity_weights} holds trivially.
    For $\kappa < \frac{1}{8}$, 
				we first note that putting \eqref{eq:define_epsilon_coro76} together with \eqref{eq:why_thresholding_class_0}-\eqref{eq:why_thresholding_class_1}, yields 
				\begin{equation}
				\label{eq:prerequisite_76}
					\mathcal{A} (\lip ( [0,1] ), \Othres_1 \circ \mathcal{R}(1,W,L), \nleft\| \cdot \nright\|_{L^2 ( [0,1] )} ) \leq \kappa.
				\end{equation}
				It then follows from Proposition~\ref{lem:cardinality_approximation_class} with $\varepsilon = \kappa$, $\mathcal{G} = \lip ( [0,1] )$, $\mathcal{F} = \Othres_1 \circ \mathcal{R}(1,W,L)$, $\delta = \nleft\| \cdot \nright\|_{L^2 ( [0,1] )} $, and the prerequisite \eqref{eq:improper_cover} satisfied thanks to \eqref{eq:prerequisite_76},   that 
				\begin{equation}
				\label{eq:76_ud}
					N ( \kappa, \Othres_1 \circ \mathcal{R}(1,W,L), L^2 ( [0,1] ) ) \geq N ( 4 \kappa, \lip ( [0,1] ), L^2 ( [0,1] ) ).
				\end{equation}
				Next, we upper-bound the left-hand-side and lower-bound the right-hand-side of \eqref{eq:76_ud}. For the former, we note that Theorem~\ref{thm:lower_bound_infinite_weight} with $d = 1$ and $P$ the uniform distribution on $[0,1]$, yields
				\begin{equation}
				\label{eq:76_uu}
					\log ( N(\kappa,\Othres_1 \circ\mathcal{R}(1,W,L), L^2 ( [0,1] ) ) ) \leq C_1 W^2L^2 \log ( W  L ) \log ( \kappa^{-1} ),
				\end{equation}
				with $C \in \mathbb{R}_+$ an absolute constant.
    The lower bound on $N ( 4 \kappa, \lip ( [0,1] ), L^2 ( [0,1] ) )$ is obtained from Lemma~\ref{lem:packing_lower_bound_lipschitz} with $\varepsilon = 4 \kappa$ as
				\begin{equation}
				\label{eq:76_dd}
					\log ( N ( 4\kappa, \lip  ( [0,1] ), L^p ( [0,1])  ) ) \geq  C_2 ( 4\kappa )^{-1},
				\end{equation}
				with $C_{2} \in \mathbb{R}_+$ an absolute constant.
    Combining \eqref{eq:76_uu} and \eqref{eq:76_dd} with \eqref{eq:76_ud}, yields 
				\begin{equation*}
					C_1 W^2L^2 \log ( W  L ) \log ( \kappa^{-1} ) \geq  C_2 ( 4\kappa )^{-1},
				\end{equation*}
				which implies
				\begin{equation}
				\label{eq:condition_last_chapter}
					\frac{\kappa^{-1}}{\log ( \kappa^{-1} )} \leq  C_3  W^2L^2 \log ( WL ),
				\end{equation}
				with $C_3 := \max \{ \frac{4C_1}{C_2}, 4 \} \in \mathbb{R}_+$ an absolute constant. 
				We further upper-bound the right-hand-side of \eqref{eq:condition_last_chapter} according to
				\begin{align}
					C_3  W^2L^2 \log ( WL )  =&\, \frac{8 \log(C_3) C_3 W^2L^2 (\log ( WL ))^2}{\log( ( WL )^{8 \log(C_3)} )} \label{eq:prod_reduce_0}\\
					\leq&\, \frac{8 \log(C_3) C_3 W^2L^2 (\log ( WL ))^2}{\log(8 \log(C_3) C_3 W^2L^2 (\log ( WL ))^2)}, \label{eq:prod_reduce}
				\end{align}
				where in \eqref{eq:prod_reduce} we used 
				\begin{align*}
					( WL )^{8 \log(C_3)} \geq&\, ( WL )^{4 \log(C_3)} \cdot  ( WL )^{4}  \\
					\geq&\,  ( C_3  )^4 \cdot W^2L^2 ( \log(WL) )^2 \\
					\geq&\, 8\log(C_3) C_3 W^2L^2 ( \log(WL) )^2.
				\end{align*}
				Next, define $f: \mathbb{R}_+ \rightarrow \mathbb{R}$ according to $f(x) = \frac{x}{\log(x)}$. Then, \eqref{eq:condition_last_chapter}-\eqref{eq:prod_reduce} can be written as
				\begin{equation}
				\label{eq:f_and_epsilon}
					f(\kappa^{-1}) \leq f(8 \log(C_3) C_3 W^2L^2 (\log ( WL ))^2).
				\end{equation}
				We note that $\kappa^{-1} > ( \frac{1}{8} )^{-1} > e$ and $ 8 \log(C_3) C_3 W^2L^2 (\log ( WL ))^2 > e$, and  the function $f$ is strictly increasing on $(e, \infty)$ as $f'(x) = \ln (2)  \frac{\ln(x) - 1}{( \ln(x) )^2} > 0$, for $x \in (e,\infty)$.  It hence follows from \eqref{eq:f_and_epsilon} that $\kappa^{-1} \leq 8 \log(C_3) C_3 W^2L^2 (\log ( WL ))^2$, which is 
				\begin{equation*}
					\kappa = \mathcal{A} (\lip ( [0,1] ), \mathcal{R}(1,W,L), \nleft\| \cdot \nright\|_{L^2 ( [0,1] )} ) \geq ( 8 \log(C_3) C_3 )^{-1} ( W^2L^2 (\log ( WL ))^2 )^{-1}.
				\end{equation*}
				The proof is concluded upon taking $C = ( 8 \log(C_3) C_3 )^{-1}$. 
			\end{proof}
		\end{corollary}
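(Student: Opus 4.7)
My plan is to mirror the strategy used in the proof of Corollary~\ref{cor:fundamental_limit_approximation}, replacing the covering number bound of Theorem~\ref{thm:covering_number_upper_bound_fully_connected_bounded_weight} with Theorem~\ref{thm:lower_bound_infinite_weight}, which applies to networks with unconstrained weight magnitude but truncated output. I would start by setting $\kappa := \mathcal{A}(\lip([0,1]), \mathcal{R}(1,W,L), \nleft\|\cdot\nright\|_{L^2([0,1])})$. For $\kappa \geq 1/8$ the claim \eqref{eq:error_lower_bound_infinity_weights} is immediate, so the interesting regime is $\kappa < 1/8$. In this regime, the chain \eqref{eq:why_thresholding_class_0}--\eqref{eq:why_thresholding_class_1} transfers the approximation error upper bound to the truncated class, giving $\mathcal{A}(\lip([0,1]), \Othres_1 \circ \mathcal{R}(1,W,L), \nleft\|\cdot\nright\|_{L^2([0,1])}) \leq \kappa$, which lets me invoke Proposition~\ref{lem:cardinality_approximation_class} to obtain
\begin{equation*}
N(\kappa, \Othres_1 \circ \mathcal{R}(1,W,L), L^2([0,1])) \geq N(4\kappa, \lip([0,1]), L^2([0,1])).
\end{equation*}

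Next I would bound each side using the two available tools: Theorem~\ref{thm:lower_bound_infinite_weight} applied with $d=1$ and $P$ the uniform distribution on $[0,1]$ yields an upper bound of the form $C_1 W^2 L^2 \log(WL)\log(\kappa^{-1})$ on the left-hand side, while Lemma~\ref{lem:packing_lower_bound_lipschitz} with $\varepsilon = 4\kappa$ supplies a lower bound $C_2 (4\kappa)^{-1}$ on the right-hand side. Combining these gives the scalar inequality
\begin{equation*}
\frac{\kappa^{-1}}{\log(\kappa^{-1})} \leq C_3\, W^2 L^2 \log(WL)
\end{equation*}
for a suitable absolute constant $C_3 \in \mathbb{R}_+$.

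The main obstacle, and the only nontrivial step, is inverting this relation to conclude $\kappa^{-1} = O(W^2L^2(\log(WL))^2)$, since the left-hand side couples $\kappa^{-1}$ with $\log(\kappa^{-1})$. My plan is to introduce $f(x) := x/\log(x)$, which is strictly increasing on $(e,\infty)$, and to guess a candidate upper bound of the form $y := c\, W^2 L^2 (\log(WL))^2$ for $\kappa^{-1}$. Plugging $y$ into $f$ gives $f(y) = y/\log(y)$, and the task reduces to verifying $f(y) \geq C_3 W^2 L^2 \log(WL)$, i.e., $\log(y) \leq (c/C_3)\log(WL)$. A brute bound of the form $(WL)^{\alpha} \geq y$ for some $\alpha$ depending only on $C_3$ accomplishes this by absorbing the extra factors $W^2 L^2(\log(WL))^2$ into a polynomial in $WL$; $c$ can then be chosen explicitly (for instance, $c = 8\log(C_3)C_3$ as in the proof of Corollary~\ref{cor:fundamental_limit_approximation}). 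Monotonicity of $f$ together with $\kappa^{-1} > e$ then gives $\kappa^{-1} \leq y$, and taking $C := c^{-1}$ completes the proof. Beyond this algebraic manipulation the argument is routine, and the whole plan is essentially a reprise of the Corollary~\ref{cor:fundamental_limit_approximation} technique with the new covering number input.
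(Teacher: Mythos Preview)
Your proposal is correct and follows essentially the same approach as the paper's proof: the same reduction to the truncated class via \eqref{eq:why_thresholding_class_0}--\eqref{eq:why_thresholding_class_1}, the same application of Proposition~\ref{lem:cardinality_approximation_class}, Theorem~\ref{thm:lower_bound_infinite_weight}, and Lemma~\ref{lem:packing_lower_bound_lipschitz}, and the same inversion of $\kappa^{-1}/\log(\kappa^{-1}) \leq C_3 W^2L^2\log(WL)$ via the monotone function $f(x)=x/\log(x)$ with the explicit choice $c=8\log(C_3)C_3$.
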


As a byproduct of the results obtained in this section, we can conclude that in the approximation (in $L^2 ([0,1])$-norm) of functions in $\lip ([0,1])$ going from networks with bounded weights to networks with unbounded weights does not substantially improve approximation accuracy. Specifically, we have the following chain of inequalities
		\begin{align}
			\min \biggl\{ \frac{1}{8}, C (W^2L^2 (\log ( W L ) )^2  )^{-1} \biggr\} &\leq \mathcal{A}	( H^1 ( [0,1] ), \mathcal{R} ( 1, W, L ) , \nleft\| \cdot \nright\|_{L^2 ( [0,1] )} )\label{eq:finalfinal_1}\\
			&\leq\, \mathcal{A}	( H^1 ( [0,1] ), \mathcal{R} ( 1, W, L, 1 ), \nleft\| \cdot \nright\|_{L^2 ( [0,1] )} ) \label{eq:finalfinal_2}\\
			&\leq\,   C_2  ( W^2 L^2 \log (W)  )^{-1}, \label{eq:finalfinal_3}
		\end{align}
  where \eqref{eq:finalfinal_1} is the lower bound \eqref{eq:error_lower_bound_infinity_weights}, \eqref{eq:finalfinal_2} follows from $\mathcal{R} ( 1, W, L, 1 ) \subseteq \mathcal{R} ( 1, W, L )$, and \eqref{eq:finalfinal_3} is Lemma~\ref{lemma:approximation_lip} with $p=2$. Here, $C, C_2 \in \mathbb{R}_{+}$ are absolute constants.
This shows that the improvement obtainable from allowing unbounded weights is at most of order $\frac{\log(W)}{(\log(WL))^2}$.

	\appendix
	
\section{Notation and Basic Definitions}
\label{sec:notation}
	We denote the cardinality of the set $X$ by $\nleft| X \nright|$.  $\mathbb{N} = \{ 1,2,\dots \}$ designates the natural numbers, $\mathbb{R}$ stands for the real numbers, $\mathbb{R}_+$ for the positive real numbers, and $\emptyset$ for the empty set. The maximum, minimum, supremum, and infimum of the set $\mathbb{A} \subseteq \mathbb{R}$ are denoted by $\max \mathbb{A}$, $\min \mathbb{A}$, $\sup \mathbb{A}$, and $\inf \mathbb{A}$, respectively. The indicator function $1_P$ for proposition $P$ is equal to $1$ if $P$ is true and $0$ else. For a metric space $( \mathcal{X}, \delta )$ and sets $\mathcal{F}, \mathcal{G} \subseteq \mathcal{X}$, we define $\mathcal{A} ( \mathcal{G}, \mathcal{F}, \delta )  = \sup_{g \in \mathcal{G}} \inf_{f \in \mathcal{F}} \delta (f,g)$. 
	For a vector $b \in \mathbb{R}^d$, we let $\nleft\| b \nright\|_\infty := \max_{i = 1,\dots,d} \nleft| b_i \nright|$, $\nleft\| b \nright\|_0 := \sum_{i = 1}^d 1_{b_i \neq 0}$, and $\nleft\| b \nright\|_1 := \sum_{i = 1}^d  \nleft| b_i \nright| $. Similarly, for a matrix $A \in \mathbb{R}^{m \times n}$, we define $\nleft\| A \nright\|_\infty = \max_{i=1,...,m,j=1,...,n} \nleft| A_{i,j} \nright| $ and $\nleft\| A \nright\|_0 = \sum_{i = 1}^m \sum_{j = 1}^n  1_{A_{i,j} \neq 0}$. $1_{m}$ and $0_{m}$ stand for the	$m$-dimensional vector with all entries equal to $1$ and $0$, respectively. $I_m$ refers to the $m \times m$ identity matrix. $1_{m\times n}$ and $0_{m\times n}$ denote the $m\times n$ matrix with all entries equal to $1$ and $0$, respectively. 
 We designate the  block-diagonal matrix with diagonal element-matrices $A_1, \dots, A_n$, possibly of different dimensions, by $\text{diag} (A_1, \dots, A_n)$.
	The truncation operator $\Othres_E: \mathbb{R} \rightarrow [-E, E]$, $E \in \mathbb{R}_+$, is $\Othres_E ( x ) = \max \{ -E, \allowbreak \min \{ E, x \} \}$. $\log(\cdot)$ and $\ln(\cdot)$ denote the logarithm to base $2$ and base $e$, respectively. The ReLU activation function is defined as $\rho(x) = \max \{ x,0 \}$, for $x \in \mathbb{R}$, and, when applied to vectors, acts elementwise. The sign function $\text{sgn}: \mathbb{R} \rightarrow  \{ 0,1 \}$ is given by $\text{sgn}(x) =1$, for $x \geq 0$, and $\text{sgn} ( x ) = 0$, for $x < 0$. We use $S(A,b)$ to refer to the affine mapping $S(A,b) (x) = Ax + b, x \in \mathbb{R}^{n_2}$, with $A \in \mathbb{R}^{n_1 \times n_2}$, $b \in \mathbb{R}^{n_1}$. For the set $\mathbb{X} \subseteq \mathbb{R}^d$, with $d \in \mathbb{N}$, and the function $f: \mathbb{X} \rightarrow \mathbb{R}$, we define the $L^p ( \mathbb{X}) $-norm of $f$, with $p \in [1,\infty)$, according to $\nleft\| f \nright\|_{L^p ( \mathbb{X} )} = ( \int_{x \in \mathbb{X}} \nleft| f(x) \nright|^{p} d \mu(x)  )^{1\slash p}$, where $\mu$ is the Lebesgue measure on $\mathbb{R}^d$. The  $L^\infty(\mathbb{X}) $-norm of $f$ is given by $\nleft\| f \nright\|_{L^\infty ( \mathbb{X} )} = \sup_{x \in \mathbb{X}} \nleft| f(x) \nright|$, and, for a distribution $P$ on $\mathbb{X}$, we define the $L^2 ( P )$-norm of $f$ as $\nleft\| f \nright\|_{L^2 ( P )} = ( \int_{x \in \mathbb{X}} \nleft| f(x) \nright|^{2} d P(x)  )^{1\slash 2}$.  
	A constant is said to be absolute if it does not depend on any variables or parameters. 

	\section{Further Proofs}
\label{sec:complexity_proof}

		\subsection{Proof of Proposition~\ref{lem:cardinality_approximation_class}}
		\label{sub:approximation_error_lower_bound_by_covering_number}
		
		We start with a lemma that gives a lower bound on the cardinality of $\mathcal{F}$, in terms of the packing number of $\mathcal{G}$, under the condition \eqref{eq:improper_cover}. 
			\begin{lemma}
				\label{thm:memory_requirement}
				Let $(\mathcal{X}, \delta)$ be a metric space, $\mathcal{F}, \mathcal{G} \subseteq \mathcal{X}$,
    and $\varepsilon \in \mathbb{R}_+$. Assume that $\mathcal{A} ( \mathcal{G}, \mathcal{F}, \delta ) \leq \varepsilon$. Then, we have 
				\begin{equation*}
					| \mathcal{F} |  \geq M ( 2 \varepsilon, \mathcal{G}, \delta  ).
				\end{equation*}

				\begin{proof}
                    Arbitrarily fix $\varepsilon \in \mathbb{R}_+$.
     Suppose, for the sake of contradiction, that $\nleft| \mathcal{F} \nright|  < M ( 2 \varepsilon, \mathcal{G}, \delta  )$, which would imply the existence of a $(2 \varepsilon)$-packing $\mathcal{P}$ of $\mathcal{G}$ such that $| \mathcal{F} |  < | \mathcal{P} | $. In particular, $\mathcal{F}$ would be a finite set. 
     Since $\mathcal{P}$ is a subset of $\mathcal{G}$, we have  $\mathcal{A} ( \mathcal{P} , \mathcal{F}, \delta ) \leq \mathcal{A} ( \mathcal{G} , \mathcal{F}, \delta ) \leq \varepsilon$, and therefore every element of $\mathcal{P}$ would be contained in an $\varepsilon$-neighborhood, with respect to the metric $\delta$, of an $f \in \mathcal{F}$. As $\nleft| \mathcal{F} \nright| < \nleft| \mathcal{P} \nright|$ and there are $\nleft| \mathcal{F} \nright|$ such neighborhoods and $\nleft| \mathcal{P} \nright|$ elements to be contained in neighborhoods,  the pigeonhole principle implies the existence of  $\nmathbf{x}_1,\nmathbf{x}_2 \in \mathcal{P}$ such that $\nmathbf{x}_1,\nmathbf{x}_2 \in \{ x: \delta ( f_0, \nmathbf{x} ) \leq \varepsilon \} $ for some $f_0 \in \mathcal{F}$. It would then follow from the triangle inequality that $\delta ( \nmathbf{x}_1, \nmathbf{x}_2 ) \leq \delta ( \nmathbf{x}_1, f_0 ) + \delta ( f_0, \nmathbf{x}_2 ) \leq 2 \varepsilon$, which implies that $\mathcal{P}$ can not be a $(2 \varepsilon)$-packing. This establishes the desired contradiction.
				\end{proof}
			\end{lemma}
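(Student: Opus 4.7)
The plan is to proceed by contradiction, closely mirroring the classical packing-covering duality argument. Suppose $|\mathcal{F}| < M(2\varepsilon, \mathcal{G}, \delta)$. Since the right-hand side is a (possibly infinite) cardinal and $|\mathcal{F}|$ is strictly smaller, in particular $|\mathcal{F}|$ must be finite, and there exists a $(2\varepsilon)$-packing $\mathcal{P} \subseteq \mathcal{G}$ with $|\mathcal{P}| > |\mathcal{F}|$ (we may take $\mathcal{P}$ of any size exceeding $|\mathcal{F}|$ using a maximal packing witnessing the packing number). I would then translate the hypothesis $\mathcal{A}(\mathcal{G}, \mathcal{F}, \delta) \leq \varepsilon$ into an assignment $\pi : \mathcal{P} \to \mathcal{F}$.

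The key point I would pause at is the distinction between infimum and minimum: the hypothesis only gives $\inf_{f \in \mathcal{F}} \delta(f, p) \leq \varepsilon$ for every $p \in \mathcal{P}$, not that the infimum is attained. However, under the contradiction hypothesis $\mathcal{F}$ is finite, so the infimum is actually a minimum, and thus for each $p \in \mathcal{P}$ there exists $\pi(p) \in \mathcal{F}$ with $\delta(\pi(p), p) \leq \varepsilon$. This is the main (and only genuine) subtlety of the proof; recognizing it is what makes the statement clean.

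Next, I would apply the pigeonhole principle to $\pi$: since $|\mathcal{P}| > |\mathcal{F}|$, there exist distinct $p_1, p_2 \in \mathcal{P}$ with $\pi(p_1) = \pi(p_2) =: f_0$. The triangle inequality then yields
\begin{equation*}
\delta(p_1, p_2) \leq \delta(p_1, f_0) + \delta(f_0, p_2) \leq 2\varepsilon,
\end{equation*}
which contradicts the defining property of a $(2\varepsilon)$-packing, namely $\delta(p_1, p_2) > 2\varepsilon$ for distinct points. This contradiction completes the proof, giving $|\mathcal{F}| \geq M(2\varepsilon, \mathcal{G}, \delta)$. Overall the argument is short; the only place that demands care is the finite-vs-infinite bookkeeping needed to convert the approximation hypothesis into an honest $\varepsilon$-close representative for every $p \in \mathcal{P}$.
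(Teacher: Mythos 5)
Your proof is correct and follows essentially the same route as the paper's: contradiction via pigeonhole, using the finiteness of $\mathcal{F}$ (forced by the contradiction hypothesis) to turn the approximation bound into an actual $\varepsilon$-close representative in $\mathcal{F}$ for each packing point. You are slightly more explicit than the paper about the infimum-versus-minimum subtlety, which the paper handles implicitly by noting finiteness of $\mathcal{F}$; the underlying argument is the same.
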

                
			We are now ready to prove Proposition~\ref{lem:cardinality_approximation_class}. 
   
                        If $N ( \varepsilon, \mathcal{F}, \delta ) = \infty$, then \eqref{eq:minimal_cardinality_G} holds trivially. For $N ( \varepsilon, \mathcal{F}, \delta ) < \infty$, suppose that $\mathcal{C}$ is a minimal $\varepsilon$-covering of $\mathcal{F}$. 
            Defining $p: \mathcal{F} \to \mathcal{C}$ according to $p(f) = \argmin_{c \in \mathcal{C}} \delta( f,c) $, we hence get
            \begin{equation}
				\delta ( f,  p(f)  ) \leq \varepsilon. \label{eq:proof_approximation_size_0}
		\end{equation}
			Elements of $\mathcal{G}$ can now be approximated by elements of $\mathcal{C}$, with corresponding minimax approximation error 
			\begin{align}
				\mathcal{A} ( \mathcal{G}, \mathcal{C}, \delta ) =&\, \sup_{g \in \mathcal{G}} \inf_{c \in \mathcal{C}} \delta ( g,c ) \label{eq:proof_approximation_size_1}\\
				=&\, \sup_{g \in \mathcal{G}} \inf_{f \in \mathcal{F}}  \inf_{c \in \mathcal{C}} \delta ( g,c ) \label{eq:proof_approximation_size_2}\\
				\leq &\, \sup_{g \in \mathcal{G}} \inf_{f \in \mathcal{F}}  \inf_{c \in \mathcal{C}} ( \delta ( g,f ) + \delta ( f,c ) )\label{eq:proof_approximation_size_3} \\
                = &\, \sup_{g \in \mathcal{G}} \inf_{f \in \mathcal{F}}  ( \delta ( g,f ) + \delta ( f,p(f) ) ) \label{eq:proof_approximation_size_4}\\
				\leq &\, \sup_{g \in \mathcal{G}} \inf_{f \in \mathcal{F}}  ( \delta ( g,f ) + \varepsilon ) \label{eq:proof_approximation_size_5}\\
				= &\, \mathcal{A} ( \mathcal{G}, \mathcal{F}, \delta ) + \varepsilon \label{eq:proof_approximation_size_6}, \\
				\leq&\, 2 \varepsilon. \label{eq:proof_approximation_size_7}
			\end{align}
			where \eqref{eq:proof_approximation_size_3} is by the triangle inequality, \eqref{eq:proof_approximation_size_4} follows by 
            definition of $p$,
            in \eqref{eq:proof_approximation_size_5} we used  \eqref{eq:proof_approximation_size_0}, and \eqref{eq:proof_approximation_size_7} is thanks to the assumption $\mathcal{A} ( \mathcal{G}, \mathcal{F}, \delta ) \leq \varepsilon$. 
			Application of Lemma~\ref{thm:memory_requirement} with $\mathcal{F}=\mathcal{C}$, $\varepsilon$ replaced by $2 \varepsilon$, and the prerequisite  satisfied thanks to \eqref{eq:proof_approximation_size_1}-\eqref{eq:proof_approximation_size_7}, yields
			\begin{equation*}
				N ( \varepsilon, \mathcal{F}, \delta ) = | \mathcal{C} |  \geq M ( 4 \varepsilon, \mathcal{G}, \delta  ),
			\end{equation*}
			which together with $M ( 4 \varepsilon, \mathcal{G}, \delta  ) \geq N ( 4 \varepsilon, \mathcal{G}, \delta  )$, owing to Lemma~\ref{lem:equivalence_covering_packing}, concludes the proof.  \qedhere

	\subsection{Proof of Lemma~\ref{lem:1dpinfinity}}
	\label{sub:proof_of_lemma_lem:1dpinfinity}
   Fix a maximal $(2\varepsilon)$-packing $\{ f_i \}_{i = 1}^{M (2\varepsilon,\mathcal{R}(1, W,L,B), {L^1 ( [0,1] )} ) }$  of $\mathcal{R}(1, W,L,B)$ with respect to  the $L^1 ( [0,1] )$-norm. We shall lift this packing into a $(2\varepsilon)$-packing of $\mathcal{R}(d,W,L,B)$ with respect to the $L^p ( [0,1]^d )$-norm. Specifically,  for $i \in \{ 1,\dots, M (2\varepsilon,\mathcal{R}(1, W,L,B), {L^1 ( [0,1] )} )  \}$, as $f_i \in \mathcal{R}(1, W,L,B)$, there exists a network configuration 
   $\Phi_i = ( A_\ell^i, b_\ell^i)_{\ell = 1}^{\tilde{L}_i} \in \mathcal{N}(1, W,L,B) $ with $\tilde{L}_i \leq L$ such that $R ( \Phi_i ) = f_i$. 
   Let $(\tilde{A}_1^i,  \tilde{b}_1^i) = ( (\begin{smallmatrix}
				A_1^i, \, 0_{d' \times (d -1)  }
			\end{smallmatrix}), b_1^i )$, with $d'$ the number of rows of $A_1^i$,  and $( \tilde{A}_\ell^i, \tilde{b}_\ell^i ) = ( {A}_\ell^i, {b}_\ell^i )$, for $1 < \ell \leq \tilde{L}_{i}$, and set $g_i := R ( ( \tilde{A}_\ell^i, \tilde{b}_\ell^i)_{\ell = 1}^{\tilde{L}_i} )$.  We note that, for all $( x_1,\dots, x_d ) \in \mathbb{R}^d$, 
			\begin{equation*}
				\tilde{A}_1^i ( x_1,\dots, x_d )^T + \tilde{b}_1^i  = A_1^i x_1 + b_1^i,
			\end{equation*}
			which
   implies, for all $( x_1,\dots, x_d ) \in \mathbb{R}^d$,
			\begin{equation}
			\label{eq:construct_g_from_f}
				g_i (   x_1,\dots, x_d ) = R ( ( \tilde{A}_\ell^i, \tilde{b}_\ell^i)_{\ell = 1}^{\tilde{L}_i} ) ( x_1,\dots, x_d ) = R ( ( {A}_\ell^i, {b}_\ell^i)_{\ell = 1}^{\tilde{L}_i} ) ( x_1 ) = f_i ( x_i ).
			\end{equation}
			As $ ( \tilde{A}_\ell^i, \tilde{b}_\ell^i)_{\ell = 1}^{\tilde{L}_i} \in \mathcal{N} ( d, W, L ,B )$, we have $g_i \in R ( d, W, L ,B )$. 
   Next, we shall establish that $\{ g_i \}_{i =1}^{M (2\varepsilon,\mathcal{R}(1, W,L,B), {L^1 ( [0,1] )} ) }$ is a $(2\varepsilon)$-packing of $\mathcal{R}(d ,W,L,B)$ with respect to the  $L^p ( [0,1]^d )$-norm. To this end, let $q \in [1,\infty]$ be such that $\frac{1}{p} + \frac{1}{q} = 1$.  We then have, for $i,j \in \{ 1,\dots, M (2\varepsilon,\mathcal{R}(1, W,L,B), {L^1 ( [0,1] )} )  \}$ with  $i \neq j$, that
			\begin{align}
				\nleft\| g_i - g_j \nright\|_{L^p ( [0,1]^d )} =&\,  \nleft\| g_i - g_j \nright\|_{L^p ( [0,1]^d )} \nleft\| \mathbf{1} \nright\|_{L^q ( [0,1]^d )} \label{eq:lpl1_1} \\
				\geq &\, \nleft\| g_i - g_j \nright\|_{L^1 ( [0,1]^d )} \label{eq:lpl1_2}\\
				= &\, \int_{( x_1,\dots, x_d )\in [0,1]^d}  \nleft| g_i ( x_1,\dots,x_d ) - g_j ( x_1,\dots, x_d ) \nright| \, dx_1 \dots d x_d \label{eq:lpl1_3}\\
				= &\, \int_{( x_1,\dots, x_d )\in [0,1]^d}  \nleft| f_i ( x_1 ) - f_j ( x_1 ) \nright| \, dx_1 \dots d x_d  \label{eq:lpl1_4}\\
				= &\, \int_{x_1 \in  [ 0,1 ]}  \nleft| f_i ( x_1 ) - f_j ( x_1 ) \nright|  dx_1 \label{eq:lpl1_5}\\
				=& \, \nleft\| f_i - f_j \nright\|_{L^1 ( [0,1] )} \label{eq:lpl1_6}\\
				> &\, 2\varepsilon, \label{eq:lpl1_7}
			\end{align}
			where in \eqref{eq:lpl1_1} we denoted by $\mathbf{1}$  the constant function taking value $1$ on $[0,1]^d$,
   \eqref{eq:lpl1_2} follows from the H\"older inequality, in \eqref{eq:lpl1_4} we used \eqref{eq:construct_g_from_f}, and \eqref{eq:lpl1_7} is a consequence of $\{ f_i \}_{i = 1}^{M (2\varepsilon,\mathcal{R}(1, W,L,B), {L^1 ( [0,1] )} ) }$ being a $(2\varepsilon)$-packing with respect to the $L^1 ( [0,1] )$-norm. We have therefore established that $\{ g_i \}_{i =1}^{M (2\varepsilon,\mathcal{R}(1, W,L,B), {L^1 ( [0,1] )} ) }$ is a $(2\varepsilon)$-packing of $\mathcal{R}(d,W,L,B)$ with respect to the $L^p ( [0,1]^d )$-norm, and hence
			\begin{equation}
				M(2\varepsilon,\mathcal{R}(d,W,L,B), {L^p ( [0,1]^d )}  )  \geq  M (2\varepsilon,\mathcal{R}(1, W,L,B), {L^1 ( [0,1] )}  ).
			\end{equation}

	\subsection{Proof of Lemma~\ref{lem:packing_number_piecewise_linear_function}}
	\label{sub:proof_of_lemma_lem:packing_number_piecewise_linear_function}
		\newcommand{\Qres}{M}
		For $\varepsilon \geq \frac{E}{4N}$, we have $\log ( \lceil \frac{E}{4\varepsilon N} \rceil ) = 0$ so that \eqref{eq:lemma27} holds trivially. For $\varepsilon < \frac{E}{4N}$, we prove the statement by explicitly constructing an $\varepsilon$-packing of suitable cardinality.
  			To this end, for $y = ( y_i )_{i = 1}^N$, we define functions $f_{y} \in \Sigma ( X_N, \infty )$ as follows
\begin{equation}
			\label{eq:A4_def_fy}
				f_{y} (x) = 
                    \left\{
                    \begin{aligned}
					&0, && \text{for } x \in (-\infty, 0],\\
					&f_y \bigl ( \frac{i - 1}{N}\bigr) + \Qnc( x - \frac{i - 1}{\Qnc} ) ( y_{i} - f_y \bigl ( \frac{i - 1}{N}\bigr)  ), &&\text{for } x \in \bigl(\frac{i - 1}{\Qnc}, \frac{i}{\Qnc}\bigr], \quad i = 1,\dots, \Qnc,\\
					&y_N, &&\text{for } x \in (1,\infty). 
				\end{aligned}
                \right.
			\end{equation}
			and note that  $f_y ( 0 ) = 0$ and $f_y (\frac{i}{N}) =y_i $, for $i = 1,\dots, N$. Now, consider the set of functions 
			\begin{equation*}
				\mathcal{F}_{\Qnc,\Qres} := \biggl\{ f_y: y = ( y_i )_{i=1}^\Qnc \in \biggl( \biggl\{ \frac{\ell}{\Qres} E  \biggr\}_{\ell = 0}^\Qres \biggr)^\Qnc \biggr\} \subseteq \Sigma ( X_\Qnc, E ),
			\end{equation*}
			with $\Qres$ an integer to be specified later, namely such that $\mathcal{F}_{\Qnc,\Qres}$ is an $\varepsilon$-packing of $\Sigma ( X_\Qnc, E )$ with appropriate cardinality.  We proceed to derive a lower bound on the distance between distinct elements in $\mathcal{F}_{\Qnc,\Qres}$.  For $y = ( y_{i} )_{i=1}^\Qnc \in ( \{ \frac{\ell}{\Qres}E \}_{\ell = 0}^\Qres )^\Qnc $ and $\tilde{y} = ( \tilde{y}_{i} )_{i=1}^\Qnc \in ( \{ \frac{\ell}{\Qres}E \}_{\ell = 0}^\Qres )^\Qnc $ such that $y \neq \tilde{y}$, we let $j \in \{ 1,\dots, \Qnc \}$ be the smallest index for which $y_{j} \neq \tilde{y}_j$, and then get
			\begin{align}
				&\,\| f_y - f_{\tilde{y}} \|_{L^1([0,1])} \label{eq:constructing_packing_1}\\
				&=\, \int_{0}^1 | f_y(x) - f_{\tilde{y}}(x)| dx \\
				&\geq\, \int_{\frac{j-1}{\Qnc}}^{\frac{j}{\Qnc}} | f_y(x) - f_{\tilde{y}}(x)| dx \\
				&=\, \int_{\frac{j-1}{\Qnc}}^{\frac{j}{\Qnc}} \biggl| f_y \biggl ( \frac{j -1 }{N}\biggr) - f_{\tilde{y}} \biggl ( \frac{j-1}{N}\biggr)  \nonumber\\
				&\, + \Qnc\biggl( x - \frac{j-1}{\Qnc} \biggr) \biggl(  y_{j} - \tilde{y}_{j} - f_y \biggl ( \frac{j -1 }{N}\biggr) + f_{\tilde{y}} \biggl ( \frac{j -1 }{N}\biggr)   \biggr) \biggr| dx \label{eq:constructing_packing_2}  \\ 
				&=\, \int_{\frac{j-1}{\Qnc}}^{\frac{j}{\Qnc}} \biggl|  \Qnc\biggl( x - \frac{j-1}{\Qnc} \biggr) (  y_{j} - \tilde{y}_{j}  ) \biggr| dx  \label{eq:constructing_packing_3}  \\
				&\geq \, \int_{\frac{j-1}{\Qnc}}^{\frac{j}{\Qnc}} \biggl|  \Qnc\biggl( x - \frac{j-1}{\Qnc} \biggr) \frac{E}{\Qres} \biggr| dx \label{eq:constructing_packing_4}  \\
				&= \, \frac{E}{2\Qres\Qnc}. \label{eq:constructing_packing_6}
			\end{align}
			where in \eqref{eq:constructing_packing_2} we used \eqref{eq:A4_def_fy}, \eqref{eq:constructing_packing_3} follows from $f_y  ( \frac{j -1 }{N}) = f_{\tilde{y}}  ( \frac{j -1 }{N})$,
   and in \eqref{eq:constructing_packing_4} we used $\nleft| y_j - \tilde{y}_j \nright|\geq \frac{E}{M} $. Set $\Qres = \bigl \lceil \frac{E}{4 \varepsilon \Qnc} \bigr \rceil$. As $\varepsilon < \frac{E}{4N}$ by assumption, we have $\frac{E}{4 \varepsilon \Qnc} > 1$ and hence $\Qres = \bigl \lceil \frac{E}{4 \varepsilon \Qnc} \bigr \rceil <  \frac{E}{2 \varepsilon \Qnc} $, where we used $\lceil x \rceil < 2 x $, for $x > 1$. We therefore get $\frac{E}{2\Qres\Qnc} >  \varepsilon$, which, owing to \eqref{eq:constructing_packing_1}-\eqref{eq:constructing_packing_6}, establishes that 
   $\mathcal{F}_{\Qnc, \lceil \frac{E}{ 4\varepsilon \Qnc} \rceil }$ is an $\varepsilon$-packing of $\Sigma ( X_\Qnc, E )$ with respect to the $L^1 ( [0,1] )$-norm. The proof is concluded by noting that
			\begin{equation}
			\label{eq:a4_49_packing}
				 M(\varepsilon,\Sigma ( X_\Qnc, E ), {L^1 ( [0,1] )} ) \geq \left| \mathcal{F}_{\Qnc, \lceil \frac{E}{ 4\varepsilon \Qnc} \rceil } \right| = \left| \left( \left\{ \frac{\ell}{\Qres}E  \right\}_{\ell = 0}^{\lceil \frac{E}{ 4\varepsilon \Qnc} \rceil } \right)^\Qnc \right| \geq \left( \left \lceil \frac{E}{ 4\varepsilon \Qnc} \right \rceil \right)^N. 
			\end{equation}

	\subsection{Proof of Lemma~\ref{lemma:vc_dimension_upper_bounds}}
	\label{sub:proof_of_lemma_lemma:vc_dimension_upper_bounds}
		We first need a concept closely related to fat-shattering dimension. 
		\begin{definition}
		\cite[Definition 2]{bartlett2019nearly}
			Let $\mathcal{X}$ be a set and $\mathcal{F}$ a class of functions from $\mathcal{X}$ to $\mathbb{R}$. The pseudodimension of $\mathcal{F}$, written as $\text{Pdim} ( \mathcal{F})$, is the largest integer $m$ for which there exists $( x_1,\dots, x_m, y_1,\dots, y_m ) \in \mathcal{X}^m \times \mathbb{R}^m$ such that for every $( b_1,\dots, b_m ) \in \{ 0,1 \}^m$, there is an $f \in \mathcal{F}$ so that, for all $i \in \{ 1,\dots, m \}$,
                {
                \begin{equation}
			\label{eq:require_pseudo_dimension}
				f(x_i) \left\{ \begin{aligned}
					&>  y_i, \quad &&\text{if } b_i = 1,\\
					&\leq y_i,\quad &&\text{if } b_i = 0.  
				\end{aligned}
                    \right.
			\end{equation}
                }
		\end{definition}
		As condition \eqref{eq:require_pseudo_dimension} defining pseudodimension is weaker than condition \eqref{eq:require_fat_shattering_dimension} defining fat-shattering dimension, we have, for all function classes $\mathcal{F}$, that \cite[Theorem 11.13 (i)]{Anthony1999}
		\begin{equation}
		\label{eq:comparing_fat_and_pseudo}
			\text{fat} ( \mathcal{F}, \gamma ) \leq \text{Pdim} ( \mathcal{F} ), \quad \text{for } \gamma \in \mathbb{R}_+.
		\end{equation}
		We are now ready to show how Lemma~\ref{lemma:vc_dimension_upper_bounds} can be proved by applying results from \cite{bartlett2019nearly}. First, note that \cite{bartlett2019nearly} applies to families of network realizations whose associated configurations have a fixed architecture, whereas $\mathcal{N}(d, W,L)$, the object of interest here, consists of network configurations with different architectures.  To resolve this discrepancy, we employ an idea used in the proof of \cite[Lemma A.2]{FirstDraft2022}.
  Specifically, we consider the set $\mathcal{N}^* ( d, 2W,L ) = \{ ( A_\ell,b_\ell )_{\ell = 1}^{L}: A_1 \in \mathbb{R}^{2W \times d}, b_1 \in \mathbb{R}^{2W}, A_L \in \mathbb{R}^{1 \times 2W}, b_L \in \mathbb{R}, A_\ell \in \mathbb{R}^{2W \times 2W}, b_\ell \in \mathbb{R}^{2W}, \text{ for } \ell \in \{ 2,\dots, L-1 \} 
	    \}$ consisting of all network configurations with the (fixed) architecture
		\begin{equation}
		\label{eq:defining_architecture}
			( N_\ell )_{\ell = 0}^L =  (d, \underbrace{2W, \dots,}_{(L-1) \text{ times}} 1).
		\end{equation}
		The associated family of network realizations is $\mathcal{R}^* ( d, 2W,L ) = \{  R (\Phi ): \Phi \in  \mathcal{N}^* ( d, 2W,L )\}$. 
  It now follows from the proof of \cite[Lemma A.2]{FirstDraft2022} that $\mathcal{R}(d,W,L) \subseteq \mathcal{R}^*(d,2 W,L)$. 
  Next, we note that the network configurations in  $\mathcal{N}^* ( d, 2W,L )$ have $ n(d, 2W,L) := 2dW + 4W + 1 +  ( L-2 ) ( (2W)^2 + 2W )$ weights.   As $\mathcal{R}^* (d, 2W,L )$ consists of realizations of network configurations with fixed architecture, namely \eqref{eq:defining_architecture}, we can apply the  results in \cite{bartlett2019nearly}. Specifically, we obtain 
		\begin{align}
			\text{\text{Pdim}} ( \mathcal{R}^* ( d, 2W,L ) ) \leq&\, C  n(d, 2W,L) L \log ( n(d, 2W,L) ) \label{eq:vc_dimension_upper_bounds_vanilla_1} \\
			\leq &\, C ( 13W^2 L ) L \log ( 13 W^2 L ) \label{eq:vc_dimension_upper_bounds_vanilla_2} \\
			\leq &\, 65\, C W^2 L^2 ( \log(W L) ), \label{eq:vc_dimension_upper_bounds_vanilla_3}
		\end{align}
  where $C \in \mathbb{R}_+$ is an absolute constant, 
		in \eqref{eq:vc_dimension_upper_bounds_vanilla_1} we used \cite[Eq.~(2)]{bartlett2019nearly} combined with the discussion at the end of the paragraph immediately after \cite[Definition 2]{bartlett2019nearly},
  \eqref{eq:vc_dimension_upper_bounds_vanilla_2} follows from $n(d, 2W,L) \leq 2W^2 + 4W + 1 +  ( L-2 ) ( (2W)^2 + 2W ) \leq 13 W^2 L $ by the standing assumption $W \geq d$, and \eqref{eq:vc_dimension_upper_bounds_vanilla_3} is thanks to $\log(13 W^2 L) \leq \log((WL)^5) = 5 ( \log(W L) )$ as $L \geq 2$. Since $\mathcal{R}(d,W,L) \subseteq \mathcal{R}^*(d,2 W,L)$, as noted above, we get
		\begin{equation}
                \label{eq:upper_bound_pdim}
			\text{Pdim} ( \mathcal{R}(d,W,L) ) \leq  \text{Pdim} (  \mathcal{R}^*(d,2W,L)) \leq 65\, C W^2 L^2 ( \log(WL) ).   
		\end{equation}
            To upper-bound the fat-shattering dimension of $\Othres_1 \circ \mathcal{R}(d,W,L)$, we first note that
            $\Othres_1(x) = -1  + \rho(x + 1) - \rho( x- 1)$, for $x \in \mathbb{R}$, and hence $\Othres_1 \in \mathcal{R}(1, 2, 2)$, which upon application of \cite[Lemma H.3]{FirstDraft2022}, yields
            \begin{equation}
            \label{eq:pseudo_inclusion}
                \Othres_1 \circ \mathcal{R}(d,W,L) \subseteq  \mathcal{R}(d,\max\{W, 2\},L + 2) = \mathcal{R}(d, W,L + 2). 
            \end{equation}
            It then follows, for all $\gamma \in \mathbb{R}_+$, that
            \begin{align}
                \text{fat} ( \Othres_1 \circ \mathcal{R}(d,W,L), \gamma ) \leq &\, \text{Pdim}  ( \Othres_1 \circ \mathcal{R}(d,W,L)) \label{eq:pseudo_final_1}\\
                \leq &\, \text{Pdim}  (\mathcal{R}(d, W,L + 2)) \label{eq:pseudo_final_2}\\
                \leq &\, 65\, C W^2 (L+2)^2 ( \log(W(L+2)) ) \label{eq:pseudo_final_3}\\
                < &\,  520\, C W^2 L^2 ( \log(WL) ), \label{eq:pseudo_final_4}
            \end{align}
            where \eqref{eq:pseudo_final_1} is \eqref{eq:comparing_fat_and_pseudo} with $\mathcal{F} =  \Othres_1 \circ \mathcal{R}(d,W,L)$, \eqref{eq:pseudo_final_2} follows from \eqref{eq:pseudo_inclusion}, in  \eqref{eq:pseudo_final_3} we used \eqref{eq:upper_bound_pdim} with $L$ replaced by $L + 2$, and \eqref{eq:pseudo_final_4} follows from $(L+2)^2 \leq (2L)^2$ and $\log(W(L+2)) \leq \log(WL^2) < 2 \log(WL)$, recalling that $L \geq 2$. The proof is concluded upon taking $C_h = 520\, C$.

\section{Proof of Theorem~\ref{thm:expected_L2_error}}
\label{sec:complexity_base_prediction_error_bound}

	The proof will be effected by establishing a slightly stronger result; this is done to better illustrate the roles of the assumptions in Theorem~\ref{thm:expected_L2_error}. Specifically, we shall replace the assumption \eqref{eq:lsr_approximation_simplified} by 
	\begin{equation}
		\inf_{f \in \mathcal{F}_n} \nleft\| g - f \nright\|_{L^2 ( P)} \leq \Qapproximation, \quad \Qapproximation \in \mathbb{R}_+,
	\end{equation}
	and the  assumption \eqref{eq:numerical_assumption_simplified} by 
	\begin{equation}
		\frac{1}{n} \sum_{i = 1}^n ( \hat{f}_n (x_i)  - y_i )^2 \leq \inf_{f \in \mathcal{F}_n} \biggl( \frac{1}{n} \sum_{i = 1}^n ( f (x_i) - y_i )^2 \biggr) +  \Qnumeric, \quad \text{a.s.}, \quad  \Qnumeric \in \mathbb{R}_+.
	\end{equation} 
	With this new set of assumptions, we have the following theorem.

	\begin{theorem}
	\label{thm:expected_L2_error_general}
		Let $\mathbb{X} \subseteq \mathbb{R}^d$ and consider the regression function $g: \mathbb{X} \rightarrow \mathbb{R}$. Let $n \in \mathbb{N}$ and $\sigma \in \mathbb{R}_+$. Let $P$ be a distribution on $\mathbb{X}$, with the associated samples $( x_i,y_i )_{i = 1}^n = ( x_i, g(x_i) + \sigma \xi_i )_{i =1}^n$,
			where $(x_i)_{i = 1}^n$ are i.i.d. random variables of distribution $P$, $(\xi_i)_{i=1}^n$ are i.i.d. standard Gaussian random variables, 
   and $( x_i )_{i = 1}^n$ and $(\xi_i)_{i=1}^n$ are statistically independent. 

			Let $A, \kappa \in \mathbb{R}_+$,  and consider a class of functions $\mathcal{F}_n \subseteq L^\infty ( \mathbb{X} )$ such that   
			\begin{equation}
			\label{eq:lsr_approximation_general}
				\inf_{f \in \mathcal{F}_n} \nleft\| g - f \nright\|_{L^2 ( P )} \leq A, 
			\end{equation}
			and an $\mathcal{F}_n$-valued random variable $\hat{f}_n$ satisfying
			\begin{equation}
			\label{eq:numerical_assumption_general}
				\frac{1}{n} \sum_{i = 1}^n ( \hat{f}_n (x_i)  - y_i )^2 \leq \inf_{f \in \mathcal{F}_n} \biggl( \frac{1}{n} \sum_{i = 1}^n ( f (x_i) - y_i )^2 \biggr) +  \kappa, \quad  \text{a.s.}
			\end{equation}
			For all $\Qradius \in ( 0,1\slash 2 )$, it holds that
			\begin{equation*}
			\begin{aligned}
				&E ( \nleft\| \hat{f}_n - g  \nright\|^2_{L^2 ( P )}  ) \\
				&\leq 16( \Qapproximation^2 + \Qnumeric) + 64 ( \sigma +  \Qradius ) \Qradius + 800 ( \sigma + \sigma^2 + (\Qlinfty ( g, \mathcal{F}_n ))^2 ) \frac{\log  (N ( \Qradius, \mathcal{F}_n, {L^\infty ( \mathbb{X}  ) }  ))  + 1}{n},
			\end{aligned}
			\end{equation*}
			where  $\Qlinfty ( g, \mathcal{F}_n ):  =  \max \{ \nleft\| g \nright\|_{L^\infty ( \mathbb{X} )}, \sup_{f \in \mathcal{F}_n} \nleft\| f \nright\|_{L^\infty ( \mathbb{X}  )}   \}$.
	\end{theorem}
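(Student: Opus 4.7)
The plan is to follow the classical empirical-process route to least-squares analysis over a fixed model class, as used e.g.\ in \cite{Schmidt-Hieber2017, Gyoerfi2002}, while carefully separating the contributions of the approximation error $A$, optimization slack $\kappa$, covering resolution $\delta$, noise level $\sigma$, and uniform envelope $R := R(g,\mathcal{F}_n)$. First I would fix $\eta>0$ and, by \eqref{eq:lsr_approximation_general}, pick $f^{\star}\in\mathcal{F}_n$ with $\|f^{\star}-g\|_{L^2(P)}^2\le A^2+\eta$. Substituting $\hat{f}_n$ and $f^{\star}$ into the near-ERM condition \eqref{eq:numerical_assumption_general}, using $y_i = g(x_i)+\sigma\xi_i$, and cancelling the common $\sigma^2\xi_i^2$ terms yields the basic inequality
$$\|\hat{f}_n-g\|_n^2 \;\leq\; \|f^{\star}-g\|_n^2 \;+\; \frac{2\sigma}{n}\sum_{i=1}^n \xi_i\bigl(\hat{f}_n(x_i)-f^{\star}(x_i)\bigr) \;+\; \kappa \quad \text{a.s.},$$
where $\|h\|_n^2 = \tfrac{1}{n}\sum_i h(x_i)^2$. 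This is the only place the assumptions \eqref{eq:lsr_approximation_general}--\eqref{eq:numerical_assumption_general} enter.

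The heart of the argument is then to pass from this sample-path statement to an expectation bound in the $\|\cdot\|_{L^2(P)}$-norm, by uniformly controlling two quantities over $\mathcal{F}_n$: the Gaussian cross term, and the empirical-vs-population gap for the squared loss. I would handle both with a single discretization. Let $\{f_1,\dots,f_N\}$ be a minimal $L^\infty(\mathbb{X})$-cover of $\mathcal{F}_n$ of radius $\delta$, with $N = N(\delta,\mathcal{F}_n,L^\infty(\mathbb{X}))$, and for each realization of $\hat{f}_n$ select $f_{j^{\star}}$ such that $\|\hat{f}_n-f_{j^{\star}}\|_{L^\infty(\mathbb{X})}\le\delta$. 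The cross term then splits into a \emph{cover part}, which conditional on $(x_i)_i$ is the maximum of $N$ centered Gaussians of variance at most $4\sigma^2 R^2/n$ and thus has expectation of order $\sigma R\sqrt{(\log N + 1)/n}$, and a \emph{discretization part}, bounded by Cauchy--Schwarz and $E[\tfrac1n\sum_i\xi_i^2]=1$ by $O(\sigma\delta)$. In parallel, Bernstein's inequality applied to the uniformly bounded centered variables $(f_j-g)^2(x_i)-E(f_j-g)^2(X)$ (envelope $\le 4R^2$, variance $\le 4R^2\|f_j-g\|_{L^2(P)}^2$) produces a uniform deviation of order $\|f_j-g\|_{L^2(P)}\sqrt{R^2(\log N+1)/n} + R^2(\log N+1)/n$ over the cover, plus an additional $O(R\delta)$ discretization contribution that links $f_{j^{\star}}$ to $\hat{f}_n$.

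Finally, I would combine the three pieces via Young's inequality $2ab\le \tfrac12 a^2+2b^2$, applied with the right constants so that $\tfrac12\|\hat{f}_n-g\|_{L^2(P)}^2$ and $\tfrac12\|\hat{f}_n-f^{\star}\|_{L^2(P)}^2$ can be absorbed onto the left-hand side. Using $\|f^{\star}-g\|_{L^2(P)}^2\le A^2+\eta$ and letting $\eta\to 0$ then yields the announced inequality, with the $16(A^2+\kappa)$ term arising from the basic inequality and one Young step, the $64(\sigma+\delta)\delta$ term from the discretization contributions of both the cross term and the Bernstein step, and the $800(\sigma+\sigma^2+R^2)(\log N + 1)/n$ term from the maxima of $N$ Gaussians (variance $\propto \sigma^2 R^2$) and of $N$ Bernstein deviations (envelope $\propto R^2$). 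The main obstacle will be the constant bookkeeping: each Young step and each discretization contributes a multiplicative factor, and these must be tuned so that the noise-dependent prefactor carries precisely $\sigma + \sigma^2 + R^2$ (rather than a mixed product $\sigma R$) and every $\hat{f}_n$-dependent quantity is fully absorbed on the left.
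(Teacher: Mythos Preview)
Your overall architecture---basic inequality from \eqref{eq:numerical_assumption_general}, a single $L^\infty$-discretization of $\mathcal{F}_n$, Bernstein-type control of the empirical-to-population gap, and Young's inequality to absorb---matches the paper's route via Lemmata~\ref{lem:convergence_empirical_error} and~\ref{lem:population_and_empirical}. The Bernstein step for $(f_j-g)^2$ is fine and does produce the $R^2(\log N+1)/n$ contribution you claim.

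The gap is in your treatment of the Gaussian cross term. You bound the conditional variance of $\tfrac{2\sigma}{n}\sum_i\xi_i\bigl(f_j(x_i)-f^{\star}(x_i)\bigr)$ crudely by $4\sigma^2 R^2/n$ using the envelope $R$, which gives an expected maximum of order $\sigma R\sqrt{(\log N+1)/n}$. You then assert that this feeds into the $800(\sigma+\sigma^2+R^2)(\log N+1)/n$ term. That step fails: $\sigma R\sqrt{(\log N+1)/n}$ scales like $n^{-1/2}$, whereas the target is $n^{-1}$, and no Young split can convert one into the other without leaving behind a term of order $\sigma^2$ or $R^2$ that does not carry a $1/n$ factor. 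With this argument you would prove a weaker bound than the one stated.

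The paper's remedy (proof of Lemma~\ref{lem:convergence_empirical_error}) is to avoid the envelope entirely in the cross term and instead write, with $\Delta=(\hat f_n(x_i)-g(x_i))_i$ and $\Delta^p$ its cover projection,
\[
E\langle\xi,\Delta^p\rangle \;\le\; \sqrt{E\|\Delta^p\|_2^2}\;\sqrt{E\bigl\langle\xi,\|\Delta^p\|_2^{-1}\Delta^p\bigr\rangle^2}\,.
\]
Conditionally on $(x_i)_i$ the second inner product is $\chi^2_1$, so a moment-generating-function argument over the $N$ cover centers gives $E\langle\xi,\|\Delta^p\|_2^{-1}\Delta^p\rangle^2\le 10\log N+2$. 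The first factor is $\sqrt{E\|\Delta\|_2^2}+\sqrt{n}\,\delta$ up to constants, and the resulting product $\sigma\sqrt{E(\|\Delta\|_2^2/n)}\sqrt{(\log N+1)/n}$ can now be absorbed into the left-hand side $E(\|\Delta\|_2^2/n)$ via Young, leaving exactly $\sigma^2(\log N+1)/n$ (plus the $\sigma\delta$ discretization residual and the linear-in-$\sigma$ term $\sigma(\log N+1)/n$ from Young'ing the $\sigma\delta\sqrt{(\log N+1)/n}$ cross piece). This self-normalization is the missing idea; once you replace your crude variance bound by it, the rest of your plan goes through with the stated constants.
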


	Taking $\Qapproximation  = \varepsilon$, $\Qnumeric = \varepsilon^2$,  and $\Qradius = \varepsilon^2$, with $\varepsilon \in (0,1/2)$, in Theorem~\ref{thm:expected_L2_error_general} implies
		\begin{equation*}
			E ( \nleft\| \hat{f}_n - g  \nright\|^2_{L^2 ( P )}  ) \leq 800 ( 1 + \sigma +  \sigma^2 + ( \Qlinfty ( g, \mathcal{F}_n ) )^2 ) \biggl( \varepsilon^2 +  \frac{\log  (N (\varepsilon^2  , \mathcal{F}_n, L^\infty ( \mathbb{X}  )  ))  + 1}{n} \biggr),
		\end{equation*}
		which, together with $\sigma \leq \frac{1 + \sigma^2}{2}$,
        establishes Theorem~\ref{thm:expected_L2_error} with $C = 1200$.

	To prepare for the proof of Theorem~\ref{thm:expected_L2_error_general}, we state two auxiliary technical lemmata. The first one provides an upper bound on the expected empirical risk. 

	\begin{lemma}
	\label{lem:convergence_empirical_error}
		Let $\mathbb{X}$, $P$, $g$, $n$, $\sigma$, $( x_i, y_i )_{i = 1}^n$, $A,\kappa$, $\mathcal{F}_n$, and $\hat{f}_n$ be defined as in  Theorem~\ref{thm:expected_L2_error_general} and assume that (\ref{eq:lsr_approximation_general}) and (\ref{eq:numerical_assumption_general}) hold. For all $\Qradius \in ( 0,1\slash 2 )$, we have
		\begin{align*}
			& E \biggl( \frac{1}{n} \sum_{i = 1}^n ( \hat{f}_n (x_i) - g ( x_i ) )^2 \biggr) \\
			& \leq \, 2( \Qapproximation^2 + \Qnumeric) + 8 \sigma \Qradius + 100 ( \sigma + \sigma^2) \, \frac{\log  (N ( \Qradius,\, \mathcal{F}_n,\, {L^\infty ( \mathbb{X}  ) }  ))  + 1}{n}.
		\end{align*}
		\begin{proof}
			See Appendix~\ref{sub:convergence_of_the_empirical_risk}.
		\end{proof}
	\end{lemma}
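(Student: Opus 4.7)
The plan is to follow the classical empirical-process approach for ERM-type estimators under Gaussian noise, splitting the analysis into a basic ERM inequality, a reduction to a cross-term involving the Gaussian noise, and a covering/concentration control of that cross-term.

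First, I would pick $f^* \in \mathcal{F}_n$ with $\|f^*-g\|_{L^2(P)}^2 \leq A^2 + \eta$ for arbitrary $\eta > 0$ (using \eqref{eq:lsr_approximation_general}), then substitute $y_i = g(x_i) + \sigma\xi_i$ into the near-minimization assumption \eqref{eq:numerical_assumption_general}. Expanding both squares and cancelling the common $\sigma^2\xi_i^2$ terms yields the basic inequality
\begin{equation*}
\frac{1}{n}\sum_{i=1}^n(\hat f_n(x_i)-g(x_i))^2 \leq \frac{1}{n}\sum_{i=1}^n (f^*(x_i)-g(x_i))^2 + \kappa + \frac{2\sigma}{n}\sum_{i=1}^n \xi_i\,(\hat f_n - f^*)(x_i).
\end{equation*}
Taking expectation, $E[\tfrac{1}{n}\sum(f^*-g)^2(x_i)] = \|f^*-g\|_{L^2(P)}^2 \leq A^2+\eta$, and letting $\eta\downarrow 0$ reduces the task to upper-bounding the cross term $2\sigma\, E[\tfrac{1}{n}\sum_{i=1}^n \xi_i(\hat f_n - f^*)(x_i)]$.

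Next, I would discretize $\mathcal{F}_n$ by a $\delta$-covering $\{h_1,\dots,h_N\}$ with $N = N(\delta,\mathcal{F}_n,L^\infty(\mathbb{X}))$. Writing $\hat f_n = h_J + (\hat f_n - h_J)$ for an index $J$ with $\|\hat f_n - h_J\|_{L^\infty} \leq \delta$, the remainder $\tfrac{2\sigma}{n}\sum \xi_i (\hat f_n - h_J)(x_i)$ is bounded in expectation by $2\sigma\delta\, E|\xi_1| \leq 2\sigma\delta$, which will contribute to the $8\sigma\delta$ term. For the cover part, condition on $(x_i)_{i=1}^n$ so that $\frac{1}{n}\sum \xi_i (h_j-f^*)(x_i)$ is a centered Gaussian of variance $\|h_j-f^*\|_n^2/n$, and invoke the Gaussian maximal inequality (plus a deviation tail) to bound $E_\xi[\max_j |\tfrac{1}{n}\sum \xi_i(h_j-f^*)(x_i)|]$ by a constant times $\max_j\|h_j-f^*\|_n\sqrt{(\log N+1)/n}$.

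To upgrade the resulting $\sqrt{(\log N+1)/n}$-rate to the linear rate $(\log N+1)/n$ appearing in the statement, I would combine the bound with the Young-type inequality $2ab \leq \alpha a^2 + b^2/\alpha$ and the triangle inequality $\|\hat f_n-f^*\|_n^2 \leq 2\|\hat f_n-g\|_n^2 + 2\|f^*-g\|_n^2$, choosing $\alpha$ small enough (e.g.\ $\alpha=1/4$) so that the resulting $\|\hat f_n-g\|_n^2$ on the right-hand side can be absorbed into the left-hand side of the basic inequality. After rearranging and taking expectations this should yield the claimed bound, with the noise-discretization interaction producing the $8\sigma\delta$ term and the Gaussian-complexity contribution producing the $100(\sigma+\sigma^2)(\log N +1)/n$ term.

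The main obstacle I anticipate is the last step: carefully calibrating the Young/AM-GM constants so that (i) the quadratic-in-$\|\hat f_n-g\|_n$ term generated by the noise bound is absorbed without blowing up the coefficient in front of $A^2+\kappa$, and (ii) no uncontrolled $\|f\|_{L^\infty}$ factor (i.e.\ an $R(g,\mathcal{F}_n)$ dependence) leaks into the coefficient of $(\log N +1)/n$, since the lemma is stated without such a factor. This will presumably be handled by using $\|h_j-f^*\|_n$ (rather than the crude $L^\infty$-bound $2R$) inside the Gaussian maximal inequality, coupled with a one-step peeling against the localization radius implicit in the Young inequality.
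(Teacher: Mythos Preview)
Your overall architecture matches the paper's: start from the basic ERM inequality, reduce to the cross term $\frac{2\sigma}{n}E\langle\xi,\hat f_n-g\rangle$, discretize $\mathcal F_n$ by a $\delta$-cover, split off the $\delta$-residual, and then use Young/AM--GM to absorb a quadratic piece back into the left-hand side. The gap is exactly where you flag it, in item (ii), and your proposed fix does not close it.

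As you wrote it, you first apply the Gaussian maximal inequality to obtain
\[
E_\xi\Bigl[\max_j\bigl|\tfrac1n\textstyle\sum_i\xi_i(h_j-f^*)(x_i)\bigr|\Bigr]\ \lesssim\ \max_j\|h_j-f^*\|_n\,\sqrt{(\log N+1)/n},
\]
and only afterwards apply Young's inequality. But the factor $\max_j\|h_j-f^*\|_n$ is a maximum over \emph{all} cover elements, not the single element $h_J$ near $\hat f_n$; it is of order $R(g,\mathcal F_n)$ in general and cannot be related to $\|\hat f_n-g\|_n$ by the triangle inequality you wrote, hence cannot be absorbed. A vague ``one-step peeling'' does not remove this: peeling would localize $\hat f_n$, not the worst cover element.

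The paper resolves this by reversing the order: self-normalize \emph{before} taking the maximum. With $\Delta=(\hat f_n(x_i)-g(x_i))_i$ and $\Delta^p$ the nearest element $(f_j(x_i)-g(x_i))_i$ in the induced cover, write
\[
E\langle\xi,\Delta^p\rangle \;\le\; \sqrt{E\|\Delta^p\|_2^2}\;\sqrt{E\bigl\langle\xi,\|\Delta^p\|_2^{-1}\Delta^p\bigr\rangle^{2}}.
\]
The first factor is within $O(\sqrt n\,\delta)$ of $\sqrt{E\|\Delta\|_2^2}$ and is therefore absorbable via AM--GM. For the second factor, bound by $\max_j\langle\xi,\Delta_j/\|\Delta_j\|_2\rangle^2$; conditionally on $(x_i)$ each summand is $\chi^2_1$ regardless of $\|h_j-f^*\|$, so a moment-generating-function argument gives $E[\max_j\langle\xi,\Delta_j/\|\Delta_j\|_2\rangle^2]\le C(\log N+1)$ with no $R$-dependence. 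Plugging back and applying $2ab\le a^2+b^2$ produces exactly the $100(\sigma+\sigma^2)(\log N+1)/n$ term. The self-normalization (equivalently: apply Young's pointwise to $\langle\xi,h_J-f^*\rangle$ \emph{before} passing to the max, producing the scale-free ratio $\langle\xi,h_J-f^*\rangle^2/\|h_J-f^*\|_n^2$) is the missing idea.
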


	The second lemma relates the expected empirical risk to the expected prediction error.
	\begin{lemma}
	\label{lem:population_and_empirical}
Let $\mathbb{X}$, $P$, $g$, $n$, $\sigma$, $( x_i, y_i )_{i = 1}^n$, $A,\kappa$, $\mathcal{F}_n$, and $\hat{f}_n$ be defined as in  Theorem~\ref{thm:expected_L2_error_general} and assume that (\ref{eq:lsr_approximation_general}) and (\ref{eq:numerical_assumption_general}) hold. For all $\Qradius \in ( 0,1\slash 2 )$, we have		
		\begin{equation*}
		\begin{aligned}
			&E ( \nleft\| \hat{f}_n - g  \nright\|^2_{L^2 ( P )}  ) \\
			&\leq  8 E \biggl( \frac{1}{n} \sum_{i = 1}^n ( \hat{f}_n (x_i) - g ( x_i ) )^2 \biggr) +  64 \biggl( (\Qlinfty ( g, \mathcal{F}_n ))^2 \frac{\log (N ( \Qradius, \mathcal{F}_n, {L^\infty (\mathbb{X}  )} ))}{n} +  \Qradius^2 \biggr),
		\end{aligned}
		\end{equation*}
		where  $\Qlinfty ( g, \mathcal{F}_n ):  =  \max \{ \nleft\| g \nright\|_{L^\infty ( \mathbb{X} )}, \sup_{f \in \mathcal{F}_n} \nleft\| f \nright\|_{L^\infty ( \mathbb{X}  )}   \}$.

		\begin{proof}
			See Appendix~\ref{sub:convergence_of_the_empirical_process}.
		\end{proof}
	\end{lemma}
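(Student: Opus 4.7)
The plan is to control the empirical process $\sup_{f \in \mathcal{F}_n}\bigl(\|f - g\|_{L^2(P)}^2 - \alpha \cdot \tfrac{1}{n}\sum_{i=1}^n (f(x_i) - g(x_i))^2\bigr)$ for some $\alpha > 1$ via Bernstein's inequality applied pointwise in $\mathcal{F}_n$, combined with an $L^\infty(\mathbb{X})$-covering argument, and then to specialize the resulting uniform deviation bound to $f = \hat{f}_n$. Working with an $L^\infty$-cover is natural since the target bound features $N(\Qradius, \mathcal{F}_n, L^\infty(\mathbb{X}))$.

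First, I would fix a minimal $\Qradius$-covering $\{f_1,\ldots,f_N\} \subseteq \mathcal{F}_n$ in the $L^\infty(\mathbb{X})$-norm and set $R := \Qlinfty(g, \mathcal{F}_n)$. For each $j$, the i.i.d. variables $Y_{j,i} := (f_j(x_i) - g(x_i))^2$ lie in $[0, 4R^2]$, have mean $\|f_j - g\|_{L^2(P)}^2$, and variance at most $4 R^2 \|f_j - g\|_{L^2(P)}^2$. Bernstein's inequality, a union bound over $j$, and the AM-GM absorption $\sqrt{a b} \leq a/4 + b$ applied to the square-root term then give, uniformly in $j$ and with probability at least $1 - e^{-s}$,
\begin{equation*}
\|f_j - g\|_{L^2(P)}^2 \leq 2 \cdot \tfrac{1}{n}\sum_{i=1}^n Y_{j,i} + C R^2 (\log N + s)/n,
\end{equation*}
for an absolute constant $C$.

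The second step transfers this bound from the cover to a general $f \in \mathcal{F}_n$: choose $f_{j(f)}$ with $\|f - f_{j(f)}\|_{L^\infty(\mathbb{X})} \leq \Qradius$ and expand $(f - g)^2 - (f_{j(f)} - g)^2 = (f - f_{j(f)})(f + f_{j(f)} - 2g)$. Using Cauchy-Schwarz together with $\|f + f_{j(f)} - 2g\|_{L^2(P)} \leq 2\|f - g\|_{L^2(P)} + \Qradius$, followed by the AM-GM split $2\Qradius \|f - g\|_{L^2(P)} \leq \tfrac{1}{4}\|f - g\|_{L^2(P)}^2 + 4\Qradius^2$, yields approximation bounds in both the population and the empirical loss that feature only $\Qradius^2$ and a fraction of $\|f - g\|_{L^2(P)}^2$. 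Specializing to $f = \hat{f}_n$ and rearranging to absorb this fraction into the left-hand side produces a high-probability version of the target bound; integrating the resulting exponential tail in $s$ via $E[Z^+] = \int_0^\infty \P(Z > t)\,dt$ then gives the claimed bound in expectation.

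The main obstacle is the bookkeeping in the cover-approximation step: a direct Cauchy-Schwarz estimate on the pointwise identity above produces cross-terms of order $R\Qradius$, and obtaining the $\Qradius^2$-behavior stated in the lemma requires the two-sided AM-GM absorption carried out in both the population and the empirical loss, performed in tandem with the rearrangement that absorbs a fraction of $\|\hat{f}_n - g\|_{L^2(P)}^2$ into the left-hand side. Matching the exact constants $8$ and $64$ in the statement requires careful constant tracking but introduces no essentially new ideas beyond those already standard in the nonparametric regression literature.
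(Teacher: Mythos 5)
Your proposal is correct and would yield the stated bound (up to absolute constants), but it takes a genuinely different route than the paper's proof. The paper (Appendix~\ref{sub:convergence_of_the_empirical_process}) also begins with a one-step discretization by an $L^\infty$-cover, but handles the transfer from $\hat f_n$ to its cover representative $f_{j(\hat f_n)}$ with the elementary inequalities $(a+b)^2 \leq 2a^2+2b^2$ and $a^2 \geq \tfrac12(a+b)^2 - b^2$ applied to the population and empirical losses, respectively. This gives the clean fixed-constant reduction \eqref{eq:hatfg_reduce_0}--\eqref{eq:hatfg_reduce_4} without any rearrangement that absorbs a fraction of $\|\hat f_n - g\|^2_{L^2(P)}$ into the left-hand side. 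In contrast your polynomial factorization $(f-g)^2 - (f_{j(f)}-g)^2 = (f-f_{j(f)})(f+f_{j(f)}-2g)$ followed by Cauchy--Schwarz produces cross-terms in both the population norm $\|f-g\|_{L^2(P)}$ and the empirical norm $\|f-g\|_n$, and both must be absorbed simultaneously --- into the LHS and the empirical risk on the RHS, respectively --- which is the bookkeeping you flagged as the main obstacle. Both work, the paper's route is simply less delicate.

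The second, more substantive difference is in how the concentration estimate is obtained. You propose Bernstein's inequality plus a union bound in probability, the AM-GM absorption $\sqrt{ab}\leq a/4+b$ to eliminate the square-root term (exploiting that the variance of $(f_j-g)^2$ is bounded by a constant times its mean), and finally integration of the exponential tail over $s$. The paper instead proves the expectation bound directly: Lemma~\ref{lem:expectation_bound_Z} bounds $E\bigl(\sup_j(\mu_j - \tfrac{2}{V}\sum_i Z_{j,i})\bigr)$ via the Jensen-and-MGF trick $E(S) \leq \tfrac{1}{t}\ln E(e^{tS})$ combined with a Bernstein-type MGF estimate for each term, so no tail integration is needed. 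Both exploit the same self-bounding variance structure (variance $\lesssim$ mean) that is what upgrades the naive $\sqrt{\log U / V}$ sub-Gaussian-maxima rate to $\log U / V$; your tail-integration route yields $(\log N + 1)/n$ in place of $\log N/n$, which is harmless here since the theorem that consumes this lemma already carries a $+1$. The paper's packaging is slightly tighter and more self-contained; yours is the more classical formulation commonly seen in the regression literature.
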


	Putting Lemmata~\ref{lem:convergence_empirical_error} and \ref{lem:population_and_empirical} together, we can now finalize the proof of 
  Theorem~\ref{thm:expected_L2_error_general} as follows:
		\begin{align}
		&E ( \nleft\| \hat{f}_n - g  \nright\|_{L^2 ( P )}  )\label{eq:recoverin_g_1} \\
		&\leq\,  8 E \biggl( \frac{1}{n} \sum_{i = 1}^n ( \hat{f}_n (x_i) - g ( x_i ) )^2 \biggr) +  64 \biggl( (\Qlinfty ( g, \mathcal{F}_n ))^2 \frac{\log (N ( \Qradius, \mathcal{F}_n, {L^\infty (\mathbb{X}  )} ))}{n} +  \Qradius^2 \biggr)\label{eq:recoverin_g_2}\\
		& \leq \, 16( \Qapproximation^2 + \Qnumeric) + 64 \sigma \Qradius + 800 ( \sigma + \sigma^2  ) \frac{\log  (N ( \Qradius, \mathcal{F}_n, {L^\infty ( \mathbb{X}  ) }  ))  + 1}{n}  \nonumber\\
		&\,\,+ 64 (\Qlinfty ( g, \mathcal{F}_n ))^2 \frac{\log (N ( \Qradius, \mathcal{F}_n, {L^\infty ( \mathbb{X}  )} ))}{n} + 64 \Qradius^2 \label{eq:recoverin_g_4}\\
		&\leq\, 16( \Qapproximation^2 + \Qnumeric) + 64 (\sigma +  \Qradius ) \Qradius + 800 ( \sigma + \sigma^2 + (\Qlinfty ( g, \mathcal{F}_n ))^2 ) \frac{\log  (N ( \Qradius,\, \mathcal{F}_n,\, {L^\infty ( \mathbb{X}  ) }  ))  + 1}{n}.  \label{eq:recoverin_g_5}
	\end{align}

	\subsection{Proof of Lemma~\ref{lem:convergence_empirical_error}}
	\label{sub:convergence_of_the_empirical_risk}
		Arbitrarily fix $f \in \mathcal{F}_n$. By assumption \eqref{eq:numerical_assumption_general}, we have 
		\begin{equation}
		\label{eq:original_basic_inequality}
			\frac{1}{n} \sum_{i = 1}^n ( \hat{f}_n (x_i) - y_i )^2 \leq  \frac{1}{n}  \sum_{i = 1}^n ( f (x_i) - y_i )^2 + \Qnumeric, \quad \text{a.s.}
		\end{equation}
		Substituting $y_i = g(x_i) + \sigma \xi_i$, $i = 1,\dots,n$, into \eqref{eq:original_basic_inequality}, yields 
		\begin{equation}
			\frac{1}{n} \sum_{i = 1}^n ( \hat{f}_n (x_i) -  g(x_i) - \sigma \xi_i  )^2 \leq  \frac{1}{n} \sum_{i = 1}^n ( f (x_i) -  g(x_i) - \sigma \xi_i  )^2 + \Qnumeric, \quad \text{a.s.}
		\end{equation}
		which results in 
		\begin{equation}
		\label{eq:basic_transformed_pointwise}
			\frac{1}{n} \sum_{i = 1}^n ( \hat{f}_n (x_i) - g ( x_i ) )^2 \leq \frac{1}{n} \sum_{i = 1}^n \biggl (  ( f (x_i) - g ( x_i ) )^2  +  \frac{2\sigma}{n} \xi_i ( \hat{f}_n (x_i)  - f ( x_i ) ) \biggr) + \Qnumeric, \quad \text{a.s.}
		\end{equation}
		Taking expectations in  \eqref{eq:basic_transformed_pointwise} yields
		\begin{align}
			&E \biggl( \frac{1}{n} \sum_{i = 1}^n ( \hat{f}_n (x_i) - g ( x_i ) )^2 \biggr)\label{eq:basic_inequality_implication_1}\\
			&\leq\, E \biggl( \frac{1}{n} \sum_{i = 1}^n ( f (x_i) - g ( x_i ) )^2 \biggr)  + \frac{2\sigma}{n} E \biggl(  \sum_{i = 1}^n \xi_i ( \hat{f}_n (x_i)  - f ( x_i ) ) \biggr) + \Qnumeric\label{eq:basic_inequality_implication_2}\\
			&=\, \nleft\| f - g \nright\|^2_{L^2 ( P  )}  + \frac{2\sigma}{n} E \biggl( \ \sum_{i = 1}^n \xi_i ( \hat{f}_n (x_i)  - f ( x_i ) ) \biggr) + \Qnumeric \label{eq:basic_inequality_implication_3} \\
			&= \,  \nleft\| f - g \nright\|^2_{L^2 ( P  )}  + \frac{2\sigma}{n} E \biggl( \sum_{i = 1}^n \xi_i ( \hat{f}_n (x_i)  - g ( x_i ) ) \biggr) + \Qnumeric \label{eq:basic_inequality_implication_4} 
		\end{align}
		where in \eqref{eq:basic_inequality_implication_3} we used that, for $i = 1,\dots, n$, $x_i$ is a random variable of distribution $P$ and hence $E (( f (x_i) - g ( x_i ) )^2 ) = \nleft\| f - g \nright\|^2_{L^2 ( P )}$, and \eqref{eq:basic_inequality_implication_4} follows from adding $ \frac{2 \sigma}{n} E(\sum_{i = 1 }^n \xi_i ( f(x_i) - g(x_i) ) )  = 0$
  to \eqref{eq:basic_inequality_implication_3}.
  As the choice of $f \in \mathcal{F}_n$ was arbitrary, \eqref{eq:basic_inequality_implication_1}-\eqref{eq:basic_inequality_implication_4} holds for all $f \in \mathcal{F}_n$, and therefore
		\begin{equation}
		\label{eq:basic_updated_0}
			E \biggl( \frac{1}{n} \sum_{i = 1}^n ( \hat{f}_n (x_i) - g ( x_i ) )^2 \biggr)  \leq \inf_{f \in \mathcal{F}_n} \nleft\| f - g \nright\|^2_{L^2 (P)}  +  \frac{2\sigma}{n}  E \biggl(\sum_{i = 1}^n \xi_i ( \hat{f}_n (x_i)  - g ( x_i ) ) \biggr) + \Qnumeric.
		\end{equation}
		With $\inf_{f \in \mathcal{F}_n} \nleft\| g - f \nright\|_{L^2 ( P )} \leq A$, owing to assumption \eqref{eq:lsr_approximation_general}, and setting 
		\begin{align}
			\Delta =&\, (  \hat{f}_n (x_i)  - g ( x_i ) )_{i = 1}^n, \label{eq:definition_Delta}\\
			\xi =&\, ( \xi_i )_{i =1}^n, \label{eq:definition_xi} 
		\end{align}
		it follows from \eqref{eq:basic_updated_0} that
		\begin{equation}
		\label{eq:new_basic_inequality}
			E \biggl(\frac{ \nleft\| \Delta \nright\|_2^2 }{n}\biggr) \leq \Qapproximation^2 + \Qnumeric + \frac{2 \sigma}{n} E ( \langle \xi, \Delta \rangle ).
		\end{equation}
		We next note that the quantity $E ( \langle \xi, \Delta \rangle )$ can be upper-bounded by the expected supremum of a Gaussian process according to
		\begin{equation}
		\label{eq:expected_supremum}
			E ( \langle \xi, \Delta \rangle ) = E (  \langle \xi, (  \hat{f}_n (x_i)  - g ( x_i ) )_{i = 1}^n \rangle  ) \leq E \biggl( \sup_{f\in \mathcal{F}_n} \langle \xi, (  f (x_i)  - g ( x_i ) )_{i = 1}^n \rangle  \biggr).
		\end{equation}
		The right-hand-side of \eqref{eq:expected_supremum} can be further upper-bounded either in terms of the covering number of $\mathcal{F}_n$, through one-step discretization, or by using the more advanced Dudley entropy integral bound, see e.g. \cite[Section 5.3]{wainwright2019high}.  The one-step discretization approach turns out to suffice for the purposes of this proof. Specifically, let $\{ f_j \}_{j = 1}^{N ( \Qradius, \, \mathcal{F}_n, \,{L^\infty ( \mathbb{X}  )}  )} $ be a $\Qradius$-covering of $\mathcal{F}_n$, with respect to the $L^\infty ( \mathbb{X}  )$-norm, and define the associated set of random vectors $\{\Delta_j\}_{j = 1}^{N ( \Qradius, \, \mathcal{F}_n, \, {L^\infty ( \mathbb{X}  )}  )}$ according to
		\begin{equation*}
			\Delta_j = (  f_j (x_i)  - g ( x_i ) )_{i = 1}^n.
		\end{equation*}
            Further, define the random vector $\Delta^p$ as the element in $\{ \Delta_j \}_{j =1}^{N ( \Qradius,\, \mathcal{F}_n,\, {L^\infty ( \mathbb{X}  )}  )}$ that is closest to  $\Delta$, in the sense of
		\begin{equation}
		\label{eq:definition_Delta_p}
			\Delta^p = \argmin_{\Delta_j: j = 1,\dots, N ( \Qradius, \, \mathcal{F}_n, \, {L^\infty ( \mathbb{X}  )}  )} \nleft\| \Delta - \Delta_j \nright\|_2,
		\end{equation}
		and note that, a.s., 
		\begin{align}
			\nleft\|  \Delta^p - \Delta \nright\|_2 =&\,\min_{j = 1,\dots, N ( \Qradius,\, \mathcal{F}_n, {L^\infty ( \mathbb{X}  )}  )} \nleft\| \Delta - \Delta_j \nright\|_2 \label{eq:projection_assumption_Delta_1}\\
			= &\, \min_{j = 1,\dots, N ( \Qradius,\, \mathcal{F}_n, {L^\infty ( \mathbb{X}  )}  )} \biggl( \sum_{i = 1}^n ( \hat{f}_n ( x_i ) - f_j ( x_i ) )^2 \biggr)^{1\slash 2}\label{eq:projection_assumption_Delta_2}\\
			\leq &\, \min_{j = 1,\dots, N ( \Qradius,\, \mathcal{F}_n, {L^\infty ( \mathbb{X}  )}  )} n^{1\slash 2} \nleft\| \hat{f}_n - f_j \nright\|_{L^\infty ( \mathbb{X}  )} \label{eq:projection_assumption_Delta_3}\\
			\leq &\, n^{1\slash 2} \Qradius,\label{eq:projection_assumption_Delta_4}
		\end{align}
		where
  in \eqref{eq:projection_assumption_Delta_4} we used the fact that $\{ f_j \}_{j = 1}^{N ( \Qradius,\, \mathcal{F}_n, {L^\infty ( \mathbb{X}  )}  )} $ is a $\Qradius$-covering of $\mathcal{F}_n$ with respect to the $L^\infty ( \mathbb{X}  )$-norm.
		We now get
			\begin{align}
				E ( \langle \xi, \Delta \rangle ) =&\, E ( \langle \xi, \Delta - \Delta^p \rangle ) + E ( \langle \xi, \Delta^p \rangle ) \label{eq:decompose_xi_gamma_0}\\
				= &\, E ( \langle \xi, \Delta - \Delta^p \rangle ) +   E ( \nleft\| \Delta^p \nright\|_2  \langle \xi,  \nleft\| \Delta^p \nright\|_2^{-1} \Delta^p \rangle ) \\
				\leq &\, E ( \langle \xi, \Delta - \Delta^p \rangle ) +  \sqrt{ E ( \nleft\| \Delta^p \nright\|_2^2  )} \sqrt{E ( \langle \xi, \nleft\| \Delta^p \nright\|_2^{-1} \Delta^p \rangle^2 )}, \label{eq:decompose_xi_gamma}
			\end{align}
			where \eqref{eq:decompose_xi_gamma} is thanks to the Cauchy-Schwarz inequality.   
			We shall next upper-bound the terms $E ( \langle \xi, \Delta - \Delta^p \rangle )$, $ E ( \nleft\| \Delta^p \nright\|_2^2  )$, and  $E ( \langle \xi, \nleft\| \Delta^p \nright\|_2^{-1} \Delta^p \rangle^2 )$, individually. First, note that 
			\begin{align}
				E ( \langle \xi , \Delta - \Delta^p \rangle ) \leq& \sqrt{E ( \langle \xi, \xi \rangle  )} \sqrt{E ( \langle \Delta - \Delta^p, \Delta - \Delta^p \rangle)} \label{eq:xi_residual_1} \\
				\leq &\,  \, n \Qradius, \label{eq:xi_residual_2}
			\end{align}
			where
   \eqref{eq:xi_residual_2} follows from $E ( \langle \xi, \xi \rangle  ) = E ( \sum_{i =1}^n \xi_i^2 ) = n$ and  $E ( \Delta - \Delta^p, \Delta - \Delta^p \rangle) = E ( \nleft\|\Delta - \Delta^p  \nright\|_2^2  ) \leq n \Qradius^2$ by  \eqref{eq:projection_assumption_Delta_1}-\eqref{eq:projection_assumption_Delta_4}. We proceed to upper-bound $E ( \nleft\| \Delta^p \nright\|_2^2  )$ as follows
			\begin{align}
				 E ( \nleft\| \Delta^p \nright\|_2^2  ) \leq&\,   E (  ( \nleft\| \Delta \nright\|_2 + \nleft\| \Delta^p - \Delta \nright\|_2    )^2)  \label{eq:pgamma_1}\\
				\leq &\,  E (  ( \nleft\| \Delta \nright\|_2 + n^{1\slash 2} \Qradius    )^2  ) \label{eq:pgamma_2}\\
				\leq   &\, E (2 \nleft\| \Delta  \nright\|^2_2 + 2 ( n^{1\slash 2} \Qradius)^2   ) \label{eq:pgamma_3}\\
				\leq &\, 2 E (\nleft\| \Delta  \nright\|^2_2) + 4 \sqrt{E (\nleft\| \Delta  \nright\|^2_2)} ( n^{1\slash 2} \Qradius )+ 2 ( n^{1\slash 2} \Qradius )^2   \label{eq:pgamma_4}\\
				=& \, 2 \Bigl( \sqrt{E (\nleft\| \Delta  \nright\|^2_2)} + n^{1\slash 2} \Qradius  \Bigr)^2, \label{eq:pgamma_5}
			\end{align}
			where in \eqref{eq:pgamma_2} we again used \eqref{eq:projection_assumption_Delta_1}-\eqref{eq:projection_assumption_Delta_4} 
   and \eqref{eq:pgamma_3} follows from $( a+b )^2 \leq 2 a ^2 + 2 b^2 $, with $a = \nleft\| \Delta \nright\|_2 $ and $b = n^{1\slash 2} \Qradius$. It remains to upper-bound  the term $E ( \langle \xi, \nleft\| \Delta^p \nright\|_2^{-1} \Delta^p \rangle^2 )$. To this end, we first note that
			\begin{equation}
			\label{eq:the_third_term}
				 E ( \langle \xi, \nleft\| \Delta^p \nright\|_2^{-1} \Delta^p \rangle^2 ) \leq E \biggl(  \max_{j = 1,\dots, N ( \Qradius,\, \mathcal{F}_n,\, {L^\infty ( \mathbb{X}  )}  )} \langle \xi, \nleft\| \Delta_j \nright\|_2^{-1} \Delta_j \rangle^2 \biggr).
			\end{equation}
			The right-hand-side of \eqref{eq:the_third_term} can now be further upper-bounded 
   through the moment generating function method. Specifically, let $t \in (0,1\slash 2)$ be a parameter to be determined later. We have  
			\begin{align}
				&E \biggl(\max_{j = 1,\dots, N ( \Qradius,\, \mathcal{F}_n,\, {L^\infty ( \mathbb{X}  )}  )} \langle \xi, \nleft\| \Delta_j \nright\|_2^{-1} \Delta_j \rangle^2\biggr)\label{eq:supzi_0}\\
				&=\, \frac{1}{t} \ln \Bigl( \exp\Bigl( E  \Bigl( t \max_{j = 1,\dots,  N ( \Qradius,\, \mathcal{F}_n,\, {L^\infty ( \mathbb{X}  )}  ) } \langle \xi, \nleft\| \Delta_j \nright\|_2^{-1} \Delta_j \rangle^2\Bigr) \Bigr) \Bigr) \label{eq:supzi_1}\\
				&\leq \, \frac{1}{t} \ln \Bigl( E \Bigl( \exp \Bigl( t \max_{j = 1,\dots,  N ( \Qradius,\, \mathcal{F}_n,\, {L^\infty ( \mathbb{X}  )}  ) } \langle \xi, \nleft\| \Delta_j \nright\|_2^{-1} \Delta_j \rangle^2 \Bigr) \Bigr) \Bigr) \label{eq:supzi_2}\\
				&= \, \frac{1}{t} \ln \Bigl( E \Bigl( \max_{j = 1,\dots,  N ( \Qradius,\, \mathcal{F}_n,\, {L^\infty ( \mathbb{X}  )}  ) }  \exp ( t  \langle \xi, \nleft\| \Delta_j \nright\|_2^{-1} \Delta_j \rangle^2 ) \Bigr)  \Bigr)\label{eq:supzi_3}\\
				&\leq \, \frac{1}{t} \ln \biggl( E \biggl( \sum_{j = 1,\dots,  N ( \Qradius,\, \mathcal{F}_n,\, {L^\infty ( \mathbb{X}  )}  ) }  \exp ( t  \langle \xi, \nleft\| \Delta_j \nright\|_2^{-1} \Delta_j \rangle^2 ) \biggr) \biggr) \label{eq:supzi_4}\\
				&= \, \frac{1}{t} \ln \biggl( \sum_{j = 1,\dots,  N ( \Qradius,\, \mathcal{F}_n,\, {L^\infty ( \mathbb{X}  )}  ) } E (   \exp ( t  \langle \xi, \nleft\| \Delta_j \nright\|_2^{-1} \Delta_j \rangle^2 ) ) \biggr), \label{eq:supzi_5}
			\end{align}
			where in \eqref{eq:supzi_2} we applied the Jensen inequality. For $j = 1,\dots, N ( \Qradius,\, \mathcal{F}_n,\, {L^\infty ( \mathbb{X}  )}  )$, conditioned on $( x_i )_{i =1}^n$,  $\nleft\| \Delta_j \nright\|_2^{-1} \Delta_j  = \frac{(  f_j (x_i)  - g ( x_i ) )_{i = 1}^n  }{\nleft\| (  f_j (x_i)  - g ( x_i ) )_{i = 1}^n \nright\|_2}$ is a deterministic vector of $\| \cdot \|_{2}$-norm equal to $1$ and hence 
   $\langle \xi, \nleft\| \Delta_j \nright\|_2^{-1} \Delta_j \rangle^2$ is a $\chi^2$ random variable with $1$ degree of freedom. We therefore get
   \begin{equation}
			\label{eq:conditional_mgf}
				E  ( \exp ( t  \langle \xi, \nleft\| \Delta_j \nright\|_2^{-1} \Delta_j \rangle^2 )  | ( x_i )_{i =1}^n) = ( 1 -  2t )^{-1\slash 2},  \quad \text{a.s.}
			\end{equation}
By the law of total expectation, it finally follows that
			\begin{equation*}
				E  ( \exp ( t  \langle \xi, \nleft\| \Delta_j \nright\|_2^{-1} \Delta_j \rangle^2 )  ) = ( 1 -  2t )^{-1 \slash 2}, 
			\end{equation*}
			for $j = 1,\dots, N ( \Qradius,\, \mathcal{F}_n,\, {L^\infty ( \mathbb{X}  )}  )$, which together with \eqref{eq:supzi_0}-\eqref{eq:supzi_5} yields 
			\begin{align}
				E \biggl(\max_{j = 1,\dots, N ( \Qradius,\, \mathcal{F}_n,\, {L^\infty ( \mathbb{X}  )}  )} \langle \xi, \nleft\| \Delta_j \nright\|_2^{-1} \Delta_j \rangle^2\biggr) \leq &\, \frac{1}{t} \ln  (  N ( \Qradius,\, \mathcal{F}_n,\, {L^\infty ( \mathbb{X}  )}  ) ( 1 - 2t )^{-1 \slash 2} ) \label{eq:E_max_1}\\
				= &\, \frac{\ln  N ( \Qradius,\, \mathcal{F}_n,\, {L^\infty ( \mathbb{X}  )}  )}{t} - \frac{\ln ( 1- 2t )}{2 t}.\label{eq:E_max_2}
			\end{align}
			Putting \eqref{eq:the_third_term}  and \eqref{eq:E_max_1}-\eqref{eq:E_max_2}  with $t = \frac{1}{10}$ together, we obtain
			\begin{equation}
			\label{eq:xipgamma_product}
			 	E ( \langle \xi, \nleft\| \Delta^p \nright\|_2^{-1} \Delta^p \rangle^2 ) \leq  \frac{\ln  N ( \Qradius,\, \mathcal{F}_n,\, {L^\infty ( \mathbb{X}  )}  )}{1 \slash 10} - \frac{\ln ( 4\slash 5 )}{1\slash 5} \leq 10 \log(  N ( \Qradius,\, \mathcal{F}_n,\, {L^\infty ( \mathbb{X}  )}  )) + 2.
			\end{equation} 
			Next, we have
			\begin{align}
				&E \biggl(\frac{ \nleft\| \Delta \nright\|_2^2 }{n}\biggr)\label{eq:Delta_square_1}\\
				&\leq \, \Qapproximation^2 + \Qnumeric + \frac{2 \sigma}{n} E ( \langle \xi, \Delta \rangle )\label{eq:Delta_square_2}\\
				&\leq \, \Qapproximation^2 + \Qnumeric + \frac{2 \sigma}{n} \biggl( E ( \langle \xi, \Delta - \Delta^p \rangle ) +  \sqrt{ E ( \nleft\| \Delta^p \nright\|_2^2  )} \sqrt{E ( \langle \xi, \nleft\| \Delta^p \nright\|_2^{-1} \Delta^p \rangle^2 )} \biggr)\label{eq:Delta_square_21}\\
				&\leq \, \Qapproximation^2 + \Qnumeric + \frac{2 \sigma}{n} \biggl( n \Qradius + \sqrt{2}  \biggl(  \sqrt{E (\nleft\| \Delta  \nright\|^2_2)} + n^{1\slash 2} \Qradius \biggr) \sqrt{{ 10} \log  (N ( \Qradius,\, \mathcal{F}_n,\, {L^\infty ( \mathbb{X}  )}  )) + 2}\biggr)\label{eq:Delta_square_22}\\
				&\leq \, \Qapproximation^2 + \Qnumeric + \frac{2\sigma}{n} \biggl( n \Qradius + {5}  \biggl(  \sqrt{E (\nleft\| \Delta  \nright\|^2_2)} + n^{1\slash 2} \Qradius \biggr) \sqrt{\log  (N ( \Qradius,\, \mathcal{F}_n,\, {L^\infty ( \mathbb{X}  )}  )) + 1} \biggr ) \\
				&= \, \Qapproximation^2 + \Qnumeric + 2 \sigma\Qradius   + {10} \sigma \sqrt{\frac{E (\nleft\| \Delta  \nright\|^2_2)}{n}}\sqrt{\frac{\log  (N ( \Qradius,\, \mathcal{F}_n,\, {L^\infty ( \mathbb{X}  ) }  ))  + 1}{n}} \nonumber \nonumber \\
				&\,\, + {10}\sigma  \Qradius  \sqrt{\frac{\log  (N ( \Qradius,\, \mathcal{F}_n,\, {L^\infty ( \mathbb{X}  ) }  ))  + 1}{n}}, \label{eq:Delta_square_5}
			\end{align}
			where \eqref{eq:Delta_square_2} is \eqref{eq:new_basic_inequality}, in \eqref{eq:Delta_square_21} we used \eqref{eq:decompose_xi_gamma_0}-\eqref{eq:decompose_xi_gamma}, and \eqref{eq:Delta_square_22} follows from \eqref{eq:xi_residual_1}-\eqref{eq:xi_residual_2}, \eqref{eq:pgamma_1}-\eqref{eq:pgamma_5}, and \eqref{eq:xipgamma_product}.
			Regarding the last two terms in \eqref{eq:Delta_square_5},  we note that first
			\begin{equation}
			\label{eq:ab_1}
			\begin{aligned}
				&\,{10} \sigma \sqrt{\frac{E (\nleft\| \Delta  \nright\|^2_2)}{n}}\sqrt{\frac{\log  (N ( \Qradius,\, \mathcal{F}_n,\, {L^\infty ( \mathbb{X}  ) }  ))  + 1}{n}}\\
				&\leq\, \frac{E (\nleft\| \Delta  \nright\|^2_2)}{2n} + {50}  \sigma^2 \frac{\log  (N ( \Qradius,\, \mathcal{F}_n,\, {L^\infty ( \mathbb{X}  ) }  ))  + 1}{n},
			\end{aligned}
			\end{equation}
			where we  used $ab \leq \frac{a^2 + b^2}{2} $, with $a = \sqrt{\frac{E (\nleft\| \Delta  \nright\|^2_2)}{n}}$ and $b = {10}\sigma \sqrt{\frac{\log  (N ( \Qradius,\, \mathcal{F}_n,\, {L^\infty ( \mathbb{X}  ) }  ))  + 1}{n}}$, and second
			\begin{equation}
			\label{eq:ab_2}
				{10}\sigma  \Qradius  \sqrt{\frac{\log  (N ( \Qradius,\, \mathcal{F}_n,\, {L^\infty ( \mathbb{X}  ) }  ))  + 1}{n}} \leq 2 \sigma \Qradius^2 + {\frac{25}{2}}\sigma \frac{\log  (N ( \Qradius,\, \mathcal{F}_n,\, {L^\infty ( \mathbb{X}  ) }  ))  + 1}{n},
			\end{equation}
			which again is by $ab \leq \frac{a^2 + b^2}{2} $, here with $a = 2\sigma^{1\slash 2} \Qradius$ and $b = {5} \sigma^{1\slash 2} \sqrt{\frac{\log  (N ( \Qradius,\, \mathcal{F}_n,\, {L^\infty ( \mathbb{X}  ) }  ))  + 1}{n}}$. Substituting \eqref{eq:ab_1} and \eqref{eq:ab_2} into \eqref{eq:Delta_square_1}-\eqref{eq:Delta_square_5}, rearranging terms, and using $\Qradius^2 \leq \Qradius$,
   yields
			\begin{align}
				&E \biggl(\frac{ \nleft\| \Delta \nright\|_2^2 }{n}\biggr)\\
				&= \, 2 \Qapproximation^2 + 2 \Qnumeric + 4  \sigma \Qradius  + 4 \sigma \Qradius^2 + {100}  \sigma^2 \frac{\log  (N ( \Qradius,\, \mathcal{F}_n,\, {L^\infty ( \mathbb{X}  ) }  ))  + 1}{n}   \\
				&\,\,+ {25}\sigma \frac{\log  (N ( \Qradius,\, \mathcal{F}_n,\, {L^\infty ( \mathbb{X}  ) }  ))  + 1}{n} \nonumber\\
				&\leq \, 2( \Qapproximation^2 + \Qnumeric) + 8 \sigma \Qradius + {100} ( \sigma + \sigma^2 ) \frac{\log  (N ( \Qradius,\, \mathcal{F}_n,\, {L^\infty ( \mathbb{X}  ) }  ))  + 1}{n} \label{eq:delta_delta2}.
			\end{align} 
   The proof is concluded upon noting that $E (\frac{ \nleft\| \Delta \nright\|_2^2 }{n}) = E ( \frac{1}{n} \sum_{i = 1}^n ( \hat{f}_n (x_i) - g ( x_i ) )^2 )$.

	\subsection{Proof of Lemma~\ref{lem:population_and_empirical}}
	\label{sub:convergence_of_the_empirical_process}

		We start with one-step discretization \cite[Section 5.3]{wainwright2019high} for $\mathcal{F}_n$. Let $\{ f_j \}_{j = 1}^{N ( \Qradius,\, \mathcal{F}_n,\, {L^\infty ( \mathbb{X}  )}  )} $ be a $\Qradius$-covering of $\mathcal{F}_n$ with respect to the $L^\infty ( \mathbb{X}  )$-norm. Then, for each $f \in \mathcal{F}_n$, there exists an index $j(f) \in \left\{ 1,\dots,  N ( \Qradius,\, \mathcal{F}_n,\, {L^\infty ( \mathbb{X}  )}  )\right\}$ such that
		\begin{equation}
		\label{eq:covering_for_F_empirical_process}
			\nleft\| f - f_{j ( f )} \nright\|_{L^\infty ( \mathbb{X}  )} \leq \Qradius.
		\end{equation}
		Next, we have, a.s., 
		\begin{align}
			\nleft\| \hat{f}_n - g  \nright\|^2_{L^2 ( P )} =&\, \int  ( \hat{f}_n (x) - g ( x ) )^2 d P(x)  \label{eq:hatfg} \\
			 \leq &\, \int  \Bigl( 2 \bigl( \hat{f}_n (x) - f_{j( \hat{f}_n )} ( x )  \bigr)^2 +  2 \bigl( f_{j ( \hat{f}_n )} (x) - g (x)   \bigr)^2 \Bigr) d P(x) \label{eq:hatfg_1}\\
			 = &\,  2  \nleft\| \hat{f}_n - f_{j( \hat{f}_n )} \nright\|_{L^2 ( P )}^2   + 2 \nleft\|  f_{j(\hat{f}_n)} - g   \nright\|_{L^2 ( P )}^2 \\
			 \leq&\, 2 \Qradius^2  + 2 \nleft\|  f_{j(\hat{f}_n)} - g   \nright\|_{L^2 ( P )}^2 \label{eq:hatfg_2},
		\end{align}
		where in \eqref{eq:hatfg_1} we used $( a+b )^2 \leq 2 a^2 + 2b^2$, and \eqref{eq:hatfg_2} follows from \eqref{eq:covering_for_F_empirical_process}. Taking expectations in \eqref{eq:hatfg}-\eqref{eq:hatfg_2}, yields 
		\begin{equation}
		\label{eq:hatfg_expectation_1}
			E ( \nleft\| \hat{f}_n - g  \nright\|_{L^2 ( P )}^2 ) \leq 2 \Qradius^2  + 2 E \bigl( \nleft\|  f_{j(\hat{f}_n)} - g   \nright\|_{L^2 ( P )}^2 \bigr).
		\end{equation}
		Moreover, we have 
		\begin{align}
			&\,E \biggl( \frac{1}{n} \sum_{i = 1}^n ( \hat{f}_n (x_i) - g ( x_i ) )^2 \biggr)\label{eq:hatfgsqaure_0}\\
			&\geq\, E \biggl( \frac{1}{n} \sum_{i = 1}^n \biggl( \frac{1}{2} \bigl( f_{j ( \hat{f}_n )} (x_i) - g ( x_i ) \bigr)^2 - \bigl(f_{ j(\hat{f}_n)} ( x_i ) - \hat{f}_n ( x_i ) \bigr)^2  \biggr) \biggr) \label{eq:hatfgsqaure_1}\\
			&\geq\, \frac{1}{2} E \biggl(  \frac{1}{n} \sum_{i = 1}^n (  f_{j ( \hat{f}_n )} (x_i) - g ( x_i ) )^2   \biggr) - \Qradius^2, \label{eq:hatfgsqaure_2}
		\end{align}
		where \eqref{eq:hatfgsqaure_1} follows from $ a^2 \geq \frac{1}{2} ( a+b )^2 - b^2$, with $a =  \hat{f}_n (x_i) - g ( x_i ) $ and $b = f_{ j(\hat{f}_n)} ( x_i ) - \hat{f}_n ( x_i ) $, for $i = 1,\dots, n$, and in \eqref{eq:hatfgsqaure_2} we used \eqref{eq:covering_for_F_empirical_process}.
  We next note that
		\begin{align}
			&\,E ( \nleft\| \hat{f}_n - g  \nright\|_{L^2 ( P )}^2  )  - 8 E \biggl( \frac{1}{n} \sum_{i = 1}^n ( \hat{f}_n (x_i) - g ( x_i ) )^2 \biggr) \label{eq:hatfg_reduce_0} \\
			&\leq\,   2 \Qradius^2 + 2  E ( \nleft\|  f_{j(\hat{f}_n)} - g   \nright\|_{L^2 ( P )}^2)  - 4 E \biggl(  \frac{1}{n} \sum_{i = 1}^n ( f_{j ( \hat{f}_n )} (x_i) - g ( x_i ) )^2  \biggr)    + 8 \Qradius^2 \label{eq:hatfg_reduce_1} \\
			&=\,  2 E  \biggl(  \nleft\|  f_{j(\hat{f}_n)} - g   \nright\|_{L^2 ( P )}^2 -  \frac{2}{n} \sum_{i = 1}^n ( f_{j ( \hat{f}_n )} (x_i) - g ( x_i ) )^2  \biggr) + 10 \Qradius^2 \label{eq:hatfg_reduce_3} \\
			&\leq \, 2 E  \biggl( \sup_{j = 1,\dots,N ( \Qradius,\, \mathcal{F}_n, \, {L^\infty ( \mathbb{X}  )}  ) } \biggl( \nleft\|  f_{j} - g   \nright\|_{L^2 ( P )}^2-   \frac{2}{n} \sum_{i = 1}^n ( f_{j} (x_i) - g ( x_i ) )^2   \biggr)\biggr) + 10 \Qradius^2, \label{eq:hatfg_reduce_4} 
		\end{align}
		where \eqref{eq:hatfg_reduce_1} follows from  \eqref{eq:hatfg_expectation_1} and \eqref{eq:hatfgsqaure_0}-\eqref{eq:hatfgsqaure_2}.
  To simplify notation, in the following, we let 
		\begin{equation}
		\label{eq:definition_Z}
			Z_{j,i} =  \frac{( f_{j} (x_i) - g ( x_i )  )^2}{4(\Qlinfty ( g, \mathcal{F}_n ))^2}, \quad i = 1,\dots,n, \quad j = 1,\dots,N ( \Qradius, \,\mathcal{F}_n, \,{L^\infty ( \mathbb{X}  )}  ).
		\end{equation}
  Next, note that, for fixed $j$,  $\{ Z_{j,i} \}_{i = 1}^n$ are i.i.d nonnegative random variables with mean
		\begin{equation}
		\label{eq:define_Z_mean}
			\mu_j = E ( Z_{j,1} ) = \int \frac{( f_{j} (x) - g ( x )  )^2}{4(\Qlinfty ( g, \mathcal{F}_n ))^2}  d P(x) =  \frac{1}{4(\Qlinfty ( g, \mathcal{F}_n ))^2} \nleft\|  f_{j} - g   \nright\|_{L^2 ( P )}^2.
		\end{equation}
		Moreover, for $j  = 1,\dots,N ( \Qradius, \,\mathcal{F}_n, \,{L^\infty ( \mathbb{X}  )}  ) $, $i = 1,\dots, n$, we have 
		\begin{equation}
		\label{eq:bounded_Zji}
			Z_{j,i} \in [ 0,1], \quad \text{a.s.} 
		\end{equation}
		as a consequence of $\nleft\| f_j - g \nright\|_{L^\infty ( \mathbb{X}  )} \leq \nleft\| f_j\nright\|_{L^\infty ( \mathbb{X}  )}  + \nleft\| g \nright\|_{L^\infty ( \mathbb{X}  )} \leq 2 \Qlinfty ( g, \mathcal{F}_n ) $. We can now rewrite
  \eqref{eq:hatfg_reduce_0}-\eqref{eq:hatfg_reduce_4} as
		\begin{align}
			&E ( \nleft\| \hat{f}_n - g  \nright\|_{L^2 ( P )}^2  )  - 8 E \biggl( \frac{1}{n} \sum_{i = 1}^n ( \hat{f}_n (x_i) - g ( x_i ) )^2 \biggr) \label{eq:hatfg_decompose_1}\\
			&\leq \,  8 (\Qlinfty ( g, \mathcal{F}_n ))^2  E  \biggl( \sup_{j = 1,\dots,N ( \Qradius,\, \mathcal{F}_n,\, {L^\infty ( \mathbb{X}  )}  ) } \biggl( \mu_j -   \frac{2}{n} \sum_{i = 1}^n Z_{j,i}  \biggr)\biggr) + 10\, \Qradius^2 \label{eq:hatfg_decompose_2}
		\end{align}
		and use the following lemma to upper-bound $E ( \sup_{j = 1,\dots,N ( \Qradius,\, \mathcal{F}_n, \,{L^\infty ( \mathbb{X}  )}  ) } ( \mu_j -   \frac{2}{n} \sum_{i = 1}^n Z_{j,i}  )) $.
	
		\begin{lemma}
		\label{lem:expectation_bound_Z}
			Let $U,V\in \mathbb{N}$. For $j = 1,\dots, U $, let $\{ Z_{j,i}  \}_{i = 1}^V$ be $i.i.d$ nonnegative random variables of means $\mu_j$ and taking values in $[0,1]$ a.s.  Then,
			\begin{equation}
			\label{eq:stronger_upper_bounds}
				E \biggl( \sup_{j = 1, ..., U} \biggl( \mu_j  - \frac{2}{V} \sum_{i = 1}^V Z_{j,i} \biggr) \biggr) \leq \frac{{8} \log (U)}{V}.
			\end{equation}
			\begin{proof}
				See Section~\ref{sub:proof_of_lemma_expectation_bound_Z}.
			\end{proof}
		\end{lemma}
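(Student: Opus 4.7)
The plan is to invoke the exponential (Chernoff--Cram\'er) method combined with a union bound. For each $j \in \{1,\dots,U\}$, set $W_j := \mu_j - \frac{2}{V}\sum_{i=1}^V Z_{j,i}$. For any $\lambda > 0$, monotonicity of $\exp$ and Jensen's inequality give
\begin{equation*}
\lambda E\!\left( \sup_{j = 1,\dots,U} W_j \right) \leq \ln E\!\left( \sup_{j = 1,\dots,U} e^{\lambda W_j} \right) \leq \ln \sum_{j=1}^U E( e^{\lambda W_j} ) \leq \ln U + \max_j \ln E( e^{\lambda W_j} ),
\end{equation*}
so everything reduces to controlling the moment generating function of each $W_j$.

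To bound $E(e^{\lambda W_j})$, I would use the pointwise elementary inequality $e^{-x} \leq 1 - x + x^2/2$ valid for all $x \geq 0$ (verifiable by checking that $f(x) = 1 - x + x^2/2 - e^{-x}$ satisfies $f(0)=f'(0)=0$ and $f''(x) = 1 - e^{-x} \geq 0$). Applying this with $x = \frac{2\lambda}{V} Z_{j,i} \geq 0$, taking expectations, and using the crucial observation $E(Z_{j,i}^2) \leq E(Z_{j,i}) = \mu_j$ (which holds because $Z_{j,i} \in [0,1]$), gives
\begin{equation*}
E\!\left( e^{-(2\lambda/V)Z_{j,i}} \right) \leq 1 - \frac{2\lambda \mu_j}{V} + \frac{2\lambda^2 \mu_j}{V^2} \leq \exp\!\left( -\frac{2\lambda \mu_j}{V} + \frac{2\lambda^2 \mu_j}{V^2} \right).
\end{equation*}
By independence across $i$,
\begin{equation*}
E(e^{\lambda W_j}) = e^{\lambda \mu_j}\, \left( E( e^{-(2\lambda/V) Z_{j,1}}) \right)^V \leq \exp\!\left( -\lambda \mu_j + \frac{2\lambda^2 \mu_j}{V} \right).
\end{equation*}

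The final step is to choose $\lambda$ so that the exponent is $\leq 0$ for every $j$, killing the dependence on the unknown $\mu_j$. Setting $\lambda = V/4$ yields $-\lambda \mu_j + 2\lambda^2 \mu_j / V = -V\mu_j/8 \leq 0$, hence $E(e^{\lambda W_j}) \leq 1$ for all $j$. Plugging back,
\begin{equation*}
E\!\left( \sup_{j = 1,\dots,U} W_j \right) \leq \frac{\ln U}{\lambda} = \frac{4 \ln U}{V} \leq \frac{8 \log U}{V},
\end{equation*}
since $\ln U = (\ln 2) \log U < \log U$ and $4 < 8$, which establishes \eqref{eq:stronger_upper_bounds}.

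The main (and essentially only) delicate point is the variance bound $E(Z_{j,i}^2) \leq \mu_j$: without it one would get a term of order $\lambda^2/V$ instead of $\lambda^2 \mu_j/V^2$ in the MGF, and the optimal choice of $\lambda$ would produce a bound of order $\sqrt{\log(U)/V}$ rather than the desired $\log(U)/V$. It is precisely the $[0,1]$-boundedness together with nonnegativity (via $Z^2 \leq Z$) that upgrades the inequality from a sub-Gaussian to a Bernstein-type form and allows the $\mu_j$ factor to be absorbed by the linear term in the exponent for a $\lambda$ that does not depend on $\mu_j$.
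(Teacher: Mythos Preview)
Your proof is correct and follows essentially the same strategy as the paper: the Chernoff/moment-generating-function method combined with a union bound, where the decisive step is the Bernstein-type variance inequality $E(Z_{j,i}^2)\le \mu_j$ coming from $Z_{j,i}\in[0,1]$, which allows the $\mu_j$-dependent term in the exponent to be absorbed by a choice of $\lambda$ independent of $j$. The only difference is in the packaging of the MGF bound: the paper first centers the variables via $T_{j,i}=\tfrac12(\mu_j-Z_{j,i})$, introduces the auxiliary quantity $S=\sup_j(\tfrac1V\sum_i T_{j,i}-\sigma_j^2)$, and controls $E(e^{tS})$ through a Taylor-series Bernstein argument, whereas you work directly with the uncentered nonnegative $Z_{j,i}$ and the clean pointwise inequality $e^{-x}\le 1-x+x^2/2$ for $x\ge 0$; your route is slightly more economical (and in fact yields the sharper intermediate bound $4\ln(U)/V$), but the underlying mechanism is identical.
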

		Application of Lemma~\ref{lem:expectation_bound_Z} with $U = N ( \Qradius,\, \mathcal{F}_n, \,{L^\infty ( \mathbb{X}  )} )$, $V = n$, and the prerequisite satisfied thanks to \eqref{eq:bounded_Zji}, yields 
		\begin{equation}
			 E  \biggl( \sup_{j = 1,\dots,N ( \Qradius, \,\mathcal{F}_n,\, {L^\infty ( \mathbb{X}  )}  ) } \biggl( \mu_j -   \frac{2}{n} \sum_{i = 1}^n Z_{j,i}  \biggr)\biggr) \leq  \frac{{8} \log (N ( \Qradius,\, \mathcal{F}_n, \,{L^\infty ( \mathbb{X}  )} ))}{n},
		\end{equation}
		which together with \eqref{eq:hatfg_decompose_1}-\eqref{eq:hatfg_decompose_2} finishes the proof of Lemma~\ref{lem:population_and_empirical} according to
		\begin{align*}
			&E ( \nleft\| \hat{f}_n - g  \nright\|_{L^2 ( P )}^2  )  - 8 E \biggl( \frac{1}{n} \sum_{i = 1}^n ( \hat{f}_n (x_i) - g ( x_i ) )^2 \biggr) \\
			&\leq\,   8 (\Qlinfty ( g, \mathcal{F}_n ))^2    \frac{{8} \log (N ( \Qradius, \,\mathcal{F}_n,\, {L^\infty ( \mathbb{X}  )} ))}{n} + 10 \Qradius^2 \\
			& \leq  {64} \biggl((\Qlinfty ( g, \mathcal{F}_n ))^2 \frac{\log (N ( \Qradius,\, \mathcal{F}_n,\, {L^\infty ( \mathbb{X}  )} ))}{n} + \Qradius^2\biggr).
		\end{align*}

We finally note that the standard upper bound on sub-Gaussian maxima, as e.g. in \cite[Exercise 2.12]{wainwright2019high}, would yield $E ( \sup_{j = 1, ..., U} ( \mu_j  - \frac{1}{V} \sum_{i = 1}^V Z_{j,i} ) ) \leq C \sqrt{\frac{\log(U)}{V}}$, for some absolute constant $C$. Identifying $\log(U)/V$ with
the right-most term in the prediction error upper bound (\ref{eq:generalization_error}) in Theorem~\ref{thm:expected_L2_error}, given by $(\log  (N (\varepsilon^2  , \mathcal{F}_n,\, L^\infty ( \mathbb{X}  )  ))  + 1)/n$, as was essentially done above to finish the proof of Lemma~\ref{lem:population_and_empirical},
we can see that as the prediction error upper bound goes to zero, we will be in the regime $\log(U)/V < 1$.
In this case Lemma~\ref{lem:expectation_bound_Z} provides an upper bound that is stronger order-wise than the standard bound on sub-Gaussian maxima.
This improvement is fundamental in our context as we will want the prediction error to approach zero
at a certain rate.

        \subsection{Proof of Lemma~\ref{lem:expectation_bound_Z}}
        \label{sub:proof_of_lemma_expectation_bound_Z}

		\newcommand{\Ij}{j}
		\newcommand{\Ii}{i}

			Let
			\begin{align}
				Q :=& \, \sup_{\Ij = 1, ..., U} \biggl( \mu_j  - \frac{2}{V} \sum_{\Ii = 1}^V Z_{\Ij,\Ii} \biggr) \label{eq:definition_R_1} \\
				=& \, 4  \sup_{\Ij  = 1, ..., U}  \biggl( \frac{1}{V} \sum_{\Ii = 1}^V  \frac{1}{2}( \mu_\Ij  -  Z_{\Ij,\Ii} )  - \frac{1}{4}\mu_\Ij \biggr). \label{eq:definition_R_2}
			\end{align}
			For $\Ij \in \{1,\dots, U \}$, let
			\begin{equation}
			\label{eq:define_T}
				T_{\Ij,\Ii} =  \frac{1}{2}( \mu_\Ij  -  Z_{\Ij,\Ii} ), \quad  i = 1,\dots,V,
			\end{equation}
			and note that, for fixed $\Ij$,  $\{ T_{\Ij,\Ii} \}_{\Ii = 1}^V$ are i.i.d. random variables with
			\begin{align}
				E ( T_{\Ij,\Ii} ) = E \biggl( \frac{1}{2}\biggl( \mu_\Ij  -  Z_{\Ij,\Ii} \biggr) \biggr) = 0,\label{eq:T_mean_zero}\\
				\nleft| T_{\Ij,\Ii}  \nright|  \leq \frac{1}{2} ( \nleft| \mu_\Ij \nright| + \nleft| Z_{\Ij,\Ii} \nright| ) \leq 1,\label{eq:T_bounded_1} \quad \text{ a.s.}
			\end{align}
            and variance
			\begin{align}
				\sigma^2_\Ij = &\, E \biggl( \biggl(  \frac{1}{2}( \mu_\Ij  -  Z_{\Ij,i} \bigr) \biggr)^2\biggr) \label{eq:variance_sigma_1} \\
				=&\, \frac{1}{4} E ( ( Z_{\Ij,i}^2)  - \frac{1}{4} \mu_\Ij^2 \label{eq:variance_sigma_2}\\
				\leq &\, \frac{1}{4} E ( Z_{\Ij,i} ) \label{eq:variance_sigma_3}\\
				\leq &\, \frac{1}{4} \mu_\Ij, \label{eq:variance_sigma_4}
			\end{align}
			where
   \eqref{eq:variance_sigma_3} is by $Z_{\Ij, i} \in [0,1]$ a.s.
   With
   \begin{equation*}
				S : = \sup_{\Ij = 1, ..., U} \biggl( \frac{1}{V} \sum_{\Ii =1 }^V T_{\Ij,\Ii}  - \sigma^2_{\Ij} \biggr),
	\end{equation*}
   it follows from 
   \eqref{eq:variance_sigma_1}-\eqref{eq:variance_sigma_4} that
			\begin{equation}
			\label{eq:relation_RS}
				Q \leq 4 S,  \quad \text{a.s.}
			\end{equation}

			We next upper-bound $E ( S )$ by upper-bounding its moment generating function $\exp ( E ( tS )), t \in (0,V)$, and first note that
			\begin{equation}
			\label{eq:chernoff_bound_1}
				E ( S ) = \frac{1}{t} \ln (\exp ( E ( tS ))) \leq \frac{1}{t} \ln ( E ( \exp ( t S ) ) ),
			\end{equation}
			where we used the Jensen inequality.
   Next, we have
   			\begin{align}
				E ( \exp(t S) ) =&\, E \biggl( \exp \biggl(t \sup_{\Ij = 1, ..., U} \biggl( \frac{1}{V} \sum_{\Ii =1 }^V T_{\Ij,\Ii}  - \sigma^2_{\Ij} \biggr)\biggr)\biggr)\label{eq:mgf_S_1}\\
				= &\,  E \biggl( \sup_{\Ij  = 1, ..., U} \exp \biggl(t \biggl( \frac{1}{V} \sum_{\Ii =1 }^V T_{\Ij,\Ii}  - \sigma^2_{\Ij} \biggr)\biggr)\biggr) \label{eq:mgf_S_2}\\
				\leq &\, E \biggl( \sum_{\Ij  = 1}^{U} \exp \biggl(t \biggl( \frac{1}{V} \sum_{\Ii =1 }^V T_{\Ij,\Ii}  - \sigma^2_{\Ij} \biggr)\biggr)\biggr) \label{eq:mgf_S_3}\\
				= &\, \sum_{\Ij  = 1}^U E \biggl(  \exp \biggl(t \biggl( \frac{1}{V} \sum_{\Ii =1 }^V T_{\Ij,\Ii}  - \sigma^2_{\Ij} \biggr) \biggr)\biggr) \label{eq:mgf_S_4}\\
				= &\, \sum_{\Ij  = 1}^U E \biggl(  \exp ( - t \sigma_\Ij^2 )\prod_{\Ii = 1}^V \exp \bigg( \frac{t}{V} T_{\Ij,\Ii} \biggr) \biggr)\label{eq:mgf_S_5}\\
				= &\, \sum_{\Ij  = 1}^U \exp ( - t \sigma_\Ij^2 )\prod_{\Ii = 1}^V E \biggl( \exp \bigg( \frac{t}{V} T_{\Ij,\Ii} \biggr) \biggr). \label{eq:mgf_S_6}
			\end{align}
To upper-bound $E ( \exp ( \frac{t}{V} T_{\Ij,\Ii} ) )$, for $\Ij = 1,\dots, U$, $\Ii  = 1,\dots, V$, we note that, for $\lambda \in [0,1)$, 
			\begin{align}
				E \biggl( \exp \biggl( \lambda T_{\Ij,\Ii} \biggr) \biggr) =&\, E \biggl( \sum_{k = 0}^\infty \frac{ (  \lambda T_{\Ij,\Ii} )^k }{k!}  \biggr) \label{eq:bernstein_1} \\
				= &\, E \biggl( 1  + \lambda T_{\Ij,\Ii} + \sum_{k = 2}^\infty \frac{ (  \lambda T_{\Ij,\Ii} )^k }{k!}  \biggr)\label{eq:bernstein_2} \\
				\leq & \,  1  +  E \biggl( \sum_{k = 2}^\infty \frac{  \lambda  ^k ( T_{\Ij,\Ii} )^2 }{k!}  \biggr) \label{eq:bernstein_3}\\
				= & \,  1  +  E \Bigl(   T_{\Ij,\Ii} ^2 \Bigr) \sum_{k = 2}^\infty \frac{  \lambda  ^k  }{k!}  \label{eq:bernstein_4}\\
				\leq & \,  1  +  \sigma_\Ij^2  \lambda^2 \sum_{k = 2}^\infty \frac{  \lambda ^{k-2}  }{2} \label{eq:bernstein_5}\\
				\leq &\, 1 + \frac{\sigma_\Ij^2  \lambda ^2 }{2 ( 1 -  \lambda   )}\label{eq:bernstein_6}\\
				\leq &\, \exp \biggl( \frac{\sigma_j^2  \lambda ^2 }{2 ( 1 -  \lambda   )} \biggr), \label{eq:bernstein_7}
			\end{align}
			where
   \eqref{eq:bernstein_3} follows from $\nleft| T_{\Ij, \Ii} \nright| \leq 1$ a.s,
   in \eqref{eq:bernstein_6} we used $\sum_{k = 0}^\infty \lambda^k = \frac{1}{1 - \lambda}$, for $\lambda \in [0,1)$, and \eqref{eq:bernstein_7} is by $ 1 + x \leq \exp ( x )$, $x \in \mathbb{R}$. Using \eqref{eq:bernstein_1}-\eqref{eq:bernstein_7}, with $\lambda = \frac{t}{V}$,  in \eqref{eq:mgf_S_6}, we obtain, for $t \in (0, V)$, 
			\begin{align*}
				E ( \exp(t S) ) \leq&\, \sum_{\Ij = 1}^U \exp \biggl( -t \sigma_\Ij^2 + V \biggl(  \frac{\sigma_\Ij^2  ( \frac{t}{V} )^2 }{2 ( 1 -  \frac{t}{V}   )}  \biggr) \biggr) \\
				=&\, \sum_{\Ij = 1}^U \exp \biggl( V \sigma_\Ij^2 \biggl( -\frac{t}{V}  + \biggl(  \frac{  ( \frac{t}{V}  )^2 }{2 ( 1 -  \frac{t}{V}   )}  \biggr) \biggr) \biggr),
			\end{align*}
			which, by setting $t = \frac{V}{2}$, yields
			\begin{equation}
			\label{eq:upperbound_mgf_S}
				E \biggl( \exp\biggl (\frac{V}{2} S \biggr) \biggr) \leq
    \sum_{\Ij = 1}^U \exp\biggl(- \frac{V \sigma_\Ij^2}{4}\biggr) \leq U.
			\end{equation}
			Finally, using \eqref{eq:upperbound_mgf_S} in \eqref{eq:chernoff_bound_1}, with $t= \frac{V}{2}$, we obtain
			\begin{equation}
			\label{eq:chernoff_bound_2}
				E ( S )
    \leq \frac{2 \ln ( U )}{V} \leq \frac{ {2} \log ( U ) }{V}.
			\end{equation}
			The proof is concluded upon noting that 
			\begin{equation*}
				E \biggl( \sup_{\Ij = 1, ..., U} \biggl( \mu_j  - \frac{2}{V} \sum_{\Ii = 1}^V Z_{\Ij,\Ii} \biggr) \biggr) = E ( Q ) \leq 4 E ( S ) \leq \frac{ {8} \log ( U ) }{V},
 			\end{equation*}
 			where $Q \leq 4 S$ a.s. is by \eqref{eq:relation_RS}.

\section{Proof of Theorem~\ref{thm:covering_number_sparse}}
\label{sec:proof_of_results_in_section_sec:sparsely_connected_relu_networks}

	We prove the upper bound \eqref{eq:upper_bound_sparse} and the lower bound \eqref{eq:lower_bound_sparse} in Appendices~\ref{sub:proof_of_proposition_proposition:covering_number_upper_bound_sparse} and \ref{sub:proof_of_proposition_proposition:covering_number_lower_bound_sparse}, respectively. Before delving into the proofs, 
we note that it suffices to consider the case $s < L(W^2+W)$ as otherwise the network would qualify as fully connected, i.e., all weights may be nonzero, and
Theorem~\ref{thm:covering_number_upper_bound_fully_connected_bounded_weight} applies. Further, for simplicity of exposition and consistency with Theorem~\ref{thm:covering_number_upper_bound_fully_connected_bounded_weight}, we decided to work with the quantity $LW^2$ throughout as opposed to $L(W^2+W)$, simply by using $LW^2 \leq L(W^2+W) \leq 2LW^2$.

	\subsection{Proof of the Upper Bound \eqref{eq:upper_bound_sparse}}
	\label{sub:proof_of_proposition_proposition:covering_number_upper_bound_sparse}

		The overall proof architecture is identical to that of the upper bound \eqref{eq:upper_bound_fully_connected_bounded_output} in Theorem~\ref{thm:covering_number_upper_bound_fully_connected_bounded_weight}.
Specifically, we construct an explicit $\varepsilon$-covering of $\mathcal{R} (d,W,L,B,s)$ with elements in \linebreak
$\mathcal{R}_{[-B,B] \cap 2^{-b} \mathbb{Z}}(d,W,L,\infty, s)$\footnote{We note that $\mathcal{R}_{[-B,B] \cap 2^{-b} \mathbb{Z}}(d,W,L,\infty, s) = \mathcal{R}_{[-B,B] \cap 2^{-b} \mathbb{Z}}(d,W,L,B, s)$. We chose, however, to use the notation $\mathcal{R}_{[-B,B] \cap 2^{-b} \mathbb{Z}}(d,W,L,\infty, s)$ for consistency with later parts of the proof involving $\mathcal{R}_{\mathbb{A}}(d,W,L,\infty, s)$ with general $\mathbb{A} \subseteq \mathbb{R}$. }, where $b \in \mathbb{N}$ is a parameter suitably depending on $\varepsilon$ and determined later. 

The proof requires two preparatory technical elements, the first of which, namely
Lemma~\ref{lem:define_covering_sparse}, is very similar to Lemma~\ref{lem:define_covering_fully_connected}.

		\begin{lemma}
			\label{lem:define_covering_sparse}
			Let $p \in [1,\infty]$, $d,W,L, s, b\in \mathbb{N}$, and $B \in \mathbb{R}_+$ with  $B \geq 1$. Then, the set $\mathcal{R}_{[-B,B] \cap 2^{-b} \mathbb{Z}}(d,W,L,\infty,s)$ is an $(L (W+1)^L B^{L-1} 2^{-b})$-covering of $\mathcal{R}(d,W,L,B,s)$ with respect to the $L^p ( [0,1]^d )$-norm.
			\begin{proof}
				Let $q_b: [-B,B] \rightarrow [-B,B] \cap 2^{-b} \mathbb{Z} $ be defined as
                        \begin{equation*}
    					q_{b} ( x ) = \left \{ \begin{aligned}
    						\,2^{-b} \lfloor 2^b x \rfloor,\quad & \text{ for } x \in [0,B],\\
    						\,2^{-b} \lceil 2^b x \rceil, \quad  & \text{ for } x \in [-B,0),
    					\end{aligned}
                            \right.
    				\end{equation*}
                and note that $| x - q_b ( x ) | \leq 2^{-b} $, for all $x \in [-B , B]$. Arbitrarily fix $f \in\mathcal{R}(d,W,L,B,s)$. By definition, there exists $\Phi  =  (( A_\ell,\nmathbf{b}_\ell )  )_{\ell = 1}^{\widetilde{L}} \in \mathcal{N}(d,W,L,B, s)$, with $\widetilde{L} \leq L$,  such that $R ( \Phi ) = f$. We now quantize the weights of $\Phi$ according to\footnote{Note that $\mathcal{N}_{[-B,B] \cap 2^{-b} \mathbb{Z}}(d,W,L,\infty, s) = \mathcal{N}_{[-B,B] \cap 2^{-b} \mathbb{Z}}(d,W,L,B, s)$. We chose, however, to use the notation $\mathcal{R}_{[-B,B] \cap 2^{-b} \mathbb{Z}}(d,W,L,\infty, s)$ for consistency with later parts of the proof.}
				\begin{equation*}
					Q_b ( \Phi ) = (( q_b ( A_\ell ),q_b ( b_\ell ) )  )_{\ell = 1}^{\widetilde{L}} \in \mathcal{N}_{[-B,B] \cap 2^{-b} \mathbb{Z}}(d,W,L,\infty,s),
				\end{equation*}
				where $q_b$ acts elementwise. Here, we used the fact that, owing to $q_{b} ( 0 ) = 0$, the connectivity of $Q_b ( \Phi )$ is no greater than that of $\Phi$.
				Next, note that 
				\begin{equation*}
					\| \Phi - Q_b ( \Phi ) \| = \max_{\ell = 1, \dots, \widetilde{L}} \max \bigl\{ \| A_\ell - q_b ( A_\ell )\|_\infty,  \| b_\ell - q_b ( b_\ell )\|_\infty  \bigr\}  \leq 2^{-b},
				\end{equation*}
				which together with Lemma~\ref{lem:quantization} yields 
				\begin{equation}
				\label{eq:proof_bounded_fully_sparse_1}
					\|  R ( \Phi )  -  R ( Q_b ( \Phi ) )  \|_{L^\infty ( [0,1]^d )} \leq L (W+1)^L B^{L-1} \|  \Phi   -  Q_b ( \Phi )   \| \leq L (W+1)^L B^{L-1} 2^{-b}.
				\end{equation}
				As 
    $$
    \|  R ( \Phi )  -  R ( Q_b ( \Phi ) )  \|_{L^p ( [0,1]^d )} \leq \sup_{x \in [0,1]^d} |  R ( \Phi ) (x) -  R ( Q_b ( \Phi ) ) (x)  | = \|  R ( \Phi )  -  R ( Q_b ( \Phi ) )  \|_{L^\infty ( [0,1]^p )},
    $$
    it follows from  \eqref{eq:proof_bounded_fully_sparse_1} that 
				\begin{equation}
					\|  R ( \Phi )  -  R ( Q_b ( \Phi ) )  \|_{L^p ( [0,1]^d )} \leq L (W+1)^L B^{L-1} 2^{-b}.
				\end{equation}
				As $f \in\mathcal{R}(d,W,L,B,s)$ was chosen arbitrarily and $R ( Q_b ( \Phi ) ) \in \mathcal{R}_{[-B,B] \cap 2^{-b} \mathbb{Z}}(d,W,L,\infty,s) \subseteq \mathcal{R}(d,W,L,B,s)$, we can conclude that $\mathcal{R}_{[-B,B] \cap 2^{-b} \mathbb{Z}}(d,W,L,\infty,s)$ is an $(L (W+1)^L B^{L-1} 2^{-b})$-covering of $\mathcal{R}(d,W,L,B,s)$ with respect to the $L^p ( [0,1]^d )$-norm.
			\end{proof}
		\end{lemma}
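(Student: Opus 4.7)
The plan is to mirror the structure of the proof of Lemma~\ref{lem:define_covering_fully_connected} very closely, exploiting the key observation that a suitable quantization of the real line sends $0$ to $0$ and therefore does not increase connectivity. Concretely, I would start by defining the grid-quantization map $q_b: [-B,B] \to [-B,B]\cap 2^{-b}\mathbb{Z}$ by $q_b(x) = 2^{-b}\lfloor 2^b x\rfloor$ for $x \in [0,B]$ and $q_b(x) = 2^{-b}\lceil 2^b x\rceil$ for $x \in [-B,0)$, extended elementwise to matrices and vectors. Two properties of $q_b$ will be used throughout: $|x - q_b(x)| \leq 2^{-b}$ for every $x \in [-B,B]$, and, crucially, $q_b(0) = 0$.

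Next, I would fix an arbitrary $f = R(\Phi) \in \mathcal{R}(d,W,L,B,s)$ realized by a configuration $\Phi = ((A_\ell,b_\ell))_{\ell=1}^{\tilde L} \in \mathcal{N}(d,W,L,B,s)$ with $\tilde L \leq L$, and form its quantized version $Q_b(\Phi) = ((q_b(A_\ell), q_b(b_\ell)))_{\ell=1}^{\tilde L}$. Since $q_b$ fixes zero, each entry of $q_b(A_\ell)$ and $q_b(b_\ell)$ that is nonzero must come from an entry of $A_\ell$ or $b_\ell$ that was nonzero to begin with, so $\mathcal{M}(Q_b(\Phi)) \leq \mathcal{M}(\Phi) \leq s$. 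Together with the fact that all quantized entries stay in $[-B,B]\cap 2^{-b}\mathbb{Z}$, this places $Q_b(\Phi)$ in $\mathcal{N}_{[-B,B]\cap 2^{-b}\mathbb{Z}}(d,W,L,\infty,s)$, so $R(Q_b(\Phi)) \in \mathcal{R}_{[-B,B]\cap 2^{-b}\mathbb{Z}}(d,W,L,\infty,s)$ as required.

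It then remains to control the approximation error. Directly from the definition of $q_b$, I get $\|\Phi - Q_b(\Phi)\| = \max_{\ell}\max\{\|A_\ell - q_b(A_\ell)\|_\infty, \|b_\ell - q_b(b_\ell)\|_\infty\} \leq 2^{-b}$. Since $\Phi$ and $Q_b(\Phi)$ share the same architecture, Lemma~\ref{lem:quantization} applies and yields
\begin{equation*}
\|R(\Phi) - R(Q_b(\Phi))\|_{L^\infty([0,1]^d)} \leq L(W+1)^L B^{L-1} 2^{-b}.
\end{equation*}
Because $[0,1]^d$ has unit Lebesgue measure, the $L^p([0,1]^d)$-norm is dominated by the $L^\infty([0,1]^d)$-norm, so the same estimate holds in $L^p$. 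As $f$ was arbitrary in $\mathcal{R}(d,W,L,B,s)$, this proves that $\mathcal{R}_{[-B,B]\cap 2^{-b}\mathbb{Z}}(d,W,L,\infty,s)$ is an $(L(W+1)^L B^{L-1}2^{-b})$-covering of $\mathcal{R}(d,W,L,B,s)$ with respect to the $L^p([0,1]^d)$-norm.

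The only subtle point, and the one worth emphasizing in the write-up, is the sparsity-preserving property of the quantization: it is the identity $q_b(0) = 0$ that allows us to stay inside $\mathcal{N}_{[-B,B]\cap 2^{-b}\mathbb{Z}}(d,W,L,\infty,s)$ rather than landing in the larger fully-connected quantized set. Everything else is a verbatim adaptation of the fully-connected proof. I do not anticipate any genuine obstacle beyond verifying this point carefully.
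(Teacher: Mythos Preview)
Your proposal is correct and follows essentially the same approach as the paper's proof: define the same grid-quantization $q_b$, use $q_b(0)=0$ to guarantee that connectivity does not increase, bound $\|\Phi - Q_b(\Phi)\|\leq 2^{-b}$, invoke Lemma~\ref{lem:quantization}, and pass from $L^\infty$ to $L^p$. The emphasis you place on the sparsity-preserving property of $q_b$ matches exactly the point the paper highlights.
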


		In the second preparatory step, we upper-bound the cardinality of the covering just identified, specifically we shall consider sets 
  $\mathcal{R}_{\mathbb{A}}(d,W,L,\infty,s)$ with general $\mathbb{A}$ and then particularize the result to $\mathbb{A}=[-B,B] \cap 2^{-b} \mathbb{Z}$ below.

		\begin{lemma}
			\label{lem:counting_cardinality_sparse}
			Let $d,W,L, s \in \mathbb{N}$, and $\mathbb{A} \subseteq \mathbb{R}$, with $s \geq \max \{ W,L \}$ and $\nleft| \mathbb{A} \nright| \geq 2$. Then,
			\begin{equation}
			\label{eq:counting_cardinality_sparse}
		 	\begin{aligned}
				\log  (\nleft|  \mathcal{R}_\mathbb{A} ( d,W,L,\infty, s )  \nright| ) \leq &\,\log (\nleft|  \mathcal{N}_\mathbb{A} ( d,W,L,\infty, s )  \nright| )\\
				\leq&\,\, 5s ( \log (L ( W+1 ))  + \log ( \nleft| \mathbb{A} \nright|  )).
			\end{aligned}
			\end{equation}

			\begin{proof}
				By definition, 
				\begin{align*}
					&\,\mathcal{N}_\mathbb{A} ( d,W,L,\infty, s )\\
					&=\, \{ ( A_\ell, b_\ell)_{\ell = 1}^{\widetilde{L}} \in \mathcal{N} (d): \mathcal{W} (( A_\ell, b_\ell)_{\ell = 1}^{\widetilde{L}}) \leq W,\, \widetilde{L} \leq L,\, \mathcal{M} (( A_\ell, b_\ell)_{\ell = 1}^{\widetilde{L}}) \leq s,\, \coef ( ( A_\ell, b_\ell)_{\ell = 1}^{\widetilde{L}} ) \subseteq \mathbb{A}  \}.
				\end{align*}
				There are at most  $\sum_{\widetilde{L} = 1}^{L} W^{\widetilde{L}} \leq L W^{L}$ different architectures $( {N}_0, \dots, {N}_{\widetilde{L}} )$\footnote{Note that $N_0 = d$ is fixed. },  $\widetilde{L} \leq L$, for {network configurations} in $\mathcal{N}_\mathbb{A} ( d,W,L,\infty, s )$. For a given architecture $( {N}_0, \dots, {N}_{\widetilde{L}} )$, the total number of weights, including zero and nonzero ones, satisfies
                $\sum_{\ell = 1}^{\widetilde{L}} ( N_{\ell} \, N_{\ell - 1} + N_\ell) \leq L (W^2 + W)$
                and there are at most
				\begin{equation*}
 					\sum_{i = 0}^{s} {L ( W^2 +W ) \choose i} \leq \sum_{i = 0}^{s} ( L ( W^2 +W ) )^{i}\leq ( L ( W^2 +W ) )^{s +1}
				\end{equation*} 
				different ways to choose the positions of the nonzero weights in the {network configuration}. Finally, given an architecture and the positions of the nonzero weights, there are at most $\nleft| \mathbb{A} \nright|^{s}$ different ways to choose these nonzero weights. It therefore follows that
				\begin{equation}
				\label{eq:counting_sparse_network}
					\nleft| \mathcal{N}_\mathbb{A} ( d,W,L,\infty, s ) \nright| \leq  L W^L  \cdot ( L ( W^2 +W ) )^{s +1} \cdot \nleft| \mathbb{A} \nright|^{s},
				\end{equation}
				and hence
				\begin{align}
					&\,\log ( \nleft| \mathcal{N}_\mathbb{A} ( d,W,L,\infty, s ) \nright| )\label{eq:counting_sparse_network_1}\\
					&\leq \, \log(L) + L \log(W) + ( s+1 ) \log ( L( W^2 + W) ) + s \log ( \nleft| \mathbb{A} \nright|) \label{eq:counting_sparse_network_2}\\
					&\leq \, s \log (L(W+1)) + ( s+1 ) \log ( L( W^2 + W) ) + s \log ( \nleft| \mathbb{A} \nright|)   \label{eq:counting_sparse_network_21}\\
					&\leq \, s \log (L(W+1)) + 4s \log ( L ( W+1 )) + s \log (\nleft| \mathbb{A} \nright| ) \label{eq:counting_sparse_network_3}\\
					&\leq\, 5s ( \log (L ( W+1 ))  + \log ( \nleft| \mathbb{A} \nright|  )).\label{eq:counting_sparse_network_4}
 				\end{align}
 				where \eqref{eq:counting_sparse_network_21} follows from $\log(L) + L \log(W) \leq s \log(L) + s \log(W) \leq s \log (L ( W+1 ))$ and in \eqref{eq:counting_sparse_network_3} we used  $( s+1 ) \leq 2s$ and $L( W^2 + W) \leq (L ( W+1 ))^2 $. The result then follows from \eqref{eq:counting_sparse_network_1}-\eqref{eq:counting_sparse_network_4} together with  $\nleft|  \mathcal{R}_\mathbb{A} ( d,W,L, \infty, s )  \nright| \leq \nleft|  \mathcal{N}_\mathbb{A} ( d,W,L,\infty, s )  \nright| $. 
			\end{proof}
		\end{lemma}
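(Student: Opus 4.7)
\medskip

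\noindent\textbf{Proof proposal for Lemma~\ref{lem:counting_cardinality_sparse}.} The plan is to bound $|\mathcal{N}_\mathbb{A}(d,W,L,\infty,s)|$ by a triple-counting argument: first enumerate architectures, then enumerate support patterns of nonzero weights within a fixed architecture, and finally enumerate the value assignments of those nonzero weights from $\mathbb{A}$. The inequality $|\mathcal{R}_\mathbb{A}(d,W,L,\infty,s)| \leq |\mathcal{N}_\mathbb{A}(d,W,L,\infty,s)|$ is immediate from the surjectivity of $\Phi \mapsto R(\Phi)$, so only the second inequality requires work.

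First, I would enumerate architectures. A configuration in $\mathcal{N}_\mathbb{A}(d,W,L,\infty,s)$ has depth $\widetilde{L} \leq L$, input dimension fixed at $d$, output dimension fixed at $1$, and each intermediate layer width in $\{1,\dots,W\}$. Summing over $\widetilde{L}$ gives at most $\sum_{\widetilde{L}=1}^{L} W^{\widetilde{L}} \leq L\,W^{L}$ distinct architectures. Second, for any fixed architecture the total number of weights (matrix entries plus bias entries) is at most $L(W^{2}+W)$, so the number of size-$\leq s$ subsets specifying which entries are nonzero is
\[
\sum_{i=0}^{s}\binom{L(W^{2}+W)}{i} \leq (L(W^{2}+W))^{s+1},
\]
using the crude bound $\binom{n}{i}\leq n^{i}$ and summing the geometric-like series. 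Third, once support and architecture are fixed, each of the at most $s$ nonzero weights takes at most $|\mathbb{A}|$ values, contributing a factor $|\mathbb{A}|^{s}$.

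Multiplying these three bounds and taking logarithms yields
\[
\log|\mathcal{N}_\mathbb{A}(d,W,L,\infty,s)| \leq \log L + L\log W + (s+1)\log(L(W^{2}+W)) + s\log|\mathbb{A}|.
\]
The final step is to collapse everything into the claimed $5s(\log(L(W+1))+\log|\mathbb{A}|)$ form using the hypothesis $s \geq \max\{W,L\}$. Specifically, $\log L + L\log W \leq s\log L + s\log W \leq s\log(L(W+1))$, while $L(W^{2}+W)\leq (L(W+1))^{2}$ gives $(s+1)\log(L(W^{2}+W)) \leq 4s\log(L(W+1))$. Summing produces $5s\log(L(W+1)) + s\log|\mathbb{A}|$, which is at most $5s(\log(L(W+1))+\log|\mathbb{A}|)$.

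The main point requiring care is the role of the assumption $s \geq \max\{W,L\}$: without it the term $L\log W$ from architecture enumeration would not be absorbable into an $s\log(\cdot)$ term, and one would be left with a bound of the wrong order. The rest of the argument is routine counting, and no step is truly a technical obstacle; the only subtlety is keeping track of the fact that architecture enumeration and support enumeration are done separately (so biases and zero weights do not get double-counted) and recognizing that realizations are bounded by configurations, not the other way around.
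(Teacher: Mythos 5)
Your proposal is correct and follows essentially the same three-step counting argument as the paper: enumerate architectures (bounded by $LW^L$), enumerate nonzero-weight supports (bounded by $(L(W^2+W))^{s+1}$), enumerate value assignments (bounded by $|\mathbb{A}|^s$), then absorb the architecture term into $s\log(L(W+1))$ using $s\geq\max\{W,L\}$. No meaningful difference from the paper's proof.
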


		We are now ready to prove the upper bound \eqref{eq:upper_bound_sparse}.  Fix $\varepsilon \in ( 0, 1\slash 2 )$,
  let
			\begin{equation}
				 b := \biggl \lceil \log \biggl( \frac{L (W+1)^L B^{L-1}}{\varepsilon} \biggr) \biggr\rceil,
			\end{equation}
			and note that $L (W+1)^L B^{L-1} 2^{-b} \leq \varepsilon$. It follows from Lemma~\ref{lem:define_covering_sparse} that $\mathcal{R}_{[-B,B] \cap 2^{-b} \mathbb{Z}}(d,W,L,\infty,s)$ is an $\varepsilon$-covering of $\mathcal{R}(d,W,L,B,s)$ with respect to the $L^p ( [0,1]^d )$-norm. By the minimality of the covering number, we have
			\begin{equation}
			\label{eq:using_the_covering_sparse}
				N ( \varepsilon, \mathcal{R}(d,W,L,B,s), L^p ( [0,1]^d ) )  \leq | \mathcal{R}_{[-B,B] \cap 2^{-b} \mathbb{Z}}(d,W,L,\infty,s) |.
			\end{equation}
			It then follows that
			\begin{align}
				&\,\log ( N(\varepsilon,\mathcal{R}(d,W,L,B,s),L^p ( [0,1]^d ) ) ) \label{eq:51proof_counting_1}\\
				&\leq\, \log ( \nleft| \mathcal{R}_{[-B,B] \cap 2^{-b} \mathbb{Z}}(d,W,L,\infty,s) \nright| )\label{eq:51proof_counting_2}\\
				&\leq \, 5 s ( \log ( L ( W+1 ) ) + \log( \nleft|[-B,B] \cap 2^{-b} \mathbb{Z}\nright|  ) ) \label{eq:51proof_counting_3}\\
				&\leq\, 5s \biggl( \log(L ( W+1 )) +  3 \log \biggl( \frac{L (W+1)^L B^{L}}{\varepsilon} \biggr) \biggr) \label{eq:51proof_counting_4}\\
				&\leq \, 5 s \biggl(2\log \biggl( \frac{(W+1)^L B^{L}}{\varepsilon}  \biggr) + 6\log \biggl(  \frac{(W+1)^L B^{L}}{\varepsilon}  \biggr) \biggr)\label{eq:51proof_counting_5}\\
				&= \, 40 s \log \biggl( \frac{(W+1)^L B^{L}}{\varepsilon} \biggr)\label{eq:51proof_counting_6} \\
				&= \, 40 \min \{ s , 2W^2 L \} \log \biggl( \frac{(W+1)^L B^{L}}{\varepsilon} \biggr)\\
				&\leq \, 80 \min \{ s , W^2 L \} \log \biggl( \frac{(W+1)^L B^{L}}{\varepsilon} \biggr). \label{eq:51proof_counting_8}
			\end{align}
			where \eqref{eq:51proof_counting_3} is by Lemma~\ref{lem:counting_cardinality_sparse}, \eqref{eq:51proof_counting_4} follows from 
   \eqref{eq:ccw_1}-\eqref{eq:ccw_6}, and in \eqref{eq:51proof_counting_5} we used
   $L ( W+1 ) \leq ( W+1 )^{L}  ( W+1 )  \leq \frac{( W+1 )^{2L} B^{2L}}{\varepsilon^{2}}$ and $\frac{L (W+1)^L B^{L}}{\varepsilon}  \leq \frac{( W+1 )^{2L}  B^{2L}}{\varepsilon^2}$. This concludes the proof upon taking $C=80$.

	\subsection{Proof of the Lower Bound \eqref{eq:lower_bound_sparse}}
	\label{sub:proof_of_proposition_proposition:covering_number_lower_bound_sparse}
   Set $D:= 60^2 \cdot 6$,
define the auxiliary variable
			\begin{equation}
				\overline{W} = \biggl \lfloor \sqrt{\frac{s}{6L}} \biggr\rfloor,
			\end{equation}
			and note that $\overline{W} \leq W$ as a consequence of $\sqrt{\frac{s}{6L} } < \sqrt{\frac{2W^2 L }{6L}} < W$. Further, we have $\overline{W} = \bigl\lfloor \sqrt{\frac{s}{6L} }\bigr\rfloor \geq \biggl\lfloor \sqrt{\frac{D d^2 L}{6L} }\biggr\rfloor= \biggl\lfloor \sqrt{\frac{60^2 \cdot 6 \cdot d^2 L}{6L} }\biggr\rfloor = 60d \geq \max \{ 60,d \}$.  It now follows from $\overline{W} \leq W$ that
			\begin{equation}
			\label{eq:D2_reduce_W}
				R ( d,\overline{W},L, B, s)  \subseteq R ( d,W,L, B, s).
			\end{equation}
			As $2 \overline{W}^2L = 2  \lfloor \sqrt{\frac{s}{6L}} \rfloor^2 L  \leq 2 \cdot \frac{s}{6L}  \cdot L < s$,  we can conclude that
			\begin{equation}
			\label{eq:remove_redundancy_tilde}
				R ( d,\overline{W},L, B, s) = R ( d,\overline{W},L, B).
			\end{equation} 
			Combining \eqref{eq:D2_reduce_W} and \eqref{eq:remove_redundancy_tilde}, we obtain the inclusion
			\begin{equation*}
				R ( d,\overline{W},L, B) \subseteq R ( d,W,L, B, s),
			\end{equation*}
			which, thanks to \eqref{eq:covering_packing_inclusion_2} in Lemma~\ref{lem:covering_packing_inclusion}, yields
			\begin{equation}
			\label{eq:52_nondegenerate_overall}
				 N(\varepsilon,\mathcal{R}(d,W,L,B,s),L^p ( [0,1]^d ) )  \geq  N(2 \varepsilon,\mathcal{R}(d,\overline{W},L,B),L^p ( [0,1]^d ) ).
			\end{equation}
   Application of \eqref{eq:lower_bound_fully_connected_bounded_output_main} in Theorem~\ref{thm:covering_number_upper_bound_fully_connected_bounded_weight} with $W$ replaced by $\overline{W}$, $\varepsilon$ replaced by $2 \varepsilon$, and the prerequisites satisfied owing to $\overline{W} \geq \max \{ 60,d \}$,  $L \geq 60$, and $2\varepsilon \in (0, 1 \slash 2)$, yields a lower bound on the right-hand-side of \eqref{eq:52_nondegenerate_overall} according to
			\begin{equation}
			\label{eq:52_nondegenerate_0}
				\log ( N(2 \varepsilon,\mathcal{R}(d,\overline{W},L,B),{L^p ( [0,1]^d )} ) ) \geq c_1 \overline{W}^2 L\log \biggl( \frac{( \overline{W} + 1 )^L B^{L}}{2 \varepsilon} \biggr),
			\end{equation}
			with $c_1 \in \mathbb{R}_{+}$ an absolute constant.
			We can now further lower-bound the right-hand-side of \eqref{eq:52_nondegenerate_0} according to
			\begin{align}
    & c_1 \overline{W}^2 L\log \biggl( \frac{( \overline{W} + 1 )^L B^{L}}{2 \varepsilon} \biggr)  \label{eq:52_nondegenerate_1}\\
				& \geq \,\frac{c_1}{24} s \log \biggl( \frac{(\overline{W} + 1 )^L B^{L}}{2 \varepsilon} \biggr) \label{eq:52_nondegenerate_2}\\
				& > \, \frac{c_1}{48} s \log \biggl( \frac{(\overline{W} + 1 )^L B^{L}}{\varepsilon} \biggr)\label{eq:52_nondegenerate_3} \\
    	           & = \, \frac{c_1}{48} \min \{s, 2W^2 L\} \log \biggl( \frac{(\overline{W} + 1 )^L B^{L}}{\varepsilon} \biggr)\label{eq:52_nondegenerate_31} \\
				& \geq \, \frac{c_1}{ 48} \min \{ s, W^2 L \} \log \biggl( \frac{(\overline{W} + 1 )^L B^{L}}{\varepsilon} \biggr), \label{eq:52_nondegenerate_4}
			\end{align}
			where  \eqref{eq:52_nondegenerate_2} follows from  $\overline{W}^2L = \lfloor \sqrt{\frac{s}{6L} }\rfloor^2 L \geq (\frac{1}{2} \sqrt{\frac{s}{6L}}  )^2 L \geq \frac{1}{24} s$, as $\lfloor x \rfloor \geq \frac{1}{2} x$, for $x \geq 1$, and \eqref{eq:52_nondegenerate_3} is by $\log ( \frac{( \overline{W} + 1 )^{L} B^{L}}{ 2\varepsilon} ) = \frac{1}{2} \log ( \frac{( \overline{W} + 1 )^{2L} B^{2L}}{4\varepsilon^2} ) > \frac{1}{2} \log ( \frac{( \overline{W} + 1 )^{L} B^{L}}{\varepsilon} )$, since $\varepsilon \in (0, \frac{1}{4})$.
   To replace $\overline{W}$ in \eqref{eq:52_nondegenerate_4} by $\widetilde{W} = \min \{ \lceil \sqrt{\frac{s}{L} }\,\rceil, W \} $, we note that 
			\begin{align}
				(\overline{W} + 1 )^2 = &\, (\overline{W} + 1 ) \cdot \biggl(\biggl \lfloor \sqrt{\frac{s}{6L}} \biggr\rfloor + 1 \biggr)  \label{eq:asdfu_0}  \\
				\geq &\, 61 \cdot \biggl \lceil \sqrt{\frac{s}{6L}} \biggr\rceil \label{eq:asdfu}\\
                    > &\, 10 \sqrt{\frac{s}{L}} \\
				> &\, \biggl\lceil \sqrt{\frac{s}{L} }\,\biggr\rceil + 1 \\
				\geq &\, \widetilde{W} + 1,  \label{eq:asdfu_1}
			\end{align}
			where in \eqref{eq:asdfu} we used $\overline{W} \geq 60$. Using \eqref{eq:asdfu_0}-\eqref{eq:asdfu_1}, we get
   			\begin{equation}
			\label{eq:remove_overline_W}
				\log \biggl( \frac{(\overline{W} + 1 )^L B^{L}}{\varepsilon} \biggr) = \frac{1}{2} \log \biggl( \frac{(\overline{W} + 1 )^{2L} B^{2L}}{\varepsilon^2} \biggr) \geq \frac{1}{2} \log \biggl( \frac{(\widetilde{W} + 1 )^{L} B^{L}}{\varepsilon} \biggr).
			\end{equation}
			Putting \eqref{eq:52_nondegenerate_overall}, \eqref{eq:52_nondegenerate_1}-\eqref{eq:52_nondegenerate_4}, and \eqref{eq:remove_overline_W} together, we obtain 
			\begin{equation}
			\label{eq:52_nondegenerate_final}
				\log ( N(\varepsilon,\mathcal{R}(d,W,L,B,s),{L^p ( [0,1]^d )} ))  \geq \frac{c_1}{96} \min \{ s, W^2 L \} \log \biggl( \frac{( \widetilde{W} +1 )^L B^{L}}{\varepsilon} \biggr),
			\end{equation}
which concludes the proof upon setting $c =\frac{c_1}{96}$.

\section{Proof of the Lower Bound \eqref{eq:44_results_quantized} in Theorem~\ref{thm:networks_quantized_weights_covering_number_bound}}
\label{sec:proof_of_proposition_prop:covering_number_lower_bound_quantized_weights}	
	The proof of the lower bound \eqref{eq:44_results_quantized} relies on several technical ingredients, which we present first.  
 We start with a result that allows to reduce the general case $a \in \mathbb{N}$ to $a=1$.

	\begin{lemma}
	\label{lem:equivalence_covering_number}
		Let $d,W,L,a,b \in \mathbb{N}$, with $W\geq 2$. We have
		\begin{align}
			\mathcal{R}_b^a ( d, W,L) =& \,  2^{(a - 1)L}\cdot \mathcal{R}_{a+b-1}^1 ( d, W,L) \label{eq:product_relation_quantized_weights_1} \\
			=&\, \{2^{(a - 1)L} \cdot f: f\in \mathcal{R}_{a+b-1}^1 ( d, W,L) \},		\label{eq:product_relation_quantized_weights_2}
		\end{align}
		and, for all $\varepsilon  \in \mathbb{R}_+$, $ p \in [1,\infty]$,
		\begin{equation}
		\label{eq:equivalent_different_precision}
			N ( \varepsilon, \mathcal{R}_b^a ( d, W,L), {L^p ( [0,1]^d )}  ) = N \biggl( \frac{\varepsilon}{2^{( a-1 )L}}, \mathcal{R}_{a+b-1}^1 ( d, W,L),{L^p ( [0,1]^d )} \biggl).
		\end{equation}

		\begin{proof}
			We start by establishing \eqref{eq:product_relation_quantized_weights_1}. Arbitrarily fix $g \in \mathcal{R}_b^a ( d, W,L)$. From Lemma~\ref{lem:extension}, with its prerequisites satisfied thanks to $\{ -1,0,1 \} \subseteq \mathbb{Q}_{b}^a $ and $\mathbb{Q}_{b}^a = - \mathbb{Q}_{b}^a$,  we can infer the existence of a network configuration $\Phi = ( A_\ell, b_\ell )_{\ell = 1}^L$, with $\mathcal{W} ( \Phi ) \leq W$ and $\coef ( \Phi ) \in \mathbb{Q}_b^a$, such that $R ( \Phi ) = g$. Define $\widetilde{\Phi} = ( \frac{1}{2^{a -1}} A_\ell, \frac{1}{2^{a -1}}  b_\ell )_{\ell = 1}^L$, and note that $\mathcal{W} ( \widetilde{\Phi} ) \leq W$, $\mathcal{L} ( \widetilde{\Phi} ) = L$, and $\coef ( \widetilde{\Phi} ) \subseteq \frac{1}{2^{a-1}} \mathbb{Q}_b^a = \mathbb{Q}^1_{b+a -1} $, and therefore $R( \widetilde{\Phi} ) \in \mathcal{R}_{a+b-1}^1 ( d,W,L)$. We then have 
			\begin{align}
				g =&\, R( \Phi ) \\
				=&\, \affine (  A_L,   b_L  )\circ\rho \circ \cdots \circ \rho \circ \affine (  A_1,   b_1  ) \label{eq:c5_reduce_1}\\
				=&\, ( 2^{(a-1)} )^L \affine \biggl( \frac{1}{2^{a -1}} A_L, \frac{1}{2^{a -1}}  b_L  \biggr)\circ\rho \circ \cdots \circ \rho \circ \affine \biggl( \frac{1}{2^{a -1}} A_1, \frac{1}{2^{a -1}}  b_1  \biggr) \label{eq:apply_phg_1}\\
				=&\, 2^{(a-1)L} \cdot R ( \widetilde{\Phi} )\\
				\in &\, \{2^{(a - 1)L} \cdot f: f\in\mathcal{R}_{a+b-1}^1 ( d, W,L) \}, 
			\end{align}
			where in \eqref{eq:apply_phg_1} we used the positive homogeneity of the ReLU function, namely, $\rho(kx) =  k \rho(x)$ for all $x \in \mathbb{R}$ and $k \in \mathbb{R}_+$. As the choice of $g \in \mathcal{R}_b^a ( d, W,L)$ was arbitrary, we have established that 
			\begin{equation}
				\mathcal{R}_b^a ( d, W,L) \subseteq 2^{(a - 1)L} \cdot \mathcal{R}_{a+b-1}^1 ( d, W,L). \label{eq:c5_reduce_main_1}
			\end{equation}
			Upon noting that the reverse inclusion can be proved similarly,
   \eqref{eq:product_relation_quantized_weights_1} follows.

			We proceed to prove \eqref{eq:equivalent_different_precision}. Fix $\varepsilon \in \mathbb{R}_+$ and $p \in [1,\infty]$. It follows from \eqref{eq:product_relation_quantized_weights_1} that, for every $\varepsilon$-covering $\mathcal{C}$ of $\mathcal{R}_b^a ( d, W,L)$ with respect to the $L^p ( [0,1]^d )$-norm, $\frac{1}{2^{(a - 1)L}} \cdot \mathcal{C}$ is an $\frac{\varepsilon}{2^{(a - 1)L}}$-covering of $R_{a+b-1}^1 ( d, W,L)$, which allows us to conclude that
   			\begin{equation}
			\label{eq:E1_reverse_1}
			 	N ( \varepsilon, \mathcal{R}_b^a ( d, W,L), {L^p ( [0,1]^d )}  ) \geq N \biggl( \frac{\varepsilon}{2^{( a-1 )L}}, \mathcal{R}_{a+b-1}^1 ( d, W,L), {L^p ( [0,1]^d )} \biggr).
			\end{equation} 
			Moreover, based on \eqref{eq:product_relation_quantized_weights_1}, we can also conclude that, for every $\frac{\varepsilon}{2^{(a - 1)L}}$-covering $\mathcal{C}$ of $\mathcal{R}_{a+b-1}^1 ( d, W,L)$ with respect to the $L^p ( [0,1]^d )$-norm, $2^{( a -1)L} \cdot \mathcal{C}$ is an  $\varepsilon$-covering of $\mathcal{R}_b^a ( d, W,L)$, which leads to
   			\begin{equation}
			\label{eq:E1_reverse_2}
				N \biggl( \frac{\varepsilon}{2^{( a-1 )L}}, \mathcal{R}_{a+b-1}^1 ( d, W,L), {L^p ( [0,1]^d )} \biggr) \geq N ( \varepsilon, \mathcal{R}_b^a ( d, W,L), {L^p ( [0,1]^d )}  ).
			\end{equation}
			Combining \eqref{eq:E1_reverse_1} and \eqref{eq:E1_reverse_2}, yields \eqref{eq:equivalent_different_precision}.
		\end{proof}
	\end{lemma}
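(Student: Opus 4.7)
My plan is to exploit the positive homogeneity of the ReLU activation, namely $\rho(kx) = k\rho(x)$ for all $k \in \mathbb{R}_+$, combined with the elementary identity $2^{-(a-1)} \cdot \mathbb{Q}_b^a = \mathbb{Q}_{a+b-1}^1$. The latter is immediate from the definition: scaling the interval $(-2^{a+1}, 2^{a+1})$ by $2^{-(a-1)}$ yields $(-4,4)$, and scaling the grid $2^{-b}\mathbb{Z}$ by the same factor yields $2^{-(a+b-1)}\mathbb{Z}$. So the weight sets of the two network families differ precisely by a scalar factor of $2^{a-1}$.

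For \eqref{eq:product_relation_quantized_weights_1}, I would argue both inclusions symmetrically. Given $g \in \mathcal{R}_b^a(d,W,L)$, pick a configuration $\Phi = (A_\ell, b_\ell)_{\ell = 1}^{\widetilde L}$ realizing $g$ (with $\widetilde L \leq L$ and, if $\widetilde L < L$, pad by identity layers using weights in $\{-1,0,1\} \subseteq \mathbb{Q}_{b}^{a} \cap \mathbb{Q}_{a+b-1}^{1}$ so that depth $L$ is attained without changing the realization). Define $\widetilde \Phi = (2^{-(a-1)} A_\ell, 2^{-(a-1)} b_\ell)_{\ell = 1}^{L}$; by the weight-set identity, $\coef(\widetilde\Phi) \subseteq \mathbb{Q}_{a+b-1}^1$, and the width stays at most $W$. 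The homogeneity identity $\rho(2^{-(a-1)}(Ax+b)) = 2^{-(a-1)} \rho(Ax+b)$ applied at each of the $L-1$ hidden layers, together with the linearity of the output affine map, yields $R(\Phi) = 2^{(a-1)L} R(\widetilde \Phi)$. This shows the forward inclusion; the reverse is obtained by replacing $2^{-(a-1)}$ with $2^{a-1}$ and checking that $2^{a-1} \cdot \mathbb{Q}_{a+b-1}^1 = \mathbb{Q}_b^a$.

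For the covering number identity \eqref{eq:equivalent_different_precision}, I would argue that scaling by a positive constant $\lambda$ is a bijection of $L^p([0,1]^d)$ that scales distances by $\lambda$, so it maps $\varepsilon$-coverings to $(\lambda \varepsilon)$-coverings. Concretely, if $\mathcal{C}$ is an $\varepsilon$-covering of $\mathcal{R}_b^a(d,W,L)$, then $2^{-(a-1)L} \cdot \mathcal{C} \subseteq \mathcal{R}_{a+b-1}^1(d,W,L)$ is an $(\varepsilon/2^{(a-1)L})$-covering of the latter, giving one direction; the reverse direction follows by applying the map $f \mapsto 2^{(a-1)L} f$. Minimality over coverings then yields equality.

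The main obstacle, such as it is, is purely bookkeeping: I must make sure the architecture bounds (width $\leq W$, depth exactly $L$) are preserved under the rescaling, and I must justify the ``padding to depth $L$'' step cleanly. In particular, when $\widetilde L < L$, appending identity-style layers with $\pm 1$ and $0$ entries works because these values lie in both $\mathbb{Q}_b^a$ and $\mathbb{Q}_{a+b-1}^1$; the condition $W \geq 2$ guarantees enough width to realize the identity through a ReLU layer via the decomposition $x = \rho(x) - \rho(-x)$. Apart from this, every step reduces to a one-line application of ReLU homogeneity or the scalar identity on weight sets.
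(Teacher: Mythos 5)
Your proof is correct and follows essentially the same route as the paper: you establish the set identity via ReLU positive homogeneity and the scalar weight-set relation $2^{-(a-1)}\mathbb{Q}_b^a = \mathbb{Q}_{a+b-1}^1$, pad to depth exactly $L$ using $\{-1,0,1\}$-weights (which is precisely Lemma~\ref{lem:extension} in the paper, requiring $W\geq 2$), and then transfer coverings via the fact that scaling by $2^{\pm(a-1)L}$ is a distance-scaling bijection between the two function classes. The only cosmetic difference is that you inline the padding construction rather than invoking Lemma~\ref{lem:extension} by name.
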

 We note that the constraint $W \geq 2$ in Lemma~\ref{lem:equivalence_covering_number} is not restrictive for the purposes of the proof of the lower bound in Theorem~\ref{thm:networks_quantized_weights_covering_number_bound} as below we will take the absolute constant $D$ in Theorem~\ref{thm:networks_quantized_weights_covering_number_bound} to be greater than $2$. 

Next, we derive	lower bounds on the covering number, separately, for large and moderate values of $b$. We start with the case of large $b$.

	\begin{lemma} 
		\label{lem:quantized_case_1}
		Let $p \in [1,\infty]$ and $d,W,L,b \in \mathbb{N}$, with $W,L \geq 60$. Assume that $b > \log (L) + L \log ( W+1 ) + 3$.  Then, there exist absolute constants $c_1,c_2, c_3 \in \mathbb{R}_+$ such that the following statements hold:
		\begin{itemize}
			\item For all $\varepsilon  \in (0, L (W+1)^L 2^{-b}]$, 
			\begin{equation}
			\label{eq:quantized_case_10}
				\log ( N (\varepsilon,  \mathcal{R}^1_b ( d,W,L ),L^p ( [0,1]^d )  ) ) \geq c_1 W^2 L b.
			\end{equation}
			\item If $L (W+1)^L 2^{-b} < \frac{1}{32}$, then, for all $\varepsilon \in (L (W+1)^L 2^{-b}, \frac{1}{32}) $, 
			\begin{equation}
			\label{eq:quantized_case_11}
				\log ( N (\varepsilon,  \mathcal{R}^1_b ( d,W,L ),L^p ( [0,1]^d )  ) ) \geq c_2 W^2 L \log \biggl(\frac{(W+1)^L}{\varepsilon}\biggr).
			\end{equation}
			\item For all $\varepsilon \in ( 0, \frac{1}{32} )$,
			\begin{equation}
			\label{eq:quantized_case_12}
				\log (N(\varepsilon,\mathcal{R}^1_b ( d,W,L ),L^p ( [0,1]^d ) )) \geq  
					c_3 W^2 L  \cdot \min \biggl\{b,\log \biggl(\frac{(W+1)^L}{\varepsilon}\biggr) \biggr\}.
			\end{equation}
		\end{itemize}
	\end{lemma}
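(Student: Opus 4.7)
My plan is to reduce everything to Theorem~\ref{thm:covering_number_upper_bound_fully_connected_bounded_weight} by viewing $\mathcal{R}^1_b(d,W,L)$ as both an ``inner'' and an ``outer'' approximant of the continuous-weight family $\mathcal{R}(d,W,L,1)$. Since $[-1,1]\cap 2^{-b}\Z\subseteq\mathbb{Q}^1_b$, the inclusion
$\mathcal{R}_{[-1,1]\cap 2^{-b}\Z}(d,W,L)\subseteq\mathcal{R}^1_b(d,W,L)$ combined with Lemma~\ref{lem:define_covering_fully_connected} (taking $B=1$) yields
$\mathcal{A}(\mathcal{R}(d,W,L,1),\mathcal{R}^1_b(d,W,L),L^p([0,1]^d))\le \eta$, where $\eta:=L(W+1)^L2^{-b}$. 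The assumption $b>\log(L)+L\log(W+1)+3$ ensures $\eta<1/8$.

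The master inequality comes from Proposition~\ref{lem:cardinality_approximation_class}: for every $\varepsilon\ge\eta$ with $4\varepsilon<1/2$,
\begin{equation*}
 N(\varepsilon,\mathcal{R}^1_b(d,W,L),L^p([0,1]^d))\ge N(4\varepsilon,\mathcal{R}(d,W,L,1),L^p([0,1]^d)).
\end{equation*}
Since $W,L\ge 60$ and $4\varepsilon<1/2$, I can then apply the lower bound in Theorem~\ref{thm:covering_number_upper_bound_fully_connected_bounded_weight} (with $B=1$) to get $\log N(\varepsilon,\mathcal{R}^1_b,L^p)\ge c\, W^2L\log((W+1)^L/(4\varepsilon))$ for some absolute constant $c$.

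For part~(\ref{eq:quantized_case_11}) I plug in $\varepsilon\in(\eta,1/32)$ directly: because $\varepsilon<1/32$ gives $(W+1)^L/\varepsilon\ge 32$, the factor $4$ inside the logarithm costs at most another factor $1/2$, leaving the desired bound $c_2 W^2 L\log((W+1)^L/\varepsilon)$ with $c_2=c/2$. For part~(\ref{eq:quantized_case_10}), by monotonicity of the covering number it suffices to prove the bound at $\varepsilon=\eta$; substitution gives $\log N\ge c\, W^2L(b-2-\log L)$, and the hypothesis on $b$ together with the trivial $L\log(W+1)\ge \log L$ (valid because $L\ge\log L$ and $\log(W+1)\ge1$) forces $b\ge 2\log L+3$, whence $b-2-\log L\ge b/4$. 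This gives $c_1=c/4$.

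Part~(\ref{eq:quantized_case_12}) is a simple case split over $\varepsilon$ versus $\eta$. If $\varepsilon\le\eta$, part~(\ref{eq:quantized_case_10}) already dominates $c_1 W^2L\min\{b,\log((W+1)^L/\varepsilon)\}$. If $\varepsilon>\eta$, rearranging the inequality $\varepsilon>L(W+1)^L2^{-b}$ gives $\log((W+1)^L/\varepsilon)<b-\log L<b$, so the minimum equals $\log((W+1)^L/\varepsilon)$ and part~(\ref{eq:quantized_case_11}) delivers the bound; choose $c_3=\min\{c_1,c_2\}$. The only mildly delicate step is the last inequality $b-2-\log L\ge b/4$: it is not immediate from $b>\log L$ alone and requires the full strength of the hypothesis $b>\log L+L\log(W+1)+3$ through the inequality $L\log(W+1)\ge\log L$; everything else is bookkeeping.
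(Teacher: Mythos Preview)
Your argument is correct. For parts~(\ref{eq:quantized_case_10}) and~(\ref{eq:quantized_case_12}) you follow the paper essentially verbatim: the same combination of Lemma~\ref{lem:define_covering_fully_connected} (with $B=1$), the inclusion $[-1,1]\cap 2^{-b}\Z\subseteq\mathbb{Q}^1_b$, Proposition~\ref{lem:cardinality_approximation_class}, and the lower bound in Theorem~\ref{thm:covering_number_upper_bound_fully_connected_bounded_weight}, followed by the same reduction of $b-2-\log L$ to $b/4$ via $L\log(W+1)\ge\log L$.

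For part~(\ref{eq:quantized_case_11}), however, your route is genuinely shorter than the paper's. The paper introduces an auxiliary precision $\widetilde{b}=\lfloor\log(L(W+1)^L/(2\varepsilon))\rfloor$, uses the inclusion $\mathcal{R}^1_{\widetilde{b}}\subseteq\mathcal{R}^1_b$ (via Lemma~\ref{lem:covering_packing_inclusion}) to pass from $\mathcal{R}^1_b$ to $\mathcal{R}^1_{\widetilde{b}}$, and then invokes part~(\ref{eq:quantized_case_10}) at precision $\widetilde{b}$; this requires re-verifying the hypothesis $\widetilde{b}>\log L+L\log(W+1)+3$ (which is where the threshold $1/32$ enters). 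You instead observe that the master inequality $N(\varepsilon,\mathcal{R}^1_b,L^p)\ge N(4\varepsilon,\mathcal{R}(d,W,L,1),L^p)$, already established for $\varepsilon=\eta$, is valid for \emph{every} $\varepsilon\ge\eta$ because $\mathcal{A}(\mathcal{R}(d,W,L,1),\mathcal{R}^1_b,L^p)\le\eta\le\varepsilon$ is all Proposition~\ref{lem:cardinality_approximation_class} needs; one then applies Theorem~\ref{thm:covering_number_upper_bound_fully_connected_bounded_weight} directly at radius $4\varepsilon<1/8$. This bypasses the $\widetilde{b}$ construction entirely. Both approaches yield the same constants up to absolute factors; yours is more economical, while the paper's has the aesthetic feature of deriving~(\ref{eq:quantized_case_11}) as a self-referential consequence of~(\ref{eq:quantized_case_10}).
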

	\begin{proof}
		We start with the proof of \eqref{eq:quantized_case_10}. 
		It follows from Lemma~\ref{lem:define_covering_fully_connected} with $B = 1$ that 
		\begin{equation}
		\label{eq:44error}
			\mathcal{A} ( \mathcal{R}(d,W,L,1), \mathcal{R}_{[-1,1] \cap 2^{-b} \mathbb{Z}}(d,W,L), \nleft\| \cdot \nright\|_{L^p ( [0,1]^d )} ) \leq L (W+1)^L 2^{-b}.
		\end{equation}
		As $( [-1,1] \cap 2^{-b} \mathbb{Z} ) \subseteq ((-4,4) \cap 2^{-b} \mathbb{Z}  )  = \mathbb{Q}_b^1$, we have $\mathcal{R}_{[-1,1]\cap 2^{-b} \mathbb{Z}} ( d,W,L ) \subseteq \mathcal{R}_b^1 (  d,W,L )$ which together with \eqref{eq:44error} yields 
		\begin{equation}
		\label{eq:lalala}
		\begin{aligned}
			&\,\mathcal{A} (\mathcal{R} ( d,W,L,1 ),  \mathcal{R}^1_b ( d,W,L ),\nleft\| \cdot \nright\|_{L^p ( [0,1]^d )} )\\
			&\leq\, \mathcal{A} ( \mathcal{R}(d,W,L,1), \mathcal{R}_{[-1,1] \cap 2^{-b} \mathbb{Z}}(d,W,L), \nleft\| \cdot \nright\|_{L^p ( [0,1]^d )} )\\
			&\leq\, L (W+1)^L 2^{-b}.
		\end{aligned}
		\end{equation}
		Application of Proposition~\ref{lem:cardinality_approximation_class} with $\delta = \|\cdot\|_{L^p ( [0,1]^d )} $, $\mathcal{G} = \mathcal{R} ( d,W,L,1 )$, $\mathcal{F} = \mathcal{R}^1_b ( d,W,L )$,  $\varepsilon = L (W+1)^L 2^{-b}$, and the prerequisite \eqref{eq:improper_cover} satisfied thanks to \eqref{eq:lalala},   yields 
		\begin{equation}
		\label{eq:44apply21}
		\begin{aligned}
			&N (L (W+1)^L 2^{-b},  \mathcal{R}^1_b ( d,W,L ), {L^p ( [0,1]^d )}   )\\
			&\geq N (4L (W+1)^L 2^{-b},  \mathcal{R} ( d,W,L,1), {L^p ( [0,1]^d )}   ).
		\end{aligned}
		\end{equation}
		The right-hand-side of \eqref{eq:44apply21} can be lower-bounded by application of \eqref{eq:lower_bound_fully_connected_bounded_output_main} in Theorem~\ref{thm:covering_number_upper_bound_fully_connected_bounded_weight} with $B = 1$, upon noting that the prerequisites are satisfied thanks to $W,L \geq 60$ by assumption, and $4L (W+1)^L 2^{-b} < 4L (W+1)^L 2^{- ( \log(L) + L \log(W+1) +3 )} \leq  1\slash 2$. Specifically, we obtain
  		\begin{equation}
		\label{eq:44apply21_200}
		\begin{aligned}
			\log ( N (4L (W+1)^L 2^{-b},  \mathcal{R} ( d,W,L, 1),L^p ( [0,1]^d ) ) ) \geq&\, c_4 W^2 L \log \biggl( \frac{(W+1)^L }{4L (W+1)^L  2^{-b}} \biggr)\\
			=&\, c_4 W^2 L \log \biggl( \frac{1}{L  2^{-b + 2}} \biggr),
		\end{aligned}
 \end{equation}
    with $c_4 \in \mathbb{R}_{+}$ an absolute constant.
		We note that  
		\begin{align}
			\log \biggl( \frac{1}{L  2^{-b + 2}} \biggr)  = &\, \frac{b}{4} + \biggl( \frac{3b}{4} - 2  - \log (L) \biggr) \label{eq:easdfe_1} \\
			 >  &\, \frac{b}{4} + \biggl( \frac{3}{4} ( \log (L) + L \log ( W+1 ) + 3 ) - 2  - \log (L) \biggr) \label{eq:easdfe_2}\\
			 > &\, \frac{b}{4} + \biggl( \frac{3}{4} ( 2 \log (L) + 3 ) - 2  - \log (L) \biggr) \label{eq:easdfe_3}\\
			 > &\, \frac{b}{4}, \label{eq:easdfe_4}
		\end{align}
		where \eqref{eq:easdfe_2} follows from the assumption $b > \log (L) + L \log ( W+1 ) + 3$, and in \eqref{eq:easdfe_3} we used $L \log (W + 1) \geq L > \log(L)$, for $L \geq 60$.  	Putting \eqref{eq:44apply21}, \eqref{eq:44apply21_200}, and \eqref{eq:easdfe_1}-\eqref{eq:easdfe_4} together yields 
		\begin{equation*}
			\log ( N (L (W+1)^L 2^{-b},  \mathcal{R}^1_b ( d,W,L ),L^p ( [0,1]^d )  ) ) \geq \frac{c_4}{4} W^2 L b.
		\end{equation*}
		As the covering number is a non-decreasing function of the covering ball radius $\varepsilon$, we have 
		\begin{equation}
			\label{eq:44_result_1}
			\log ( N (\varepsilon,  \mathcal{R}^1_b ( d,W,L ),L^p ( [0,1]^d )  ) ) \geq \frac{c_4}{4} W^2 L b, \quad \text{for all } \varepsilon \in ( 0 , L (W+1)^L 2^{-b}].
		\end{equation}
		Upon setting $c_1 = \frac{c_4}{4}$, this finalizes the proof of \eqref{eq:quantized_case_10}.

		We proceed to the proof of \eqref{eq:quantized_case_11}. Arbitrarily fix 
  $\varepsilon \in (L (W+1)^L 2^{-b}, \frac{1}{32})$ and let
		\begin{equation}
		\label{eq:45_proof_0}
			\widetilde{b} := \biggl \lfloor  \log \biggl( \frac{L (W+1)^L}{2\varepsilon} \biggr) \biggr \rfloor.
		\end{equation}
		We first note that 
		\begin{equation}
			\widetilde{b} \leq  \biggl \lfloor  \log \biggl( \frac{L (W+1)^L}{2L (W+1)^L 2^{-b}} \biggr) \biggr \rfloor =  \lfloor  \log (  2^{b-1} )  \rfloor \leq b,
		\end{equation}
		which leads to the inclusion $\mathcal{R}^1_{\widetilde{b}} ( d,W,L ) \subseteq  \mathcal{R}^1_{b} ( d,W,L ) $. Thanks to Lemma~\ref{lem:covering_packing_inclusion}, we hence get
		\begin{equation}
		\label{eq:45_proof_100}
			\log ( N (\varepsilon,  \mathcal{R}^1_{b} ( d,W,L ),L^p ( [0,1]^d )  ) ) \geq \log ( N (2 \varepsilon,  \mathcal{R}^1_{\widetilde{b}} ( d,W,L),L^p ( [0,1]^d )   ) ). 
		\end{equation}
		To lower-bound the right-hand-side of \eqref{eq:45_proof_100},  we apply \eqref{eq:44_result_1} with $b$ replaced by $\widetilde{b}$, $\varepsilon$ replaced by $2\varepsilon$, the prerequisite $\widetilde{b} > \log (L) + L \log ( W+1 ) + 3$  satisfied owing to 
		\begin{align}
			\widetilde{b} >&\, \log (L) + L \log ( W+1 ) + \log \biggl( \frac{1}{2 \varepsilon} \biggr) - 1 \\
			> &\,\log (L) + L \log ( W+1 ) + \log \biggl( \frac{1}{2 \cdot \frac{1}{32}} \biggr) - 1\\
			=&\, \log (L) + L \log ( W+1 ) + 3,
		\end{align}
		and the prerequisite $2 \varepsilon	 \in (0, L (W+1)^L 2^{-\widetilde{b}}]$ satisfied thanks to  
		\begin{equation}
		\label{eq:44prerequisite_0}
			2\varepsilon = L (W+1)^L 2^{-\log ( \frac{L (W+1)^L}{2\varepsilon} )} \leq L (W+1)^L 2^{-\widetilde{b}},
		\end{equation}
		to obtain 
		\begin{equation}
		\label{eq:E2_proof_21111}
			\log ( N (2\varepsilon,  \mathcal{R}^1_{\widetilde{b}} ( d,W,L ),L^p ( [0,1]^d )   ) ) \geq \frac{c_4}{4} W^2 L \widetilde{b}.
		\end{equation}
		We further note that 
		\begin{align}
			\widetilde{b} =&\, \biggl \lfloor  \log \biggl( \frac{L (W+1)^L}{2\varepsilon} \biggr) \biggr \rfloor \label{eq:E2_proof_210}\\
			\geq&\,  \log \biggl( \frac{L (W+1)^L}{2\varepsilon} \biggr) -1\\
			\geq&\, \frac{1}{2}\log \biggl( \frac{L (W+1)^L}{2\varepsilon} \biggr), \label{eq:E2_proof_21}\\
            > &\, \frac{1}{2}\log \biggl( \frac{(W+1)^L}{\varepsilon} \biggr), \label{eq:E2_proof_1111}
		\end{align}
		where in \eqref{eq:E2_proof_21} we used $\log ( \frac{L (W+1)^L}{2\varepsilon} ) \geq \log ( \frac{60 (61)^{61}}{2\frac{1}{32}} ) > 2$ as $W,L \geq 60$ and $\varepsilon < \frac{1}{32}$, both by assumption, and \eqref{eq:E2_proof_1111} follows from $L \geq 60$. Using the lower bound \eqref{eq:E2_proof_210}-\eqref{eq:E2_proof_1111} in \eqref{eq:E2_proof_21111}, results in $\log ( N (2\varepsilon,  \mathcal{R}^1_{\widetilde{b}} ( d,W,L ),L^p ( [0,1]^d )   ) ) \geq \frac{c_4}{8} W^2 L \log ( \frac{ (W+1)^L}{\varepsilon} )$, which together with \eqref{eq:45_proof_100} yields
		\begin{equation}
		\label{eq:E2_proof_case2}
			\log ( N (\varepsilon,  \mathcal{R}^1_{b} ( d,W,L ),L^p ( [0,1]^d )  ) ) \geq \frac{c_4}{8} W^2 L \log \biggl( \frac{(W+1)^L}{\varepsilon} \biggr).
		\end{equation}
		As $\varepsilon$ was chosen arbitrarily in $ (L (W+1)^L 2^{-b}, \frac{1}{32})$, \eqref{eq:E2_proof_case2} concludes the proof of \eqref{eq:quantized_case_11} upon setting $c_2 = \frac{c_4}{8}$.

		It remains to establish $\eqref{eq:quantized_case_12}$. To this end, we first consider the case
  $L (W+1)^L 2^{-b} \geq \frac{1}{32} $. It follows from \eqref{eq:quantized_case_10} that
		\begin{equation*}
			\log ( N (\varepsilon,  \mathcal{R}^1_b ( d,W,L ),L^p ( [0,1]^d )  ) ) \geq c_1 W^2 L b, \quad \text{for } \varepsilon \in ( 0,1/32 ] \subseteq ( 0,  L (W+1)^L 2^{-b}].
		\end{equation*}
		For $L (W+1)^L 2^{-b} < \frac{1}{32}  $, we obtain from \eqref{eq:quantized_case_10} and \eqref{eq:quantized_case_11} that 
            \begin{equation*}
			\log ( N (\varepsilon,  \mathcal{R}^1_b ( d,W,L ),L^p ( [0,1]^d )  ) ) \geq \left \{
                \begin{aligned}
                    &\,c_1 W^2 L b,&& \text{for } \varepsilon \in (0, L (W+1)^L 2^{-b}],\\
				&\,c_2 W^2 L \log \Bigl(\frac{(W+1)^L}{\varepsilon}\Bigr),&& \text{for } \varepsilon \in \Bigl(L (W+1)^L 2^{-b}, \frac{1}{32}\Bigr).
                \end{aligned}
                \right.
		\end{equation*}
		 Combining these results, we obtain, for $\varepsilon \in ( 0,\frac{1}{32} )$,
		\begin{align*}
			\log ( N (\varepsilon,  \mathcal{R}^1_b ( d,W,L ),L^p ( [0,1]^d )  ) ) \geq&\, \min \biggl\{ c_1 W^2 L b, c_2 W^2 L \log \biggl(\frac{(W+1)^L}{\varepsilon}\biggr) \biggr\} \\
			\geq& \,  \min \{ c_1,c_2 \}W^2 L  \cdot \min \biggl\{b,  \log \biggl(\frac{(W+1)^L}{\varepsilon}\biggr) \biggr\},
		\end{align*}
		which, upon setting $c_3 = \min \{ c_1,c_2\}$, concludes the proof.
	\end{proof}

	We continue with the case of moderate $b$. A key ingredient here is the depth-precision tradeoff developed in \cite{FirstDraft2022}, formalized
 as follows.

	\begin{lemma}
	\label{prop:tradeoff}
		Let $d,W,L,k\in \mathbb{N}$. For all $a,b \in \mathbb{N}$, it holds that 
		\begin{equation}
		\label{eq:tradeoff}
			\mathcal{R}^{ka}_{kb} \left(d,  W,  L   \right) \subseteq  \mathcal{R}^a_b ( d, 16W,(k+2) L ).
		\end{equation}
		\begin{proof}
			The case $d = 1$ is \cite[Proposition 4.1]{FirstDraft2022}. The proof for general $d$ is structurally identical to that of \cite[Proposition 4.1]{FirstDraft2022} and is provided, for completeness, in Appendix~\ref{sub:proof_of_lemma_prop:tradeoff}. 
		\end{proof}

	\end{lemma}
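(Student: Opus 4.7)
The plan is to lift the $d=1$ argument of \cite[Proposition 4.1]{FirstDraft2022} to general input dimension $d$. Fix an arbitrary $\Phi = (A_\ell, b_\ell)_{\ell=1}^{\tilde{L}} \in \mathcal{N}^{ka}_{kb}(d, W, L)$ with $\tilde{L} \leq L$. The goal is to construct a network configuration $\tilde{\Phi} \in \mathcal{N}^a_b(d, 16W, (k+2)L)$ satisfying $R(\tilde{\Phi}) = R(\Phi)$. Since $\Phi$ is chosen arbitrarily in $\mathcal{N}^{ka}_{kb}(d, W, L)$, this yields $\mathcal{R}^{ka}_{kb}(d,W,L) \subseteq \mathcal{R}^a_b(d, 16W, (k+2)L)$.

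The key observation is that every weight $w \in \mathbb{Q}^{ka}_{kb} = (-2^{ka+1}, 2^{ka+1}) \cap 2^{-kb}\mathbb{Z}$ admits a base-$2^{a+b}$-type decomposition $w = \sum_{j=0}^{k-1} c_j \, \alpha_j$ in which each coefficient $c_j$ and each scaling factor $\alpha_j$ individually lies in $\mathbb{Q}^a_b$. This turns one high-precision multiplication into a sequence of $k$ low-precision multiplications and shifts, which a ReLU network can realize exactly by exploiting the identity $x = \rho(x) - \rho(-x)$ to forward values unchanged through auxiliary neurons.

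I would then replace, one at a time, each original layer $(A_\ell, b_\ell)$ with a sub-network of depth $k+2$ and width at most $16W$ whose realization equals $\rho(A_\ell \,\cdot + b_\ell)$ (respectively, the affine map $A_\ell \,\cdot + b_\ell$ for the final layer) and whose weights all lie in $\mathbb{Q}^a_b$. The sub-network runs in three phases: (i) expand each weight of $A_\ell$ and each entry of $b_\ell$ into its $k$ low-precision digits using parallel channels; (ii) accumulate the matrix-vector partial products over $k$ depth levels via shifts and additions, using the ReLU identity to propagate accumulators through the intermediate nonlinearities without distortion; and (iii) apply a final ReLU (for internal layers) or identity (for the output layer). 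Concatenating these $\tilde{L}$ sub-networks in series produces $\tilde{\Phi}$ of depth at most $(k+2) L$ and width at most $16W$, with $R(\tilde{\Phi}) = R(\Phi)$ by construction.

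The main obstacle lies in the width and depth bookkeeping when $d$ and the per-layer widths $n_\ell$ can all be as large as $W$. Unlike the $d=1$ case, where each replacement sub-network processes a single scalar input, here the sub-network for layer $\ell$ must simultaneously carry $n_\ell$ accumulators, propagate $n_{\ell-1}$ inputs through the identity trick, and hold a handful of scratch neurons for the digit-shift mechanics, all in parallel. The argument goes through because the matrix-vector multiplication decomposes into $n_\ell$ independent inner products, each handled by an identical gadget of constant width overhead; the standing assumption $W \geq d$ ensures the input dimension is absorbed without extra cost. A careful count shows that a constant factor of $16$ is sufficient to cover the digit expansion, the identity channels, and the accumulators, yielding the claimed width bound $16W$ uniformly over all layers.
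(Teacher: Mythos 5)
Your overall strategy—decompose high-precision weights into low-precision pieces and replace each original layer by a constant-depth gadget whose weights lie in $\mathbb{Q}^a_b$—is in the right spirit, but you are essentially attempting to reprove the paper's key cited ingredient rather than invoking it, and the attempt has two concrete problems.

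First, the decomposition you write down is not correct. You claim $w = \sum_{j=0}^{k-1} c_j \alpha_j$ with \emph{both} $c_j \in \mathbb{Q}^a_b$ and $\alpha_j \in \mathbb{Q}^a_b$. Since every element of $\mathbb{Q}^a_b$ has magnitude less than $2^{a+1}$, each product $c_j\alpha_j$ has magnitude less than $2^{2a+2}$, so the sum is bounded in magnitude by $k\,2^{2a+2}$. But $\mathbb{Q}^{ka}_{kb}$ contains elements of magnitude close to $2^{ka+1}$, which dwarfs $k\,2^{2a+2}$ as soon as $k$ and $a$ are moderately large. The decomposition that actually works is the one used in the paper (via the set $\mathcal{T}_1(\mathbb{Q}^a_b, 2^{-b}, 2^a, k-1)$): one writes $w = \sum_{i=0}^{k-1} \bigl( 2^{-bi}\alpha_i + 2^{ai}\beta_i \bigr)$ with $\alpha_i, \beta_i \in \mathbb{Q}^a_b$. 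Here the scaling factors $2^{ai}$ and $2^{-bi}$ are \emph{not} in $\mathbb{Q}^a_b$; they are instead realized inside the network by repeated multiplications by $2^a$ and $2^{-b}$ (each of which \emph{is} in $\mathbb{Q}^a_b$) across consecutive layers. Your prose about "shifts" suggests you have this mechanism in mind, but the decomposition as stated does not support it and cannot reach the full range of $\mathbb{Q}^{ka}_{kb}$.

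Second, the crux of the statement—the width factor of $16$ and the depth factor of $k+2$—is asserted with "a careful count shows" but never carried out. This accounting is nontrivial: one must forward all $n_{\ell-1}$ inputs through the gadget unchanged using the identity $x=\rho(x)-\rho(-x)$ (a factor of $2$), carry $n_\ell$ running accumulators (and their sign-split copies), and shuttle the digit channels alongside, all while verifying that the total stays below $16W$ for every layer. The paper sidesteps this by reducing to Lemma~\ref{thm:general_representation} (restated from~\cite[Proposition F.1]{FirstDraft2022}), which already packages the gadget construction with the $16W$ width and $(k+3)L$ depth bounds for general input dimension $d$. Applying it with $k$ replaced by $k-1$ gives the depth $(k+2)L$ exactly. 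Without either citing that lemma or actually doing the width arithmetic, the factor $16$ is unsupported, and since that constant propagates into the covering-number lower bound, it cannot be left to intuition.
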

	The lower bound on the covering number of $\mathcal{R}^1_b ( d,W,L )$, for moderate values of $b$, is as follows.
	\begin{lemma}
		\label{lem:quantized_case_2}
		Let $p \in [1,\infty]$,  $d,W,L,b \in \mathbb{N}$, with $W, L \geq 960$.  Assume that $ 72000 \, \frac{\log (W)}{L} < b \leq 4 L \log (W) $. Then, we have 
		\begin{equation}
			\log (N(\varepsilon,\mathcal{R}^1_b ( d,W,L ),L^p ( [0,1]^d ) )) \geq c\, W^2 L b, \quad \text{ for all } \varepsilon \in \biggl( 0,\frac{1}{100}\biggr),
		\end{equation}
		with $c \in \mathbb{R}_+$ an absolute constant. 
	\end{lemma}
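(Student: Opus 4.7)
The strategy is to invoke the depth--precision tradeoff (Lemma~\ref{prop:tradeoff}) to ``convert'' a portion of the precision budget of $\mathcal{R}^1_b(d,W,L)$ into additional depth and higher precision in a narrower, shallower sub-network, so that we can then apply the large-$b$ lower bound Lemma~\ref{lem:quantized_case_1}, whose prerequisite $b' > \log(L') + L'\log(W'+1) + 3$ is not satisfied by the original parameters $(W,L,b)$ in the moderate regime.

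Concretely, I plan to pick an integer $k$ with $k \asymp \sqrt{L\log(W)/b}$ and set $W' := \lfloor W/16 \rfloor$, $L' := \lfloor L/(k+2) \rfloor$, and $b' := kb + k - 1$. The assumptions $W,L \geq 960$ and $72000\,\log(W)/L < b \leq 4L\log(W)$ should make it possible to arrange, simultaneously,
\begin{equation*}
W' \geq 60, \qquad L' \geq 60, \qquad k \geq 1, \qquad b' > \log(L') + L'\log(W'+1) + 3.
\end{equation*}
Lemma~\ref{prop:tradeoff} (applied with $a = 1$ and $(d,W,L)$ there replaced by $(d,W',L')$) then gives
\begin{equation*}
\mathcal{R}^{k}_{kb}(d, W', L') \subseteq \mathcal{R}^1_b(d, 16 W', (k+2)L') \subseteq \mathcal{R}^1_b(d, W, L),
\end{equation*}
while Lemma~\ref{lem:equivalence_covering_number} rewrites $\mathcal{R}^k_{kb}(d, W', L') = 2^{(k-1)L'} \cdot \mathcal{R}^1_{b'}(d, W', L')$. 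Combining these with Lemma~\ref{lem:covering_packing_inclusion} yields
\begin{equation*}
N(\varepsilon, \mathcal{R}^1_b(d,W,L), L^p([0,1]^d)) \;\geq\; N\bigl(2\varepsilon \cdot 2^{-(k-1)L'},\; \mathcal{R}^1_{b'}(d, W', L'),\; L^p([0,1]^d)\bigr),
\end{equation*}
and then Lemma~\ref{lem:quantized_case_1}\,\eqref{eq:quantized_case_12} applied to $(W',L',b')$ gives the lower bound
\begin{equation*}
c_3 \, W'^2 L' \cdot \min\!\Bigl\{b',\; \log\bigl((W'+1)^{L'}/(2\varepsilon\cdot 2^{-(k-1)L'})\bigr)\Bigr\}.
\end{equation*}

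The final step is to show both quantities inside the $\min$ are $\gtrsim b$, so that $W'^2 L' \cdot \min\{\cdots\} \gtrsim W^2 L b$. For the first, the choice $k\asymp\sqrt{L\log(W)/b}$ makes $k^2 b \asymp L\log(W)$, so $b' = kb + k - 1 \asymp L\log(W)/k \asymp L' \log(W+1)$, and balancing shows $W'^2 L' b' \asymp (W/16)^2 \cdot (L/(k+2)) \cdot kb \asymp W^2 L b / 256$. For the second, $\log((W'+1)^{L'}/(2\varepsilon \cdot 2^{-(k-1)L'})) \geq (k-1)L' + L'\log(W'+1) + \log(1/(2\varepsilon))$, and $(k-1)L' \gtrsim kL' \gtrsim L$ (using $k\geq 2$ in the main regime), which combined with $\varepsilon < 1/100$ gives an ample lower bound.

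The main obstacle will be the careful bookkeeping at the boundary of the moderate range, namely verifying the prerequisite $b' > \log(L') + L'\log(W'+1) + 3$ uniformly in the full range $72000\log(W)/L < b \leq 4L\log(W)$, and handling the small edge cases where either $k\approx 1$ (upper end of the range, where the tradeoff is mild) or $k\approx L/\sqrt{72000}$ (lower end, where $L'$ is close to $60$). It is precisely to make these edge cases work that the constant $72000$ appears in the lower bound on $b$, providing the slack needed to guarantee both $L'\geq 60$ and the strict inequality in the prerequisite of Lemma~\ref{lem:quantized_case_1}. The upper bound $b \leq 4L\log(W)$ ensures $k\geq 1$ and that we stay in the regime where the tradeoff is genuinely useful (for $b > L\log(W+1) + \log(L) + 3$, Lemma~\ref{lem:quantized_case_1} applies directly).
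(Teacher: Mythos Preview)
Your plan is correct and follows the same overall architecture as the paper: choose $k\asymp\sqrt{L\log(W)/b}$, apply the depth--precision tradeoff (Lemma~\ref{prop:tradeoff}) to embed a high-precision network $\mathcal{R}^{k}_{kb}(d,W',L')$ into $\mathcal{R}^1_b(d,W,L)$, and then invoke Lemma~\ref{lem:quantized_case_1} on the sub-network. The paper makes $k$ explicit by defining $f(x)=xb-5\tfrac{L}{x}\log(W)$ and taking $k$ to be the unique integer in $\{2,\dots,\lfloor L/60\rfloor-2\}$ with $f(k-1)<0\leq f(k)$; the two inequalities then serve exactly the two roles you anticipate (verifying the prerequisite of Lemma~\ref{lem:quantized_case_1}, and lower-bounding the $\log$-term in the $\min$).

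One simplification you are missing: after the tradeoff gives $\mathcal{R}^{k}_{kb}(d,W',L')\subseteq\mathcal{R}^1_b(d,W,L)$, the paper does \emph{not} go through Lemma~\ref{lem:equivalence_covering_number}. Instead it uses the trivial inclusion $\mathcal{R}^{1}_{kb}(d,W',L')\subseteq\mathcal{R}^{k}_{kb}(d,W',L')$ (valid since $k\geq 2$ implies $\mathbb{Q}^1_{kb}\subseteq\mathbb{Q}^k_{kb}$) and applies Lemma~\ref{lem:quantized_case_1} directly to $\mathcal{R}^{1}_{kb}(d,W',L')$ with radius $2\varepsilon$. This avoids the scaling factor $2^{(k-1)L'}$ entirely, so the second argument of the $\min$ is simply $\log\bigl((W'+1)^{L'}/(2\varepsilon)\bigr)\asymp L'\log(W')$, which is shown to be $\gtrsim kb$ via $f(k-1)<0$. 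Your detour through Lemma~\ref{lem:equivalence_covering_number} is not wrong, but the extra term $(k-1)L'$ you pick up is neither needed nor, by itself, of the right order (it is $\asymp L$, not $\asymp kb$); the work is still done by $L'\log(W'+1)\asymp kb$, so your sketch would be cleaner if you dropped that step.
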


	\begin{proof}
		Let $k \in \bigl\{ 2,\dots, \lfloor\frac{L}{60} \rfloor - 2  \bigr\}$ be an integer to be determined later. Application of Lemma~\ref{prop:tradeoff} with $a = 1$, $W$ replaced by $\lfloor W\slash 16 \rfloor$, and $L$ replaced by $\lfloor L\slash ( k+2 )\rfloor$, yields
		\begin{equation}
		\label{eq:42_apply_tradeoff}
			\mathcal{R}^{k}_{k b} (d,  \lfloor W \slash 16 \rfloor,  \lfloor L \slash (k+2) \rfloor  ) \subseteq  \mathcal{R}^{1}_{b}( d, 16\lfloor W \slash 16 \rfloor,(k+2) \lfloor L \slash (k+2) \rfloor ).
		\end{equation}
		As $k \geq 2$, we have the inclusion relation 
  		\begin{equation}
		\label{eq:42_apply_tradeoff_apply_1}
			\mathcal{R}^{1}_{k b} (d,  \lfloor W \slash 16 \rfloor,  \lfloor L \slash (k+2) \rfloor  ) \subseteq \mathcal{R}^{k}_{k b} (d,  \lfloor W \slash 16 \rfloor,  \lfloor L \slash (k+2) \rfloor  ).
		\end{equation}
		Noting that $16\lfloor W \slash 16 \rfloor \leq W$ and $(k+2) \lfloor L \slash (k+2) \rfloor \leq L$, it follows that
		\begin{equation}
		\label{eq:42_apply_tradeoff_apply_2}
			\mathcal{R}^{1}_{b}( d, 16\lfloor W \slash 16 \rfloor,(k+2) \lfloor L \slash (k+2) \rfloor ) \subseteq \mathcal{R}^{1}_{b}( d, W, L ).
		\end{equation}
		Combining \eqref{eq:42_apply_tradeoff_apply_1} and \eqref{eq:42_apply_tradeoff_apply_2} with \eqref{eq:42_apply_tradeoff}, we obtain 
		\begin{equation}
		\label{eq:embed_kb_b}
			\mathcal{R}^{1}_{k b} (d,  \lfloor W \slash 16 \rfloor,  \lfloor L \slash (k+2) \rfloor  ) \subseteq \mathcal{R}^{1}_{b}( d, W, L ).
		\end{equation}
		Application of \eqref{eq:covering_packing_inclusion_2} in Lemma~\ref{lem:covering_packing_inclusion} to the inclusion relation \eqref{eq:embed_kb_b} then yields, for all $\varepsilon \in \mathbb{R}_+$,
		\begin{equation}
		\label{eq:E4_substitute_lower}
			N ( \varepsilon,\mathcal{R}^{1}_{b}( d, W, L ), L^p ( [0,1]^d )  ) \geq N ( 2 \varepsilon, \mathcal{R}^{1}_{k b} (d,  \lfloor W \slash 16 \rfloor,  \lfloor L \slash (k+2) \rfloor  ),  L^p ( [0,1]^d )).
		\end{equation}
		We next lower-bound the covering number of $\mathcal{R}^{1}_{k b} (d,  \lfloor W \slash 16 \rfloor,  \lfloor L \slash (k+2) \rfloor  )$ to get a lower bound on the covering number of $\mathcal{R}^{1}_{b}( d, W, L )$.  To this end, we identify a value of $k \in \bigl\{ 2,\dots, \lfloor\frac{L}{60} \rfloor - 2  \bigr\}$ that allows us to apply Lemma~\ref{lem:quantized_case_1} to $\mathcal{R}^{1}_{k b} (d,  \lfloor W \slash 16 \rfloor,  \lfloor L \slash (k+2) \rfloor  )$. This is done as follows. Define $f: \mathbb{R}_+ \rightarrow \mathbb{R}$ as $f(x) =  xb -  5 \frac{L}{x} \log (W) $, and note that $f$ is strictly increasing on its domain and satisfies
  		\begin{align}
			f(1) =&\, b -  5  L \log (W)  < 0, \label{eq:47proof_1} \\
			f\biggl(\biggl \lfloor\frac{L}{60} \biggr\rfloor  - 2\biggr) > &\,
			f \biggl( \frac{L}{120} \biggr) \label{eq:47proof_4}\\
			= &\, \frac{L}{120} b -  5 \frac{L}{\frac{L}{120}} \log (W)  \\
			=& \, \frac{1}{120} ( Lb - 72000 \log(W) )\\
			> &\, 0.\label{eq:47proof_7} 
		\end{align}
		Here, in \eqref{eq:47proof_1} we used the assumption $b \leq 4 L \log (W)$, \eqref{eq:47proof_4} follows from $\lfloor\frac{L}{60} \rfloor  - 2 > \frac{L}{60}   - 3 \geq \frac{L}{60}  - \frac{1}{320} L > \frac{L}{120}$, as $L \geq 960$, and in \eqref{eq:47proof_7} we invoked the assumption $b > 72000 \, \frac{\log(W)}{L} $.  We now choose $k \in \bigl\{ 2,\dots, \lfloor\frac{L}{60} \rfloor  - 2\bigr\}$ to be the unique integer such that $f(k-1) < 0$ and $f(k) \geq 0$, namely,
		\begin{align}
			(k-1)b -  5 \frac{L}{k-1} \log (W)  <&\, 0 \label{eq:E4_f_1} \\
			kb -  5  \frac{L}{k} \log(W) \geq&\, 0. \label{eq:E4_f_2}
		\end{align}
		For this $k$, we now lower-bound the covering number of $\mathcal{R}^{1}_{k b} (d,  \lfloor W \slash 16 \rfloor,  \lfloor L \slash (k+2) \rfloor  )$ by applying Lemma~\ref{lem:quantized_case_1} with $b$ replaced by $kb$, $W$ replaced by $\lfloor W \slash 16 \rfloor$, and $L$ replaced by $\lfloor L \slash ( k+2 )\rfloor$. We first verify the prerequisites of Lemma~\ref{lem:quantized_case_1} by noting that $\lfloor W\slash 16 \rfloor\geq \lfloor 960 \slash 16 \rfloor  = 60$, $\lfloor L\slash (k+2) \rfloor \geq \lfloor L\slash ( \lfloor\frac{L}{60} \rfloor  - 2 + 2\bigr) \rfloor \geq 60$, and
		\begin{align}
			kb \geq&\,  5  \frac{L}{k} \log(W)\label{eq:E4_f_20}\\
			\geq&\,   \frac{L}{k} \log(W) +   \frac{L}{k} \log(W) + 3 \frac{L}{\lfloor\frac{L}{60} \rfloor  - 2} \log(W) \label{eq:E4_f_21} \\
			\geq&\, \log (\lfloor L \slash (k+2) \rfloor) + \lfloor L \slash (k+2) \rfloor \log(\lfloor W\slash 16 \rfloor + 1)  + 3,\label{eq:E4_f_22}
		\end{align}
		where  \eqref{eq:E4_f_21} follows from $k \in \bigl\{ 2,\dots, \lfloor\frac{L}{60} \rfloor  - 2\bigr \}$, and in \eqref{eq:E4_f_22} we used $\frac{L}{k} \log(W) \geq \lfloor L \slash (k+2) \rfloor \geq \log ( \lfloor L \slash (k+2) \rfloor )$, $\frac{L}{k} \log(W) \geq \lfloor L \slash (k+2) \rfloor \log(\lfloor W\slash 16 \rfloor +1)$, and $\frac{L}{\lfloor\frac{L}{60} \rfloor  - 2} \log(W) \geq 1$. 
    Then, application of \eqref{eq:quantized_case_12} in Lemma~\ref{lem:quantized_case_1} with $\varepsilon = \frac{1}{50}$, yields
		\begin{align}
			&\,\log ( N( 1\slash 50,\mathcal{R}^1_{kb} ( d,\lfloor W \slash 16 \rfloor, \lfloor L \slash (k+2) \rfloor),L^p ( [0,1]^d ) ) )\label{eq:47somerhs_1}\\
			&\geq \, c_3 ( \lfloor W \slash 16 \rfloor )^2 \lfloor L \slash (k+2) \rfloor) \cdot \min \biggl\{  kb,  \log \biggl(\frac{( \lfloor W \slash 16 \rfloor + 1 )^{\lfloor L \slash ( k+2 ) \rfloor}}{1\slash 50}\biggr) \biggr\}, \label{eq:47somerhs_2}
		\end{align}
		with $c_3 \in \mathbb{R}_{+}$ an absolute constant.
  To lower-bound the right-hand-side of \eqref{eq:47somerhs_2}, we note that 
		\begin{equation}
		\label{eq:E4_substituting_1}
			\lfloor W \slash 16 \rfloor \geq \frac{1}{2} \cdot \frac{W}{16} = \frac{W}{32},
		\end{equation}
		\begin{equation}
		\label{eq:E4_substituting_2}
			\lfloor L \slash ( k+2) \rfloor \geq \lfloor L \slash (4k) \rfloor \geq  \frac{L}{8k},
		\end{equation}
		and
		\begin{align}
			\log \biggl(\frac{( \lfloor W \slash 16 \rfloor + 1 )^{\lfloor L \slash ( k+2 ) \rfloor}}{1\slash 50}\biggr) \geq&\,\, \lfloor L \slash ( k+2 ) \rfloor \log ( \lfloor W \slash 16 \rfloor + 1) \label{eq:47somerhs_3}\\
			\geq&\, \lfloor L \slash ( k+2 ) \rfloor \log ( W \slash 16 ) \label{eq:47somerhs_31}\\
			\geq &\, \frac{1}{16} \frac{L}{k -1 } \log (W)\label{eq:47somerhs_4}\\
			> &\, \frac{1}{80} (k - 1)b \label{eq:47somerhs_5} \\
			\geq&\, \frac{1}{160} k b, \label{eq:47somerhs_6}
		\end{align}
		where in \eqref{eq:47somerhs_4} we used $\lfloor L \slash ( k+2 ) \rfloor \geq \frac{1}{2} \cdot L \slash ( k+2 ) \geq  
  \frac{1}{8} \frac{L}{k-1}$ and $\log ( W \slash 16 ) \geq
  \frac{1}{2} \log(W)$, \eqref{eq:47somerhs_5} follows from \eqref{eq:E4_f_1}, and \eqref{eq:47somerhs_6} is thanks to $k \geq 2$. Then, using \eqref{eq:E4_substituting_1}, \eqref{eq:E4_substituting_2}, and \eqref{eq:47somerhs_3}-\eqref{eq:47somerhs_6} in \eqref{eq:47somerhs_1}-\eqref{eq:47somerhs_2}, yields
		\begin{align}
			&\,\log ( N( 1\slash 50,\mathcal{R}^1_{kb} ( d,\lfloor W \slash 16 \rfloor, \lfloor L \slash (k+2) \rfloor),L^p ( [0,1]^d ) ) )\\
			&\geq \, c_3 \biggl(\frac{W}{32}\biggr)^2 \biggl( \frac{L}{8k} \biggr) \min \biggl\{  kb,  \frac{1}{160}kb \biggr\} \\
			&=\, \frac{c_3}{32^2 \cdot 8 \cdot 160} W^2 L b.
		\end{align}
		As the covering number is non-decreasing in $\varepsilon$, we have shown that, for all $\varepsilon \in ( 0, \frac{1}{100} )$,
		\begin{equation}
		\label{eq:E4_substitute_lower_tobe}
			\log ( N( 2 \varepsilon,\mathcal{R}^1_{kb} ( d,\lfloor W \slash 16 \rfloor, \lfloor L \slash (k+2) \rfloor),L^p ( [0,1]^d ) ) ) \geq  \frac{c_3}{32^2 \cdot 8 \cdot 160} W^2 L b.
		\end{equation}
		We conclude the proof by putting \eqref{eq:E4_substitute_lower} and \eqref{eq:E4_substitute_lower_tobe} together and setting $c = \frac{c_3}{32^2 \cdot 8 \cdot 160} $. \qedhere

		\end{proof}

		We are now ready to proceed to the proof of the lower bound \eqref{eq:44_results_quantized}.
			Let $D := 960$ and $E := 2 \cdot 10^{5}$ and arbitrarily fix an $\varepsilon \in ( 0, \frac{1}{100} )$. By assumption, we hence have $W \geq D \geq 960$, $L \geq D \geq 960$, and 
			\begin{equation}
			\label{eq:8961_0}
				L(a+b) \geq E \,\log(W) = 2 \cdot 10^{5} \log(W).
			\end{equation}
			It follows from \eqref{eq:equivalent_different_precision} in Lemma~\ref{lem:equivalence_covering_number} that
			\begin{equation}  
			\label{eq:c_quantized_lower_1}
				N ( \varepsilon, \mathcal{R}_b^a ( d, W,L), L^p ( [0,1]^d )  ) = N \biggl( \frac{\varepsilon}{2^{( a-1 )L}}, R_{a+b-1}^1 ( d, W,L), {L^p ( [0,1]^d )} \biggl).
			\end{equation}
			We proceed by distinguishing two cases, the first one being
			\begin{equation}
			\label{eq:8961_1}
				( a+b -1 ) > \log (L) + L \log ( W+1 ) + 3.
			\end{equation}
			Then, it follows from \eqref{eq:quantized_case_12} in Lemma~\ref{lem:quantized_case_1} with $\varepsilon$ replaced by $\frac{\varepsilon}{2^{( a-1 )L}}$, $b$ replaced by $a + b -1$, and the prerequisite $( a+b -1 ) > \log (L) + L \log ( W+1 ) + 3$ satisfied thanks to \eqref{eq:8961_1}, that, for $\frac{\varepsilon}{2^{( a-1 )L}} \in ( 0, \frac{1}{100})$, 
			\begin{align}
				&\,\log \biggl (N\biggl( \frac{\varepsilon}{2^{( a-1 )L}},\mathcal{R}^1_{a+b -1 } ( d,W,L ),L^p ( [0,1]^d ) \biggr)\biggr)\label{eq:c_quantized_lower_2}\\
				&\geq \,c_1   W^2 L \cdot \min \biggl\{( a+b -1 ),\log \biggl(\frac{(W+1)^L 2^{( a-1 )L}}{\varepsilon}\biggr) \biggr\}, \label{eq:c_quantized_lower_3}
			\end{align}
			with $c_1 \in \mathbb{R}_{+}$ an absolute constant. The second case is
			\begin{equation}
			\label{eq:8961_2}
			 	( a+ b -1  )\leq \log (L) + L \log ( W+1 ) + 3.
			\end{equation} 
			Using $ a + b - 1 \geq \frac{1}{2} ( a + b )$ in  \eqref{eq:8961_0} and $\log (L) + L \log ( W+1 ) + 3 < L + L \log ( W^2 ) + L < 4L \log(W)$  in  \eqref{eq:8961_2}, yields 
			\begin{equation}
			\label{eq:8961_21}
			 	72000 \frac{\log (W)}{L}  < 100000 \frac{\log (W)}{L} \leq  ( a+ b -1  ) < 4 L \log (W).
			\end{equation} 
			Now, application of Lemma~\ref{lem:quantized_case_2} with $\varepsilon$ replaced by $\frac{\varepsilon}{2^{( a-1 )L}}$, $b$ replaced by $a + b -1$,  the prerequisite $72000 \frac{\log (W)}{L} < a + b -1 \leq 4 L \log (W) $ satisfied thanks to \eqref{eq:8961_21}, yields, for $\frac{\varepsilon}{2^{( a-1 )L}} \in ( 0, \frac{1}{100})$,
			\begin{align}
				&\, \log \biggl ( N\biggl( \frac{\varepsilon}{2^{( a-1 )L}},\mathcal{R}^1_{a+b -1 } ( d,W,L ),L^p ( [0,1]^d ) \biggr) \biggr ) \label{eq:c_quantized_lower_40}\\
				&\geq\, c_2 W^2 L ( a +  b - 1 )\label{eq:c_quantized_lower_4}\\
				&\geq\, c_2 W^2 L \cdot \min \biggl\{( a+b -1 ),\log \biggl(\frac{(W+1)^L 2^{( a-1 )L}}{\varepsilon}\biggr) \biggr\},\label{eq:c_quantized_lower_5}
			\end{align}
			with $c_2 \in \mathbb{R}_{+}$ an absolute constant. Combining \eqref{eq:c_quantized_lower_2}-\eqref{eq:c_quantized_lower_3} and \eqref{eq:c_quantized_lower_40}-\eqref{eq:c_quantized_lower_5}, yields, for all $a,b \in \mathbb{N}$,
			\begin{equation}
			\label{eq:61_final_1}
			\begin{aligned}
				&\,\log \biggl ( N\biggl( \frac{\varepsilon}{2^{( a-1 )L}},\mathcal{R}^1_{a+b -1 } ( d,W,L ),L^p ( [0,1]^d ) \biggr) \biggr) \\
				&\geq\, \min \{ c_1,c_2 \} W^2 L \cdot  \min \biggl\{( a+b -1 ),\log \biggl(\frac{(W+1)^L 2^{( a-1 )L} }{\varepsilon}\biggr) \biggr\}.
			\end{aligned}
			\end{equation}
			Using 
			\begin{align}
				( a+b - 1) \geq&\, \frac{1}{2} ( a+b ),\\
				\log \biggl(\frac{(W+1)^L 2^{( a-1 )L}}{\varepsilon}\biggr) = &\,\frac{1}{2} \log \biggl(\frac{(W+1)^{2L} 2^{2( a-1 )L}}{\varepsilon^2}\biggr) \label{eq:c_quantized_lower_6} \\
				\geq  &\,\frac{1}{2} \log \biggl(\frac{(W+1)^{2L} 2^{(a- 1)L}}{\varepsilon^2}\biggr) \label{eq:c_quantized_lower_6} \\
				\geq&\, \frac{1}{2} \log \biggl(\frac{(W+1)^{L} 2^{aL} }{\varepsilon}\biggr),  \label{eq:c_quantized_lower_7}
			\end{align}
			together with \eqref{eq:c_quantized_lower_1} in \eqref{eq:61_final_1},  yields 
			\begin{equation*}
				\log \bigl( N ( \varepsilon, \mathcal{R}_b^a ( d, W,L), L^p ( [0,1]^d )  ) \bigr ) \geq \frac{\min \{ c_1, c_2 \}}{2}\, W^2 L \cdot \min \biggl\{ ( a+b ), \log \biggl(\frac{(W+1)^{L} 2^{aL} }{\varepsilon}\biggr) \biggr\}.
			\end{equation*}
			The proof is concluded upon setting $c = \frac{\min \{ c_1, c_2\}}{2} $.

		\subsection{Proof of Lemma~\ref{prop:tradeoff}}
		\label{sub:proof_of_lemma_prop:tradeoff}

		We first state the following technical lemma. 
		\begin{lemma}
		\cite[Proposition F.1]{FirstDraft2022}
		\label{thm:general_representation}
		Let $d,W,L \in \mathbb{N}$, and let $\mathbb{A} \subseteq \mathbb{R}$ be a finite set satisfying $\{ -1,0,1 \} \subseteq \mathbb{A}$. Then, for every $k \in \mathbb{N}$ and all $u,v \in \mathbb{A} \cap \mathbb{R}_{\geq 0}$, it holds that
		\begin{equation*}
			\mathcal{R}_{\mathcal{T}_1 ( \mathbb{A},u,v,k)} (d,  W,  L  ) \subseteq  \mathcal{R}_\mathbb{A} ( d, 16W,(k+3) L )
		\end{equation*}
		with
		\begin{equation}
			\label{eq:def_Auvk}
			\mathcal{T}_1 ( \mathbb{A},u,v,k) := \biggl\{ \sum_{i = 0}^{k} (u^i \alpha_i + v^i \beta_i) : | \alpha_i |,| \beta_i | \in \mathbb{A} , i = 0,\dots, k \biggr\}.	
		\end{equation} 
	\end{lemma}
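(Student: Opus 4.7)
The plan is to construct, for each network realization in $\mathcal{R}_{\mathcal{T}_1(\mathbb{A},u,v,k)}(d,W,L)$, an equivalent realization in $\mathcal{R}_{\mathbb{A}}(d,16W,(k+3)L)$. The driving observation is that ReLU is positively homogeneous, $\rho(u x) = u\,\rho(x)$ for every $u \geq 0$. Since by hypothesis $u, v \in \mathbb{A} \cap \mathbb{R}_{\geq 0}$ and $\{-1,0,1\} \subseteq \mathbb{A}$, multiplication of a non-negative signal by $u$ or $v$ is implementable as a single $\mathbb{A}$-valued weight, while the sign of any real-valued intermediate is recovered from the identity $y = \rho(y) - \rho(-y)$, also $\mathbb{A}$-realizable.

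The argument proceeds layer-by-layer. Each original affine-plus-ReLU map $x \mapsto \rho(Ax + b)$, whose entries are $A_{ij} = \sum_{\ell=0}^{k} (u^\ell \alpha_{ij,\ell} + v^\ell \beta_{ij,\ell})$ with $|\alpha_{ij,\ell}|,|\beta_{ij,\ell}| \in \mathbb{A}$, is replaced by a gadget of $k+3$ sublayers using only $\mathbb{A}$-valued weights. Writing $A = \sum_{\ell=0}^k u^\ell A^{(u)}_\ell + \sum_{\ell=0}^k v^\ell A^{(v)}_\ell$ with $A^{(u)}_\ell, A^{(v)}_\ell$ having entries in $\mathbb{A}$ (signs absorbed via $\pm 1 \in \mathbb{A}$), the key reformulation is the Horner expansion $\sum_{\ell=0}^k u^\ell A^{(u)}_\ell x = A^{(u)}_0 x + u\bigl(A^{(u)}_1 x + u(\cdots + u\, A^{(u)}_k x)\bigr)$, and analogously for the $v$-sum. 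This evaluation scheme requires at each stage only the current partial sum and the sign-split input, rather than all $k+1$ scaled copies of $x$ simultaneously.

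Concretely, the gadget is organized into three stages: (i) a \emph{splitting} sublayer producing $\rho(x), \rho(-x)$ to enable positively-homogeneous scalings downstream; (ii) an \emph{iterative} stage of $k+1$ sublayers, each performing one Horner step along parallel $u$- and $v$-chains, where a single $\mathbb{A}$-valued multiplication by $u$ (resp.\ $v$) is applied to a sign-separated partial sum and the term $A^{(u)}_\ell x$ (resp.\ $A^{(v)}_\ell x$) is added using $\mathbb{A}$-valued weights, with the sign-split input passed forward via $\rho(\cdot) - \rho(-\cdot)$ so it remains available for every step; and (iii) a final \emph{combination} sublayer that sums the $u$- and $v$-chains, adds the bias $b$, and applies the outer ReLU. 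The factor $16$ in the width arises as a product of small constants: two copies for positive/negative parts of the input, two copies for positive/negative parts of the running partial sums, two parallel chains (for $u$ and $v$), and a small constant number of auxiliary identity-passing channels.

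The main obstacle is the tight width bookkeeping. A naive construction storing all $k+1$ scaled copies $u^\ell x_j$ simultaneously would inflate the width to $\Theta(Wk)$, violating the $16W$ bound; only the Horner organization keeps the number of simultaneously active channels bounded by an absolute constant times $W$, at the unavoidable cost of $k+3$ sublayers per original layer. A second subtlety is ensuring that the bias vectors $b$ from the original network are carried through correctly without absorbing factors of $u$ or $v$; this is handled by postponing bias addition to the final combination sublayer. Composing these per-layer gadgets across all $L$ original layers yields a network in $\mathcal{N}_{\mathbb{A}}$ of depth $(k+3)L$ and width $16W$ realizing the same function, which is the desired inclusion.
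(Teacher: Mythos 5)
The paper does not prove this lemma; it is cited verbatim from \cite[Proposition~F.1]{FirstDraft2022}, so no direct comparison is possible. Your overall strategy---Horner evaluation of the polynomial weight coefficients in $u$ and $v$, with sign-splitting to handle real-valued intermediates under nonnegative scalings---is the right high-level idea and is consistent with the claimed depth inflation $k+3$ (one split sublayer, $k+1$ Horner sublayers, one recombination sublayer) and the constant-factor width increase. Your observation that a non-Horner ``unroll'' would inflate width to $\Theta(Wk)$ and so cannot give the stated bound is also exactly the right motivation.

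There is, however, a genuine gap in the sign handling as you have written it. If the running Horner partial sum $s_\ell$ is carried as the pair $\bigl(\rho(s_\ell),\rho(-s_\ell)\bigr)$ and the update $s_{\ell-1}=\alpha_{\ell-1}x+u\,s_\ell$ is to be implemented in one ReLU sublayer, the affine preactivation $u\bigl(\rho(s_\ell)-\rho(-s_\ell)\bigr)+\alpha_{\ell-1}\bigl(\rho(x)-\rho(-x)\bigr)$ requires the weights $-u$ and $-|\alpha_{\ell-1}|$, and neither is guaranteed to lie in $\mathbb{A}$: the hypotheses only give $u,v,|\alpha_i|,|\beta_i|\in\mathbb{A}$ and $\{-1,0,1\}\subseteq\mathbb{A}$, with no sign-symmetry of $\mathbb{A}$ assumed. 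The fix is to carry a nonnegative accumulator pair $(P_\ell,N_\ell)$ satisfying $s_\ell=P_\ell-N_\ell$ and $P_\ell,N_\ell\geq 0$, updated by $P_{\ell-1}=uP_\ell+\sum_{j:\alpha_j>0}|\alpha_j|\rho(x_j)+\sum_{j:\alpha_j<0}|\alpha_j|\rho(-x_j)$ and symmetrically for $N_{\ell-1}$: all weights are then in $\mathbb{A}\cap\mathbb{R}_{\geq 0}$, the ReLU is idempotent on every channel, and only the terminal recombination layer uses $\pm1\in\mathbb{A}$. Your phrase ``positive/negative parts of the running partial sums'' is compatible with this correct version, but your explicit invocation of the identity $y=\rho(y)-\rho(-y)$ for the partial sums points to the failing $(\rho(s),\rho(-s))$ version; the argument is not nailed down.

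A second, outright error is the claim that bias handling is ``postponed to the final combination sublayer.'' The bias entries also lie in $\mathcal{T}_1(\mathbb{A},u,v,k)$, i.e.\ they are degree-$k$ polynomials in $u,v$ with $\mathbb{A}$-magnitude coefficients, and a single $\mathbb{A}$-valued bias in the final sublayer cannot reproduce them. The coefficient $b^{(u,\ell)}_i$ must be injected as a bias \emph{at Horner step $\ell$} (routed into $P$ or $N$ via $[b]_+,[b]_-\in\{0,|b|\}\subseteq\mathbb{A}$) so that it acquires the factor $u^\ell$ through the remaining Horner multiplications; deferring it to the end destroys this.
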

	
	We are now ready to prove Lemma~\ref{prop:tradeoff}. For $k =1$, \eqref{eq:tradeoff} is trivially satisfied.  For $k \geq 2$, we note that
		\begin{align}
			&\,\mathcal{T}_1 (\mathbb{Q}^a_b,2^{-b},2^a,k-1) \label{eqline:def_A_binary_10} \\
			&=\, \Biggl\{ \sum_{i = 0}^{k-1} (2^{-bi} \alpha_i + 2^{ai} \beta_i) : | \alpha_i |,| \beta_i | \in \mathbb{Q}_{b}^a , i = 0,\dots, k-1 \Biggr\} \\ 
			&=\, \Biggl\{ \sum_{i = 0}^{k-1} (2^{-bi} \alpha_i + 2^{ai} \beta_i) : \alpha_i , \beta_i \in \mathbb{Q}_{b}^a , i = 0,\dots, k-1 \Biggr\} \label{eqline:def_A_binary_1}\\ 
			&\supseteq\, \Biggl\{ \pm \sum_{i = -kb}^{ka} 2^{i} c_i: c_i \in \{ 0,1 \} \Biggr\}\label{eqline:def_A_binary_2}\\
			&= \, \mathbb{Q}_{kb}^{ka}, \label{eqline:def_A_binary_19}
		\end{align}
		where in \eqref{eqline:def_A_binary_2} we used $\mathbb{Q}^{a}_b =  \{ \pm \sum_{i = -b}^a \theta_i 2^{i}: \theta_i \in \{ 0,1 \} \}$. Thanks to \eqref{eqline:def_A_binary_10}-\eqref{eqline:def_A_binary_19}, we have 
		\begin{equation}
			\label{eq:proof_tradeoff_inclusion_1}
			\mathcal{R}^{ka}_{kb} (d, W,  L) \subseteq \mathcal{R}_{\mathcal{T}_1 (\mathbb{Q}^a_b,2^{-b},2^a,k-1)} (d, W,  L  ).
		\end{equation}
		Application of Lemma~\ref{thm:general_representation} with $u = 2^{-b}$, $v = 2^{a}$, $\mathbb{A} = \mathbb{Q}_b^a$, and $k$ replaced by $k-1$, yields
		\begin{equation}
			\label{eq:proof_tradeoff_inclusion_2}
			\begin{aligned}
			\mathcal{R}_{\mathcal{T}_1 (\mathbb{Q}_b^a,2^{-b},2^a,k-1)} ( d, W,  L  ) 
			\subseteq&\,   \mathcal{R}_b^a (d, 16W,(k+2)L).
			\end{aligned}
		\end{equation}
		The proof is finalized by combining \eqref{eq:proof_tradeoff_inclusion_1} and \eqref{eq:proof_tradeoff_inclusion_2} to obtain  \eqref{eq:tradeoff}.

\section{Auxiliary results}
\label{sec:auxiliary_results}

	\subsection{Relation Between Covering Number and Packing Number}
	\label{sub:relation_between_covering_number_and_packing_number}
		The following lemma on covering and packing numbers is frequently used throughout this paper.

		\begin{lemma}
		\label{lem:equivalence_covering_packing}
		\label{lem:covering_packing_inclusion}
		Let $( \mathcal{X}, \delta )$ be a metric space and $\varepsilon \in \mathbb{R}_+$.  It holds that
		\begin{equation}
			\label{eq:equivalence_covering_packing}
			M ( 2 \varepsilon, \mathcal{X}, \delta ) \leq N ( \varepsilon, \mathcal{X}, \delta ) \leq M ( \varepsilon, \mathcal{X}, \delta ).
		\end{equation}
		Let $\mathcal{Y} \subseteq \mathcal{X}$. We have, 
		\begin{align}
			M ( \varepsilon, \mathcal{X}, \delta     ) \geq&\, M ( \varepsilon, \mathcal{Y}, \delta   ), \label{eq:covering_packing_inclusion_1} \\
			N ( \varepsilon, \mathcal{X}, \delta     ) \geq&\, N ( 2\varepsilon, \mathcal{Y}, \delta   ), \label{eq:covering_packing_inclusion_2}
		\end{align}
		\begin{proof}
			Relation \eqref{eq:equivalence_covering_packing} is \cite[Lemma 5.5]{wainwright2019high}. To prove \eqref{eq:covering_packing_inclusion_1}, we simply note that every $\varepsilon$-packing of $\mathcal{Y}$ is also an $\varepsilon$-packing of $\mathcal{X}$, and hence
			\begin{equation}
			\label{eq:a1_mm}
				M ( \varepsilon, \mathcal{X}, \delta    ) \geq M ( \varepsilon, \mathcal{Y}, \delta    ).
			\end{equation} 
			Then, \eqref{eq:covering_packing_inclusion_2} follows from $N ( \varepsilon, \mathcal{X}, \delta    ) \geq M ( 2\varepsilon, \mathcal{X}, \delta    ) \geq M ( 2\varepsilon, \mathcal{Y}, \delta    ) \geq N ( 2 \varepsilon, \mathcal{X}, \delta    )$, where the first and the last inequalities are by \eqref{eq:equivalence_covering_packing} and the second by \eqref{eq:a1_mm}.
		\end{proof}
	\end{lemma}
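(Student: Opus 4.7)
The plan is to first establish the two-sided sandwich in (\ref{eq:equivalence_covering_packing}) directly from the definitions of covering and packing, then note that (\ref{eq:covering_packing_inclusion_1}) is an immediate consequence of the packing definition, and finally derive (\ref{eq:covering_packing_inclusion_2}) by chaining the two facts.

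For the upper bound $N(\varepsilon,\mathcal{X},\delta) \leq M(\varepsilon,\mathcal{X},\delta)$, the key observation is that any maximal $\varepsilon$-packing $\{x_1,\ldots,x_M\}$ is automatically an $\varepsilon$-covering: if some $x \in \mathcal{X}$ satisfied $\delta(x,x_i) > \varepsilon$ for all $i$, we could extend the packing by appending $x$, contradicting maximality; hence the cardinality of a minimal covering is at most that of a maximal packing. For the lower bound $M(2\varepsilon,\mathcal{X},\delta) \leq N(\varepsilon,\mathcal{X},\delta)$, I would fix a minimal $\varepsilon$-covering $\{c_1,\ldots,c_N\}$ and any $2\varepsilon$-packing $\{p_1,\ldots,p_M\}$; by the pigeonhole principle, if $M > N$ then two distinct packing points $p_i,p_j$ must both lie within distance $\varepsilon$ of some common covering center $c_k$, giving $\delta(p_i,p_j) \leq 2\varepsilon$ by the triangle inequality and contradicting the strict packing condition $\delta(p_i,p_j) > 2\varepsilon$.

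For (\ref{eq:covering_packing_inclusion_1}), the packing condition is purely an intrinsic constraint on the chosen points and makes no reference to the ambient space, so any $\varepsilon$-packing of $\mathcal{Y}$ is automatically an $\varepsilon$-packing of $\mathcal{X}$, yielding $M(\varepsilon,\mathcal{X},\delta) \geq M(\varepsilon,\mathcal{Y},\delta)$. To prove (\ref{eq:covering_packing_inclusion_2}) I would then chain (\ref{eq:equivalence_covering_packing}) with the monotonicity just established to obtain $N(\varepsilon,\mathcal{X},\delta) \geq M(2\varepsilon,\mathcal{X},\delta) \geq M(2\varepsilon,\mathcal{Y},\delta) \geq N(2\varepsilon,\mathcal{Y},\delta)$, where the outer two inequalities use (\ref{eq:equivalence_covering_packing}) applied to $\mathcal{X}$ and $\mathcal{Y}$ respectively, and the middle inequality uses (\ref{eq:covering_packing_inclusion_1}).

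There is no genuine obstacle here: the whole lemma is a classical exercise in the duality between coverings and packings, and the only detail requiring care is that the packing definition uses a strict inequality $\delta(x_i,x_j) > \varepsilon$ while the covering definition uses a non-strict one $\delta(x,x_i) \leq \varepsilon$. These conventions must line up correctly in both the maximality step (to justify extending the packing) and the pigeonhole step (to derive the contradiction with the factor of $2$); no other subtleties arise.
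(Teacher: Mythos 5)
Your proposal is correct and follows the same route as the paper: establish the sandwich inequality, observe that packings of a subset are packings of the ambient set, and chain the two. The only difference is that you prove the sandwich (maximal packing is a covering; pigeonhole with triangle inequality for the factor of $2$) whereas the paper simply cites it from Wainwright; note also that the paper's final displayed chain ends with $N(2\varepsilon,\mathcal{X},\delta)$, which is a typographical slip for $N(2\varepsilon,\mathcal{Y},\delta)$ — your version has it right.
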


	\subsection{Augmenting Networks}
	\label{sub:extension_of_the_networks}

		This section is concerned with a technical lemma,  which shows how a ReLU network of a given depth can be augmented to a deeper network while retaining its input-output relation.

		\begin{lemma}
		\label{lem:extension}
		Let $ d,W,L \in \mathbb{N}$, with $W \geq 2$, and let $\mathbb{A} \subseteq \mathbb{R}$, with $\{ -1,0, 1 \} \subseteq \mathbb{A}$ and $\mathbb{A} = -\mathbb{A}$. For every $f \in \mathcal{R}_\mathbb{A} ( d,W,L)$, there exists a network configuration $\Phi \in \mathcal{N}_\mathbb{A} ( d,W,L)$ such that $\mathcal{L} (\Phi) = L$ and $\mathcal{R} (\Phi) = f$. 
		\end{lemma}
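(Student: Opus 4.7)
The plan is to start from any depth-$L'$ realization $f = R(\Phi')$ with $\Phi' = ((A'_\ell, b'_\ell))_{\ell = 1}^{L'}$ and $L' \leq L$, and, when $L' < L$, to pad $\Phi'$ with $L - L'$ additional layers that act as the identity on their inputs. The key observation is that post-ReLU activations $v = \rho(\cdot)$ are entrywise non-negative, so a subsequent affine map $(I_N, 0_N)$ followed by a ReLU reduces to $\rho(I_N v + 0_N) = \rho(v) = v$, i.e.\ the identity. The entries of $I_N$ and $0_N$ all lie in $\{-1, 0, 1\} \subseteq \mathbb{A}$, so inserting such layers preserves membership in $\mathcal{N}_\mathbb{A}(d, W, L)$.

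Concretely, in the main case $L' \geq 2$, I would insert the $L - L'$ identity blocks immediately after the first layer of $\Phi'$, i.e.\ between the non-negative activation $v_1 = \rho(A'_1 x + b'_1)$ and the original second layer $(A'_2, b'_2)$. Writing
\[
\Phi = \bigl((A'_1, b'_1),\, \underbrace{(I_{N_1}, 0_{N_1}), \ldots, (I_{N_1}, 0_{N_1})}_{L - L' \text{ copies}},\, (A'_2, b'_2),\, \ldots,\, (A'_{L'}, b'_{L'})\bigr),
\]
a short induction on the number of inserted blocks shows that their composite action on $v_1$ is the identity, so $R(\Phi) = R(\Phi') = f$. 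The width is $\max\{W, N_1\} = W$ since $N_1 \leq W$, the depth is exactly $L$, and every weight lies in $\mathbb{A}$.

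The delicate base case is $L' = 1$, where $f(x) = A'_1 x + b'_1$ is affine and must be rerouted through at least one genuine ReLU before the identity-insertion trick can be applied. Here I would first produce a depth-$2$ realization via the pointwise identity $y = \rho(y) - \rho(-y)$: let layer $1$ send $x$ to $(A'_1 x + b'_1,\, -(A'_1 x + b'_1))^\top$ using the matrix $\binom{A'_1}{-A'_1}$ and bias $\binom{b'_1}{-b'_1}$, and let layer $2$ combine the two coordinates with weights $(1, -1)$ and zero bias. This network has width $\max\{d, 2, 1\} \leq W$ courtesy of the standing hypotheses $W \geq d$ and $W \geq 2$, and its weights lie in $\mathbb{A}$ in the applications of the lemma in the paper, where $\mathbb{A}$ is closed under negation (e.g.\ $\mathbb{A} = \mathbb{Q}_b^a$ or $\mathbb{A} = [-B, B] \cap 2^{-b}\mathbb{Z}$). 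The identity-insertion step then lifts this depth-$2$ network to the prescribed depth $L$. The main obstacle is precisely this $L' = 1$ sub-case: everywhere else the argument reduces to mechanical bookkeeping of widths, depths, and weight sets, so it is the passage through the very first ReLU that forces the use of the negation trick.
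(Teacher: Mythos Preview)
Your proof is correct, but it takes a different route from the paper's. The paper does not insert identity blocks after the \emph{first} layer; instead it always applies the splitting trick $y=\rho(y)-\rho(-y)$ at the \emph{last} original layer~$\widetilde{L}$, replacing $(\widetilde{A}_{\widetilde{L}},\widetilde{b}_{\widetilde{L}})$ by the stacked pair $\bigl(\binom{\widetilde{A}_{\widetilde{L}}}{-\widetilde{A}_{\widetilde{L}}},\binom{\widetilde{b}_{\widetilde{L}}}{-\widetilde{b}_{\widetilde{L}}}\bigr)$, then padding with $(I_2,0_2)$ layers and finishing with $(1\;{-1})$. This gives a single uniform construction that covers $\widetilde{L}=1$ and $\widetilde{L}\geq 2$ simultaneously, avoiding your case split. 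On the other hand, your main case $L'\geq 2$ is arguably cleaner: it uses only the non-negativity of post-ReLU activations and inserts layers whose entries lie in $\{0,1\}$, so it is fully rigorous under the hypothesis $\{-1,0,1\}\subseteq\mathbb{A}$ alone. The paper's construction, by contrast, always introduces entries $-\widetilde{A}_{\widetilde{L}},-\widetilde{b}_{\widetilde{L}}$ and hence tacitly relies on $\mathbb{A}$ being symmetric about zero---exactly the issue you flag for your $L'=1$ sub-case. Since the only application in the paper is to $\mathbb{A}=\mathbb{Q}_b^a$, which is symmetric, neither gap is consequential; but your version actually isolates and minimizes where that extra symmetry is needed.
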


		A special case of Lemma~\ref{lem:extension}, namely $\mathbb{A} = \mathbb{R}$, is documented in \cite[Lemma H.2]{FirstDraft2022}. The proof of Lemma~\ref{lem:extension} is almost identical to that of \cite[Lemma H.2]{FirstDraft2022}, but will be provided for completeness.
  		
		\begin{proof}
            [Proof of Lemma~\ref{lem:extension}]
   By definition, there exists a network configuration $\widetilde{\Phi} = ( ( \widetilde{A}_\ell,\widetilde{b}_\ell ))_{\ell = 1}^{\widetilde{L}} \in \mathcal{N}_{\mathbb{A}} ( d,W,L)$, with $\widetilde{L} \leq L$, 
			such that $R ( \widetilde{\Phi} ) = f$. If $\widetilde{L} = L$, setting $\Phi = \widetilde{\Phi}$,
   we have $\mathcal{L} (\Phi) = L$ and $\mathcal{R} (\Phi) = f$.  For $L > \widetilde{L}$, we let ${\Phi} := ( ( A_\ell,b_\ell ))_{\ell = 1}^L$,
			with 
			\begin{equation}
				\label{eq:extension}
				(A_\ell, b_\ell)  := (\widetilde{A}_\ell, \widetilde{b}_\ell),\quad \text{for } 1 \leq \ell < \widetilde{L},\footnote{Here and in what follows, we use the convention that if there does not exist an $\ell$, in this case, satisfying the constraint, the assignment is skipped; in the present case, this would apply if $\widetilde{L} = 1$.
    }
			\end{equation}
			$A_{\widetilde{L}} := \begin{pmatrix} \widetilde{A}_{\widetilde{L}}\\-\widetilde{A}_{\widetilde{L}} \end{pmatrix}$, $b_{\widetilde{L}} := \begin{pmatrix} 	\widetilde{b}_{\widetilde{L}}\\ -\widetilde{b}_{\widetilde{L}} \end{pmatrix},$
			$A_\ell := I_{2}$, $b_\ell := 0_{2}$, for $\ell$ such that $\widetilde{L} < \ell < L  $, and $A_{L} := (\begin{matrix}
				1 & -1
			\end{matrix})$, $b_{L} := 0$. Invoking the assumptions $\{ -1,0, 1 \} \subseteq \mathbb{A}$ and $\mathbb{A} = -\mathbb{A}$, this yields ${\Phi} \in \mathcal{N}_\mathbb{A} ( d,W,L)$, with $\mathcal{L} ( \Phi ) = L$. We now note that   
			\begin{align}
			 	\affine ( A_{L}, b_{L} ) \circ \rho \circ \cdots \circ \rho\circ  \affine ( A_{\widetilde{L}}, b_{\widetilde{L}} ) =&\, \affine ( A_{L}, b_{L} ) \circ \rho\circ  \affine ( A_{\widetilde{L}}, b_{\widetilde{L}} ) \label{eqline:rho_rho_0}\\
			 	=&\, \affine ( \widetilde{A}_{\widetilde{L}}, \widetilde{b}_{\widetilde{L}} ) \label{eqline:rho_rho},
			 \end{align} 
			 where in \eqref{eqline:rho_rho_0} we used  $\rho \circ \affine ( A_\ell, b_\ell ) = \rho \circ \affine ( I_{2}, 0_{2} ) = \rho$, for $\widetilde{L} < \ell < L$, and $\rho\circ \rho = \rho$, and \eqref{eqline:rho_rho} is by $\left(\affine ( A_{L}, b_{L} ) \circ\,  \rho\, \circ  \affine ( A_{\widetilde{L}}, b_{\widetilde{L}} )\right) (x) = \rho ( \widetilde{A}_{\widetilde{L}} x + \widetilde{b}_{\widetilde{L}} ) - \rho ( - \widetilde{A}_{\widetilde{L}} x  - \widetilde{b}_{\widetilde{L}} ) = \widetilde{A}_{\widetilde{L}} x + \widetilde{b}_{\widetilde{L}} =  \affine ( \widetilde{A}_{\widetilde{L}}, \widetilde{b}_{\widetilde{L}} )(x)$, for $x \in \mathbb{R}_{d'}$, with $d'$ denoting the number of columns of $\widetilde{A}_{\widetilde{L}}$. Combining \eqref{eqline:rho_rho_0}-\eqref{eqline:rho_rho} and \eqref{eq:extension}, we see that $ R( {\Phi} ) =   R ( \widetilde{\Phi} ) =  f$. \qedhere
    
		\end{proof}

        \subsection{Existence of the Empirical Risk Minimizer in \eqref{eq:assumption_H1_minimum}}
        \label{subs:existence_empirical_risk}
            Arbitrarily fix a sample $(x_i,y_i)_{i=1}^n \in ([0,1]\times \mathbb{R})^n$. The existence of the empirical risk minimizer, for this fixed sample, is equivalent to the existence of a network configuration $\Phi \in \mathcal{N} ( 1, \lceil D + 1 \rceil, L(n), 1)$ whose 
            associated truncated realization $\Othres_1(R(\Phi))$ minimizes the empirical risk.
            Noting that the set $\mathcal{N} ( 1, \lceil D + 1 \rceil, L(n), 1)$ can be written as a finite disjoint union of network configurations, each element in the union corresponding to a given network architecture, we only have to show the existence of a minimizer over each given element in this union. Arbitrarily fix an element in the union with associated architecture $(N_0, \dots, N_\ell)$ and let $\mathcal{N}_{N_0,\dots, N_\ell}$ be the corresponding set of network configurations. Next, note that $\mathcal{N}_{N_0,\dots, N_\ell} = [-1,1]^{\sum_{i = 1}^{\ell } ( N_{i} N_{i-1} + {N}_i)} $ is a compact set and, by Lemma~\ref{lem:quantization}, the mapping
            $\Phi \in \mathcal{N}_{N_0,\dots, N_\ell} \mapsto    \frac{1}{n} \sum_{i = 1}^n ( \Othres_1(R(\Phi) (x_i)) - y_i )^2 $ is continuous. As continuous functions on compact sets attain minima, there exists a $\Phi \in \mathcal{N}_{N_0,\dots, N_\ell}$ that minimizes the empirical risk within the set $\mathcal{N}_{N_0,\dots, N_\ell}$, as was to be shown. 
            The argument is concluded by noting that the choice of the sample $(x_i,y_i)_{i=1}^n \in ([0,1]\times \mathbb{R})^n$ was arbitrary and hence there exists an empirical risk minimizer for each sample $(x_i,y_i)_{i=1}^n \in ([0,1]\times \mathbb{R})^n$.

\bibliography{my_bib}

\end{document}